\theoremstyle{plain}
\newtheorem{theorem}{Theorem}[section]
\newtheorem{lemma}[theorem]{Lemma}
\newtheorem{corollary}[theorem]{Corollary}
\theoremstyle{definition}
\newtheorem{definition}{Definition}
\newtheorem{assumption}{Assumption}
\theoremstyle{remark}
\newtheorem{remark}{Remark}
\icmltitlerunning{Informed Learning by Wide Neural Networks: Convergence, Generalization and Sampling Complexity}
\begin{document}

\twocolumn[
\icmltitle{Informed Learning by Wide Neural Networks: \\Convergence, Generalization and Sampling Complexity}



\icmlsetsymbol{equal}{*}

\begin{icmlauthorlist}
\icmlauthor{Jianyi Yang}{yyy}
\icmlauthor{Shaolei Ren}{yyy}
\end{icmlauthorlist}

\icmlaffiliation{yyy}{Department of Electrical and Computer Engineering, University of California, Riverside, CA 92521, United States}

\icmlcorrespondingauthor{Shaolei Ren}{sren@ece.ucr.edu}

\icmlkeywords{Machine Learning, ICML}

\vskip 0.3in
]



\printAffiliationsAndNotice{} 

\begin{abstract}
By integrating domain knowledge with labeled samples,
{informed} machine learning has been emerging to improve
the learning performance for a wide range of applications. Nonetheless,
 rigorous
understanding of the role
of injected domain knowledge has been under-explored.
In this paper, we consider an informed deep neural network
(DNN) with over-parameterization and domain knowledge
integrated into its training objective function,
and study how and why domain knowledge
benefits the performance.
Concretely,
we
quantitatively demonstrate the two benefits of domain knowledge in informed learning
--- regularizing the label-based supervision and supplementing the labeled
samples --- and reveal the trade-off between label and knowledge imperfectness in the bound of the population risk. Based on the theoretical analysis, we propose a generalized informed training objective to better exploit the benefits of knowledge and balance the label and knowledge imperfectness, which is validated by the population risk bound.
Our analysis on sampling complexity sheds lights
on how to choose the hyper-parameters for informed learning, and further justifies the advantages of knowledge informed learning.
\end{abstract}
\section{Introduction}

 The remarkable success of deep  neural networks (DNNs), or more generally machine learning,
 largely relies on the proliferation of data samples with ground-truth labels for supervised learning.
Nonetheless, labeled data of high quality
can often be very limited and/or extremely expensive
to collect in real application domains,
including medical sciences, security-related fields, and specialized
engineering areas \cite{Informed_ML_von19}.

In parallel with the data-driven learning paradigm,
 domain \emph{knowledge} (which we simply
 refer
 to as  knowledge) has been utilized
 to assist with decision making and system designs, with
  a long history of success.
 As its name would suggest,
 domain knowledge is naturally domain-specific and can come
 from various sources in multiple forms, such as subjective experiences (e.g., medical prognosis), external sources, and scientific laws.
  For example, partial differential equations are used
  to govern many flow dynamics in physics, and
 the Shannon channel capacity is the fundamental principle
 to guide the design of modern communications systems \cite{Goldsmith_WirelessCommunications_2005,InformedML_PhysicsModeling_Survey_arXiv_2020_DBLP:journals/corr/abs-2003-04919}.

Importantly, domain knowledge has already been,
sometimes implicitly, integrated
into every stage of the machine learning pipeline,
including training data augmentation, hypothesis set
selection, model training and hypothesis finalization (more
details in Appendix~\ref{sec:preliminary}).
For example,
differential equations and logic rules from physical sciences and/or common knowledge provide additional constraints or new {functional} regularization terms for model training \cite{Interaction_learning_physics_battaglia2016interaction,Improving_DL_Knowledge_Constriant_borghesi2020improving,Injecting_knowledge_NN_silvestri2020injecting,Incorporating_domain_knowledge_muralidhar2018incorporating,Semantic_Loss_smbolic_xu2018semantic}.

Despite the numerous successful examples \cite{Informed_ML_von19,InformedML_HumanKnowledge_Survey_Cell_Umich_2020_DENG2020101656},
there still lacks a rigorous understanding of the role of
 domain knowledge in informed  learning.
In this paper, we focus on informed DNNs  --- DNNs with domain knowledge
explicitly integrated into the training risk/loss function.
Concretely, we consider an over-parameterized DNN with a sufficiently
large network width \cite{overparameterization_generalization_neyshabur2018role},
and study how domain knowledge
affects the DNN from three complementary aspects: convergence, generalization, and sampling complexity.

\textbf{Convergence (\textbf{Theorem~\ref{the:convergence}})}:
We show the convergence of  training an informed risk function
under milder technical assumptions than the prior works (Section~\ref{sec:convergencemainresult}).
More specifically, we show that for inputs within a smooth set (Definition~\ref{def:smoothset}), the network outputs converge to the optimal solution jointly determined by all the samples in the set.

\textbf{Generalization (\textbf{Theorems~\ref{thm:generalizationbound} and~\ref{crl:generalizationbound}})}:
We show in Theorem~\ref{thm:generalizationbound}  that the population risk relies on the  knowledge imperfectness (Definition~\ref{def:knwoledgeimperfect}) as well as knowledge-regularized label imperfectness (Definition~\ref{def:regularizedimperfectness}). Specifically, knowledge has two benefits: regularization for noisy labels and supplementing labels.
 We propose a generalized informed risk function which disentangles the two effects by introducing another hyper-weight $\beta$, followed by the population risk bounds in Theorem~\ref{crl:generalizationbound} and Corollary~\ref{crl:generalizationbound2}.

\textbf{Sampling Complexity (\textbf{Corollay~\ref{thm:generalization_choiceweight}})}:
 By establishing a quantitative equivalence
   between domain knowledge and labeled samples,
   we show that domain knowledge (with a reasonable quality) can effectively reduce
   the number of labeled samples while achieving the same generalization performance,
   compared to the no-knowledge case.

\section{Related Work}

\textbf{Informed Machine Learning.}
The broad paradigm of informed machine learning \cite{Informed_ML_von19}
includes several existing learning frameworks, such as
learning using privileged information (LUPI) \cite{privileged_vapnik2009new} where
side knowledge is available for labeled samples \cite{privileged_vapnik2009new,Information_bottleneck_priviledged_Motiian_2016_CVPR,Rank_priviledged_info_Sharmanska_2013_ICCV}.  
Likewise, knowledge distillation \cite{KD_analysis_rahbar2020unreasonable,Knowledge_Distillation_hinton2015distilling,distillation_survey_gou2020knowledge,efficacy_distillation_cho2019efficacy} transfers prior knowledge from teacher networks to a student network.
Some recent studies have also focused on understanding knowledge distillation \cite{ensemble_distillation_allen2020towards}.
In \cite{Towards_understanding_distillation_phuong2019towards}, a generalization bound
is derived for knowledge distillation based on linear classifiers and deep linear classifiers, providing insights towards the mechanism of knowledge distillation.
The subsequent analysis
\cite{distillation_riskbound_ji2020knowledge,KD_analysis_rahbar2020unreasonable}
extends to neural networks, showing that the student network may
generalize better by exploiting soft labels from the teacher model. 
 Teacher imperfectness is investigated in \cite{KD_inference_dao2021knowledge}, which bounds the learning error 
and proposes enhanced methods to address imperfect teachers.

Physics-informed neural networks  (PINNs) have
been recently proposed to solve
 partial differential equations (PDEs)
 \cite{physical_model_deep_networks_guen2020augmenting,Physics-informed_learning,PIDL_raissi2017physics,sciML_baker2019workshop,InformedML_HumanKnowledge_Survey_Cell_Umich_2020_DENG2020101656,InformedML_PhysicsModeling_Survey_arXiv_2020_DBLP:journals/corr/abs-2003-04919}.
 Besides empirical studies,
 \cite{PINN_convergence_shin2020convergence} bounds the expected PINN loss, showing that the minimizer of the regularized 
 loss converges to the PDE solution. 

More broadly,  informed
machine learning also includes weakly-supervised learning \cite{weak-supervised_zhou2018brief,weak_supervision_robinson2020strength} and few-shot learning \cite{Generalizing_few_shot_wang2020}, where knowledge provides weak supervision.
Domain-specific constraints \cite{Incorporating_domain_knowledge_muralidhar2018incorporating} and semantic information \cite{Semantic_Loss_smbolic_xu2018semantic,Semantic_regularization_diligenti2017semantic}
can also be viewed as knowledge injected into training.
Our work complements these empirical studies and
provides a rigorous understanding of knowledge
in a unified framework.

\textbf{Over-parameterized neural networks.}
Several recent studies 
\cite{DNN_Overparameterized_OpportunitiesPitfalls_ICML_Workshop_2021,DNN_Overparameterized_ShallowNN_NIPS_2021_song2021subquadratic,DNN_Overparameterized_Convergence_ReLu_JiaLiu_OSU_ICLR_2022_anonymous2022a,DNN_OverparameterizedDecentralizedMultiAgent_JiaLiu_OSU_ICLR_2022_anonymous2022decentralized,NTK_jacot2018neural,Wide_NN_linear_lee2019wide,Neural_tangent_yang2019scaling,Convergence_zhu_allen2019convergence,NTK_arora2019exact,convergence_generalization_Arora2019fine,Generalization_cao2019generalization,generalization_allen2018learning,overparameterization_generalization_neyshabur2018role}
show
that over-parameterized neural networks have good convergence and generalization performance.
In addition to assuming data separability in a strong sense, another crucial assumption
often made in the existing studies
 is that the network widths increase polynomially with the total number of training samples.
In informed DNNs, however, we can have many (unlabeled) training samples fed
into the knowledge
risk, which hence may not satisfy these assumptions.
Thus, we analyze knowledge-informed over-parameterized neural networks
under relaxed assumptions (Section~\ref{sec:main}).

\textbf{Regularization.}
In the broad context of regularization, \cite{regularization_overparameterized_nn_wei2019regularization} shows that over-parameterized neural networks with $l_2$-regularization can achieve a larger margin and thus better generalization, \cite{implicit_regularization_blanc2020implicit} proves that SGD with label noise is equivalent to  an implicit regularization term,
while  \cite{dropout_regularization_wei2020implicit} shows that the drop-out operation for neural networks has both explicit and implicit regularization effects.
These regularizers are usually imposed on the network weights,
whereas the knowledge-based regularizer in informed machine learning
also incorporates inputs and directly
regularizes the network output.

\section{Informed Neural Network}

\textbf{Notations:} We use the expression $[L]$ to denote the set $\{1,2,\cdots, L\}$ for a positive integer $L$. Denote the indicator function as $\mathds{1}(x)=1$ if $x>0$, and $\mathds{1}(x)=0$ otherwise. $\mathbb{E}$ is the expectation operator and $\mathbb{P}$ is a probability measure. $\mathbb{R}^d$ is $d$-dimensional real number space. $\mathcal{N}(x,\sigma^2)$ is the Gaussian distribution with mean  $x$ and variance  $\sigma^2$.  Denote $|\mathcal{A}|$ as the size of a set $\mathcal{A}$. For a vector $x$, $\|x\|$ is $l_2$-norm and $[x]_j$ is the $j$th entry. For a matrix $\bm{X}$, $\|\bm{X}\|_2$
represents the spectral norm,
 and $\|\bm{X}\|$ is the Frobenius norm. $\mathcal{B}(x,\tau)=\left\lbrace y\mid \|x-y\|\leq \tau\right\rbrace $ is the neighborhood domain.

\subsection{Preliminaries of Neural Networks}
Consider a supervised learning task to learn a relationship mapping the input $x\in\mathcal{X}\subseteq
\mathbb{R}^b$ to its output $y\in \mathcal{Y}\subseteq
\mathbb{R}^d$.
The pair of input and output $\left( x,y\right) $  follows a joint distribution $\mathbb{P}_{XY}$.
More concretely, we consider
a fully-connected DNN with an input layer, $L\geq 1$ hidden layers, and an output layer.
Each hidden layer has $m$ neurons, followed by ReLu activation denoted as $\sigma(\cdot)$. Denote $\bm{W}_0\in\mathbb{R}^{b\times m}$ as the weights for the input layer, $\bm{W}_l\in\mathbb{R}^{m\times m}$ as the weights for the $l$-th  layer for $l\in[L]$, and $\bm{V}\in\mathbb{R}^{d\times m}$ as the weights for the output layer.
We denote the output of the $l$-th layer as $h_l=\sigma\left( \bm{W}_lh_{l-1}\right)$,
for  $l\in[L]$, where $h_{0}$ is the input $x$. The output of the neural network can be expressed as $h_{\bm{W}}=\bm{V}h_{L}$, where $\bm{W}=\left\lbrace \bm{W}_0,\bm{W}_1,\cdots, \bm{W}_{L}\right\rbrace $. Thus, the DNN can be expressed as
\begin{equation}
h_{\bm{W}}\left(x \right) =\bm{V}\sigma\left( \bm{W}_{L}\sigma(\bm{W}_{L-1} \cdots \sigma\left( \bm{W}_1\sigma(\bm{W}_0x))\right) \right).
\end{equation}
Given a DNN $h_{\bm{W}}$, the risk for a labeled sample $(x,y)$ is denoted as $r\left(h_{\bm{W}}\left(x \right),y \right) $.
The goal of the learning task is to learn a DNN that minimizes the population risk:
\begin{equation}
R\left(h \right) =\mathbb{E}\left[r\left(h\left(x \right),y \right)  \right].
\end{equation}

\subsection{Integration of Knowledge}\label{sec:paradigm}
We consider a commonly-used informed learning method, i.e., integrating knowledge into the neural network during the training stage \cite{Informed_ML_von19}.
During training, a labeled dataset $S_z=\left\lbrace \left(x_1, z_1 \right),\cdots, \left(x_{n_z}, z_{n_z} \right) \right\rbrace$  with $n_z$ samples drawn from $\mathbb{P}_{XZ}$ is provided. We assume $x_i, i\in[n]$ are drawn from the distribution $\mathbb{P}_X$, but the training label $z_i\in\mathcal{Y}$ may not be the same as the true label $y_i$ for the input $x_i$,
 because the training label may be of low quality (e.g., corrupted, noisy, and/or quantized)\cite{classification_imperfect_labels_cannings2020classification,weak-supervised_zhou2018brief}. Denote $h_{\bm{W},i}=h_{\bm{W}}(x_i)$ as the output of the neural network with respect to the input $x_i$. Based on the labeled dataset, the empirical label-based risk can be written as
$
\hat{R}_{S_z}\left(\bm{W}\right) =\frac{1}{n_z}\sum_{S_z} r\left(h_{\bm{W},i},z_i \right).
$

The  domain knowledge 
includes a knowledge-based model $g(x)$ regarding the input $x$
and a knowledge-based risk function $r_{\mathrm{K}}\left( h_{\bm{W}}\left(x\right),g(x)\right)$ that relates the DNN's output
$h_{\bm{W}}\left(x\right)$
to  $g(x)$.  More concrete examples of risk functions for  domain knowledge 
can be found in Appendix~\ref{sec:formulation_example}.

For the ease of analysis, we assume that both the risk function $r$ and the knowledge-based risk function $r_{\mathrm{K}}$ are Lipschitz continuous, upper bounded, and strongly convex with respect to the network output, and the eigenvalues of their Hessian matrix regarding the network output lie in $[\rho,1]$ for  $\rho\in(0,1]$.
Note that the incorporated domain knowledge may not necessarily be perfect since it
can  be obtained based on subjective experiences (e.g., medical prognosis)
\cite{Incorporating_domain_knowledge_muralidhar2018incorporating,DNN_Medical_ICLR_2020_Mihaela_Bica2020Estimating},
pre-existing machine learning models \cite{Knowledge_Distillation_hinton2015distilling}  or theoretical models which itself can deviate from the real physical world
\cite{Physics-informed_learning}.

For training, in addition to the labeled dataset $S_z$,
a dataset $S_g$ with $n_g$ unlabeled samples is generated for knowledge-based supervision.
Note that $S_g$ can also include inputs in $S_z$, and $n_g$ can be sufficiently large since unlabeled samples are typically easier to obtain than labeled ones.   The
 training risk of the informed neural network, which we simply refer to as  \emph{informed} risk, is
\begin{equation}\label{eqn:informlossnn}
\begin{split}
\hat{R}_{\mathrm{I}}\!\left(\bm{W}\right)& \!\!=\!\!\frac{1-\lambda}{n_z} \!\sum_{ S_z}   r\!\left(h_{\bm{W},i},\!z_i \right)\!\!+\!\!\frac{\lambda}{n_g}\!\!\sum_{S_g} r_{\mathrm{K}}\!\left( h_{\bm{W},i},\!g_i\right),
\end{split}
\end{equation}
where $\lambda\in[0,1]$ is a hyper-weight, $h_{\bm{W},i}=h_{\bm{W}}(x_i)$, and $g_i=g(x_i)$.  Note that Eqn.~\eqref{eqn:informlossnn} can  also
be re-written as
\begin{equation}\label{eqn:informlossnn_training}
	\begin{split}
		\hat{R}_{\mathrm{I}}\!\!\left(\bm{W}\right)\!=\!\!\!\!\!\sum_{ S_z\bigcup S_g} \!\!\!\! \left[ \mu_i r\left(h_{\bm{W},i},z_i \right)\!+\! \lambda_ir_{\mathrm{K}}\left( h_{\bm{W},i},g_i\right)\right]
	\end{split}
\end{equation}
 with hyper-parameters chosen as $\mu_i=\frac{1-\lambda}{n_z}\mathds{1}(x_i\in S_z)$ and $\lambda_i=\frac{\lambda}{n_g}\mathds{1}(\!x_i\in S_g\!)$. Eqn.~\eqref{eqn:informlossnn_training} is used for convergence analysis.
 
To train the informed DNN,
we consider a
gradient descent approach in Algorithm~\ref{alg:neuraltrain} shown in Appendix~\ref{appendix:training_algorithm}.
This training approach has also been commonly considered in the literature \cite{Convergence_zhu_allen2019convergence,Convergence_Gu_zou2019improved,GD_Neural_du2019gradient}
for theoretical analysis of standard DNNs without domain knowledge.
For the sake of analysis, we also define a hypothesis space $\mathcal{H}=\left\lbrace h_{\bm{W}}\mid \bm{W}\in \mathcal{B}\left(\bm{W}^{(0)},\tau \right) \right\rbrace $ where $\bm{W}^{(0)}$ is the initialized weight and $\tau$ is the maximum distance between the weights in gradient descent and the initialized weights. We denote $h^{(0)}_l(x), l\in[L]$ as the output of the $l$-th layer for an input $x$ at initialization.
 
\begin{remark}
The considered informed learning 
is relevant to several other  frameworks. For example, it
can model weakly-supervised learning \cite{weak-supervised_zhou2018brief,Generalizing_few_shot_wang2020} with a few 
(possibly imperfectly) labeled samples as well as other weak supervision signals (i.e., knowledge). Besides, by viewing $\{z_i\}$ as hard labels and the knowledge-based model $g(x)$ as soft labels provided by a teacher model, the informed learning captures knowledge distillation \cite{Knowledge_Distillation_hinton2015distilling,Towards_understanding_distillation_phuong2019towards,KD_analysis_rahbar2020unreasonable}. Thus, our work can complement the existing analysis
for the
aforementioned learning frameworks from a different and more unified
perspective. Additionally, PAC-Bayesian learning optimizes the PAC-Bayesian bound which is a trade-off between the empirical error and a regularization term based on a prior distribution given by knowledge \cite{PAC_Bayesian_guedj2019primer,Meta_learning_PAC_Bayes_amit2018meta,PAC_Bayesian_Bayesian_Inference_germain2016pac}. But, different from  PAC-Bayesian learning which considers random hypothesis, we analyze
an over-parameterized neural network with a predetermined architecture.
\end{remark}

\section{Effects of Domain Knowledge}\label{sec:main}

\subsection{Convergence}\label{sec:convergencemainresult}
Since the domain knowledge is integrated into a neural network during training, it is important to analyze the convergence to understand how the label and knowledge supervision jointly determine the network output.
While convergence  based on gradient descent for over-parameterized neural networks has been studied extensively \cite{DNN_Overparameterized_OpportunitiesPitfalls_ICML_Workshop_2021,Convergence_zhu_allen2019convergence,Convergence_Gu_zou2019improved,convergence_generalization_Arora2019fine,GD_Neural_du2019gradient}, the current analysis is \emph{not} suitable to study the convergence of informed over-parameterized neural networks.  The reasons are summarized as follows.\\
$\bullet$ \textbf{Inapplicable for multiple supervisions.} Typically,
assuming one unique label for each distinct training sample
and a large enough network width, the prior studies
 show that the neural network can fit to the labels, i.e., the network output for
  each training input converges to the corresponding label \cite{understanding_deep_learning_rethinking_generalizaiton_zhang2021understanding,convergence_generalization_Arora2019fine,Convergence_Gu_zou2019improved,moderate_overparameterization_convergence_shallow_oymak2020toward}.   But, in our case, one training input can have multiple supervisions from both label and knowledge with possibly different forms of risks.
  Thus, the network output for an input may not be necessarily determined by a unique label.
  The convergence of knowledge distillation supervised by both hard and soft labels is studied by \cite{KD_analysis_rahbar2020unreasonable}, but only the quadratic risk and shallow networks are considered.\\
$\bullet$ \textbf{Strong data separability assumption.}
	Some prior studies require a lower-bounded distance of any two samples \cite{Convergence_zhu_allen2019convergence,Convergence_Gu_zou2019improved,GD_Neural_du2019gradient},
	but this may not be satisfied for an informed DNN because
	the input samples for label-based and knowledge-based risks
	can be very close or even the same.
	Other studies assume data separability by a neural tangent model \cite{Overparameterization_needed_chen2019much,ji2019polylogarithmic, cao2020generalization,nitanda2019gradient}, but  data separability by a neural tangent model
is not well defined for training with multiple supervisions 
in informed DNNs.

 To address these challenges, we provide convergence analysis for informed over-parameterized neural networks based on a new data separability assumption of smooth sets.  The construction of smooth sets approximates the space $\mathcal{X}$ with discrete pieces, each containing samples that jointly satisfy the smooth properties. The smooth sets are formally defined below, followed by the data separability assumption.

 \begin{definition}[Smooth sets]\label{def:smoothset}
Given $\phi>0$, construct a $\phi-$net \cite{epislon_net_clarkson2006building} $\mathcal{X}_{\phi}=\{x_k', k\in[N], x_k'\in \mathcal{X}\}$ with 
$N\sim O(1/\phi^b)$  such that  $\forall x'_i, x'_j\in \mathcal{X}_{\phi}$ and $x'_i\neq  x'_j$, $\|x'_i-x'_j\|\geq\phi$ holds, and $\forall x_i\in S_z\bigcup S_g$, there exists at least one $x_k'\in \mathcal{X}_{\phi}$ satisfying $\|x_i-x_k'\|\leq \phi$.
Each input $x_k'\in\mathcal{X}_{\phi}$, referred to as a representative input, determines a smooth set $\mathcal{C}_{\phi,k}=\{ x\in \mathcal{X}\mid  \|x-x'_k\|\leq \phi, \|x-x'_j\|\geq \phi/2, \forall j\neq k, x'_k,x'_j\in \mathcal{X}_{\phi} \} $. The index set of training samples within the $k$th smooth set is
 $\mathcal{I}_{\phi,k}=\left\lbrace  i\mid x_i\in S_z\bigcup S_g, x_i\in \mathcal{C}_{\phi,k}\right\rbrace , k\in[N].$
 \end{definition}

 \begin{assumption}[Data separability by smooth sets]\label{asp:smoothset}
For each smooth set $k$ with representative sample $x_k'$, there exists a non-empty subset of neuron indices $\mathcal{G}_{k,\alpha}\in[m]$ with size $|\mathcal{G}_{k,\alpha}|=\alpha m, \alpha\in(0,1]$ such that at initialization, $\forall i\in \mathcal{I}_{\phi,k}$, $\forall j\in \mathcal{G}_{k,\alpha}$, $\mathds{1}\left(\left[h^{(0)}_{L}(x_i)\right]_j\geq 0\right)=\mathds{1}\left(\left[h^{(0)}_{L}(x_k')\right]_j\geq 0\right)$, and $\forall j\notin \mathcal{G}_{k,\alpha}$, the pre-activation of the $L$-th layer $\left|\left[\bm{W}_L^{(0)}h^{(0)}_{L-1}(x_i)\right]_j\right|\geq \frac{3\sqrt{2\pi}\phi^{b+1}}{16\sqrt{m}}$.
 \end{assumption}

 Instead of requiring a lower-bounded distance of any two training samples, the data separability assumption requires that, at initialization, for samples in one smooth set, the
 outputs of the last hidden layer  either have the same signs as those of the representative sample, or their absolute values are larger than a very small threshold.
 Thus, this data separability assumption is set-wise and addresses the cases where two training inputs are very close or the same, and hence is milder than the one in existing studies (e.g., \cite{Convergence_zhu_allen2019convergence}).
  The parameter $\alpha$ indicates slackness: with larger $\alpha$, more neurons have the same signs.
  Actually, data separation by smooth set with $\phi>0$ in Assumption~\ref{asp:smoothset} always exists:
  when $\phi$ is  small enough such that only one inputs or several same inputs are included in a smooth set,  Assumption~\ref{asp:smoothset} is satisfied with $\alpha=1$.
  Even in this worst case, our assumption is still 
   milder than the data separability assumption considered
   in \cite{Convergence_zhu_allen2019convergence,Convergence_Gu_zou2019improved} that excludes the existence of two training samples with the same inputs but different supervisions.

With the data-separability assumption by smooth sets, we are ready to show the labels and knowledge  jointly determine the network output for training inputs.
We introduce the notation \emph{effective label}, as formally defined below.

 \begin{definition}[Effective label]\label{def:efflabel}
 	For the $k$-th smooth set,
 	define the effective label  as $y_{\mathrm{eff},k}=\arg\min_{h}\sum_{i\in \mathcal{I}_{\phi,k}}\left\lbrace \mu_i r(h,z_i)+\lambda_i r_{K}(h,g_i) \right\rbrace $ with $\mu_i, \lambda_i$ defined in Eqn. \eqref{eqn:informlossnn} and $h$ in the space of network output, and the effective optimal risk as $r_{\mathrm{eff},k}=\sum_{i\in\mathcal{I}_{\phi,k}}\left\lbrace \mu_i r(y_{\mathrm{eff},k},z_i)+\lambda_i r_{K}(y_{\mathrm{eff},k},g_i) \right\rbrace$.
 \end{definition}

Next, we show the convergence analysis. Note that the proof based on the data separability by smooth sets (Assumption \ref{asp:smoothset}) invalidates the proofs in previous studies, and we need new
lemmas that lead to novel
convergence to effective labels in Definition \ref{def:efflabel}. In particular, in Lemma \ref{lma:forwardinputperturbation}, to approximate
the outputs in the smooth set $k$ by the output of the
representative input $x_k'$, we need to bound the difference
of the outputs with respect to $x_k'$ and an input in the smooth set $k$. Also, based on Assumption \ref{asp:smoothset}, we derive in Lemma~\ref{lma:gradientboud}
the gradient lower bound which relies on
the number of smooth sets $N$ instead of the sample size
$n_z + n_g$ in the previous analysis. This makes the network width $m$ in our analysis directly rely on the smooth set size $\phi$.  Moreover, in Lemma~\ref{lma:risksmoothness}, we prove based on the definition of smooth sets that the first-order approximation error of the total informed risk depends on the difference between the risk and effective risk in Definition \ref{def:efflabel}. This is important to prove the convergence to the effective labels. The details of the convergence analysis are deferred to Appendix~\ref{sec:proofconvergence}.

\begin{theorem}\label{the:convergence}
	Assume that the network width satisfies $m\geq\Omega\left(\phi^{-11b-4}L^{15}d\rho^{-4}\bar{\lambda}^{-4}\alpha^{-4}\log^3(m) \right)$, and the step size is set as $\eta=O(\frac{d}{L^2m})$. With Assumptions~ \ref{asp:smoothset} satisfied, for any $\epsilon>0$ and $\phi\leq \widetilde{O}\left( \epsilon L^{-9/2}\log^{-3}(m)\right) $, we have with probability at least $1-O(\phi)$, by gradient descent after
$
T= O\left(\frac{L^2}{\phi^{1+2b}\rho\bar{\lambda}\alpha}\log(\epsilon^{-1}\log(\phi^{-1})) \right)
$
steps,
the informed risk in Eqn.~\eqref{eqn:informlossnn_training} is bounded as:
$
\hat{R}_{\mathrm{I}}(\bm{W}^{(T)})-\hat{R}_{\mathrm{eff}}\leq O(\epsilon),
$
where $\hat{R}_{\mathrm{eff}}=\sum_{k=1}^Nr_{\mathrm{eff},k}$, $\bar{\lambda}=\Omega(\min(1-\lambda, \lambda)\mathds{1}(\lambda\in(0,1))+\mathds{1}(\lambda \in\{0,1\}))$. Also,
the DNN outputs satisfy:
\[
\sum_{S_z\bigcup S_g}(\mu_i+\lambda_i) \left\| h_{\bm{W}^{(T)}}\left(x_i \right)-y_{\mathrm{eff},k(x_i)} \right\|^2\leq O(\epsilon),
\]
where $k(x_i)$ is the index of the smooth set that includes $x_i$, $\mu_i=\frac{1-\lambda}{n_z}\mathds{1}(x_i\in S_z)$ and $\lambda_i=\frac{\lambda}{n_g}\mathds{1}(\!x_i\in S_g\!)$.
\end{theorem}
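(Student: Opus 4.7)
The approach is to adapt the Allen-Zhu style gradient-descent convergence framework for wide networks to the informed setting, with the central modification that the role of ``per-sample label'' is replaced by the ``per-smooth-set effective label'' of Definition~\ref{def:efflabel}. The backbone is the familiar two-part argument: a PL-type gradient lower bound together with a semi-smoothness upper bound on the first-order Taylor remainder, combined in an inductive descent loop (weights stay in a ball of radius $\tau$ around $\bm{W}^{(0)}$ $\Rightarrow$ the initial activation structure is preserved $\Rightarrow$ the gradient lower bound still holds $\Rightarrow$ the per-step displacement is small $\Rightarrow$ the weights remain in the ball at time $t+1$).

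First I would collect standard initialization estimates: with high probability over the Gaussian initialization, the hidden activations $\|h^{(0)}_l(x)\|$, the output $\|h_{\bm{W}^{(0)}}(x)\|$, and the spectral norms $\|\bm{W}_l^{(0)}\|_2$ concentrate around their expectations; failure of these events is what contributes the $1-O(\phi)$ probability in the statement. Using Lemma~\ref{lma:forwardinputperturbation}, for any $x$ in the smooth set $\mathcal{C}_{\phi,k}$ the pre-activation pattern of the last hidden layer agrees with that of the representative $x_k'$ on the index set $\mathcal{G}_{k,\alpha}$, because the threshold $3\sqrt{2\pi}\phi^{b+1}/(16\sqrt{m})$ in Assumption~\ref{asp:smoothset} is calibrated precisely to absorb the $\phi$-scale perturbation of the pre-activations. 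Applying the same perturbation bound to the weight displacement $\|\bm{W}^{(t)}-\bm{W}^{(0)}\|\leq \tau$ shows that the pattern agreement persists along the trajectory provided $\tau/\sqrt{m}$ is much smaller than the margin; this is where the $m\geq\Omega(\phi^{-11b-4}\cdots)$ width requirement and the upper bound $\phi\leq \widetilde{O}(\epsilon L^{-9/2}\log^{-3}(m))$ enter.

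Next comes the PL inequality from Lemma~\ref{lma:gradientboud}: instead of lower-bounding $\|\nabla \hat{R}_{\mathrm{I}}\|^2$ by the smallest eigenvalue of an $(n_z+n_g)$-sized Gram matrix as in prior work, I would decompose the gradient as a sum over the $N=O(\phi^{-b})$ smooth sets. Within each set, the contributions of its samples share the activation pattern of $x_k'$ on $\mathcal{G}_{k,\alpha}$, so they add constructively in those $\alpha m$ directions, and the per-set gradient norm dominates the aggregate residual $\bar{r}_k(\bm{W}):=\sum_{i\in \mathcal{I}_{\phi,k}}\mu_i\,\partial_h r(h_{\bm{W},i},z_i)+\lambda_i\,\partial_h r_{\mathrm{K}}(h_{\bm{W},i},g_i)$. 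Strong convexity of $r$ and $r_{\mathrm{K}}$ with parameter $\rho$ then relates $\|\bar{r}_k(\bm{W})\|$ to $\sum_{i\in\mathcal{I}_{\phi,k}}(\mu_i+\lambda_i)\|h_{\bm{W},i}-y_{\mathrm{eff},k}\|$, while a matching quadratic upper bound from smoothness gives $\hat{R}_{\mathrm{I}}(\bm{W})-\hat{R}_{\mathrm{eff}}\leq O\bigl(\sum_i(\mu_i+\lambda_i)\|h_{\bm{W},i}-y_{\mathrm{eff},k(x_i)}\|^2\bigr)$. Chaining these produces a bound of the form $\|\nabla\hat{R}_{\mathrm{I}}\|^2\geq c\,(\hat{R}_{\mathrm{I}}-\hat{R}_{\mathrm{eff}})$ with $c$ carrying the $\phi^{1+2b}\rho\bar{\lambda}\alpha$ factors predicted by the target iteration count $T$.

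Combining this PL bound with Lemma~\ref{lma:risksmoothness}, whose semi-smoothness estimate controls the first-order Taylor remainder by the current excess $\hat{R}_{\mathrm{I}}-\hat{R}_{\mathrm{eff}}$ plus a lower-order absolute term proportional to $\phi$, yields a one-step contraction $\hat{R}_{\mathrm{I}}(\bm{W}^{(t+1)})-\hat{R}_{\mathrm{eff}}\leq (1-\Theta(\eta c))\bigl(\hat{R}_{\mathrm{I}}(\bm{W}^{(t)})-\hat{R}_{\mathrm{eff}}\bigr)+O(\epsilon)$. Unrolling for $T$ steps gives the stated iteration count; the weight-movement bound follows by summing per-step displacements against the geometric decay and verifying it is below $\tau$; and the output-deviation conclusion is an immediate consequence of the strong-convexity lower bound on $\hat{R}_{\mathrm{I}}-\hat{R}_{\mathrm{eff}}$ established above. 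The hardest step is Lemma~\ref{lma:gradientboud}: unlike the standard setting where distinct inputs yield near-orthogonal feature directions and each carries a single label, here samples can coincide, can carry conflicting $(z_i,g_i)$, and the target is defined only aggregately through $y_{\mathrm{eff},k}$; the delicate point is showing that the smooth-set alignment on $\mathcal{G}_{k,\alpha}$ forces the gradient to track $\bar{r}_k(\bm{W})$ rather than cancel across the set, and pinning down the sharp $\phi,b,\alpha,\rho,\bar{\lambda}$-dependence here is precisely what dictates the steep over-parameterization requirement $m\geq\Omega(\phi^{-11b-4}L^{15}d\rho^{-4}\bar{\lambda}^{-4}\alpha^{-4}\log^3 m)$.
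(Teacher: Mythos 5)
Your proposal follows the same high-level route as the paper's proof: a PL-type gradient lower bound aggregated over the smooth sets, a semi-smoothness bound on the first-order Taylor remainder, a one-step contraction iterated with a weight-ball induction, and strong convexity to convert risk convergence into output convergence. You also correctly identify the gradient lower bound (the paper's Lemma~\ref{lma:gradientboud}) as the novel core --- the set-wise alignment on $\mathcal{G}_{k,\alpha}$ is exactly what forces the per-set gradient contributions to add rather than cancel under conflicting supervision --- and you trace the $\phi,\alpha,\rho,\bar{\lambda}$-dependence to the right place.

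There is, however, one subtle but load-bearing imprecision in the form of your one-step contraction. You write $a_{t+1}\leq\bigl(1-\Theta(\eta c)\bigr)a_t+O(\epsilon)$ with $a_t=\hat{R}_{\mathrm{I}}(\bm{W}^{(t)})-\hat{R}_{\mathrm{eff}}$. The fixed point of that recursion is $O(\epsilon)/\Theta(\eta c)$, and with $\eta=O(d/(L^2m))$ and the PL constant $c=\Omega(\alpha m\phi\rho\bar{\lambda}/(dN^2))$ one has $\eta c\sim\alpha\phi^{1+2b}\rho\bar{\lambda}/L^2\sim 1/T$, so your recursion would only certify $a_T=O(\epsilon T)$. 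The stated scale $\phi\leq\widetilde{O}(\epsilon L^{-9/2}\log^{-3}m)$ would not repair this: making the per-step additive term $O(\epsilon\,\eta c)$ would require $\phi\lesssim\epsilon\cdot\phi^{1+2b}\alpha\rho\bar{\lambda}/L^2$, which is impossible for small $\epsilon$. What actually closes the argument is that the $\phi$-proportional error $\Psi=\widetilde{O}(L^{5/2}\phi\log^{1/2}m)$ enters as an \emph{offset inside} both the gradient lower bound, $\|\nabla\hat{R}_{\mathrm{I}}(\bm{W}^{(t)})\|_F^2\geq\Omega(\eta^{-1}\cdot\eta c)\,(a_t-\Psi)$ (Lemma~\ref{lma:gradientboud}), and the semi-smoothness remainder, which is $\sqrt{a_t-\Psi}$ times step-size terms rather than an additive $O(\phi)$ (Lemma~\ref{lma:risksmoothness}). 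Defining $b_t:=a_t-\Psi$ then produces a \emph{pure} geometric contraction $b_{t+1}\leq(1-\Omega(\eta c))\,b_t$ with no residual per step, so $a_T=b_T+\Psi\leq(1-\Omega(\eta c))^T b_0+\Psi$, and the choice of $\phi$ makes both terms $O(\epsilon)$ after $T=O\bigl(\frac{L^2}{\phi^{1+2b}\rho\bar{\lambda}\alpha}\log(\epsilon^{-1}\log\phi^{-1})\bigr)$ steps. You need this offset structure --- not merely $\phi\lesssim\epsilon$ plugged into an additive recursion --- for the bookkeeping to come out; a literal reading of your contraction would leave the proof with a gap at exactly the step that determines the parameter settings in the theorem.
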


\begin{remark}
	The convergence analysis in Theorem~\ref{the:convergence} addresses the limitations
 mentioned at the beginning
of this section. First, instead of fitting a unique label for each input, the informed neural network with multiple supervisions converges to effective labels. Second, the data separability assumption is enough for convergence analysis of  informed neural networks. 
 Another observation is that with smaller $\phi$ and smaller $\alpha$, Assumption~\ref{asp:smoothset} becomes milder, but  a larger network width and more training steps are needed to guarantee  convergence.

 Additionally, different from previous convergence analysis where the width $m$ increases
directly with the sample size, the network width
$m$ in our analysis depends on the smooth set size $\phi$ and is non-decreasing
with sample size (i.e., $m$ may not always increase with the
sample size). To see this, given a construction of smooth sets by size $\phi$ that meets Assumption \ref{asp:smoothset}, if we continue to add (either labeled or knowledge-supervised) training samples that lie in
the existing smooth sets and satisfy Assumption \ref{asp:smoothset}, the width $m$
remains the same, and smaller $\phi$ (larger $m$) is needed to guarantee the convergence only when the added samples violate
Assumption \ref{asp:smoothset} under the current $\phi$.
 The large network width
	needed for analysis is due
	to the limitation of over-parameterization techniques, while in practice
	a much smaller network width is enough.
	Albeit beyond the scope of our study,
	addressing
	the gap between theory and practice is clearly important
	and still active
	research in the community \cite{DNN_Overparameterized_OpportunitiesPitfalls_ICML_Workshop_2021}.
\end{remark}
\begin{remark}\label{rmk:knowledgeeffect}
We can get more insights about the effects of labels and knowledge from the conclusion that the network outputs converge to the corresponding \emph{effective} labels in Definition \ref{def:efflabel}.
On the one hand, if knowledge is applied to the samples within the same smooth sets as labeled samples, knowledge-based supervision and label-based supervision jointly determine the network output together: knowledge serves as \emph{a regularization for labels} in this case. On the other hand, if a smooth set only contains knowledge-supervised samples,  the network output is determined solely by knowledge: knowledge  \emph{supplements  labeled samples} (albeit possibly imperfectly) to provide additional supervision.
\end{remark}

\subsection{Generalization}\label{sec:generalization}
We now formally analyze how the domain knowledge affects the generalization performance.
From our convergence analysis, there are two different effects of  knowledge (Remark~\ref{rmk:knowledgeeffect}). 
We characterize  the two effects by formally defining knowledge imperfectness and knowledge-regularized label imperfectness. Before this, we list some notations for further analysis. Given a $\phi-$net $\mathcal{X}_{\phi}$ (Definition \ref{def:smoothset}), $\mathcal{U}_{\phi}(S_z)=\left\lbrace k\in[N]\mid \exists x\in S_z, x \in \mathcal{C}_{\phi,k}\right\rbrace$ is the index collection of smooth sets that contain  at least one labeled sample, and $\mathcal{X}_{\phi}(S_z)=\bigcup_{k\in\mathcal{U}_{\phi}}(S_z)\mathcal{C}_{\phi,k}$ is the region covered by the smooth sets in $\mathcal{U}_{\phi}(S_z)$.  $S_g'=S_g\bigcap\mathcal{X}_{\phi}(S_z) $ is the knowledge supervised dataset with samples share the common smooth sets with labeled samples in $S_z$ while
 the samples in $S''_{g}=S_g\setminus S_g'$ lie in smooth sets without labeled samples. Denote $n'_{g}=|S'_{g}|$ and $n''_{g}=|S''_{g}|$.

\begin{definition}[Knowledge imperfectness]\label{def:knwoledgeimperfect}
	Let $h_{\mathrm{K}}^*=\min_{h}\frac{1}{n_g''}\sum_{S_g''}\left[ r_\mathrm{K}(h(x_i),g(x_i))\right] $ be the optimal hypothesis for the knowledge-based risk on the dataset $S_g''$. The imperfectness of domain knowledge $K$ applied to the dataset $S_g''$ is defined as
	$
\widehat{Q}_{\mathrm{K},S_g''}= \frac{1}{n_g''}\sum_{x_i\in S_g''} r(h_{\mathrm{K}}^*(x_i),y_i)
$
where  $y_i$ is the true label of $x_i$.
	Correspondingly, let $\bar{h}_{\mathrm{K}}^*=\min_{h}\mathbb{E}\left[ r_\mathrm{K}(h(x),g(x))\right] $ be the optimal hypothesis for the expected knowledge-based risk, and the expected imperfectness of domain knowledge $K$ is defined as
	$
	Q_{\mathrm{K}}= \mathbb{E} \left[ r(\bar{h}_{\mathrm{K}}^*(x),y)\right].
	$
\end{definition}

The (empirical or expected) knowledge imperfectness is defined
as the risk under the  hypothesis optimally learned by knowledge-based supervision. Thus, it measures the extent to which the domain knowledge is inconsistent with the true labels, measured in terms of the risk over the hypothesis set $\mathcal{H}$. Besides knowledge-based supervision, the network outputs for some smooth sets that
contain both samples for knowledge risks and labeled samples are jointly determined by label-based and knowledge-based supervisions. Thus, we define knowledge-regularized label imperfectness below.

\begin{definition}[Knowledge-regularized label imperfectness]\label{def:regularizedimperfectness}
Let $h_{\mathrm{R},\beta}^*=\arg\min_{h}\frac{1-\beta}{n_z}\sum_{S_z}r(h(x_i),z_i)+ \frac{\beta}{n_g'}\sum_{ S'_{g}}r_{\mathrm{K}}(h(x_i),g(x_i)) $ be the optimal hypothesis for the knowledge-regularized risk and $\beta\in[0,1]$. The knowledge-regularized label imperfectness
is
$
	\widehat{Q}_{\mathrm{R},S_z,S'_{g}}(\beta)= \frac{1}{n_z}\sum_{ S_z}r(h_{\mathrm{R},\beta}^*(x_i),y_i),
$
	where  $y_i$ is the true label regarding $x_i$.
	Correspondingly, with $\bar{h}_{\mathrm{R},\beta}^*=\arg\min_{h}\mathbb{E}[\frac{1-\beta}{n_z}\sum_{S_z}r(h(x_i),z_i)+ \frac{\beta}{n_g'}\sum_{ S'_{g}}r_{\mathrm{K}}(h(x_i),g(x_i))]  $  being the optimal hypothesis for the regularized risk, the expected knowledge regularized label imperfectness is
	$
	Q_{\mathrm{R}}(\beta)= \mathbb{E}\left[ r(\bar{h}_{\mathrm{R},\beta}^*(x),y)\right].
	$
\end{definition}

Like knowledge imperfectness, knowledge-regularized label imperfectness indicates the risk of the  hypothesis optimally learned by joint supervision
from labels and knowledge. We see that when $\beta=0$, $\widehat{Q}_{\mathrm{R}}(0)$ (or $Q_{\mathrm{R}}(0)$) is the imperfectness of pure label-based supervision. Thus, the
gain due to knowledge is $\Delta \widehat{Q}_{\mathrm{R},\beta}=\widehat{Q}_{\mathrm{R}}(0)-\widehat{Q}_{\mathrm{R}}(\beta)$ (or $\Delta Q_{\mathrm{R},\beta}=Q_{\mathrm{R}}(0)-Q_{\mathrm{R}}(\beta)$ for the expected version). We show in the following theorem how the two types of imperfectness affect the population risk trained on the informed risk in Eqn.~\eqref{eqn:informlossnn}. The details are deferred to Appendix \ref{sec:proofgeneralization}.

\begin{theorem}\label{thm:generalizationbound}
	With $\bm{W}^{(T)}$ trained on
	Eqn.~\eqref{eqn:informlossnn}, $\phi\leq \widetilde{O}\left( \epsilon^2 L^{-9/2}\log^{-3}(m)\right), \phi\leq (\sqrt{\epsilon}/n_z)^{1/b} $, and other assumptions the same as Theorem \ref{the:convergence},
with probability at least $1-O(\phi)-\delta, \delta\in(0,1)$, the population risk satisfies
	\[
\begin{split}
&R\left(h_{\bm{W}^{(T)}}\!\right) \!\!\leq\!\! O(\sqrt{\epsilon})+(1-\lambda) \widehat{Q}_{\mathrm{R},S_z,S'_{g}}(\beta_{\lambda}) \\
&+\lambda \widehat{Q}_{\mathrm{K},S''_{g}}+O\!\left( \Phi\!+\!\sqrt{\log(1/\delta)}\right)\! \left(\frac{1-\lambda}{\sqrt{n_z}}\!+\!\frac{\lambda}{\sqrt{n_g}}\right),
\end{split}
	\]
	where $\beta_{\lambda}=\frac{\lambda n'_{g}}{(1-\lambda) n_g+\lambda n'_{g}}$,  $\widehat{Q}_{\mathrm{R},S_z,S'_{g}}(\beta_{\lambda})$ is the knowledge-regularized label imperfectness in Definition \ref{def:regularizedimperfectness} and $\widehat{Q}_{\mathrm{K},S''_{g}}$ is the knowledge imperfectness  in Definition \ref{def:knwoledgeimperfect} applied
to $S''_{g}$, and $\Phi=O\left( 4^L L^{3/2} m^{1/2}\phi^{-b-1/2}d \rho^{-1/2}\bar{\lambda}^{-1/2}\alpha^{-1/2} \right)$.
\end{theorem}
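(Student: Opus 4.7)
The plan is to combine Theorem~\ref{the:convergence} with two Rademacher-type uniform deviation bounds on $\mathcal{H}$ and then take a convex combination with weights $(1-\lambda)$ and $\lambda$. The key observation is that $\beta_\lambda=\lambda n'_g/[(1-\lambda)n_g+\lambda n'_g]$ is precisely the value making the informed risk, when restricted to the smooth sets that contain at least one labelled sample, proportional to the $\beta$-regularized empirical risk of Definition~\ref{def:regularizedimperfectness}. Consequently the effective label $y_{\mathrm{eff},k}$ of Definition~\ref{def:efflabel} agrees, up to the overparameterization approximation of pointwise minimisation by members of $\mathcal{H}$, with $h_{\mathrm{R},\beta_\lambda}^*(x_i)$ when $k\in\mathcal{U}_\phi(S_z)$ and with $h_{\mathrm{K}}^*(x_i)$ when $k\notin\mathcal{U}_\phi(S_z)$.

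First I would invoke the convergence conclusion $\sum_{i}(\mu_i+\lambda_i)\|h_{\bm{W}^{(T)}}(x_i)-y_{\mathrm{eff},k(x_i)}\|^2\le O(\epsilon)$, split it into the $S_z$ contribution (whose smooth sets lie in $\mathcal{U}_\phi(S_z)$) and the $S''_g$ contribution, use Cauchy--Schwarz to convert the mean-squared gap into a mean first-order gap of order $\sqrt{\epsilon}$, and apply Lipschitz continuity of $r$ to deduce $\frac{1}{n_z}\sum_{S_z}r(h_{\bm{W}^{(T)}}(x_i),y_i)\le\widehat{Q}_{\mathrm{R},S_z,S'_g}(\beta_\lambda)+O(\sqrt{\epsilon})$ together with the analogous bound involving $\widehat{Q}_{\mathrm{K},S''_g}$ on $S''_g$. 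Next I would apply two standard one-sided uniform deviation inequalities on $\mathcal{H}$: one using the $n_z$ inputs of $S_z$ giving $R(h)\le\frac{1}{n_z}\sum_{S_z}r(h(x_i),y_i)+O(\mathcal{R}_{n_z}(r\circ\mathcal{H})+\sqrt{\log(1/\delta)/n_z})$, and one using the knowledge-only inputs in $S''_g\subseteq S_g$. The Rademacher factor is controlled by the Lipschitz constant of $r$, the product of layer-wise spectral norms of the ReLU-activated weights, and the training radius $\tau$, all of which are bounded by the perturbation and initialisation estimates already developed for Theorem~\ref{the:convergence} (in particular Lemma~\ref{lma:forwardinputperturbation}), and combine to yield the stated $\Phi$ factor. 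Taking the convex combination of the two uniform bounds with weights $(1-\lambda)$ and $\lambda$ and substituting the first-step empirical estimates then produces the claimed inequality, with the failure probability $O(\phi)+\delta$ coming from Theorem~\ref{the:convergence} and a union bound over the two deviation inequalities.

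The main obstacle is the identification in the first step: $y_{\mathrm{eff},k}$ solves a separate per-smooth-set optimisation, whereas $h_{\mathrm{R},\beta_\lambda}^*$ and $h_{\mathrm{K}}^*$ are global minimisers over all of $\mathcal{H}$. Justifying the identification requires an overparameterization-based expressivity argument showing that the global regularized empirical risk decouples across smooth sets up to $O(\sqrt{\epsilon})$, combined with a continuity argument inside each $\mathcal{C}_{\phi,k}$ bounding $\|h_{\mathrm{R},\beta_\lambda}^*(x_i)-h_{\mathrm{R},\beta_\lambda}^*(x'_k)\|$ for $x_i\in\mathcal{C}_{\phi,k}$. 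This is exactly where the refined constraints $\phi\le\widetilde{O}(\epsilon^2 L^{-9/2}\log^{-3}(m))$ and $\phi\le(\sqrt{\epsilon}/n_z)^{1/b}$ enter: they make both the per-input discretisation error and its accumulation over all $n_z$ labelled samples negligible at the $O(\sqrt{\epsilon})$ scale that appears in the theorem.
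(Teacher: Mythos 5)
Your plan tracks the paper's proof closely: both proceed by (i) a Rademacher‑complexity uniform deviation bound over $\mathcal{H}$ at the training radius $\tau$, (ii) transfer of the empirical risk at $h_{\bm{W}^{(T)}}$ to that at the effective labels via the Lipschitz continuity of $r$ and the convergence guarantee of Theorem~\ref{the:convergence}, (iii) the observation that $\beta_\lambda=\lambda n'_g/[(1-\lambda)n_g+\lambda n'_g]$ rewrites the per‑smooth‑set informed risk as a $\beta_\lambda$‑regularized risk so that Definition~\ref{def:regularizedimperfectness} applies, and (iv) a convex combination of the two deviation bounds with weights $(1-\lambda)$ and $\lambda$. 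Two points of departure are worth noting.

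First, the paper applies the second uniform deviation inequality over all of $S_g$ rather than $S''_g$, which is what produces the $\lambda/\sqrt{n_g}$ factor appearing in the statement; it then separately discards the $S'_g$ contribution to the empirical risk, which is $O(n'_g/n_g)\le O(n_z\phi^b)\le O(\sqrt{\epsilon})$ under the hypothesis $\phi\le(\sqrt{\epsilon}/n_z)^{1/b}$. Applying your bound on $S''_g$ alone would give $\lambda/\sqrt{n''_g}$ instead, which is of the same order (since $n'_g/n_g\le O(\sqrt{\epsilon})$) but does not literally reproduce the constant in the theorem.

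Second, the ``identification gap'' you flag as the main obstacle is not where you locate it. In Definitions~\ref{def:knwoledgeimperfect} and~\ref{def:regularizedimperfectness} the optimal hypotheses $h^*_K$ and $h^*_{\mathrm{R},\beta}$ are defined by \emph{unconstrained} minimization over all hypotheses, not over $\mathcal{H}$; thus no overparameterization‑expressivity argument is needed to compare them with the per‑set minimizer $y_{\mathrm{eff},k}$. The paper's Lemma~\ref{lma:boundefflabel} handles the comparison purely through the strong‑convexity assumption on the risk functions: at the per‑set minimizer $y_{\mathrm{eff},k}$ the per‑set risk gradient is $\widetilde O(\phi)$‑small, and the empirical risk at the (unconstrained, hence pointwise‑optimal) hypothesis is no larger than at the piecewise‑constant $y_{\mathrm{eff}}$; combining these with the forward input‑perturbation estimate (Lemma~\ref{lma:forwardinputperturbation}) gives $\|h^*_K(x_i)-y_{\mathrm{eff},k(x_i)}\|^2\le\widetilde O(L^{5/4}\phi^{1/2}\log^{1/4}m)$, which is $O(\epsilon)$ under $\phi\le\widetilde O(\epsilon^2 L^{-9/2}\log^{-3}m)$. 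Your proposed local continuity argument inside each $\mathcal{C}_{\phi,k}$ is in the right spirit, but it is the convexity structure (and the chosen definitions), not expressivity of $\mathcal{H}$, that closes the argument.
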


\begin{remark}\label{remark:knowledge_effect}
Theorem~\ref{thm:generalizationbound} shows that  by training on the informed risk \eqref{eqn:informlossnn}, knowledge affects the generation performance in the following two ways.  \\	
	$\bullet$ \textbf{Knowledge for regularization.}
  	When knowledge is applied to sample inputs inside the same smooth sets as labeled samples, it serves as an explicit regularization for label-based supervision, possibly reducing the label imperfectness from $\widehat{Q}_{\mathrm{R},S_z,S'_{g}}(0)$ to $\widehat{Q}_{\mathrm{R},S_z,S'_{g}}(\beta_{\lambda})$.\\	
	$\bullet$ \textbf{Knowledge for supplementing labels.}
	The generalization error is in the order of $O\left( \frac{1-\lambda}{\sqrt{n_z}}\!+\!\frac{\lambda}{\sqrt{n_g}}\right) $. When no knowledge is used (
	$\lambda=0$), the order is as large as $O\left( \frac{1}{\sqrt{n_z}}\right) $. If knowledge is applied ($\lambda>0$), then the generalization error decreases with the increasing of knowledge-supervised sample size $n_g$. Thus, when knowledge is applied to smooth sets without labeled samples, it serves as a (possibly imperfect) supplement for labels, while introducing knowledge imperfectness $\widehat{Q}_{\mathrm{K},S''_{g}}$. \\
	The hyper-parameter $\lambda$ can be used to balance the introduced imperfectness and generalization error from label and knowledge supervision. However, by the risk bound, it is hard to use one hyper-parameter $\lambda$ to control the two effects of knowledge, which will be further discussed in the next section.
\end{remark}

\section{A Generalized Training Objective}\label{sec:improved_objective}
In the informed risk in Eqn.~\eqref{eqn:informlossnn},
only one hyper-weight $\lambda$ is present, controlling the  two
different effects of knowledge (Remark~\ref{remark:knowledge_effect}).
To better reap the benefits of knowledge,
we consider a generalized informed risk in Eqn.\eqref{eqn:informlossnn3} by introducing another hyper-weight $\beta$, which introduces more flexibility to govern the roles of domain knowledge.
\begin{equation}\label{eqn:informlossnn3}
\begin{split}
&\hat{R}_{\mathrm{I},G}\!\left(\bm{W}\right)\! =\!  \frac{( 1-\lambda) (1-\beta)}{n_z}\!\sum_{S_z}  r\left(h_{\bm{W},i},z_i \right)\!+\\
&\!\frac{( 1-\lambda)\beta}{n'_{g}}\sum_{S'_{g}}r_{\mathrm{K}}\!\left( h_{\bm{W},i},g_i\right)\!+\!\frac{\lambda}{n''_{g}}\sum_{S''_{g}} r_{\mathrm{K}}\!\left( h_{\bm{W},i},g_i\right),
\end{split}
\end{equation}
where $\beta, \lambda\in[0,1], h_{\bm{W},i}=h_{\bm{W}}(x_i), g_i=g(x_i)$.

In Eqn.~\eqref{eqn:informlossnn3}, the two hyper-parameters
$\lambda$ and $\beta$ can jointly control the knowledge effects (and the introduced imperfectness)
when knowledge is applied. The hyperparameter $\beta$ is used to controls the knowledge regularization strength. By Remark~\ref{remark:knowledge_effect}, knowledge-supervised
samples in $S_g'$ serve as an explicit regularization for label-based supervision while introducing
knowledge-regularized label imperfectness $Q_{\mathrm{R}}(\beta)$. Thus,
 when $\beta$ is larger, more effects from $S_g'$ are incorporated and the regularization effect from knowledge is stronger. 
Also, we use $\lambda$ to adjust the effect of supplementing labels and
the introduction of $Q_{\mathrm{K}}$. By Remark~\ref{remark:knowledge_effect}, $S_g''$ serves as an supplement for labels
while introducing the knowledge imperfectness $Q_{\mathrm{K}}$. Thus,
 with larger $\lambda$, more effects from $S_g''$ are incorporated, which means we incorporate more 
effects of data supplement from knowledge and also knowledge imperfectness $Q_{\mathrm{K}}$
but less effect of knowledge regularization and knowledge-regularized label imperfectness $Q_{\mathrm{R}}$. The benefit of the training objective in Eqn.~\eqref{eqn:informlossnn3} will be explained formally in Theorem~\ref{crl:generalizationbound} and Corollary~\ref{crl:generalizationbound2}.

Compared with the objective in Eqn.~\eqref{eqn:informlossnn} with only one hyper-parameter $\lambda$, Eqn.~\eqref{eqn:informlossnn3} introduces another hyper-parameter $\beta$ to independently adjust the degree
of the knowledge regularization,
making Eqn.~\eqref{eqn:informlossnn3} more general and flexible.
To train on Eqn.~\eqref{eqn:informlossnn3}, we need to separate dataset for knowledge supervision into two datasets $S_g'$ and $S_g''$ based on whether an input is close to a labeled input and assign different hyper-weights to them. 
The knowledge-based dataset separation is determined by $\phi$ in Definition \ref{def:smoothset}. Specifically, when the network width goes to infinity ($\phi$ goes to zero), $S_g'$ shares the same inputs as $S_z$, but $S_g'$ and $S_z$ are supervised by knowledge
and labels, respectively. We have $S_g''=S_{g}\setminus S_g'=S_{g}\setminus S_z$ which supplements the labels as shown in
Remark \ref{remark:knowledge_effect}. Note that when the knowledge is perfect and knowledge-supervised samples are sufficient, we do not need labeled samples, i.e., $S_z=\emptyset$ and we set $\lambda=1, \beta=1$. Then, we have $S_g''=S_g$ and Eqn.~\eqref{eqn:informlossnn3} becomes a purely knowledge-based risk. When no knowledge is applied, we set $\lambda=0, \beta=0$, and Eqn.~\eqref{eqn:informlossnn3} becomes a purely lable-based risk. In general cases when labels and knowledge are both used, hyper-parameters $\lambda$ and $\beta$ are used to control the effects of knowledge.

\subsection{Population Risk}
Note that Eqn.~\eqref{eqn:informlossnn3} can also be written as the form of Eqn. \eqref{eqn:informlossnn_training} with hyper-parameters chosen as $\mu_i= \frac{( 1-\lambda) (1-\beta)}{n_z}\mathds{1}(x_i\in S_z)$ and  $\lambda_i= \frac{( 1-\lambda)\beta}{n'_{g}}\mathds{1}(x_i\in S_g')+\frac{\lambda}{n''_{g}}\mathds{1}(x_i\in S_g'')$, so Theorem~\ref{the:convergence} for convergence still holds.
Next, we bound the population risk based on the generalized informed risk. The details are given in Appendix \ref{sec:proofgeneralization2}.

\begin{theorem}\label{crl:generalizationbound}
	Assume that $\bm{W}^{(T)}$ trained on
	Eqn.~\eqref{eqn:informlossnn3} and other assumptions are the same with those of Theorem \ref{the:convergence}, setting $\phi: \phi\leq \widetilde{O}\left( \epsilon^2 L^{-9/2}\log^{-3}(m)\right) $ and $\phi\leq (\sqrt{\epsilon}/n_z)^{1/b}  $,
with probability at least $1-O(\phi)-\delta, \delta\in(0,1)$, the population risk satisfies
	\[
	\begin{split}
	R(h_{\bm{W}^{(T)}})&\leq O(\sqrt{\epsilon})+(1-\lambda) \widehat{Q}_{\mathrm{R},S_z,S'_{g}}(\beta) +\lambda \widehat{Q}_{\mathrm{K},S''_{g}}+
	\\&O\left( \Phi+\sqrt{\log(1/\delta)}\right) \left(\frac{1-\lambda}{\sqrt{n_z}}+\frac{\lambda}{\sqrt{n''_{g}}} \right),
	\end{split}
	\]
	where $\beta$ and $\lambda$ are trade-off hyper-parameters in Eqn.~\eqref{eqn:informlossnn3}
 \end{theorem}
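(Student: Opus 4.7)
The plan is to piggy-back on the architecture of the proof of Theorem~\ref{thm:generalizationbound} and exploit the fact that Eqn.~\eqref{eqn:informlossnn3} is a special case of the general training form~\eqref{eqn:informlossnn_training} with the explicit weights $\mu_i=\tfrac{(1-\lambda)(1-\beta)}{n_z}\mathds{1}(x_i\in S_z)$ and $\lambda_i=\tfrac{(1-\lambda)\beta}{n'_g}\mathds{1}(x_i\in S'_g)+\tfrac{\lambda}{n''_g}\mathds{1}(x_i\in S''_g)$, so Theorem~\ref{the:convergence} applies verbatim. First I would invoke convergence to conclude that, with probability $1-O(\phi)$, the network output $h_{\bm{W}^{(T)}}(x_i)$ is $O(\sqrt{\epsilon})$-close (in a weighted $\ell_2$ sense) to the effective label $y_{\mathrm{eff},k(x_i)}$ at every training input.

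Next, I would decompose the smooth sets into those in $\mathcal{U}_{\phi}(S_z)$ and those outside. For a smooth set $k\in\mathcal{U}_{\phi}(S_z)$, the $(1-\lambda)$ factor cancels in the $\arg\min$ defining $y_{\mathrm{eff},k}$, leaving a weighted sum of $r$-terms and $r_K$-terms with proportions $(1-\beta)/n_z$ and $\beta/n'_g$ respectively. As $\phi\to 0$ each representative input $x'_k$ becomes arbitrarily close to the nearby training inputs, so by Lipschitzness of $r,r_K$ the set-wise effective label approaches the value $h^{*}_{\mathrm{R},\beta}(x'_k)$ defined in Definition~\ref{def:regularizedimperfectness}; the parameter $\beta$ now shows up directly rather than being forced into the combined form $\beta_{\lambda}$ of Theorem~\ref{thm:generalizationbound}. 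An analogous argument on smooth sets outside $\mathcal{U}_{\phi}(S_z)$ identifies the effective label with $h^{*}_{\mathrm{K}}(x'_k)$ from Definition~\ref{def:knwoledgeimperfect}.

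With these identifications, I would bound the population risk as
\begin{equation*}
R(h_{\bm{W}^{(T)}})\leq (1-\lambda)\,\mathbb{E}[r(h^{*}_{\mathrm{R},\beta}(x),y)\,\mathds{1}(x\in \mathcal{X}_{\phi}(S_z))]+\lambda\,\mathbb{E}[r(h^{*}_{\mathrm{K}}(x),y)\,\mathds{1}(x\notin \mathcal{X}_{\phi}(S_z))]+\text{error},
\end{equation*}
where the error absorbs (i) the $O(\sqrt{\epsilon})$ gap from convergence pushed through the Lipschitz constant of $r$, (ii) a $\phi$-net covering cost which is negligible under the assumption $\phi\leq(\sqrt{\epsilon}/n_z)^{1/b}$, and (iii) the deviation between empirical and expected risk of the optimal hypotheses. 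The first two expectations are then replaced by their empirical counterparts $\widehat{Q}_{\mathrm{R},S_z,S'_g}(\beta)$ and $\widehat{Q}_{\mathrm{K},S''_g}$ via standard uniform-convergence arguments over the restricted hypothesis class $\mathcal{H}=\{h_{\bm{W}}\mid \bm{W}\in\mathcal{B}(\bm{W}^{(0)},\tau)\}$. The Rademacher-type bound contributes $\tfrac{1-\lambda}{\sqrt{n_z}}$ from the label portion and $\tfrac{\lambda}{\sqrt{n''_g}}$ from the knowledge portion (since only $S''_g$ carries the $\lambda$-weighted pure-knowledge contribution in the decomposed population risk), with the scale factor $\Phi$ coming from a bound on the Lipschitz constant of networks in $\mathcal{H}$ carried over from the proof of Theorem~\ref{thm:generalizationbound}.

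The hard part will be the identification in the second paragraph: relating the per-smooth-set effective labels (which depend on the local intersection of $S_z, S'_g$ with each $\mathcal{C}_{\phi,k}$) to the global optimizers $h^{*}_{\mathrm{R},\beta}$ and $h^{*}_{\mathrm{K}}$ used in the definitions. This requires the local $\arg\min$ to approximate the global $\arg\min$ evaluated at $x'_k$, which I would establish by showing that for sufficiently small $\phi$ every sample sits arbitrarily close to its representative, then using strong convexity of $r,r_K$ (eigenvalues of Hessian in $[\rho,1]$) together with $1/n_z$-level concentration to control the deviation. Most other steps — the convergence invocation, the Lipschitz pushforward, and the concluding uniform-convergence argument — are mechanical reuses of pieces already assembled for Theorem~\ref{thm:generalizationbound}, with $\beta_{\lambda}\mapsto\beta$ and $n_g\mapsto n''_g$ being the only substantive bookkeeping changes.
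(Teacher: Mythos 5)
Your high-level plan is the same as the paper's (reuse Theorem~\ref{the:convergence}, decompose smooth sets into those with and without labeled samples, push the convergence estimate through the Lipschitz risk, identify effective labels with the imperfectness optimizers via strong convexity, finish with Rademacher complexity), and the "hard part" you flag is exactly what the paper isolates into Lemma~\ref{lma:boundefflabel} and proves by the strong-convexity argument you sketch. That much is right.

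The concrete error is in your intermediate inequality
\[
R(h_{\bm{W}^{(T)}})\leq (1-\lambda)\,\mathbb{E}\bigl[r(h^{*}_{\mathrm{R},\beta}(x),y)\,\mathds{1}(x\in \mathcal{X}_{\phi}(S_z))\bigr]+\lambda\,\mathbb{E}\bigl[r(h^{*}_{\mathrm{K}}(x),y)\,\mathds{1}(x\notin \mathcal{X}_{\phi}(S_z))\bigr]+\text{error},
\]
which does not upper-bound the population risk. Because $\mathbb{P}(x\in\mathcal{X}_{\phi}(S_z))=O(n_z\phi^{b})=O(\sqrt{\epsilon})$ under your choice of $\phi$, the first term is itself $O(\sqrt{\epsilon})$ regardless of $h^{*}_{\mathrm{R},\beta}$, so for, say, $\lambda=1/2$ the right-hand side would have to absorb roughly $\tfrac12\mathbb{E}[r(h_{\bm{W}^{(T)}})]$ into ``error'' and the bound becomes vacuous. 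Moreover $\widehat{Q}_{\mathrm{R},S_z,S'_g}(\beta)=\tfrac{1}{n_z}\sum_{S_z}r(h^{*}_{\mathrm{R},\beta,i},y_i)$ is the empirical average over $S_z$, which is an i.i.d.\ sample from $\mathbb{P}_X$, so it estimates $\mathbb{E}[r(h^{*}_{\mathrm{R},\beta}(x),y)]$, \emph{not} $\mathbb{E}[r(h^{*}_{\mathrm{R},\beta}(x),y)\mathds{1}(x\in\mathcal{X}_{\phi}(S_z))]$; the uniform-convergence step you invoke connects to the unrestricted expectation, not the localized one. The correct decomposition, which the paper uses, writes $R=(1-\lambda)\mathbb{E}[r]+\lambda\mathbb{E}[r]$, leaves the $(1-\lambda)$ term unrestricted (bounded by empirical risk over $S_z$ plus a $\Phi/\sqrt{n_z}$ Rademacher term, then by $\widehat{Q}_{\mathrm{R},S_z,S'_g}(\beta)$ via Lemma~\ref{lma:boundefflabel}), and localizes \emph{only} the $\lambda$ term to $\mathcal{X}''$: the portion of $\lambda\mathbb{E}[r]$ falling in $\mathcal{X}'=\mathcal{X}_{\phi}(S_z)$ is absorbed into the $O(\sqrt{\epsilon})$ error precisely because $\mathbb{P}_{\mathcal{X}'}=O(\sqrt{\epsilon})$ and the risk is bounded, and the $\mathcal{X}''$ portion is matched to the empirical risk over $S''_g$ with Rademacher rate $\Phi/\sqrt{n''_g}$. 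Once you repair this decomposition, the remaining steps you describe go through.
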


Additionally, to obtain more insights for sampling complexity, we
further bound the population risk in terms of expected imperfectness, at the expense of some tightness. The proof details are deferred to Appendix \ref{sec:proofgeneralization3}.
\begin{corollary}\label{crl:generalizationbound2}
	With the same assumptions as in Theorem~\ref{crl:generalizationbound}, with probability at least $1-O(\phi)-\delta, \delta\in(0,1)$, the population risk satisfies
	\[
	\begin{split}
	R(h_{\bm{W}^{(T)}})&\leq O(\sqrt{\epsilon})+(1-\lambda) Q_{\mathrm{R}}(\beta) +\lambda Q_{\mathrm{K}}+\\
	&O\left( \Phi+\log^{1/4}(1/\delta)\right)\sqrt{\frac{1-\lambda}{\sqrt{n_z}}+\frac{\lambda}{\sqrt{n''_{g}}}},
	\end{split}
	\]
	where  $Q_{\mathrm{R}}(\beta)$ is the expected knowledge-regularized label imperfectness in Definition \ref{def:regularizedimperfectness}, $Q_{\mathrm{K}}$ is the expected knowledge imperfectness in Definition~\ref{def:knwoledgeimperfect}.
\end{corollary}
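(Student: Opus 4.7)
}

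The plan is to bootstrap from Theorem \ref{crl:generalizationbound}, which already controls $R(h_{\bm{W}^{(T)}})$ in terms of the \emph{empirical} imperfectness quantities $\widehat{Q}_{\mathrm{R},S_z,S'_g}(\beta)$ and $\widehat{Q}_{\mathrm{K},S''_g}$, and then upgrade each empirical quantity to its expected counterpart by a concentration argument. First I would invoke Theorem \ref{crl:generalizationbound} verbatim to deposit
\[
R(h_{\bm{W}^{(T)}})\le O(\sqrt{\epsilon})+(1-\lambda)\widehat{Q}_{\mathrm{R},S_z,S'_g}(\beta)+\lambda\widehat{Q}_{\mathrm{K},S''_g}+O\!\left(\Phi+\sqrt{\log(1/\delta)}\right)\!\Bigl(\tfrac{1-\lambda}{\sqrt{n_z}}+\tfrac{\lambda}{\sqrt{n''_g}}\Bigr),
\]
and then reduce the problem to bounding $\widehat{Q}_{\mathrm{R},S_z,S'_g}(\beta)-Q_{\mathrm{R}}(\beta)$ and $\widehat{Q}_{\mathrm{K},S''_g}-Q_{\mathrm{K}}$ with high probability.

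Next, I would treat each imperfectness separately. For the knowledge term, the empirical minimizer $h_{\mathrm{K}}^{*}$ of $\frac{1}{n''_g}\sum_{S''_g}r_{\mathrm{K}}(h(x_i),g(x_i))$ and the population minimizer $\bar{h}_{\mathrm{K}}^{*}$ of $\mathbb{E}[r_{\mathrm{K}}(h(x),g(x))]$ can be compared via a standard uniform-convergence argument over the hypothesis class $\mathcal{H}$: strong convexity of $r_{\mathrm{K}}$ (eigenvalues of the Hessian in $[\rho,1]$) converts a sup-norm deviation between the two empirical and expected risks into a control on $\|h_{\mathrm{K}}^{*}-\bar{h}_{\mathrm{K}}^{*}\|$; Lipschitzness of $r$ then translates this into a bound on $\widehat{Q}_{\mathrm{K},S''_g}-Q_{\mathrm{K}}$. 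Using boundedness of the losses, Hoeffding/McDiarmid yields a deviation of order $\sqrt{\log(1/\delta)/n''_g}$ (plus a Rademacher-complexity term absorbed into $\Phi$). The same template applies to $\widehat{Q}_{\mathrm{R},S_z,S'_g}(\beta)$, except that the regularized objective is a convex combination of two empirical averages over $S_z$ and $S'_g$, so the concentration splits as $O(\sqrt{\log(1/\delta)/n_z})+O(\sqrt{\log(1/\delta)/n''_g})$ (noting $n'_g\le n_g''$ only in the combined counting). Multiplying by the $(1-\lambda)$ and $\lambda$ weights gives additive slack of the form $O(\Phi+\sqrt{\log(1/\delta)})\bigl(\tfrac{1-\lambda}{\sqrt{n_z}}+\tfrac{\lambda}{\sqrt{n''_g}}\bigr)$.

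Finally, I would fold the two $O(\cdot)(\cdots)$ contributions into one term and relax it to the displayed square-root form. Writing $A=\Phi+\sqrt{\log(1/\delta)}$ and $B=\tfrac{1-\lambda}{\sqrt{n_z}}+\tfrac{\lambda}{\sqrt{n''_g}}$, the elementary bound $A\cdot B\le \max(1,A^2)\cdot\sqrt{B}$ (using $B\le 1$ after absorbing constants, which holds for the $n$'s large enough) produces an upper bound of the shape $O(\Phi^{2}+\sqrt{\log(1/\delta)})\sqrt{B}$; then using $\sqrt{x+y}\ge (\sqrt{x}+\sqrt{y})/\sqrt{2}$ and rebalancing constants yields $O(\Phi+\log^{1/4}(1/\delta))\sqrt{B}$, matching the stated form. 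The hypothesis $\phi\le (\sqrt{\epsilon}/n_z)^{1/b}$ is used exactly as in Theorem \ref{crl:generalizationbound} to dominate the discretization error by $O(\sqrt{\epsilon})$, so no additional $\phi$-dependent term appears.

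The main obstacle, as anticipated in the statement by the phrase \emph{``at the expense of some tightness''}, is the passage from empirical to expected imperfectness when the optimizers themselves are data-dependent: a naive Hoeffding applied to a fixed hypothesis is not sufficient, so one must either invoke uniform convergence over $\mathcal{H}$ (controllable via $\Phi$) or exploit strong convexity of both $r$ and $r_{\mathrm{K}}$ to stability-bound the shift from $h_{\mathrm{K}}^{*}$ to $\bar{h}_{\mathrm{K}}^{*}$ (and analogously for $h_{\mathrm{R},\beta}^{*}$). This is also the step where the clean linear-in-$1/\sqrt{n}$ rate of Theorem \ref{crl:generalizationbound} degrades to the slower $1/n^{1/4}$-type rate appearing under the outer square root in Corollary \ref{crl:generalizationbound2}.
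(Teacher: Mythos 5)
Your high-level plan---start from Theorem~\ref{crl:generalizationbound} and then upgrade the empirical imperfectness terms $\widehat{Q}_{\mathrm{R},S_z,S'_g}(\beta)$ and $\widehat{Q}_{\mathrm{K},S''_g}$ to $Q_{\mathrm{R}}(\beta)$ and $Q_{\mathrm{K}}$---is not what the paper does, and your execution of the upgrade has a genuine gap. The paper does \emph{not} compare the data-dependent empirical minimizers $h^*_{\mathrm{K}}, h^*_{\mathrm{R},\beta}$ to the fixed population minimizers $\bar{h}^*_{\mathrm{K}}, \bar{h}^*_{\mathrm{R},\beta}$. Instead, it re-runs the effective-label argument of Theorem~\ref{crl:generalizationbound} with $\bar{h}^*$ substituted from the outset, using part~(b) of Lemma~\ref{lma:boundefflabel}: because $\bar{h}^*$ is deterministic, McDiarmid alone controls the deviation between the empirical average $\tfrac{1}{n}\sum r(\bar{h}^*(x_i),y_i)$ and its expectation $Q$, with no uniform-convergence-over-$\mathcal{H}$ step needed. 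This is simpler and avoids the stability argument you invoke, which would require its own careful treatment of how $h^*$ and $\bar{h}^*$ interact with the smooth-set structure.

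The more serious problem is your final arithmetic. You claim $A\cdot B\le \max(1,A^2)\sqrt{B}$ gives a bound of the shape $O(\Phi^2+\sqrt{\log(1/\delta)})\sqrt{B}$ and that ``rebalancing constants'' then produces $O(\Phi+\log^{1/4}(1/\delta))\sqrt{B}$. This is not a rebalancing of constants: $\Phi$ grows polynomially with $m, L, \phi^{-1}$, so $\Phi^2\sqrt{B}$ is \emph{not} $O(\Phi\sqrt{B})$, and you cannot downgrade one to the other. You have misidentified the source of the $\log^{1/4}$ and fourth-root rates. They do not come from any manipulation of the Rademacher term; they come from Lemma~\ref{lma:boundefflabel}(b), which bounds the weighted second moment $\frac{1}{n''_g}\sum_{S''_g}\|\bar{h}^*_{\mathrm{K},i}-y_{\mathrm{eff},k(x_i)}\|^2\le \widetilde{O}(L^{5/4}\phi^{1/2}\log^{1/4}m)+O(\sqrt{\log(1/\delta)/n''_g})$, after which Cauchy--Schwarz converts the first moment $\frac{1}{n''_g}\sum\|\bar{h}^*_{\mathrm{K},i}-y_{\mathrm{eff},k(x_i)}\|$ into a square root of that bound, producing the $(\log(1/\delta)/n''_g)^{1/4}$ rate. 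The corresponding McDiarmid step for $\tfrac{1}{n}\sum r(\bar{h}^*(x_i),y_i)-Q$ keeps the usual $\sqrt{\log(1/\delta)/n}$ rate, but the square-rooted second-moment term dominates and dictates the displayed form. Without this mechanism your derivation does not reach the stated bound.
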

\begin{remark}\label{rmk:improved_knowledge_effect}
Theorem \ref{crl:generalizationbound} and  Corollary \ref{crl:generalizationbound2} show that by training on the generalized informed risk in Eqn.~\eqref{eqn:informlossnn3}, label and knowledge supervision jointly affect the population risk while introducing a  combination of knowledge-regularized label imperfectness $Q_{\mathrm{R}}(\beta) $ and knowledge imperfectness $Q_{\mathrm{K}}$. The effect of knowledge regularization is controlled by $\beta$ and the trade-off between the two imperfectness terms and the trade-off between the two generalization errors $\frac{1-\lambda}{\sqrt{n_z}}$ and $\frac{\lambda}{\sqrt{n''_{g}}}$ are both controlled by $\lambda$. Thus, this gives us more flexibility to adjust how much domain knowledge is incorporated when it plays different roles in informed learning as discussed in Remark \ref{remark:knowledge_effect}. Also, as  shown by the population risk bounds, we can tune the two hyper-parameters separately --- we can first tune $\beta$ to minimize $Q_{\mathrm{R}}(\beta)$, and then tune $\lambda$  to balance $Q_{\mathrm{R}}(\beta)$ and $Q_{\mathrm{K}}$, and also balance the generalization errors due to sizes of datasets.
\end{remark}

\subsection{Sampling Complexity}\label{sec:samplingcomplexity}

We discuss the choices of hyper-parameters $\beta$ and $\lambda$ in different cases to guarantee a small population risk, and give the sampling complexity in each case, whose details are deferred to Appendix \ref{sec:proofsamplingcomplexity}.

\begin{corollary}[Sampling Complexity]\label{thm:generalization_choiceweight}
	With the same set of assumptions as in Corollary~\ref{crl:generalizationbound2} and setting $\beta^*=\arg\min_{\beta\in[0,1]} Q_{\mathrm{R}}(\beta)$, with probability at least $1-O(\phi)-\delta, \delta\in(0,1)$,  to guarantee a population risk no larger than $\sqrt{\epsilon}$, we have the following cases:
	\begin{enumerate}
		\item[(a)] If $Q_\mathrm{K}\leq \sqrt{\epsilon}$, set $\lambda=1$, the sampling complexity for
		labels is $n_z=0$ and the sampling complexity for knowledge-supervision is $n_{g}\sim O(1/(\epsilon^2-\epsilon^3))$.
		\item[(b)]  If $Q_\mathrm{K}> \sqrt{\epsilon}$ and $\frac{\sqrt{\epsilon}}{Q_{\mathrm{K}}}+\frac{\sqrt{\epsilon}}{Q_{\mathrm{R}}(\beta^*)}\geq 1$, set $\lambda=  \frac{\sqrt{\epsilon}}{Q_{\mathrm{K}}}$, the sampling complexity for
		labels is $n_z\sim O\left(\left( 1/\epsilon-1/\left( \sqrt{\epsilon}Q_{\mathrm{K}}\right)  \right)^2   \right) $ and the sampling complexity for knowledge-supervision is $n_{g}\sim O(1/\left( (\epsilon -\epsilon^2)Q^2_{\mathrm{K}}\right) )$.
		\item[(c)]  If $\frac{\sqrt{\epsilon}}{Q_{\mathrm{K}}}+\frac{\sqrt{\epsilon}}{Q_{\mathrm{R}}(\beta^*)}< 1$, a population risk as low as $\sqrt{\epsilon}$ cannot be achieved no matter what $\lambda$ is and how many samples are used.
	\end{enumerate}
\end{corollary}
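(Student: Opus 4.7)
My plan is to start from the population-risk bound of Corollary~\ref{crl:generalizationbound2},
\[
R(h_{\bm{W}^{(T)}})\leq O(\sqrt{\epsilon}) + (1-\lambda) Q_{\mathrm{R}}(\beta) + \lambda Q_{\mathrm{K}} + O\!\left(\Phi+\log^{1/4}(1/\delta)\right)\sqrt{\tfrac{1-\lambda}{\sqrt{n_z}}+\tfrac{\lambda}{\sqrt{n''_{g}}}},
\]
and treat $\beta,\lambda,n_z,n_g''$ as design variables. The first step is a clean outer reduction: since $\beta$ enters the bound only through $Q_{\mathrm{R}}(\beta)$, the choice $\beta=\beta^*=\arg\min_{\beta\in[0,1]}Q_{\mathrm{R}}(\beta)$ is optimal regardless of the remaining parameters. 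Once $\beta=\beta^*$ is fixed, the task reduces to choosing $\lambda,n_z,n_g''$ so that the imperfectness piece $(1-\lambda)Q_{\mathrm{R}}(\beta^*)+\lambda Q_{\mathrm{K}}$ and the generalization piece are each a constant multiple of $\sqrt{\epsilon}$. A natural sufficient condition to bound the imperfectness by $O(\sqrt{\epsilon})$ is $\lambda Q_{\mathrm{K}}\le\sqrt{\epsilon}$ and $(1-\lambda)Q_{\mathrm{R}}(\beta^*)\le\sqrt{\epsilon}$; adding these upper bounds on $\lambda$ and $1-\lambda$ gives $1\le\sqrt{\epsilon}/Q_{\mathrm{K}}+\sqrt{\epsilon}/Q_{\mathrm{R}}(\beta^*)$, which is exactly the dividing line between Cases~(b) and~(c).

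For Case~(a), with $Q_{\mathrm{K}}\le\sqrt{\epsilon}$ I would set $\lambda=1$, zeroing out the label contribution (so $n_z=0$ is allowed) and making the imperfectness equal to $Q_{\mathrm{K}}\le\sqrt{\epsilon}$; enforcing the generalization term $(\lambda/\sqrt{n_g''})^{1/2}=O(\sqrt{\epsilon})$ then gives $n_g''\gtrsim 1/\epsilon^2$, of the stated order. For Case~(b), with $Q_{\mathrm{K}}>\sqrt{\epsilon}$ and the combined condition in force, I would pick $\lambda^*=\sqrt{\epsilon}/Q_{\mathrm{K}}\in(0,1)$ so that $\lambda^*Q_{\mathrm{K}}=\sqrt{\epsilon}$; the hypothesis $\sqrt{\epsilon}/Q_{\mathrm{K}}+\sqrt{\epsilon}/Q_{\mathrm{R}}(\beta^*)\ge 1$ rearranges exactly to $(1-\lambda^*)Q_{\mathrm{R}}(\beta^*)\le\sqrt{\epsilon}$, giving total imperfectness $O(\sqrt{\epsilon})$, and imposing $(1-\lambda^*)/\sqrt{n_z},\,\lambda^*/\sqrt{n_g''}=O(\epsilon)$ individually produces the stated complexities $n_z\sim((1/\epsilon)-1/(\sqrt{\epsilon}Q_{\mathrm{K}}))^2$ and $n_g\sim 1/((\epsilon-\epsilon^2)Q_{\mathrm{K}}^2)$. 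For Case~(c), under $\sqrt{\epsilon}/Q_{\mathrm{K}}+\sqrt{\epsilon}/Q_{\mathrm{R}}(\beta^*)<1$ both $Q_{\mathrm{K}}$ and $Q_{\mathrm{R}}(\beta^*)$ strictly exceed $\sqrt{\epsilon}$ (otherwise one reciprocal would already be $\ge 1$), so the linear-in-$\lambda$ function $(1-\lambda)Q_{\mathrm{R}}(\beta^*)+\lambda Q_{\mathrm{K}}$ attains its infimum over $[0,1]$ at an endpoint with value $\min(Q_{\mathrm{R}}(\beta^*),Q_{\mathrm{K}})>\sqrt{\epsilon}$; since the remaining terms of the bound are non-negative, the bound on $R$ stays strictly above $\sqrt{\epsilon}$ for any $\lambda$ and any sample budget, which establishes infeasibility.

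The main delicacy I anticipate is matching the exact form of the stated sample complexities, particularly the $\epsilon^2-\epsilon^3=\epsilon^2(1-\epsilon)$ denominators appearing in Cases~(a) and~(b). This factor does not fall out of the most direct back-of-the-envelope algebra, which only gives the clean $1/\epsilon^2$ order; it arises from book-keeping how the target budget $\sqrt{\epsilon}$ is split among the three contributions in the bound. Concretely, one should subtract the imperfectness contribution from the $\sqrt{\epsilon}$ budget before solving for the minimal $n_z,n_g''$, which naturally introduces the $(1-\epsilon)$-type correction factor in the denominators once the constants hidden in the $O(\cdot)$ notation are tracked carefully. I would also need to verify that the smoothness and set-size conditions on $\phi$ used by Corollary~\ref{crl:generalizationbound2} remain compatible with these sample-size choices, which is straightforward but requires threading the parameters through.
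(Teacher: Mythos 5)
Your overall route matches the paper's: fix $\beta = \beta^*$ (since $\beta$ enters the bound only through $Q_{\mathrm{R}}(\beta)$), then split the budget so that the imperfectness piece $(1-\lambda)Q_{\mathrm{R}}(\beta^*)+\lambda Q_{\mathrm{K}}$ and the generalization piece are each $O(\sqrt{\epsilon})$; Case~(b) hinges on the rearrangement of $\sqrt{\epsilon}/Q_{\mathrm{K}}+\sqrt{\epsilon}/Q_{\mathrm{R}}(\beta^*)\geq 1$ into $(1-\lambda^*)Q_{\mathrm{R}}(\beta^*)\leq\sqrt{\epsilon}$ with $\lambda^*=\sqrt{\epsilon}/Q_{\mathrm{K}}$; and Case~(c) is the same observation run in the contrapositive direction (the paper frames it directly as a contradiction, you frame it as endpoint optimization of the linear-in-$\lambda$ combination --- both are sound and equivalent). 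Your Case~(a) and Case~(b) computations for $n_z$ and $n_g''$ likewise agree with the paper's.

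The one concrete place you go wrong is your diagnosis of the $\epsilon^2-\epsilon^3$ (equivalently $\epsilon^2(1-\epsilon)$) denominator. You attribute it to ``subtracting the imperfectness contribution from the $\sqrt{\epsilon}$ budget'' and tracking hidden constants. That cannot be the mechanism --- the imperfectness term $Q_{\mathrm{K}}$ is unknown and merely upper-bounded by $\sqrt{\epsilon}$ in Case~(a), so subtracting it would not produce a clean universal $(1-\epsilon)$ factor. The factor actually comes from a data-accounting step: the bound of Corollary~\ref{crl:generalizationbound2} is stated in terms of $n_g''$, the number of knowledge-supervised samples whose smooth sets contain no labeled sample, whereas the statement of Corollary~\ref{thm:generalization_choiceweight} reports $n_g$, the total number of knowledge-supervised samples. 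Since a fraction $P_{\mathcal{X}'}\leq O(n_z\phi^b)=O(\sqrt{\epsilon})$ of the input space is covered by labeled smooth sets, one has $n_g'' \approx n_g\,(1-P_{\mathcal{X}'})$, so $n_g = n_g''/(1-O(\epsilon))$; plugging $n_g''\sim 1/\epsilon^2$ (Case~(a)) or $n_g''\sim 1/(\epsilon Q_{\mathrm{K}}^2)$ (Case~(b)) into this ratio yields exactly the $\epsilon^2-\epsilon^3$ and $(\epsilon-\epsilon^2)Q_{\mathrm{K}}^2$ denominators. Without this $n_g''\to n_g$ conversion your derivation stops at the clean $1/\epsilon^2$ order and does not reach the stated form.
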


\begin{remark}
In practice, unlabeled samples are typically cheaper to obtain than
\emph{labeled} samples.
	If $Q_{\mathrm{K}}\leq \sqrt{\epsilon}$, the domain knowledge is good enough for supervision,
	and thus we can perform purely knowledge-based training without any labeled
	samples and guarantee a population risk no larger than $\sqrt{\epsilon}$ with $n''_{g}\sim O(1/\epsilon^2)$, and hence $n_g\sim O(1/(\epsilon^2-\epsilon^3))$.
	When the knowledge imperfectness $Q_\mathrm{K}> \sqrt{\epsilon}$, we discuss  the following two cases. First, if $\frac{\sqrt{\epsilon}}{Q_{\mathrm{K}}}+\frac{\sqrt{\epsilon}}{Q_{\mathrm{R}}(\beta^*)}\geq 1$, we can choose $\lambda$ from $\left[1-\frac{\sqrt{\epsilon}}{Q_{\mathrm{R}}(\beta^*)},\frac{\sqrt{\epsilon}}{Q_{\mathrm{K}}}\right] $ to control the risk from knowledge and label imperfectness as low as $\sqrt{\epsilon}$.  We thus choose the largest $\lambda=\frac{\sqrt{\epsilon}}{Q_{\mathrm{K}}}$ to reduce the label sampling complexity. In this case, knowledge is not good enough, but label imperfectness is not too large. Thus, we can guarantee a population risk no larger than $\sqrt{\epsilon}$ with labeled samples $n_z\sim O\left(\left( 1/\epsilon-1/\left( \sqrt{\epsilon}Q_{\mathrm{K}}\right)  \right)^2   \right) $ and knowledge supervised samples $n_g\sim O(1/\left( (\epsilon -\epsilon^2)Q^2_{\mathrm{K}}\right) )$.  Finally, if $\frac{\sqrt{\epsilon}}{Q_{\mathrm{K}}}+\frac{\sqrt{\epsilon}}{Q_{\mathrm{R},\beta^*}}< 1$, we cannot guarantee a population risk less than $\sqrt{\epsilon}$ no matter what $\lambda$ is and how many samples are used since the neither knowledge nor labels are of high enough quality. 
	
	In summary, the extreme cases are: Case (a) where the knowledge supervision alone is nearly perfect, and Case (c) where the knowledge and labels are both of low quality. Usually, we are in Case~(b) where knowledge is imperfect but labels (after knowledge regularization) are good enough. In contrast, DNNs without using domain knowledge requires the label imperfectness $Q_{\mathrm{R},0}$ not to exceed $\sqrt{\epsilon}$; otherwise, the population risk
cannot be guaranteed to be no greater than $\sqrt{\epsilon}$. The informed DNNs relaxes this requirement by requiring $\frac{\sqrt{\epsilon}}{Q_{\mathrm{K}}}+\frac{\sqrt{\epsilon}}{Q_{\mathrm{R}}(\beta^*)}\geq 1$. In addition, the incorporation of domain knowledge reduces the labeled sampling complexity from $n_z\sim O(\frac{1}{\epsilon^2})$ in the traditional no-knowledge setting to $n_z\sim O\left(\left( 1/\epsilon-1/\left( \sqrt{\epsilon}Q_{\mathrm{K}}\right)  \right)^2   \right) $. In other words, the incorporation of knowledge is equivalent to $O(\frac{2}{\epsilon^{3/2} Q_{\mathrm{K}}}-\frac{1}{\epsilon Q_{\mathrm{K}}^2})$ labeled samples, establishing a quantitative comparison between  knowledge supervision and labeled samples.
\end{remark}

\section{Further Discussions}

\textbf{Summary of analysis.}
The convergence analysis in Theorem \ref{the:convergence} introduces the concept of smooth sets and explains how the neural network output behaves by training on an informed risk.
 The generalization analysis in Theorem \ref{thm:generalizationbound} explicitly shows the two different effects the domain knowledge has on the population risk (i.e.,
 regularizing labels and supplementing labels). Based on this observation,  we propose a generalized informed risk in Eqn. \ref{eqn:informlossnn3} to get more flexibility to control the two effects of knowledge, which is validated by Theorem~\ref{crl:generalizationbound} and its Corollary~\ref{crl:generalizationbound2}. Finally, the sampling complexity in Corollary \ref{thm:generalization_choiceweight} shows the effects of joint knowledge and label supervision in a quantitative way.

\begin{figure*}[!t]	
	\centering
	\subfigure[ ]{
		\label{fig:acc1_fun}
		\includegraphics[width={0.32\textwidth}]{./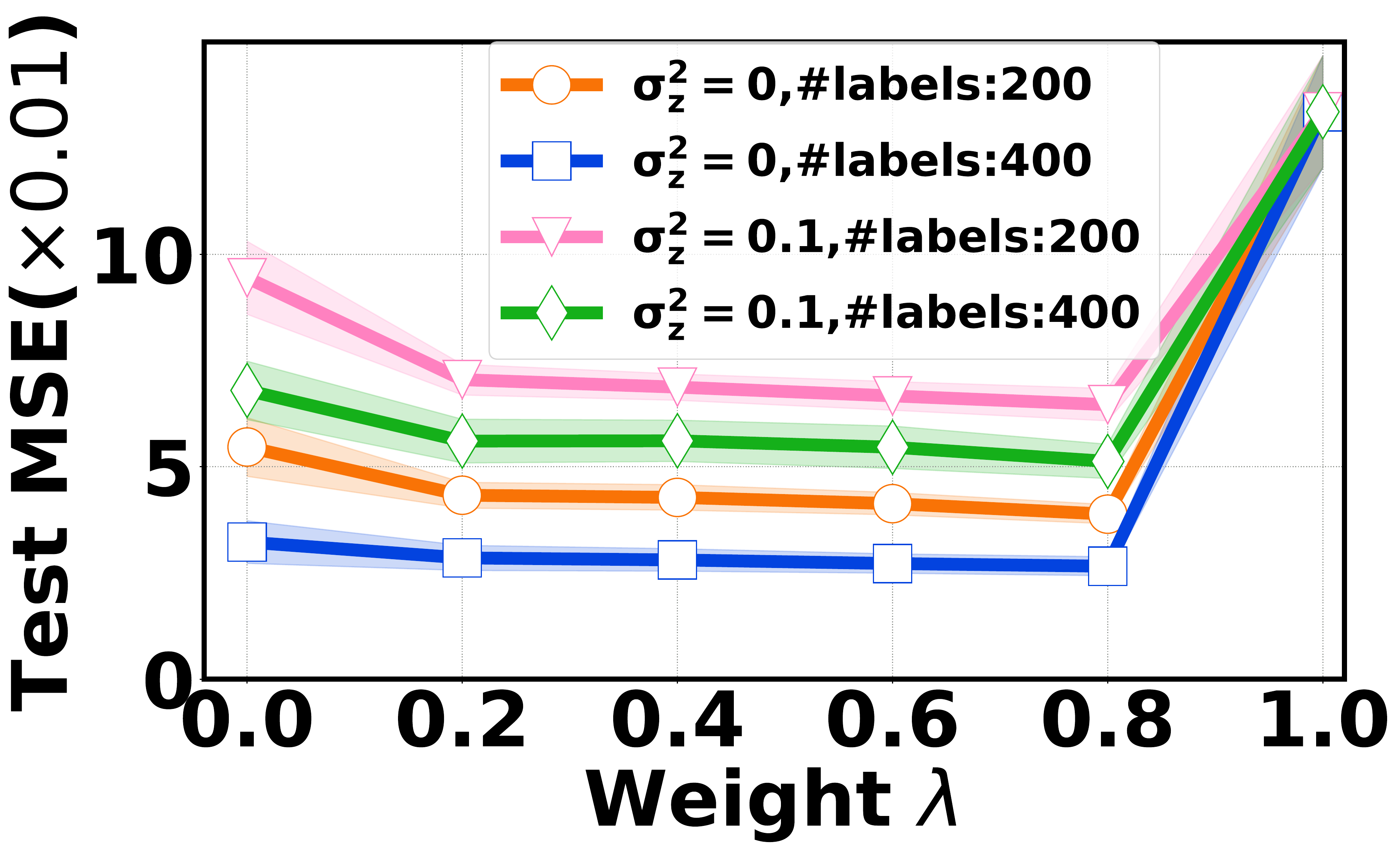}
	}
	\subfigure[]{
		\label{fig:acc2_fun}
		\includegraphics[width=0.32\textwidth]{./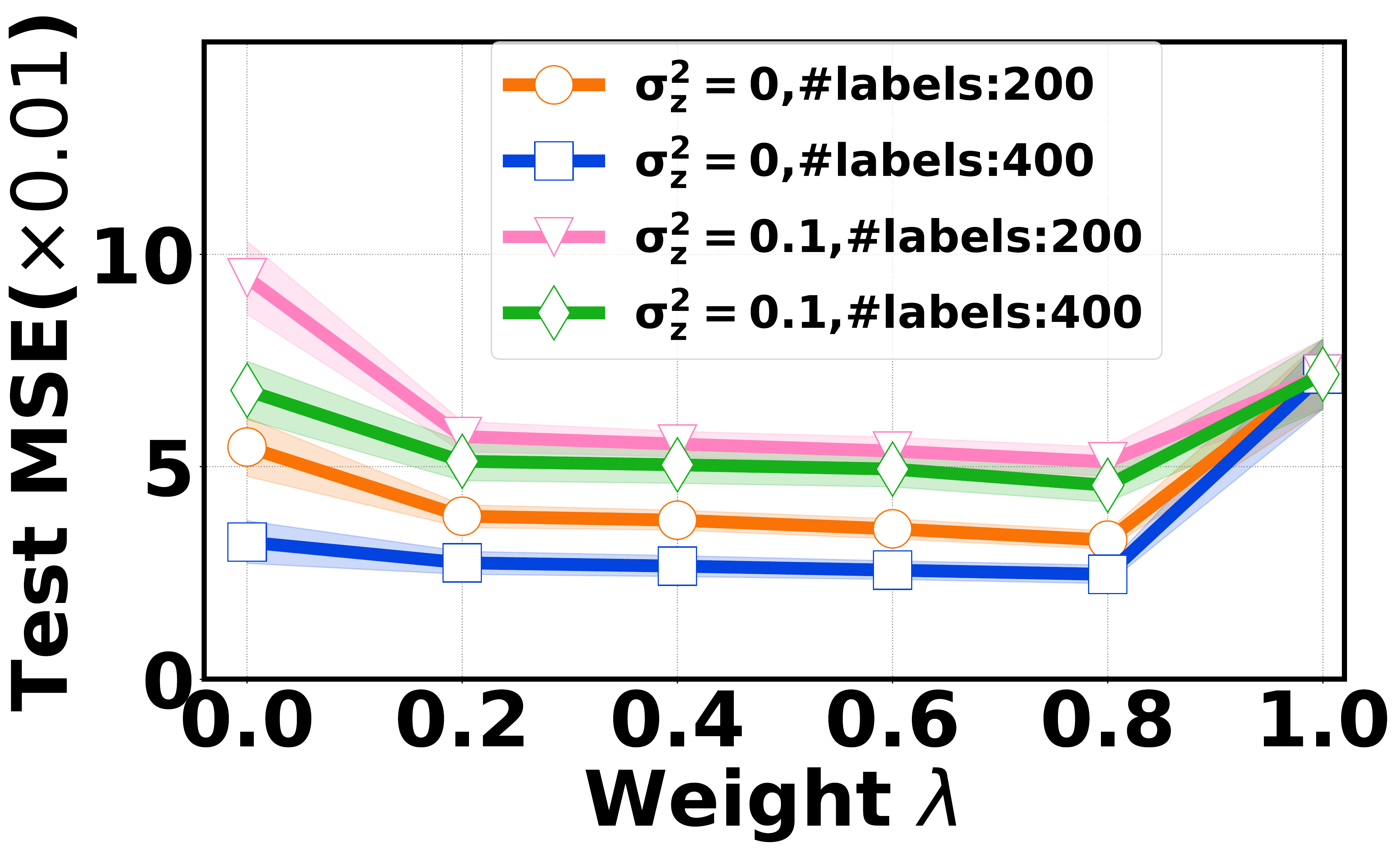}
	}%
	\subfigure[]{
		\label{fig:acc3_fun}
		\includegraphics[width=0.32\textwidth]{./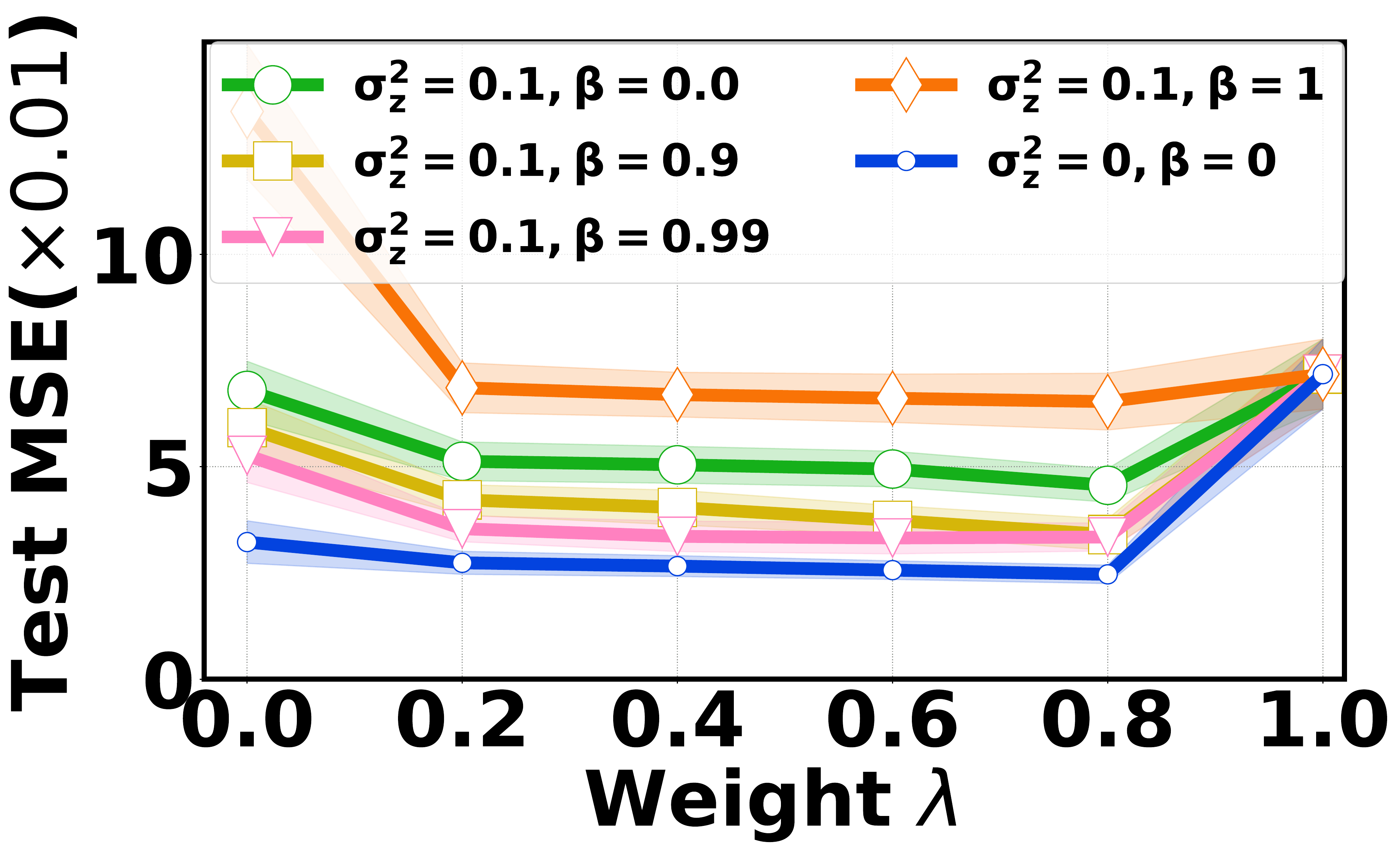}
	}%
	\centering
	\vspace{-0.4cm}	
	\caption{Test MSE under different hyper-parameters.  $\sigma^2_z=0$ means using perfect labels; $\sigma^2_z=0.1$ means using imperfect labels with noise variance 0.1; knowledge imperfectness is determined by $ub$ and $lb$ in the problem setting. (a) Training on the standard informed objective Eqn.~\eqref{eqn:informlossnn} using knowledge with high imperfectness; (b) Training on the standard informed objective Eqn.~\eqref{eqn:informlossnn} using knowledge with low imperfectness; (c) Training on the generalized informed objective Eqn.~\eqref{eqn:informlossnn3} using knowledge with low imperfectness and 400 labels.}
	\vspace{-0.5cm}	
	\label{fig:simulationacc_function}
\end{figure*}

\textbf{Understanding knowledge distillation from
the perspective of informed learning.}
Knowledge distillation is extremely useful
in practice (e.g., for model compression \cite{Knowledge_Distillation_hinton2015distilling}).
Here, we show how our analysis 
complement the existing understanding of
knowledge distillation  \cite{Knowledge_Distillation_hinton2015distilling,Towards_understanding_distillation_phuong2019towards,KD_analysis_rahbar2020unreasonable,KD_inference_dao2021knowledge,distillation_riskbound_ji2020knowledge} from the perspective of hard label and teacher's knowledge imperfectness. In our formulation, hard labels are $\{z_i\}$ in the labeled dataset, whose imperfectness (non-softness) is measured by $Q_{\mathrm{R}}(0)$. In Theorems \ref{thm:generalizationbound}, \ref{crl:generalizationbound}, and Corollary \ref{crl:generalizationbound2}, by viewing the teacher model $g(x)$ as domain knowledge, we show the teacher benefits the student training by providing a regularization gain $\Delta Q_{\mathrm{R},\beta}$, and reducing the sampling complexity of hard labels by Corollary \ref{thm:generalization_choiceweight}. 
The knowledge-regularized label imperfectness $Q_{\mathrm{R},\beta}$ can be less than pure lable imperfectness $Q_{\mathrm{R}}(0)$ because the soft label can smooth the network output within each smooth set. But, given the teacher (knowledge) imperfectness $Q_{\mathrm{K}}$, there exists a trade-off between hard label and teacher supervision.

Importantly, our results are in line with 
the observations and also complement the analysis in \cite{distillation_riskbound_ji2020knowledge}. Specifically,  \cite{distillation_riskbound_ji2020knowledge} 
uses NTK to show that the soft labels provided by a teacher model (knowledge) are easier
to learn than hard labels while hard labels can correct imperfect teachers pointwise,
exhibiting a trade-off between hard labels and the imperfect teacher. We define the hard label and teacher (knowledge) imperfectness, and show that for a neural network with finite width, hard labels and teacher's knowledge compensate for each other within each smooth set.  In consistency with our results, \cite{KD_analysis_rahbar2020unreasonable} based on NTK also presents a trade-off between labels and the imperfect teacher. The teacher  model imperfectness is also observed by \cite{KD_inference_dao2021knowledge} which measures the teacher imperfectness by the squared norm of the difference of the soft label and the true Bayesian class probability.  
Note, however, that our analysis \emph{cannot} adequately explain the benefit of knowledge distillation for the perspective of feature learning due to the inherent limitations of over-parameterization techniques, which are further discussed in \cite{ensemble_distillation_allen2020towards}.

\section{Numerical Results}\label{sec:simulation}

\subsection{Problem Setup}
We consider an informed DNN with domain knowledge
in the form of constraints to learn a Bohachevsky function. The learning task is to learn a relationship $y(x)$. The learner is provided with a dataset with labeled samples $S_z=\left\lbrace (x_i,z_i),i\in[n_z]\right\rbrace$, having possibly noisy labels
$
z_i=y(x_i)+n_i, n_i\sim \mathcal{N}(0, \sigma_z^2),
$
and an unlabeled dataset $S_g=\left\lbrace (x_i),i\in[n_g]\right\rbrace$. Additionally, the learner is informed with the constraint knowledge, which includes an upper bound $g_{\mathrm{ub}}(x)$ and an lower bound $g_{\mathrm{lb}}(x)$ on the true label corresponding to input $x$, i.e. $g_{\mathrm{lb}}(x) \leq y(x)\leq g_{\mathrm{ub}}(x)$. A neural network $h_{\bm{W}}(x)$ is used for learning
and  the metric of interest is the mean square error (MSE) of the network output $h_{\bm{W}}(x)$ with respect to the true label $y(x)$ on a test dataset $S_t$, which is expressed as
$
\hat{R}_{S_t}(h_{\bm{W}})=\frac{1}{2|S_t|}\sum_{(x_i,y_i)\in S_t}\left(h_{\bm{W}}(x_i)-y_i \right)^2.
$
Assume that the relationship to be learned is governed by a multi-dimensional Bohachevsky function
$
y(x)=x\bm{A}\bm{A}^\top x^\top-c\cos\left( a^\top x\right)+c,
$
where $\bm{A}$ is a $b\times b$ matrix, $a$ is a $b$-dimensional vector and $c$ is a constant. 
The constraint knowledge includes an upper
bound model
$
g_{\mathrm{ub}}(x)=x\bm{A}\bm{A}^\top x^\top+ub
$ with $ub\geq 2c$, and an lower bound model
$
g_{\mathrm{lb}}(x)=x\bm{A}\bm{A}^\top x^\top+lb.
$ with $lb\leq 0$.
While it is not strongly convex and hence deviates from the assumptions in our theoretical
analysis, we use ReLU  as
the knowledge-based risk function, i.e., the knowledge-based risk is written as
$
r_{\mathrm{K}}(h_{\bm{W}}(x))=\mathrm{relu}\left(h_{\bm{W}}(x)-g_{\mathrm{ub}}(x) \right)+\mathrm{relu}\left(g_{\mathrm{lb}}(x)-h_{\bm{W}}(x) \right).
$ If $ub-lb$ is larger, the uncertainty of the label given the knowledge is larger --- the knowledge imperfectness is higher. We choose $(lb,ub)$ as $(0,0.6)$ and $(0,0.8)$ respectively to show the performances under low and high knowledge imperfectness.  
More details of the setup are in Appendix~\ref{appendix:simulation_setting}.

\subsection{Results}
The curves of test MSE  with different knowledge and label settings are shown in Fig.~\ref{fig:simulationacc_function}. In all the three figures, test MSE in $\lambda=0$ approximately measures  knowledge-regularized label imperfectness in Definition. \ref{def:regularizedimperfectness}, while test MSE in $\lambda=1$ approximately measures  knowledge imperfectness in Definition~\ref{def:knwoledgeimperfect}.
We first use the training objective Eqn.\eqref{eqn:informlossnn} in Fig.~\ref{fig:acc1_fun} and Fig.~\ref{fig:acc2_fun} to show the effect of adjusting $\lambda$, which controls the knowledge effects (see Remark~\ref{remark:knowledge_effect}).
From both Fig.~\ref{fig:acc1_fun} and Fig.~\ref{fig:acc2_fun}, we see that the test MSE is smaller when there are more labeled samples and when label noise variance is lower.
Importantly, domain knowledge helps reduce the MSE compared with pure label-supervised learning, especially for the cases with fewer labels and high label noise variance. Also, by comparing Fig.~\ref{fig:acc1_fun} and Fig.~\ref{fig:acc2_fun}, we can find that the test MSE is lower when the knowledge imperfectness is lower.  Additionally, Fig.~\ref{fig:acc3_fun} gives the test MSEs training on the generalized objective \eqref{eqn:informlossnn3} under different $\beta$ when the labeled dataset size is 400, showing that the test risk can be reduced by adjusting $\beta$ which controls the knowledge regularization effect (see Remark~\ref{rmk:improved_knowledge_effect}). We can find that by properly adjusting $\beta$, the test MSEs under label noise are very close to that without label noise (the blue line). When $\beta=1$, the test MSE is the highest since no labeled data is used to provide supervision.

More results, including another application of
learning to manage
wireless spectrum, are available in Appendix~\ref{appendix:wireless}.
\section{Conclusion}

In this paper, we consider an informed DNN with domain knowledge integrated
with its training risk function.
 We
quantitatively demonstrate that domain knowledge can improve
the generalization performance and reduce the sampling complexity,
while also impacting the point to which the network output converges.
 Our analysis also reveals
 that
 knowledge affects the generalization performance in two ways:
 regularizing the label supervision, and supplementing the labeled
 samples.
Finally,  
we discuss how an informed DNN relates to other learning
frameworks. 

\section*{Acknowledgment}
This work was supported in part by the U.S. NSF under CNS-1910208.


\bibliographystyle{icml2022}

\newpage
\appendix
\onecolumn

\section*{Appendix}

\section{Training Algorithm}\label{appendix:training_algorithm}

To train the knowledge-informed DNN,
we consider a
gradient descent approach in Algorithm~\ref{alg:neuraltrain}.
This training approach has also been commonly considered in the literature \cite{Convergence_zhu_allen2019convergence,Convergence_Gu_zou2019improved,GD_Neural_du2019gradient}
for theoretical analysis of standard DNNs without domain knowledge.

\begin{algorithm}[!h]
	\caption{Informed Neural Network Training by Gradient Descent}\label{alg:neuraltrain}
	\begin{algorithmic}
		\STATE \textbf{Initialization}: Initialize each entry of weights $\bm{W}_0^{(0)}$, $\bm{W}_l^{(0)}, l\in[L]$ independently by $\mathcal{N}\left(0, \frac{2}{m} \right)$ and each entry of $\bm{V}^{(0)}$ independently by  $\mathcal{N}\left(0, \frac{1}{d} \right).$
		\FOR {$t=0,\cdots,T-1$}
		\STATE  Update the weights as $\bm{W}^{(t+1)}=\bm{W}^{(t)}+\eta \bigtriangledown_{\bm{W}}\hat{R}_{\mathrm{I}}\left(\bm{W}^{(t)} \right)$.
		\ENDFOR
		\STATE \textbf{Output}: $\bm{W}^{(T)}$.
	\end{algorithmic}
\end{algorithm} 
\section{Notations, Key Lemmas and Proofs of Main Results in Section \ref{sec:main} and Section \ref{sec:improved_objective}}\label{sec:proofmain} 

\subsection{Further Notations}
Before the proofs, we list some additional  notations as below.
Denote $n'=|S_z|+|S_g|=n_z+n_g$. We assign the samples in the dataset $S_z$ with indices from $1$ to $n_z$ and the samples in the dataset $S_g$ with indices from $n_z+1$ to $n'$. The informed risk of an informed DNN in Eqns.~\eqref{eqn:informlossnn},\eqref{eqn:informlossnn3} can be re-written as
\begin{equation}\label{eqn:informlossnn2}
\begin{split}
\hat{R}_{\mathrm{I}}\left(\bm{W}\right) =\sum_{i=1}^{n'}\left[ \mu_ir\left(h_{\bm{W}}\left(x_i \right),z_i \right)+\lambda_i r_{\mathrm{K}}\left( h_{\bm{W}}\left(x_i \right),g(x_i)\right) \right],
\end{split}
\end{equation}
where $\sum_{i=1}^{n'}\left( \mu_i+\lambda_i\right) =1$. Thus, in Eqn.~\eqref{eqn:informlossnn}, 
we have $\mu_i=\frac{1-\lambda}{n_z}\mathds{1}(x_i\in S_z)$ and $\lambda_i=\frac{\lambda}{n_g}\mathds{1}(\!x_i\in S_g\!)$; in Eqn.~\eqref{eqn:informlossnn3}, we have $\mu_i=\frac{(1-\lambda)(1-\beta)}{n_z}\mathds{1}(x_i\in S_z)$ and $\lambda_i=\frac{(1-\lambda)\beta}{n_g'}\mathds{1}(\!x_i\in S_g'\!)+\frac{\lambda}{n_g''}\mathds{1}(\!x_i\in S_g''\!)$. We prove convergence for the above three risks.

For any input $x_i,i\in[n']$, we denote the DNN output with respect to weight $\bm{W}$ as $h_{\bm{W},i}=h_{\bm{W}}(x_i)$. To express the output of the ReLu activation of the $l$-th  layer for an input sample $x_i$, for $l\in[L]$ and $i\in[n]$, we denote a diagonal matrix $\bm{D}_{l,i}$ with its $j$-th (for $j\in[m]$) diagonal entry as $\mathds{1}\left( \left[ \bm{W}_lh_{l-1}\right]_j \geq 0\right) $.
Thus, given the input $x_i$, the DNN outpoput can be expressed as
\begin{equation}
h_{\bm{W},i}=\bm{VD}_{L,i}\bm{W}_{L}\bm{D}_{L-2,i}\cdots \bm{D}_{0,i}\bm{W}_0x_i.
\end{equation}
Also, we denote the informed risk for hypothesis $h\in\mathcal{H}$ and input $x_i$ as
 \begin{equation}
 r_{\mathrm{I},i}=\mu_ir\left(h(x_i),z_i \right)+\lambda_i r_{\mathrm{K}}\left( h(x_i),g(x_i)\right)
\end{equation}
The gradient of informed risk with respect to the hypothesis output is
\begin{equation}\label{eqn:informedriskgradient}
u_i(h(x_i))=\bigtriangledown_h r_{\mathrm{I},i}(h(x_i))= \mu_i\bigtriangledown_h r\left(h(x_i),z_i \right)+\lambda_i \bigtriangledown_h r_{\mathrm{K}}\left( h(x_i),g(x_i)\right).
\end{equation}

After constructing the smooth sets,
denote for the $k$th smooth set, the sum of indices as $M_k=\sum_{\mathcal{I}_{\phi,k}}\left( \mu_{i}+\lambda_{i}\right) $.
Denote the sum risk of the $k$th smooth set for hypothesis $h\in\mathcal{H}$ as
\begin{equation}
\bar{r}_{\mathrm{I},k}(h(x_i))=\sum_{i\in\mathcal{I}_{\phi,k}} r_{\mathrm{I},i}(h(x_i)).
\end{equation}
Thus, the effective label given in Definition~\ref{def:efflabel} is written as
$
y_{\mathrm{eff},k}=\arg\min_{h}\bar{r}_{\mathrm{I},k}(h)
$ with $h$ in the space of network output,
  and the optimal effective risk is written as
$
  r_{\mathrm{eff},k}=\bar{r}_{\mathrm{I},k}(y_{\mathrm{eff},k}).
$

We then give some key technical lemmas which are the foundations for our further analysis.
The proofs for these lemmas are shown in Appendix~\ref{sec:proof2nd}.

\subsection{Forward Perturbation Regarding Inputs}
The forward perturbation for weights in the weight update range is proved in \cite{Convergence_zhu_allen2019convergence}, However, to characterize the smooth sets, it is important to prove forward perturbation for inputs in a smooth set,  which is given as follows.

\begin{lemma}\label{lma:forwardinputperturbation}
	For any $i\in\mathcal{I}_{\phi,k}, k\in[N]$, let $h_{l,k}=h_l(x_k')$, $h_{l,i}=h_l(x_i)$, and $f_{l,k}=\bm{W}_lh_{l-1}(x_k')$,  $f_{l,i}=\bm{W}_lh_{l-1}(x_i)$ and denote $\bm{D}_{l,i,k}'\in\mathbb{R}^{m\times m}$ as the diagonal matrix with $[\bm{D}_{l,i,k}']_{j,j}=\mathds{1}([f^{(0)}_{l,i}]_j\geq 0)-\mathds{1}([f^{(0)}_{l,k}]_j\geq 0)$. Assuming $\phi\leq O(L^{-9/2}\log^{-3}(m)\log^{-3/4}(1/\phi))$, we have
with probability at least $1-\phi$ over the randomness of $\bm{W}^{(0)}$,\\
(a) At initialization, $\|\bm{D}_{l,i,k}'\|_0\leq O(m\phi^{2/3}L\log^{1/2}(1/\phi))$\\
(b) For $\bm{W}\in\mathcal{B}(\bm{W}^{(0)},\tau)$ with $\tau\leq O(\phi^{3/2})$ we have $\|h_{l,i}-h_{l,k}\|\leq O(L^{5/2}\phi\sqrt{\log(m)\log(1/\phi)})$ and $\|f_{l,i}-f_{l,k}\|\leq O(L^{5/2}\phi\sqrt{\log(m)\log(1/\phi)})$.
\end{lemma}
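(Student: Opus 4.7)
The plan is to proceed by induction on layers $l$, tracking in parallel the $\ell_2$-distance $\delta_l := \|h_{l,i}^{(0)} - h_{l,k}^{(0)}\|$ and the number of sign flips of the pre-activations as functions of the input perturbation $\|x_i - x_k'\|\leq \phi$. For part (b) at the initialization $\bm{W}=\bm{W}^{(0)}$, starting from $\delta_0\leq \phi$, one writes
\[
f_{l,i}^{(0)} - f_{l,k}^{(0)} = \bm{W}_l^{(0)}\bigl(h_{l-1,i}^{(0)} - h_{l-1,k}^{(0)}\bigr),
\]
and combines the spectral-norm concentration $\|\bm{W}_l^{(0)}\|_2 = O(1)$ w.h.p.\ with the $1$-Lipschitz property of ReLU to get $\delta_l \leq O(1)\cdot \delta_{l-1}$. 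Sub-Gaussian concentration of a random matrix acting on a fixed vector contributes the extra $\sqrt{\log m\,\log(1/\phi)}$ factor, and the slack of $L^{5/2}$ in the target bound (compared to the naive $L$) gives the room needed later to absorb second-order terms.

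Part (a) is an anti-concentration/half-space counting argument. A neuron $j$ has $[\bm{D}_{L,i,k}']_{j,j}\neq 0$ iff $\mathrm{sign}([f_{L,i}^{(0)}]_j)\neq \mathrm{sign}([f_{L,k}^{(0)}]_j)$, which forces $|[f_{L,k}^{(0)}]_j| \leq |[f_{L,i}^{(0)} - f_{L,k}^{(0)}]_j|$. Conditional on $h_{L-1,k}^{(0)}$, the entries of $f_{L,k}^{(0)}$ are centered Gaussians with variance $\Theta(\|h_{L-1,k}^{(0)}\|^2/m) = \Theta(1/m)$ (using the standard over-parameterization estimate $\|h_{L-1,k}^{(0)}\|=\Theta(1)$), so $\Pr[|[f_{L,k}^{(0)}]_j|\leq t] = O(t\sqrt{m})$. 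Balancing this threshold against an $\ell_\infty$-style bound on $f_{L,i}^{(0)}-f_{L,k}^{(0)}$ obtained from the $\ell_2$ bound of part (b), and concentrating the sum of indicator variables across the $m$ neurons, is what produces the $\phi^{2/3}$ exponent in place of the naive $\phi^{1/2}$. This is the input-perturbation analogue of the weight-perturbation sparsity lemma proved in \cite{Convergence_zhu_allen2019convergence}.

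To extend part (b) to $\bm{W}\in\mathcal{B}(\bm{W}^{(0)},\tau)$ with $\tau\leq O(\phi^{3/2})$, write $\bm{W}_l = \bm{W}_l^{(0)}+\bm{W}_l'$ and $\bm{D}_{l,i} = \bm{D}_{l,i}^{(0)}+\Delta_{l,i}$, where $\Delta_{l,i}$ records sign flips caused by weight perturbations and is already controlled by the standard forward-perturbation lemmas of \cite{Convergence_zhu_allen2019convergence}. I decompose $h_{l,i}-h_{l,k}$ into an initialization piece (bounded above), a weight-perturbation piece of size $O(\tau L)=O(L\phi^{3/2})$ per layer, and a sign-flip piece whose non-zero rows act on coordinates where $|[f_{l,k}^{(0)}]_j|$ is small. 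Iterating this inequality through all $L$ layers yields the claimed bounds simultaneously for $\|h_{l,i}-h_{l,k}\|$ and $\|f_{l,i}-f_{l,k}\|$.

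The main obstacle is twofold. First, one must carry the decoupling between $h_{L-1,k}^{(0)}$ and $\bm{W}_L^{(0)}$ used in part (a), which requires conditioning on a carefully chosen subset of rows of $\bm{W}_L^{(0)}$ in the ``resampling'' style of \cite{Convergence_zhu_allen2019convergence} so that the Gaussianity of $f_{L,k}^{(0)}$ survives sign-pattern information from earlier layers. Second, one has to interleave the layer-by-layer $\ell_2$ bound with the $\ell_\infty$ bound needed for the anti-concentration step without losing factors of $m$; this forces the explicit trade-off that yields the $\phi^{2/3}$ (rather than $\phi^{1/2}$) exponent, and is where the assumption $\phi\leq O(L^{-9/2}\log^{-3}(m)\log^{-3/4}(1/\phi))$ is consumed.
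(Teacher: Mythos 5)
There is a genuine gap in your part (b), and it cascades into part (a). You propose the recursion $\delta_l \leq O(1)\,\delta_{l-1}$ obtained from $\|\bm{W}_l^{(0)}\|_2 = O(1)$ and the 1-Lipschitzness of ReLU, and you then treat the target as ``naive $L$'' with ``$L^{5/2}$ slack.'' But the spectral norm of an $m\times m$ Gaussian matrix with entries $\mathcal{N}(0,2/m)$ concentrates near $2\sqrt{2}>1$ (and even the fixed-vector gain $\|\bm{W}_l^{(0)}v\|/\|v\|$ concentrates near $\sqrt{2}>1$), so iterating $\delta_l \leq c\,\delta_{l-1}$ with a constant $c>1$ gives $\delta_L \leq c^L\phi$ --- an \emph{exponential} blow-up that $L^{5/2}$ cannot absorb. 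To obtain a polynomial-in-$L$ bound you need per-layer growth $(1+o(1/L))$, which requires the ReLU norm-halving to nearly cancel the $\sqrt{2}$ gain of $\bm{W}_l^{(0)}$; that cancellation is not given by the 1-Lipschitz bound alone, and it in fact requires decoupling $\bm{D}_{l,k}^{(0)}$ from the perturbation direction and treating the sign-flip coordinates separately. The paper does exactly this by telescoping
\[
f'_{l,i}=\sum_{a=2}^l\Bigl( \prod_{b=a+1}^{l}\bm{W}_b^{(0)}\bm{D}_{b-1,k}^{(0)}\Bigr) \bm{W}^{(0)}_a\bm{D}_{a-1}'\bigl( f^{(0)}_{a-1,k}+f'_{a-1,i} \bigr)+\bm{W}_1^{(0)}(x_i-x_k'),
\]
so the depth-propagation always acts on \emph{sparse} sign-flip vectors $\bm{D}_{a-1}'(\cdots)$, and then invokes the sparse forward-stability estimate (Claim 8.5 of \cite{Convergence_zhu_allen2019convergence}) which produces an $\ell_2$-small plus $\ell_\infty$-small decomposition of $f'_{l,i}$ with no exponential dependence on $L$. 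Your sketch has no analogous mechanism.

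The same missing decomposition breaks your part (a). You suggest deriving the $\ell_\infty$ control on $f'_{L,i}$ ``from the $\ell_2$ bound of part (b),'' but the crude $\ell_\infty$ bound inherited from an $\ell_2$ bound of size $O(\phi)$ (times polylog) is wildly too loose against the $\Theta(1/\sqrt{m})$ scale of $[f_{L,k}^{(0)}]_j$, and does not produce $\phi^{2/3}$. The paper's Lemma~\ref{lma:zeronormperturbation} instead uses exactly the decomposition $f'_{l,i}=f'_{l,i,1}+f'_{l,i,2}$ with $\|f'_{l,i,1}\|$ small in $\ell_2$ and $\|f'_{l,i,2}\|_\infty$ small (of order $\phi^{2/3}/\sqrt{m}$), and then optimizes a threshold $\xi$ --- splitting neurons into $\{|[f_{L,k}^{(0)}]_j|\leq\xi\}$ (anti-concentration) and $\{|[f_{L,k}^{(0)}]_j|>\xi,\text{sign flipped}\}$ (Markov on $\|f'_{l,i,1}\|^2$) --- to get $\|\bm{D}'\|_0 \leq O(\xi m^{3/2} + \|f'_{l,i,1}\|^2/\xi^2)$, whose minimum over $\xi$ yields the $m\phi^{2/3}L$ rate. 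That decomposition is itself generated by the telescoping identity, so both gaps in your proposal are really the absence of this one ingredient.
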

The proof of Lemma \ref{lma:forwardinputperturbation} is given in Section \ref{sec:proofforwardinputperturbation}. The forward perturbation regarding inputs indicates the smoothness property of neural networks with respect to inputs. For compactness, we absorb the logarithmically increasing terms into $\tilde{O}$ and denote $\widetilde{O}(L^{5/2}\phi\log^{1/2}(m))=O(L^{5/2}\phi\sqrt{\log(m)\log(1/\phi)})$, $\widetilde{O}(L^{-9/2}\log^{-3}(m)) =O(L^{-9/2}\log^{-3}(m)\log^{-3/4}(1/\phi))$ in the following analysis.

\subsection{Properties of Strong Convexity}\label{sec:strongconvexity}
Since our analysis is based on strongly convex risk functions, we give some key properties of strongly convex functions.
\begin{lemma}[Properties of Strong Convexity]\label{lma:knowledgeriskfuncgradient}
	If a strongly convex function $r(h)$ has a minimum value of $r(h^*)=r_{\min}$  and the eigenvalues of its Hessian matrix lie in $[\rho, 1]$, then we have$\left\| \bigtriangledown r(h)\right\|^2\leq  2\left( r\left(h \right)-r_{\min}\right) $,
	$
	\left\| \bigtriangledown r(h)\right\|^2\geq 2\rho\left( r\left(h \right)-r_{\min}\right) $
	and $	\|h^*-h\|\leq \frac{2}{\rho}\left\| \bigtriangledown r\left(h \right)\right\|$.
\end{lemma}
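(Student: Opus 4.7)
The three inequalities are standard consequences of $r$ being simultaneously $1$-smooth and $\rho$-strongly convex, which follows from the Hessian eigenvalue bound $H(h) \in [\rho I, I]$. My plan is to derive all three from the second-order Taylor expansion
\[
r(y) = r(h) + \nabla r(h)^\top (y-h) + \tfrac{1}{2}(y-h)^\top H(\xi)(y-h)
\]
for some $\xi$ on the segment between $h$ and $y$, instantiating $y$ differently for each inequality.

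First I would prove inequality (1), $\|\nabla r(h)\|^2 \leq 2(r(h)-r_{\min})$, using the upper bound $H \preceq I$ (smoothness). This gives $r(y) \leq r(h) + \nabla r(h)^\top (y-h) + \tfrac{1}{2}\|y-h\|^2$. I would then pick the closed-form minimizer of the RHS, $y = h - \nabla r(h)$, to get $r(y) \leq r(h) - \tfrac{1}{2}\|\nabla r(h)\|^2$. Since $r_{\min} \leq r(y)$, rearranging yields the claim.

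Next I would prove inequality (2), $\|\nabla r(h)\|^2 \geq 2\rho(r(h)-r_{\min})$, using the lower bound $H \succeq \rho I$ (strong convexity). This gives $r(y) \geq r(h) + \nabla r(h)^\top(y-h) + \tfrac{\rho}{2}\|y-h\|^2$. Minimizing the RHS over $y$ (minimizer at $y = h - \tfrac{1}{\rho}\nabla r(h)$) gives the global lower bound $r_{\min} \geq r(h) - \tfrac{1}{2\rho}\|\nabla r(h)\|^2$, which rearranges to the desired inequality.

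Finally, for inequality (3), $\|h^*-h\| \leq \tfrac{2}{\rho}\|\nabla r(h)\|$, I would apply the strong-convexity lower bound with $y = h^*$, obtaining $r(h^*) \geq r(h) + \nabla r(h)^\top(h^*-h) + \tfrac{\rho}{2}\|h^*-h\|^2$. Since $r(h^*) \leq r(h)$, the inequality $\tfrac{\rho}{2}\|h^*-h\|^2 \leq -\nabla r(h)^\top(h^*-h) \leq \|\nabla r(h)\|\,\|h^*-h\|$ follows from Cauchy--Schwarz, and dividing by $\|h^*-h\|$ gives the bound. There is no real obstacle here; the only care needed is to make the Taylor expansion with integral remainder rigorous (equivalently invoking the mean-value form of Taylor's theorem with the Hessian eigenvalue bounds), and to handle the trivial case $h = h^*$ separately where all three bounds reduce to $0 \leq 0$.
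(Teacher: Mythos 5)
Your proof is correct and follows essentially the same route as the paper: use the mean-value form of Taylor's theorem with the Hessian eigenvalue bounds, plug in $y = h - \nabla r(h)$ for the upper bound, minimize the strong-convexity lower bound over $y$ (minimizer $y = h - \tfrac{1}{\rho}\nabla r(h)$) for the PL-type lower bound, and set $y = h^*$ with Cauchy--Schwarz for the distance bound. You even correct a minor typo in the paper, which writes the minimizer as $h' = -\nabla r(h)/\rho$ rather than $h' = h - \nabla r(h)/\rho$.
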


\begin{lemma}\label{lma:cellriskgradientbound}
	If the risk functions $r$ and $r_{\mathrm{K}}$ are strongly convex with their eigenvalues of Hessian matrices in $[\rho, 1]$, then we have for hypothesis $h\in\mathcal{H}$, if $\|h(x_i)-h(x_k')\|\leq \widetilde{O}(L^{5/2}\phi\log^{1/2}(m))$ for $i\in\mathcal{I}_{\phi,k}, k\in[N]$, the sum risk gradient for a smooth set with $\phi\leq \widetilde{O}(L^{-9/2}\log^{-3}(m))$ with respect to $h$ satisfies,
	\[	
	\begin{split}
	&\|\sum_{i\in\mathcal{I}_{\phi,k}}u_i(h(x_i))\|^2\geq 2M_k\rho\left(\bar{r}_{\mathrm{I},k}-r_{\mathrm{eff},k}\right)-M^2_k\widetilde{O}(L^{5/2}\phi\log^{1/2}(m))\\ &\|\sum_{i\in\mathcal{I}_{\phi,k}}u_i(h(x_i))\|^2\leq 2M_k\left(\bar{r}_{\mathrm{I},k}-r_{\mathrm{eff},k}\right)+M^2_k\widetilde{O}(L^{5/2}\phi\log^{1/2}(m)) \\
	& \sum_{i\in\mathcal{I}_{\phi,k}}(\mu_i+\lambda_i)\left\| h\left(x_i \right)-y_{\mathrm{eff},k} \right\|^2
	\leq \frac{1}{\rho^2}O\left( \bar{r}_{\mathrm{I},k}-r_{\mathrm{eff},k}+M_k\widetilde{O}(L^{5/2}\phi\log^{1/2}(m))\right),
	\end{split}
	\]
	where $u_i$ is defined in Eqn.~\eqref{eqn:informedriskgradient} and $M_k=\sum_{\mathcal{I}_{\phi,k}}\left( \mu_{i}+\lambda_{i}\right) $.
\end{lemma}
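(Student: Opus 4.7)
The plan is to reduce Lemma~\ref{lma:cellriskgradientbound} to the single-variable strong-convexity facts of Lemma~\ref{lma:knowledgeriskfuncgradient} by introducing an auxiliary function that views the per-cell risk as evaluated at one common point, then handling the discrepancy via the input-smoothness hypothesis. Concretely, I would define $\tilde{r}_k(h) := \sum_{i\in\mathcal{I}_{\phi,k}} r_{\mathrm{I},i}(h)$ as a function of a single argument $h$ living in the network-output space. Since $r$ and $r_{\mathrm{K}}$ are strongly convex with Hessian eigenvalues in $[\rho,1]$, the weighted sum $\tilde{r}_k$ has Hessian eigenvalues in $[M_k\rho,M_k]$, so a rescaled version of Lemma~\ref{lma:knowledgeriskfuncgradient} yields (i) $2M_k\rho\,(\tilde{r}_k(h(x_k'))-r_{\mathrm{eff},k})\leq \|\nabla\tilde{r}_k(h(x_k'))\|^2\leq 2M_k\,(\tilde{r}_k(h(x_k'))-r_{\mathrm{eff},k})$ and (ii) $\|h(x_k')-y_{\mathrm{eff},k}\|\leq \tfrac{2}{M_k\rho}\|\nabla\tilde{r}_k(h(x_k'))\|$, where $y_{\mathrm{eff},k}$ is by definition the unique minimizer of $\tilde{r}_k$.

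Next I would control two perturbations driven by the smoothness assumption $\|h(x_i)-h(x_k')\|\leq\widetilde{O}(L^{5/2}\phi\log^{1/2}(m))$. First, since each $u_i$ has Lipschitz constant $(\mu_i+\lambda_i)$ (Hessian eigenvalues bounded by $1$), a triangle-inequality sum gives
\[
\Bigl\|\sum_{i\in\mathcal{I}_{\phi,k}}\!\bigl(u_i(h(x_i))-u_i(h(x_k'))\bigr)\Bigr\|\;\leq\; M_k\,\widetilde{O}(L^{5/2}\phi\log^{1/2}(m)).
\]
Second, Lipschitz continuity of $r,r_{\mathrm{K}}$ in their first argument gives the analogous value bound $|\tilde{r}_k(h(x_k'))-\bar{r}_{\mathrm{I},k}|\leq M_k\,\widetilde{O}(L^{5/2}\phi\log^{1/2}(m))$. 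Writing $a=\sum_{i}u_i(h(x_i))$ and $b=\nabla\tilde{r}_k(h(x_k'))$, the estimates $\|a\|^2\leq \|b\|^2+2\|b\|\|a-b\|+\|a-b\|^2$ and $\|a\|^2\geq \|b\|^2-2\|b\|\|a-b\|$, combined with the boundedness of the risks (which gives $\|b\|\leq O(M_k)$), convert the two perturbation bounds into the claimed upper and lower estimates on $\|a\|^2$, after absorbing higher-order squared errors into the linear $M_k^2\,\widetilde{O}(\cdot)$ term.

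For the third inequality I would apply the triangle inequality $\|h(x_i)-y_{\mathrm{eff},k}\|^2\leq 2\|h(x_i)-h(x_k')\|^2+2\|h(x_k')-y_{\mathrm{eff},k}\|^2$, weight by $\mu_i+\lambda_i$, and sum over $i\in\mathcal{I}_{\phi,k}$. The first piece is directly controlled by the input-smoothness hypothesis, contributing only $M_k\,\widetilde{O}(\cdot)$. The second piece is handled by chaining the distance bound $\|h(x_k')-y_{\mathrm{eff},k}\|^2\leq \tfrac{4}{M_k^2\rho^2}\|b\|^2$ with the upper gradient bound from the previous paragraph; the $\tfrac{1}{\rho^2}$ factor in the target then arises naturally, and $\tilde{r}_k(h(x_k'))$ is swapped for $\bar{r}_{\mathrm{I},k}$ at the price of the same $M_k\,\widetilde{O}(\cdot)$ value perturbation.

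The main obstacle I anticipate is bookkeeping rather than any deep idea: one must check that each cross term scales with the correct power of $M_k$ so the error ends up as $M_k^2\,\widetilde{O}(\cdot)$ in the two gradient bounds but as $M_k\,\widetilde{O}(\cdot)$ in the distance bound, and that the higher-order quantities such as $\widetilde{O}(L^{5/2}\phi\log^{1/2}(m))^2$ and the $L_r$-type Lipschitz constants stay small enough under the regime $\phi\leq\widetilde{O}(L^{-9/2}\log^{-3}(m))$ to be absorbed into the stated linear $\widetilde{O}$ error. Once the auxiliary function $\tilde{r}_k$ is in place, the remainder is a routine chain of strong-convexity and triangle-inequality estimates.
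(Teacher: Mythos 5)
Your proposal is correct and follows essentially the same route as the paper's own proof: the auxiliary function $\tilde{r}_k(h)=\sum_{i\in\mathcal{I}_{\phi,k}}r_{\mathrm{I},i}(h)$ you introduce is precisely the paper's $\bar{r}^{\circ}_{\mathrm{I},k}$ (viewed as a function rather than a value), and the three steps you outline --- apply the rescaled strong-convexity bounds from Lemma~\ref{lma:knowledgeriskfuncgradient} at $h(x_k')$, transfer to $\bar{r}_{\mathrm{I},k}$ via the $O(M_k)$-Lipschitz gradient and value perturbation estimates, and finish the third bound by the triangle-inequality split $\|h(x_i)-y_{\mathrm{eff},k}\|^2\leq 2\|h(x_i)-h(x_k')\|^2+2\|h(x_k')-y_{\mathrm{eff},k}\|^2$ --- coincide with the paper's argument step for step, including the accounting that makes the gradient bounds carry an $M_k^2\,\widetilde{O}(\cdot)$ error while the distance bound carries only $M_k\,\widetilde{O}(\cdot)$.
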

Lemma \ref{lma:knowledgeriskfuncgradient} and \ref{lma:cellriskgradientbound} are proved in Section \ref{sec:proofcellriskgradientbound}.

\subsection{Proof of Theorem \ref{the:convergence}. }\label{sec:proofconvergence}
In this section, we prove the convergence for informed risks in Eqn.~\eqref{eqn:informlossnn}, Eqn.~\eqref{eqn:informlossnn3}. First, the gradient lower bound, semi-smoothness of the risk function, and initialized risk bound are proved.

\begin{lemma}[Gradient Lower Bound]\label{lma:gradientboud}.
 For any $\bm{W}: \|\bm{W}-\bm{W}^{(0)}\|\leq \tau$ $ \tau=O(N^{-9/2}\phi^{3/2}\rho^{3/2}\bar{\lambda}^{3/2}\alpha^{3/2}L^{-15/2}\log^{-3/2}(m)) $ and $\phi\leq \widetilde{O}(L^{-9/2}\log^{-3}(m))$, with Assumption \ref{asp:smoothset} satisfied, we have with probability at least $1-O(\phi)$ over the randomness of $\bm{W}^{(0)}$, the gradient of label-based data risk satisfies
	\[
\left\|\bigtriangledown_{\bm{W}}\hat{R}_{\mathrm{I}}\left(\bm{W} \right) \right\|_F^2\geq  \Omega\left(\frac{\alpha m\phi\rho\bar{\lambda}}{dN^2}\right) \left( \hat{R}_{\mathrm{I}}\left(\bm{W} \right) -\hat{R}_{\mathrm{eff}}-\widetilde{O}(L^{5/2}\phi\log^{1/2}(m))\right).
\]
where $\hat{R}_{\mathrm{eff}}=\sum_{k=1}^Nr_{\mathrm{eff},k}$, and $\bar{\lambda}$ is a parameter with lower bound $\Omega(\min(1-\lambda, \lambda)\mathds{1}(\lambda\in(0,1))+\mathds{1}(\lambda \in\{0,1\}))$.
\end{lemma}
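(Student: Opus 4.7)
The plan is to obtain the gradient lower bound by decomposing the gradient according to the smooth-set structure, approximating the per-sample contributions within each smooth set by those of its representative, and then using the random initialization of $\bm{V}^{(0)}$ and $\bm{W}_L^{(0)}$ together with the sign-stability information from Assumption~\ref{asp:smoothset} to extract the $\alpha m/(dN^2)$ factor. As in standard over-parameterization arguments, it suffices to lower-bound $\|\nabla_{\bm{W}_L}\hat{R}_{\mathrm{I}}(\bm{W})\|_F^2$ alone. A direct computation yields
\[
\nabla_{\bm{W}_L}\hat{R}_{\mathrm{I}}(\bm{W})=\sum_{i=1}^{n'}\bm{D}_{L,i}\bm{V}^\top u_i\bigl(h_{\bm{W},i}\bigr)\,h_{L-1,i}^\top + \mathcal{E}(\bm{W}),
\]
where $u_i$ is defined in~\eqref{eqn:informedriskgradient} and $\mathcal{E}(\bm{W})$ captures the error from replacing $\bm{D}_{L,i}$ by its value at $\bm{W}$ versus $\bm{W}^{(0)}$; this error is controlled by $\tau$ using standard weight-perturbation bounds.

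Next, I would group the sum by smooth set. For $i\in\mathcal{I}_{\phi,k}$, Lemma~\ref{lma:forwardinputperturbation} gives $\|h_{L-1,i}-h_{L-1,k}\|\le \widetilde{O}(L^{5/2}\phi\log^{1/2}(m))$, and Assumption~\ref{asp:smoothset} says that on the set $\mathcal{G}_{k,\alpha}$ of $\alpha m$ neurons the activation patterns at initialization match that of the representative $x_k'$. Combined with the pre-activation lower bound $|[\bm{W}_L^{(0)}h_{L-1}^{(0)}(x_i)]_j|\ge \tfrac{3\sqrt{2\pi}\phi^{b+1}}{16\sqrt{m}}$ off $\mathcal{G}_{k,\alpha}$, the diagonal $\bm{D}_{L,i}$ agrees with $\bm{D}_{L,k}$ on $\mathcal{G}_{k,\alpha}$ throughout $\mathcal{B}(\bm{W}^{(0)},\tau)$ provided $\tau$ is small enough (this is where the quantitative choice of $\tau$ enters). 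Therefore,
\[
\nabla_{\bm{W}_L}\hat{R}_{\mathrm{I}}(\bm{W})
\;=\;\sum_{k=1}^{N}\bm{P}_{k}\bm{D}_{L,k}\bm{V}^\top \bar{u}_k\, h_{L-1,k}^\top \;+\; \mathrm{Err},
\qquad \bar{u}_k=\sum_{i\in\mathcal{I}_{\phi,k}}u_i\bigl(h_{\bm{W},i}\bigr),
\]
where $\bm{P}_k$ restricts to the $\alpha m$ indices of $\mathcal{G}_{k,\alpha}$ and $\|\mathrm{Err}\|_F$ is controlled by $\widetilde{O}(L^{5/2}\phi\log^{1/2}(m))$ times the norm of the $\bar u_k$'s.

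The third step is to lower-bound the Frobenius norm of the main term. I would restrict attention to the rows indexed by $\mathcal{G}_{k,\alpha}$ for each $k$ and exploit the structure: because the representatives lie in a $\phi$-net, the outputs $h^{(0)}_{L-1,k}$ are nearly orthogonal (pairwise angle controllable through the minimum input separation $\phi/2$ and concentration of the kernel $\langle h^{(0)}_{L-1,k}, h^{(0)}_{L-1,j}\rangle$ for $k\ne j$ around a function of $\|x_k'-x_j'\|$). Treating $\bm{V}^{(0)}$ as a fresh Gaussian matrix, matrix-Bernstein gives $\|\bm{V}^{(0)\top}v\|^2\gtrsim \tfrac{m}{d}\|v\|^2$ uniformly, and the near-orthogonality of the $h_{L-1,k}$'s prevents destructive interference across smooth sets up to a factor of order $1/N^2$ (this is the source of the $N^{-2}$ in the final bound). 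This yields
\[
\Bigl\|\nabla_{\bm{W}_L}\hat{R}_{\mathrm{I}}(\bm{W})\Bigr\|_F^2
\;\gtrsim\; \Omega\!\left(\frac{\alpha m\phi}{dN^{2}}\right)\sum_{k=1}^{N}\|\bar{u}_k\|^2
\;-\; \|\mathrm{Err}\|_F^2 .
\]
Finally, Lemma~\ref{lma:cellriskgradientbound} converts $\|\bar{u}_k\|^2$ into $2M_k\rho(\bar{r}_{\mathrm{I},k}-r_{\mathrm{eff},k})-M_k^2\widetilde{O}(L^{5/2}\phi\log^{1/2}(m))$, and summing over $k$, together with the lower bound $M_k\gtrsim \bar{\lambda}$ coming from the definitions of $\mu_i,\lambda_i$, produces the claimed bound
$\Omega(\alpha m\phi\rho\bar{\lambda}/(dN^2))(\hat{R}_{\mathrm{I}}(\bm{W})-\hat{R}_{\mathrm{eff}}-\widetilde{O}(L^{5/2}\phi\log^{1/2}(m)))$.

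The main obstacle I anticipate is Step three: carefully controlling the cross-terms between different smooth sets. One must show that the $N$ representative feature vectors $h^{(0)}_{L-1,k}$ are sufficiently ``spread out'' in $\mathbb{R}^m$ so that the contributions of distinct $k$'s in the Frobenius norm do not cancel, and pin down exactly how the minimum input separation $\phi$ translates, via the random ReLU feature map, into a quantitative lower bound on the smallest singular value of the stacked matrix $[h_{L-1,1}^{(0)},\dots,h_{L-1,N}^{(0)}]$ restricted to the $\mathcal{G}_{k,\alpha}$ coordinates. This is also where the tightness of the choice of $\tau$ and the high-probability event $1-O(\phi)$ are consumed, since one needs uniform control of sign stability and feature perturbations simultaneously across all $N\sim O(1/\phi^{b})$ smooth sets.
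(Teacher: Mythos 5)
Your first two steps match the paper's proof: you compute the gradient with respect to $\bm{W}_L$, pass to the initialization via standard weight-perturbation bounds, and reduce to bounding $\|\bm{G}\|_F^2$ where $\bm{G}=\sum_{k}\sum_{i\in\mathcal{I}_{\phi,k}}(u_i\bm{V}\bm{D}^{(0)}_{L,i})^\top h^{(0),\top}_{L-1,i}$. You also correctly flag Step three as the crux. But the mechanism you propose for Step three does not work, and it is not the mechanism the paper uses.

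You propose to show that the feature vectors $h^{(0)}_{L-1,k}$ of the $N$ representatives are ``nearly orthogonal'' because they come from a $\phi$-net, and to lower-bound the smallest singular value of the stacked feature matrix. This fails: the only separation available (Lemma~\ref{lma:separateness}, from Zou et al.) is $\|h_{L-1}(x'_i)/\|h_{L-1}(x'_i)\| - h_{L-1}(x'_j)/\|h_{L-1}(x'_j)\|\| \geq \phi/2$, i.e.\ the normalized features are $\phi/2$-separated, not approximately orthogonal. Since $\phi$ must be taken very small (polynomially small in $\epsilon$, $1/L$, $1/\log m$), the features for neighboring smooth sets can be almost parallel, so the smallest singular value of $[h^{(0)}_{L-1,1},\dots,h^{(0)}_{L-1,N}]$ can be arbitrarily small and matrix-Bernstein gives nothing. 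Near-orthogonality is precisely what the over-parameterization literature cannot assume here, and assuming it would trivialize the whole data-separability problem.

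The paper's actual route (Lemma~\ref{lma:gradientlowerboundlastlayer}, built on Lemmas~\ref{lma:probweightset} and~\ref{lma:C.2inConvrgenceGu}, following the Zou--Gu construction) avoids the cross-term problem entirely. For each smooth set $k$ it constructs, over the randomness of the $j$th neuron weight $w_j$, a ``selector'' set $\mathcal{W}_k$ on which the component of $w_j$ along $\bar{b}'_k$ is tiny (undecided sign) while the components along all other representatives are large (decided sign, protected by Assumption~\ref{asp:smoothset} off $\mathcal{G}_{k,\alpha}$). For such $w_j$, flipping the small component changes only the terms from $\mathcal{I}_{\phi,k}$ in $f(v_j,w_j)$, which forces $\|f(v_j,w_j)\|\gtrsim\|\sum_{i\in\mathcal{I}_{\phi,k}}a_i b_i\|$ with probability $1/2$ conditioned on $w_j\in\mathcal{W}_k$. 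The sets $\mathcal{W}_k$ are disjoint with $\mathbb{P}(w_j\in\mathcal{W}_k)\geq \Omega(\phi/N)$, and Bernstein gives $\Omega(\alpha m\phi/N)$ such neurons per smooth set, yielding $\|\bm{G}\|_F^2\geq \Omega(\alpha m\phi/(Nd))\sum_k\|\sum_{i\in\mathcal{I}_{\phi,k}}u_i\|^2$. Note the factor is $1/N$ here, not $1/N^2$: the second $N$ in the final bound comes from $\sum_k M_k(\bar r_{\mathrm{I},k}-r_{\mathrm{eff},k})\geq \bar M(\hat R_{\mathrm{I}}-\hat R_{\mathrm{eff}})$ with $\bar M=\min_k M_k=\bar\lambda/N$, not from your claim $M_k\gtrsim\bar\lambda$ (which cannot hold for all $k$ simultaneously since $\sum_k M_k=1$). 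Your arithmetic happens to land on the right exponent of $N$ only because two errors cancel.
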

The proof of Lemma \ref{lma:gradientboud} can be found in Section \ref{sec:proofgradientboud}.

\begin{lemma}\label{lma:risksmoothness}
 For any $\bm{W}$, $\bm{W}'\in \mathcal{B}\left( \bm{W}^{(0)},\tau\right) $, $\tau\in\left[\Omega(d^{3/2}m^{-3/2}L^{-5/2}\log^{-3/2}(m)),  O(L^{-9/2}[\log^{-3}(m)])\right] $ and $\phi\leq \widetilde{O}(L^{-9/2}\log^{-3}(m))$, with probability at least $1-O(\phi)$ over the randomness of $\bm{W}^{(0)}$, we have
\[
\begin{split}
\hat{R}_{\mathrm{I}}\left(\bm{W}' \right)\leq& \hat{R}_{\mathrm{I}}\left(\bm{W} \right)+\left\langle \bigtriangledown_{\bm{W}}\hat{R}_{\mathrm{I}}\left( \bm{W}\right) ,\bm{W}'-\bm{W}\right\rangle+O(L^2m/d)\left\|\widehat{\bm{W}} \right\|^2\\
&\quad +\left( \sqrt{\left( \hat{R}_{\mathrm{I}}\left(\bm{W}\right)-\hat{R}_{\mathrm{eff}}-\widetilde{O}(L^{5/2}\phi\log^{1/2}(m))\right)}\right) O\left( N^{1/2}\tau^{1/3}L^{5/2}\sqrt{m\log(m)}d^{-1/2}\right)\left\| \widehat{\bm{W}}\right\|
\end{split}
\]
\end{lemma}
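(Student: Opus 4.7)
I would follow the semi-smoothness template of Allen--Zhu--Li--Song, adapted to the informed setting where the natural ``baseline'' risk is $\hat{R}_{\mathrm{eff}}$ rather than zero and samples are grouped into smooth sets. Writing $\widehat{\bm{W}}=\bm{W}'-\bm{W}$ and decomposing sample-wise, $\hat{R}_{\mathrm{I}}(\bm{W}')-\hat{R}_{\mathrm{I}}(\bm{W})=\sum_{i=1}^{n'}\bigl(r_{\mathrm{I},i}(h_{\bm{W}',i})-r_{\mathrm{I},i}(h_{\bm{W},i})\bigr)$, the goal is to Taylor-expand each summand in the output space and then propagate the error through the network using the forward/backward perturbation machinery, finally re-aggregating via the smooth-set structure.

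\paragraph{Per-sample expansion and network linearization.} Since the Hessians of $r,r_{\mathrm{K}}$ in the output argument have eigenvalues in $[\rho,1]$, each $r_{\mathrm{I},i}$ is $(\mu_i+\lambda_i)$-smooth in $h$, so
\[
r_{\mathrm{I},i}(h_{\bm{W}',i})\le r_{\mathrm{I},i}(h_{\bm{W},i})+\langle u_i(h_{\bm{W},i}),h_{\bm{W}',i}-h_{\bm{W},i}\rangle+\tfrac{1}{2}(\mu_i+\lambda_i)\|h_{\bm{W}',i}-h_{\bm{W},i}\|^2.
\]
I would then split $h_{\bm{W}',i}-h_{\bm{W},i}$ into (i) the \emph{pseudo-linear} contribution obtained by freezing the ReLU patterns $\bm{D}_{l,i}$ at $\bm{W}$, which reconstructs $\langle \nabla_{\bm{W}} r_{\mathrm{I},i}(\bm{W}),\widehat{\bm{W}}\rangle$ after chain rule, and (ii) the \emph{sign-change correction} coming from the diagonal differences $\bm{D}_{l,i}'-\bm{D}_{l,i}$. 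Standard weight-perturbation bounds at initialization give $\|\bm{D}_{l,i}'-\bm{D}_{l,i}\|_0\le \widetilde{O}(m\tau^{2/3}L)$ and $\|h_{\bm{W}',i}-h_{\bm{W},i}\|\le O(L\sqrt{m/d})\|\widehat{\bm{W}}\|$, yielding the quadratic $O(L^2 m/d)\|\widehat{\bm{W}}\|^2$ term after summing (noting $\sum_i(\mu_i+\lambda_i)=1$).

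\paragraph{Aggregating the sign-change error via smooth sets.} The sign-change correction on sample $i$ is bounded (via the usual product of a backward-signal spectral norm and a sparsity factor) by $\|u_i(h_{\bm{W},i})\|\cdot O(\tau^{1/3}L^{5/2}\sqrt{m\log m/d})\|\widehat{\bm{W}}\|$. Summing over $i$ and applying Cauchy--Schwarz produces the factor $\sqrt{\sum_i\|u_i\|^2}$. This is exactly where the smooth-set formulation enters: for each set $k$, Lemma~\ref{lma:cellriskgradientbound} (the per-set gradient upper bound) gives
\[
\sum_{i\in\mathcal{I}_{\phi,k}}\|u_i(h_{\bm{W},i})\|^2\;\lesssim\;M_k\bigl(\bar r_{\mathrm{I},k}(\bm{W})-r_{\mathrm{eff},k}\bigr)+M_k^2\widetilde{O}(L^{5/2}\phi\log^{1/2}m).
\]
Summing over $k$, using $\sum_k M_k=1$ and Cauchy--Schwarz across the $N$ smooth sets, extracts a $\sqrt{N}$ prefactor together with $\sqrt{\hat{R}_{\mathrm{I}}(\bm{W})-\hat{R}_{\mathrm{eff}}-\widetilde{O}(L^{5/2}\phi\log^{1/2}m)}$, matching the claim. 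The high-probability event is the intersection of the initialization event of Lemma~\ref{lma:forwardinputperturbation} and the standard weight-perturbation events, each holding with probability at least $1-O(\phi)$ for $\phi\le\widetilde{O}(L^{-9/2}\log^{-3}m)$.

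\paragraph{Main obstacle.} The delicate step is the aggregation in the third paragraph. In the classical analysis for square loss with unique labels, the cross term acquires a $\sqrt{\hat{R}(\bm{W})}$ factor automatically because $\|u_i\|^2\propto r_i$; here the per-sample risks $r_{\mathrm{I},i}$ need not vanish at the effective target and may vary wildly across a smooth set. I must therefore bound $\sum_i\|u_i\|^2$ set by set against the gap $\bar r_{\mathrm{I},k}-r_{\mathrm{eff},k}$, pay the smooth-set smoothness penalty $\widetilde{O}(L^{5/2}\phi\log^{1/2}m)$ from Lemma~\ref{lma:cellriskgradientbound}, and verify that the implicit constants do not inflate the $N^{1/2}$ prefactor; choosing $\phi$ small enough ensures that the additive penalty can be absorbed into the $\widetilde{O}(\cdot)$ inside the square root, rather than producing a spurious additive term outside it.
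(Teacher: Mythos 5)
Your overall skeleton (per-sample Taylor expansion in output space, network linearization error via the weight-perturbation bounds of Allen--Zhu et al., and the quadratic $O(L^2 m/d)\|\widehat{\bm{W}}\|^2$ term) is correct and matches the paper. The gap is in the aggregation step of your third paragraph, which you flag as the "main obstacle" but then resolve incorrectly. You write that Lemma~\ref{lma:cellriskgradientbound} gives $\sum_{i\in\mathcal{I}_{\phi,k}}\|u_i(h_{\bm{W},i})\|^2 \lesssim M_k\bigl(\bar r_{\mathrm{I},k}-r_{\mathrm{eff},k}\bigr)+M_k^2\widetilde{O}(L^{5/2}\phi\log^{1/2}m)$, but the lemma bounds $\bigl\|\sum_{i\in\mathcal{I}_{\phi,k}}u_i\bigr\|^2$, the norm of the \emph{sum}, not the sum of squared norms. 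These are genuinely different here: at the effective label the individual gradients $u_i$ do not vanish (the per-sample risks $r_{\mathrm{I},i}(y_{\mathrm{eff},k})$ are nonzero and the effective label is not any sample's minimizer); only the vector sum $\sum_{i\in\mathcal{I}_{\phi,k}}u_i$ approaches zero, since $y_{\mathrm{eff},k}$ is by definition the stationary point of the \emph{aggregated} risk. So no bound of the form $\sum_i\|u_i\|^2\lesssim M_k(\bar r_{\mathrm{I},k}-r_{\mathrm{eff},k})+\cdots$ can hold, and the strong-convexity machinery has nothing to say about it. A secondary issue: even granting such a bound, your Cauchy--Schwarz on $\sum_i\|u_i\|\cdot C$ would produce a $\sqrt{|\mathcal{I}_{\phi,k}|}$ (hence ultimately $\sqrt{n'}$) prefactor rather than $\sqrt{N}$, which is too large.

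The paper sidesteps both problems by a replacement step you are missing. Inside each smooth set $k$, it writes the cross term as $\bigl(\sum_{i\in\mathcal{I}_{\phi,k}}u_i\bigr)^{\!\top}\!\bigl(h_{\bm{W}',k}-h_{\bm{W},k}-\langle\nabla_{\bm{W}}h_{\bm{W},k},\widehat{\bm{W}}\rangle\bigr)$ plus a residual in which each $u_i$ multiplies the \emph{difference} between the sample-$i$ linearization error and the representative-$x_k'$ linearization error. The first piece picks up $\|\sum_i u_i\|$, which Lemma~\ref{lma:cellriskgradientbound} does control by $\sqrt{M_k(\bar r_{\mathrm{I},k}-r_{\mathrm{eff},k})+M_k^2\widetilde{O}(L^{5/2}\phi\log^{1/2}m)}$; summing this over $k$ and applying Cauchy--Schwarz across the $N$ sets (with $\sum_k M_k=1$) produces exactly the $N^{1/2}\sqrt{\hat R_{\mathrm{I}}-\hat R_{\mathrm{eff}}\pm\widetilde{O}(\cdot)}$ prefactor. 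The residual needs no risk-dependent bound at all: it is handled crudely with $\|u_i\|\le O(\mu_i+\lambda_i)$, contributing an additive $M_k$ that totals to $1$ and is later absorbed using $1/\sqrt{N}\le\sqrt{\phi}\le\sqrt{\hat R_{\mathrm{I}}-\hat R_{\mathrm{eff}}+\widetilde{O}(\cdot)}$. You should adopt this replacement decomposition rather than trying to control $\sum_i\|u_i\|^2$.
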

The proof of Lemma \ref{lma:risksmoothness} can be found in Section \ref{sec:proofrisksmoothness}.

\begin{lemma}\label{initialized risk}
	If $m\geq \Omega\left(L\log(NL\phi^{-1}) \right) $ and $\phi\leq \widetilde{O}(L^{-9/2}\log^{-3}(m))$, with probability at least $1-O(\phi)$ over the randomness of $\bm{W}^{(0)}$, at initialization, we have for any $x_i, i\in[n']$,
	\[
	\left\| h_{\bm{W}^{(0)},i}\right\| \leq O\left(\log^{1/2}(1/\phi) \right),
	\]
	\[\text{ and }
	\hat{R}_{\mathrm{I}}\left(\bm{W}^{(0)} \right)-\hat{R}_{\mathrm{eff}}\leq O\left(\log^{1/2}(1/\phi) \right).
	\]
\end{lemma}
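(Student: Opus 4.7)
The plan is to bound $\|h_{\bm W^{(0)},i}\|$ first at the $N=O(\phi^{-b})$ representative points of the $\phi$-net, then extend to all training inputs via the input-perturbation bound (Lemma~\ref{lma:forwardinputperturbation}), and finally convert a norm bound on outputs into a risk bound using the Lipschitz/upper-bounded assumption on $r$ and $r_{\mathrm K}$.

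\emph{Step 1: Output norm at a representative input.} For a fixed input $x$, the standard over-parameterization toolkit (as in Allen-Zhu--Li--Song) shows that with $\bm W_l^{(0)}$ entries drawn from $\mathcal N(0,2/m)$ and $\bm V^{(0)}$ from $\mathcal N(0,1/d)$, each hidden layer satisfies $\|h_l^{(0)}(x)\|=\Theta(\|x\|)$ and the output satisfies $\|h_{\bm W^{(0)}}(x)\|\le O(t)$ with failure probability $\exp(-\Omega(m t^2/L))$ (Gaussian concentration through the depth, Lemma A.1 of \cite{Convergence_zhu_allen2019convergence}-type argument). Setting $t=\Theta(\log^{1/2}(N/\phi))$ gives failure probability $\le \phi/N$ at that point, provided $m\ge \Omega(L\log(N/\phi))$.

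\emph{Step 2: Union bound over representatives and extension to all samples.} Taking a union bound over the $N=O(\phi^{-b})$ representative inputs $\{x_k'\}$ in the $\phi$-net yields $\|h_{\bm W^{(0)}}(x_k')\|\le O(\log^{1/2}(1/\phi))$ simultaneously for every $k$, on an event of probability at least $1-O(\phi)$. For an arbitrary training input $x_i$ lying in the smooth set $\mathcal C_{\phi,k}$, Lemma~\ref{lma:forwardinputperturbation} gives $\|h_{L,i}^{(0)}-h_{L,k}^{(0)}\|\le \widetilde O(L^{5/2}\phi\log^{1/2}(m))$. Multiplying by $\|\bm V^{(0)}\|_2=O(\sqrt{m/d})$ together with the chain of $\|\bm W_l^{(0)}\|_2=O(1)$ bounds (also valid on the same high-probability event), and since $\phi\le \widetilde O(L^{-9/2}\log^{-3}(m))$, the difference $\|h_{\bm W^{(0)},i}-h_{\bm W^{(0)}}(x_k')\|$ is $o(1)$. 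Combining gives $\|h_{\bm W^{(0)},i}\|\le O(\log^{1/2}(1/\phi))$ uniformly in $i\in[n']$.

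\emph{Step 3: From output bound to risk bound.} Since $r$ and $r_{\mathrm K}$ are Lipschitz and upper bounded in their first argument (with constant eigenvalues of the Hessian bounded by $1$), we get $r(h_{\bm W^{(0)},i},z_i)\le r(y_{\mathrm{eff},k(i)},z_i)+O(\|h_{\bm W^{(0)},i}-y_{\mathrm{eff},k(i)}\|+\|h_{\bm W^{(0)},i}-y_{\mathrm{eff},k(i)}\|^2)$ by a second-order Taylor expansion around the effective label (and similarly for $r_{\mathrm K}$ around the knowledge model $g_i$). The effective labels $y_{\mathrm{eff},k}$ are bounded (being convex combinations of the bounded supervision signals), so $\|h_{\bm W^{(0)},i}-y_{\mathrm{eff},k(i)}\|=O(\log^{1/2}(1/\phi))$. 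Multiplying by $(\mu_i+\lambda_i)$, summing over $i$, and recalling $\sum_i(\mu_i+\lambda_i)=1$, we obtain $\hat R_{\mathrm I}(\bm W^{(0)})-\hat R_{\mathrm{eff}}=O(\log^{1/2}(1/\phi))$.

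\emph{Main obstacle.} The technical heart is synchronising the probabilistic control at the representatives with the deterministic input-perturbation lemma so that one high-probability event $(1-O(\phi))$ simultaneously yields (i) output concentration at all $N$ representatives, (ii) the spectral-norm bounds on $\bm W_l^{(0)}$ and $\bm V^{(0)}$ required to propagate the perturbation bound to the final layer, and (iii) the validity of Lemma~\ref{lma:forwardinputperturbation} itself. The lower bound $m\ge\Omega(L\log(NL/\phi))$ is precisely what is needed to make the per-point exponential tail small enough to survive the $N$-fold union bound while giving only the mild $\log^{1/2}(1/\phi)$ factor in the final estimate.
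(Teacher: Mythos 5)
Your overall decomposition (bound the last-hidden-layer output at the $\phi$-net representatives, extend to all training inputs via the input-perturbation lemma, then convert to a risk bound) matches the paper's plan, but there is a real gap in Step~2 that the paper's proof is careful to avoid.

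In Step~2 you propagate the perturbation $\|h^{(0)}_{L,i}-h^{(0)}_{L,k}\|\le\widetilde O(L^{5/2}\phi\log^{1/2} m)$ to the output by multiplying with the spectral norm $\|\bm V^{(0)}\|_2=O(\sqrt{m/d})$. The resulting term $\widetilde O\bigl(\sqrt{m/d}\,L^{5/2}\phi\log^{1/2} m\bigr)$ is \emph{not} $o(1)$: the width requirement forces $m\ge\Omega(\phi^{-11b-4}\cdots)$, so $\sqrt m\,\phi$ diverges as $\phi\to0$, and the assumption $\phi\le\widetilde O(L^{-9/2}\log^{-3}m)$ gives no compensating $m$-dependence. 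The paper's proof avoids this by performing the perturbation bound entirely on the last \emph{hidden} layer --- it first obtains $\|h^{(0)}_{i,L}\|\le 2+\widetilde O(L^{5/2}\phi\log^{1/2}m)$ from Lemma~7.1 of \cite{Convergence_zhu_allen2019convergence} plus the input-perturbation lemma, and only \emph{then} applies the output layer $\bm V$. Because $\bm V h^{(0)}_{i,L}$ is the image of a fixed vector under a $d\times m$ Gaussian matrix with entries $\mathcal N(0,1/d)$, one gets $\|\bm V h^{(0)}_{i,L}\|\le 2\|h^{(0)}_{i,L}\|\sqrt{\log(1/\phi)}$ with probability $1-O(\phi)$; the spectral norm of $\bm V$ is never needed, so no $\sqrt m$ factor appears. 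Relatedly, the failure probability $\exp(-\Omega(m t^2/L))$ you quote in Step~1 describes concentration of the hidden-layer norms, not the output: the output lives in $\mathbb R^d$ with $d$ fixed, so its tail is governed by $d$, not $m$; you still arrive at the correct conclusion but for the wrong reason.

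Step~3 also has a smaller inconsistency: the second-order Taylor expansion you invoke introduces a term $\propto\|h_{\bm W^{(0)},i}-y_{\mathrm{eff},k(i)}\|^2=O(\log(1/\phi))$, which would degrade the final bound to $O(\log(1/\phi))$ rather than the stated $O(\log^{1/2}(1/\phi))$. The paper uses only the $1$-Lipschitz continuity of the risk functions (a first-order bound), which directly gives the $O(\log^{1/2}(1/\phi))$ rate without the quadratic term.
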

The proof of Lemma \ref{initialized risk} can be found in Section \ref{sec:proofinitialized risk}.

\textbf{Proof of Theorem \ref{the:convergence}. }
\begin{proof}
\textbf{Convergence of the informed risk.}
We first assume $\tau=\frac{\Gamma}{\sqrt{m}}$ with $\Gamma=Nd^{1/2}\phi^{-1/2}\rho^{-1/2}\bar{\lambda}^{-1/2}\alpha^{-1/2}$.
Hence, with the choice of $m$, we have $\tau=O(N^{-9/2}\phi^{3/2}\rho^{3/2}\bar{\lambda}^{3/2}\alpha^{3/2}L^{-15/2}\log^{-3/2}(m))$.
We get the recursion inequality based on gradient descent.
By the weight update rule of gradient descent, we have $\bm{W}^{(t)} -\bm{W}^{(t-1)} =-\eta \bigtriangledown\hat{R}_{\mathrm{I}}\left(\bm{W}^{(t-1)}\right)$. Let $\Psi=\widetilde{O}(L^{5/2}\phi\log^{1/2}(m))$. By Lemma~\ref{lma:risksmoothness}, we have
\begin{equation}\label{eqn:iteration}
\begin{split}
&\hat{R}_{\mathrm{I}}\left(\bm{W}^{(t+1)} \right)-\hat{R}_{\mathrm{eff}}-\Psi\\
\leq& \hat{R}_{\mathrm{I}}\left(\bm{W}^{(t)}\right)-\hat{R}_{\mathrm{eff}}-\Psi-\left( \eta-O(\eta^2L^2m/d) \right) \left\| \bigtriangledown\hat{R}_{\mathrm{I}}\left( \bm{W}^{(t)}\right)\right\|^2 \\&
+\eta\sqrt{2N\left( \hat{R}_{\mathrm{I}}\left(\bm{W}^{(t)}\right)-\hat{R}_{\mathrm{eff}}-\Psi\right)}O\left( \tau^{1/3}L^{5/2}\sqrt{m\log(m)}d^{-1/2}\right)\left\|\bigtriangledown\hat{R}_{\mathrm{I}}\left( \bm{W}^{(t)}\right) \right\|\\
\leq& \hat{R}_{\mathrm{I}}\left(\bm{W}^{(t)}\right)-\hat{R}_{\mathrm{eff}}-\Psi- \Omega\left(\eta\right) \left\|\bigtriangledown\hat{R}_{\mathrm{I}}\left( \bm{W}^{(t)}\right) \right\|^2\\&+\eta \Omega\left( N^{3/2}\tau^{1/3}L^{5/2}\log^{1/2}(m)\phi^{-1/2}\rho^{-1/2}\bar{\lambda}^{-1/2}\alpha^{-1/2}\right)\left\|\bigtriangledown\hat{R}_{\mathrm{I}}\left( \bm{W}^{(t)}\right) \right\|^2\\
\leq& \hat{R}_{\mathrm{I}}\left(\bm{W}^{(t)}\right)-\hat{R}_{\mathrm{eff}}-\Psi- \Omega\left(\eta\right) \left\|\bigtriangledown\hat{R}_{\mathrm{I}}\left( \bm{W}^{(t)}\right) \right\|^2,
\end{split}
\end{equation}
where the second inequality holds by the choice of $\eta=O(\frac{d}{L^2m})$ such that $O(\eta L^2m/d)=O(1)$ and the gradient lower bound in Lemma \ref{lma:gradientboud}, and the last inequality holds by the choice of $m\geq\Omega\left(N^{11}L^{15}d\phi^{-4}\rho^{-4}\bar{\lambda}^{-4}\alpha^{-4}\log^3(m) \right) $ such that $\Omega\left( N^{3/2}\tau^{1/3}L^{5/2}\log^{1/2}(m)\phi^{-1/2}\rho^{-1/2}\bar{\lambda}^{-1/2}\alpha^{-1/2}\right)\leq O(1)$.

Further, by Lemma \ref{lma:gradientboud}, we have
\[
\begin{split}
\hat{R}_{\mathrm{I}}\left(\bm{W}^{(t+1)} \right)-\hat{R}_{\mathrm{eff}}-\Psi
\leq \left( 1- \Omega\left(\frac{\eta \alpha m\phi\rho\bar{\lambda}}{dN^2}\right)\right) \left( \hat{R}_{\mathrm{I}}\left(\bm{W}^{(t)} \right)-\hat{R}_{\mathrm{eff}}-\Psi\right).
\end{split}
\]

Based on the iteration of the recursion inequality, with probability at least $1-O(\phi)$, we have
\[
\begin{split}
&\hat{R}_{\mathrm{I}}\left(\bm{W}^{(t)} \right)-\hat{R}_{\mathrm{eff}}-\Psi\\
\leq	&\left( 1-\Omega\left(\frac{\eta \alpha m\phi\rho\bar{\lambda}}{dN^2}\right)\right)^{t}  \left( \hat{R}_{\mathrm{I}}\left(\bm{W}^{(0)} \right)-\hat{R}_{\mathrm{eff}}-\Psi\right)\\
\leq  &\left( 1-\Omega\left(\frac{\eta \alpha m\phi\rho\bar{\lambda}}{dN^2}\right) \right)^{t}O(\log^{1/2}(1/\phi)),
\end{split}
\]
where the last inequality comes from Lemma \ref{initialized risk}.
Then, by taking logarithm, we get
\[
\begin{split}
\ln\left(\left( \hat{R}_{\mathrm{I}}\left(\bm{W}^{(t)}  \right)-\hat{R}_{\mathrm{eff}}-\Psi\right)\right) &\leq t\ln\left( 1-\Omega\left(\frac{\eta \alpha m\phi\rho\bar{\lambda}}{dN^2}\right) \right)+\frac{1}{2}\ln\log(1/\phi)\\
&\leq -t\Omega\left(\frac{\eta \alpha m\phi\rho\bar{\lambda}}{dN^2}\right)+\frac{1}{2}\ln\log(1/\phi).
\end{split}
\]
Since $\eta=O(\frac{d}{L^2m})$, after $T= O\left(\frac{L^2N^2}{\phi\rho\bar{\lambda}\alpha}\ln(\epsilon^{-1}\log(\phi^{-1})) \right)$ iterations, for any $\epsilon>0$, we have
\[
 \hat{R}_{\mathrm{I}}\left(\bm{W}^{(T)} \right)-\hat{R}_{\mathrm{eff}} \leq O(L^{5/2}\phi\log^{1/2}(1/\phi)\log^{1/2}m)+ \epsilon.
\]
By setting $\phi$ as $\phi\log^{1/2}(1/\phi))\leq \epsilon L^{-5/2}\log^{-1/2}m$ (which satisfies the assumption of $\phi$ in Theorem \ref{the:convergence}), we can bound $\hat{R}_{\mathrm{I}}\left(\bm{W}^{(t)} \right)-\hat{R}_{\mathrm{eff}}$ by a small positive quantity $\epsilon$.

\textbf{Verify the weight update range.}
Now, we verify that the assumption $\left\| \bm{W}^{(t)}-\bm{W}^{(0)}\right\|\leq \frac{\Gamma}{\sqrt{m}}$ holds. Denote $\bar{R}_{\mathrm{I}}\left(\bm{W}^{(t)} \right)=\hat{R}_{\mathrm{I}}\left(\bm{W}^{(t)} \right)-\hat{R}_{\mathrm{eff}}-\Psi$.
By Eqn.~\eqref{eqn:iteration}, we have
\[
\begin{split}
\bar{R}_{\mathrm{I}}\left(\bm{W}^{(t+1)} \right)-\bar{R}_{\mathrm{I}}\left(\bm{W}^{(t)} \right)
\leq - \Omega\left(\eta\right) \left\|\bigtriangledown\hat{R}_{\mathrm{I}}\left( \bm{W}^{(t)}\right) \right\|^2.
\end{split}
\]
Then, we have
\[
\begin{split}
&\sqrt{\bar{R}_{\mathrm{I}}\left(\bm{W}^{(t+1)} \right)}-\sqrt{\bar{R}_{\mathrm{I}}\left(\bm{W}^{(t)} \right)}=\frac{\bar{R}_{\mathrm{I}}\left(\bm{W}^{(t+1)} \right)-\bar{R}_{\mathrm{I}}\left(\bm{W}^{(t)} \right) }{\sqrt{\bar{R}_{\mathrm{I}}\left(\bm{W}^{(t+1)} \right)}+\sqrt{\bar{R}_{\mathrm{I}}\left(\bm{W}^{(t)} \right)}}\\
\leq&
\frac{ - \Omega\left(\eta\right) \left\|\bigtriangledown\hat{R}_{\mathrm{I}}\left( \bm{W}^{(t)}\right) \right\|^2 }{2\sqrt{\bar{R}_{\mathrm{I}}\left(\bm{W}^{(t)} \right)}}\leq -O\left(\frac{m^{1/2}\phi^{1/2}\rho^{1/2}\bar{\lambda}^{1/2}\alpha^{1/2}}{d^{1/2}N} \right) \left\|\eta\bigtriangledown\hat{R}_{\mathrm{I}}\left( \bm{W}^{(t)}\right) \right\|
\end{split}
\]
where the last inequality follows from Lemma \ref{lma:gradientboud}.

By the triangle inequality, for any $t\in[T]$, we have
\begin{equation}\label{eqn:proofweightrange}
\begin{split}
&\left\| \bm{W}^{(t)}-\bm{W}^{(0)}\right\|\leq \sum_{s=0}^{t}\left\|\eta \bigtriangledown \hat{R}_{\mathrm{I}}\left(\bm{W}^{(s)} \right)\right\|\\
\leq& O\left(\frac{d^{1/2}N}{m^{1/2}\phi^{1/2}\rho^{1/2}\bar{\lambda}^{1/2}\alpha^{1/2}} \right)\sqrt{\bar{R}_{\mathrm{I}}\left(\bm{W}^{(0)} \right)}\leq O\left(\frac{\Gamma}{\sqrt{m}}\right),
\end{split}
\end{equation}
where $\Gamma=Nd^{1/2}\phi^{-1/2}\rho^{-1/2}\bar{\lambda}^{-1/2}\alpha^{-1/2} $. Hence,
with the choice of $m$,  we have $\tau=O(N^{-9/2}\phi^{3/2}\rho^{3/2}\bar{\lambda}^{3/2}\alpha^{3/2}L^{-15/2}\log^{-3/2}(m))$.

\textbf{Convergence of network output.}
By Lemma~\ref{lma:cellriskgradientbound},we have
\[
	\sum_{k=1}^N\sum_{i\in\mathcal{I}_{\phi,k}}(\mu_i+\lambda_i)\left\| h_{\bm{W}^{(T)}}\left(x_i \right)-y_{\mathrm{eff},k} \right\|^2
\leq \frac{1}{\rho^2}O\left( \left(\hat{R}_{\mathrm{I}}\left(\bm{W}^{(T)} \right) -\hat{R}_{\mathrm{eff}}\right)+\widetilde{O}(L^{5/2}\phi\log^{1/2}(m))\right)\leq O(\epsilon).
\]
Denoting $k(x_i)$ as the index of the cell containing $x_i$ and rearranging the above summation, we have
\[
\sum_{x_i\in S_z}\mu_i \left\| h_{\bm{W}^{(T)}}\left(x_i \right)-y_{\mathrm{eff},k(x_i)} \right\|^2+\sum_{x_j\in S_g}\lambda_j \left\| h_{\bm{W}^{(T)}}\left(x_j\right)-y_{\mathrm{eff},k(x_j)} \right\|^2\leq O(\epsilon).
\]

\end{proof}

\subsection{Proof of Generalization}
In this section, we prove the generalization bound based on Rademacher complexity. We first present the bound of Rademacher complexity for neural networks.
\begin{lemma}\label{lma:boundRad}[Theorem 3.3 in \cite{Rademacher_neural_bartlett2017spectrally},Lemma A.3 in \cite{Overparameterization_needed_chen2019much}]
If risk functions are 1-Lipschitz continuous, with probability at least $1-L\exp(-\Omega(m))$, the Rademacher complexity $\mathfrak{R}_S(\mathcal{F})$ for the risk set $\mathcal{F}=\left\lbrace r\left(h_{\bm{W}}\left(x \right),y \right): (x,y)\in \mathcal{X}\times\mathcal{Y}, \left\|\bm{W}-\bm{W}^{(0)}\right\|\leq \tau\right\rbrace$, $\tau=\frac{\Gamma}{\sqrt{m}}$ with $\Gamma=Nd^{1/2}\phi^{-1/2}\rho^{-1/2}\bar{\lambda}^{-1/2}\alpha^{-1/2}$	given a dataset $S$ of $n$ samples is bounded as
\begin{equation}
\mathfrak{R}_S(\mathcal{F})\leq  \Phi/\sqrt{n},
\end{equation}
where $\Phi=O\left( 4^L L^{3/2} m^{1/2}\phi^{-b-1/2}d \rho^{-1/2}\bar{\lambda}^{-1/2}\alpha^{-1/2} \right)$.
	\end{lemma}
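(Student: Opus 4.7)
The plan is to view this as a direct application of known Rademacher complexity bounds for deep ReLU networks around Gaussian initialization, namely the results cited in the statement. The proof naturally decomposes into three steps: a Lipschitz contraction step to reduce from risks to network outputs, a spectral-norm control step at initialization, and substitution of $\tau$ into the cited bound.

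First, since the risk functions are assumed to be $1$-Lipschitz in the network output, Talagrand's contraction lemma (Ledoux--Talagrand inequality) yields $\mathfrak{R}_S(\mathcal{F}) \leq O(1)\cdot \mathfrak{R}_S(\mathcal{H})$, where $\mathcal{H} = \{h_{\bm{W}}: \|\bm{W}-\bm{W}^{(0)}\|\leq \tau\}$. Thus it suffices to bound the Rademacher complexity of $\mathcal{H}$ itself.

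Second, for the spectral-norm-based bound of Theorem~3.3 of \cite{Rademacher_neural_bartlett2017spectrally}, I would use standard Gaussian random matrix concentration to obtain, with probability at least $1-L\exp(-\Omega(m))$, that $\|\bm{W}_l^{(0)}\|_2 = O(1)$ for every hidden layer and $\|\bm{V}^{(0)}\|_2 = O(\sqrt{m/d})$ at the output layer. Since $\tau = \Gamma/\sqrt{m}$ with $\Gamma$ polynomially smaller than $\sqrt{m}$ under the width assumption of Theorem~\ref{the:convergence}, any $\bm{W}\in\mathcal{B}(\bm{W}^{(0)},\tau)$ inherits $\|\bm{W}_l\|_2 = O(1)$ and $\|\bm{V}\|_2 = O(\sqrt{m/d})$. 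The product of the layer-wise spectral norms, together with the depth-dependent loss of contraction across $L$ ReLU layers, produces the $4^L L^{3/2}$ prefactor.

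Third, substitute into the Bartlett--Foster--Telgarsky bound (or equivalently Lemma~A.3 of \cite{Overparameterization_needed_chen2019much}), whose leading perturbation term scales as $\tau\sqrt{m}$. Combining this with $\Gamma = N d^{1/2}\phi^{-1/2}\rho^{-1/2}\bar{\lambda}^{-1/2}\alpha^{-1/2}$ and $N = O(\phi^{-b})$ from Definition~\ref{def:smoothset} gives $\tau\sqrt{m} = \Gamma = O(\phi^{-b-1/2}d^{1/2}\rho^{-1/2}\bar{\lambda}^{-1/2}\alpha^{-1/2})$. Multiplying by the $4^L L^{3/2}$ prefactor, by the residual $\sqrt{md}$ arising from the output-layer spectral scaling, and by the standard $1/\sqrt{n}$ Rademacher factor yields $\Phi/\sqrt{n}$ with $\Phi = O(4^L L^{3/2} m^{1/2}\phi^{-b-1/2}d\rho^{-1/2}\bar{\lambda}^{-1/2}\alpha^{-1/2})$ as claimed. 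The main obstacle is bookkeeping: ensuring that the Frobenius-ball constraint $\|\bm{W}-\bm{W}^{(0)}\|\leq \tau$ translates correctly into layerwise spectral and $(2,1)$-norm controls consistent with the normalization conventions of \cite{Rademacher_neural_bartlett2017spectrally,Overparameterization_needed_chen2019much}, and that the failure probability $L\exp(-\Omega(m))$ for the spectral concentration at initialization is absorbed into the stated high-probability event.
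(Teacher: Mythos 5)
The paper does not provide a proof of this lemma: it is stated as a direct citation of Theorem 3.3 in \cite{Rademacher_neural_bartlett2017spectrally} and Lemma A.3 in \cite{Overparameterization_needed_chen2019much}, and nowhere in the appendix is a derivation given. There is therefore no in-paper argument to compare your proposal against. That said, your sketch is a plausible reconstruction of the standard route: Talagrand contraction to pass from the Lipschitz loss class to the hypothesis class, Gaussian spectral-norm concentration at initialization to supply the $1-L\exp(-\Omega(m))$ event (with $\|\bm{W}_l^{(0)}\|_2 = O(1)$ and $\|\bm{V}^{(0)}\|_2 = O(\sqrt{m/d})$ under the chosen scalings), and substitution of $\tau=\Gamma/\sqrt{m}$ together with $N\sim O(\phi^{-b})$ into the cited network Rademacher bound to recover the $\phi^{-b-1/2}$, $d$, $m^{1/2}$ and $(\rho\bar\lambda\alpha)^{-1/2}$ factors in $\Phi$.

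Two places in your outline deserve more care if it were to become a full proof. First, the network output is $d$-dimensional, so the contraction step cannot invoke scalar Talagrand directly; you would need a vector-contraction argument (or a coordinatewise bound summed over outputs), and this interacts with the output-layer spectral scaling in producing the $d$ factor. Second, you correctly identify the translation between the Frobenius-ball constraint $\|\bm{W}-\bm{W}^{(0)}\|\le\tau$ and the per-layer spectral and $(2,1)$-norm quantities appearing in \cite{Rademacher_neural_bartlett2017spectrally} and \cite{Overparameterization_needed_chen2019much} as the main bookkeeping obstacle — that is precisely where the $4^L L^{3/2}$ prefactor must be pinned down — but your sketch asserts rather than verifies this. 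As an outline the approach is sound and matches what the cited references would deliver; as a proof it is incomplete in exactly the places you flag.
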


Then, we need to bound the error between effective labels in Definition~\ref{def:efflabel} and the output of optimal hypothesis in Definitions~\ref{def:knwoledgeimperfect} and \ref{def:regularizedimperfectness}.
\begin{lemma}\label{lma:boundefflabel}
Consistent with Definition \ref{def:efflabel}, assume that for any smooth set $k\in \mathcal{U}_{\phi}(S_z)$ (containing at least one labeled sample), $y_{\mathrm{eff},k}$ equivalently minimizes $\sum_{i\in I_{\phi,k}} \frac{1-\beta}{n_z}\mathds{1}(x_i\in S_z)r\left(h,\!z_i \right)+\frac{\beta}{n_g'}\mathds{1}(x_i\in S_g')r_{\mathrm{K}}\left( h,g_i\right)$, and for any smooth set $k\in [N]\setminus \mathcal{U}_{\phi}(S_z)$ (not containing labeled sample), $y_{\mathrm{eff},k}$ equivalently minimizes $\sum_{i\in I_{\phi,k}}\frac{1}{n_g''}r_{\mathrm{K}}\left( h,g_i\right)$.\\
\textbf{(a)}	Letting $h^*_{\mathrm{K}}$ and $h^*_{\mathrm{R},\beta}$ be the optimal hypothesis for empirical risks in Definitions~\ref{def:knwoledgeimperfect} and \ref{def:regularizedimperfectness}, respectively,
we have with probability at least $1-O(\phi)$ over the randomness of $\bm{W}^{(0)}$,
			\[
	\frac{1}{n_{g}''}\sum_{S_{g}''}\left\| h^*_{\mathrm{K},i}-y_{\mathrm{eff},k(x_i)}\right\|^2\leq \widetilde{O}\left(L^{5/4}\phi^{1/2}\log^{1/4}(m)\right),
		\]
and
		\[
	\frac{1-\beta}{n_z}\sum_{S_z}\left\|h^*_{\mathrm{R},\beta,i}-y_{\mathrm{eff},k(x_i)}\right\|^2 +\frac{\beta}{n_{g}'}\sum_{S_{g}'}\left\| h^*_{\mathrm{R},\beta,i}-y_{\mathrm{eff},k(x_i)}\right\|^2\leq \widetilde{O}\left(L^{5/4}\phi^{1/2}\log^{1/4}(m)\right),
	\]
	where $\widetilde{O}\left(L^{5/4}\phi^{1/2}\log^{1/4}(m)\right)=O(L^{5/4}\phi^{1/2}\log^{1/4}(1/\phi)\log^{1/4}m)$.\\
\textbf{(b)}		Letting $\bar{h}^*_{\mathrm{K}}$ and $\bar{h}^*_{\mathrm{R},\beta}$ be the optimal hypothesis for the expected risks in Definitions~\ref{def:knwoledgeimperfect} and \ref{def:regularizedimperfectness}, respectively,
we have with probability at least $1-O(\phi)-\delta$,
	\[
	\frac{1}{n_{g}''}\sum_{S_{g}''}\left\| \bar{h}^*_{\mathrm{K},i}-y_{\mathrm{eff},k(x_i)}\right\|^2\leq \widetilde{O}\left(L^{5/4}\phi^{1/2}\log^{1/4}(m)\right)+O(\sqrt{\frac{\log(1/\delta)}{n_{g}''}}),
	\]
and
	\[
	\begin{split}
	\frac{1-\beta}{n_z}\sum_{S_z}\left\|\bar{h}^*_{\mathrm{R},\beta,i}-y_{\mathrm{eff},k(x_i)}\right\|^2 +\frac{\beta}{n_{g}'}\sum_{S_{g}'}\left\| \bar{h}^*_{\mathrm{R},\beta,i}-y_{\mathrm{eff},k(x_i)}\right\|^2\leq  \widetilde{O}\left(L^{5/4}\phi^{1/2}\log^{1/4}(m)\right)+O(\sqrt{\frac{\log(1/\delta)}{n_{z}}}),
	\end{split}
	\]
	where $\widetilde{O}\left(L^{5/4}\phi^{1/2}\log^{1/4}(m)\right)=O(L^{5/4}\phi^{1/2}\log^{1/4}(1/\phi)\log^{1/4}m)$.
	\end{lemma}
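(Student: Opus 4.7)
The plan is to reduce everything to the strong-convexity inequality in the third part of Lemma~\ref{lma:cellriskgradientbound}, which converts excess informed risk into a weighted $L^2$ bound on the gap between a hypothesis's outputs and the effective labels. For part (a) the key observation is that $h^*_{\mathrm{K}}$ (resp.\ $h^*_{\mathrm{R},\beta}$) is by definition the minimizer over the hypothesis class $\mathcal{H}$ of the very empirical risk whose effective value $\hat{R}_{\mathrm{eff}}$ features in Theorem~\ref{the:convergence}; since the gradient-descent iterate $\bm{W}^{(T)}$ lies in $\mathcal{H}$, the excess empirical risk of $h^*$ is at most $\hat{R}_{\mathrm{I}}(\bm{W}^{(T)}) - \hat{R}_{\mathrm{eff}} \le O(\epsilon)$. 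Before invoking Lemma~\ref{lma:cellriskgradientbound}, I would use the forward-perturbation bound (Lemma~\ref{lma:forwardinputperturbation}) to verify that every $h\in\mathcal{H}$ satisfies $\|h(x_i)-h(x_k')\|\le\widetilde{O}(L^{5/2}\phi\log^{1/2}(m))$ on each smooth set.

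Next, I would instantiate Lemma~\ref{lma:cellriskgradientbound} at $h=h^*_{\mathrm{K}}$ with the weight choice $\mu_i=0,\ \lambda_i=\frac{1}{n_g''}\mathds{1}(x_i\in S_g'')$, and at $h=h^*_{\mathrm{R},\beta}$ with the weights that make the informed risk coincide with the knowledge-regularized risk of Definition~\ref{def:regularizedimperfectness}. Summing the per-smooth-set inequality over $k$ (using $\sum_k M_k=1$) converts excess risk plus the additive smoothness error $\widetilde{O}(L^{5/2}\phi\log^{1/2}(m))$ into a bound on the weighted sum of $\|h^*_i-y_{\mathrm{eff},k(x_i)}\|^2$. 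Choosing $\epsilon$ no larger than the smoothness error, propagating the $1/\rho^2$ prefactor, and taking the square root that appears when moving from a quadratic excess-risk statement to an $L^2$-distance statement yields the claimed rate $\widetilde{O}(L^{5/4}\phi^{1/2}\log^{1/4}(m))$.

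Part (b) follows the same template but applied to the population minimizers $\bar{h}^*_{\mathrm{K}}$ and $\bar{h}^*_{\mathrm{R},\beta}$, which are crucially independent of the empirical samples $S_z$ and $S_g$. Because the risks are bounded (Section~\ref{sec:paradigm} assumptions), pointwise Hoeffding on the fixed hypothesis $\bar{h}^*$ gives $|\hat{R}(\bar{h}^*)-R(\bar{h}^*)|\le O(\sqrt{\log(1/\delta)/n})$ with probability at least $1-\delta$; combined with the defining optimality $R(\bar{h}^*)\le R(h^*)$ and a matching Hoeffding-type bound for $h^*$, this yields $\hat{R}(\bar{h}^*)\le\hat{R}(h^*)+O(\sqrt{\log(1/\delta)/n})$, so the excess empirical risk of $\bar{h}^*$ is at most the part-(a) bound plus this concentration term. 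Substituting into Lemma~\ref{lma:cellriskgradientbound} as before produces the additional additive term $O(\sqrt{\log(1/\delta)/n_g''})$, and analogously $O(\sqrt{\log(1/\delta)/n_z})$ in the regularized case.

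The main obstacle I anticipate is the second concentration step, because the empirical minimizer $h^*$ is itself data-dependent, so a naive pointwise Hoeffding is invalid; a uniform bound, on the other hand, would introduce the Rademacher term $\Phi/\sqrt{n}$ from Lemma~\ref{lma:boundRad}, which is absent from the target statement. My plan is to sidestep this by comparing $\bar{h}^*$ to the gradient-descent iterate trained on an independent ghost copy of the sample, exploiting Theorem~\ref{the:convergence} to control the ghost iterate's expected informed risk and then a single pointwise Hoeffding (legitimate because the ghost iterate is independent of the evaluation sample). If that route proves too technical, the fallback is that the optimality $R(\bar{h}^*)\le R(h)$ over all $h\in\mathcal{H}$ already bounds $R(\bar{h}^*)$ by the expected effective risk, reducing everything to a single pointwise Hoeffding and avoiding any uniform-convergence step.
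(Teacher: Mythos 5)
Your high-level plan (expand via strong convexity, compare to the effective-label minimum, concentrate for the population minimizer in part (b)) matches the spirit of the paper's proof, but there is a genuine gap in part (a) plus an unnecessary detour, and in part (b) you undersell your own fallback.

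\textbf{Part (a): the detour and the gap.} You do not need Theorem~\ref{the:convergence} or $\bm{W}^{(T)}$ at all. By Definition~\ref{def:knwoledgeimperfect}, $h^*_{\mathrm{K}}$ minimizes the empirical knowledge risk on $S_g''$, so its risk is already at most the value of that risk at the effective labels; this gives excess risk $\le 0$ directly (and likewise for $h^*_{\mathrm{R},\beta}$ against the regularized effective risk). Routing through the gradient-descent iterate inserts an extraneous $O(\epsilon)$ that the lemma's statement does not have. More importantly, your plan to instantiate the third bound of Lemma~\ref{lma:cellriskgradientbound} at $h=h^*_{\mathrm{K}}$ (or $h^*_{\mathrm{R},\beta}$) requires $h\in\mathcal{H}$ and the forward-perturbation condition $\|h(x_i)-h(x_k')\|\le\widetilde O(L^{5/2}\phi\log^{1/2}m)$ — but Definitions~\ref{def:knwoledgeimperfect} and~\ref{def:regularizedimperfectness} do not constrain these minimizers to $\mathcal{H}$, so verifying "every $h\in\mathcal{H}$" satisfies the condition does not cover them. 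The paper avoids this: it applies the bare strong-convexity inequality (Lemma~\ref{lma:knowledgeriskfuncgradient}) cellwise, expanding $\sum_{\mathcal{I}_{\phi,k}}r_{\mathrm{K}}(\cdot,g_i)$ around the constant $y_{\mathrm{eff},k}$. The linear cross term is handled by Cauchy--Schwarz, with the gradient norm bounded via Lemma~\ref{lma:cellriskgradientbound} \emph{evaluated at $y_{\mathrm{eff},k}$} (where the forward-perturbation condition is trivial and the excess risk vanishes), leaving only the $\widetilde O(L^{5/2}\phi\log^{1/2}m)$ smoothness error inside a squared gradient; taking its square root is precisely what produces the $\widetilde O(L^{5/4}\phi^{1/2}\log^{1/4}m)$ rate. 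The "square root" you gesture at has no analogue in your own route, which would give $\widetilde O(L^{5/2}\phi\log^{1/2}m)$ if the Lemma~\ref{lma:cellriskgradientbound} step were legitimate.

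\textbf{Part (b).} Your ghost-sample plan is far more work than needed, and your anticipated obstacle (uniform vs.\ pointwise concentration for the data-dependent $h^*$) never actually arises. The paper compares the empirical risk of $\bar h^*_{\mathrm{K}}$ directly to the empirical effective risk, never to $\hat R(h^*_{\mathrm{K}})$. Since $\bar h^*_{\mathrm{K}}$ is a fixed, data-independent hypothesis, a single application of McDiarmid's inequality concentrates $\frac{1}{n_g''}\sum r_{\mathrm{K}}(\bar h^*_{\mathrm{K},i},g_i) - \frac{1}{n_g''}\sum r_{\mathrm{K}}(y_{\mathrm{eff},k(x_i)},g_i)$ around its expectation, and the population optimality $\mathbb{E}[r_{\mathrm{K}}(\bar h^*_{\mathrm{K}},g)]\le\mathbb{E}[r_{\mathrm{K}}(y_{\mathrm{eff},k(\cdot)},g)]$ closes the bound. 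This is essentially your stated fallback — which you should have led with.
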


Proof of Lemma \ref{lma:boundefflabel} is given in Section \ref{sec:proofboundefflabel}.

\subsubsection{Proof of Theorem \ref{thm:generalizationbound}}\label{sec:proofgeneralization}
\begin{proof}
	By generalization bound with Rademacher complexity and Lemma \ref{lma:boundRad}, the population risk is bounded with probability at least $1-\delta, \delta\in(0,1)$ as
	\begin{equation}\label{eqn:generalizationproof2}
	\begin{split}
	&R(\bm{W}^{(T)})\\
	=&(1-\lambda)R(\bm{W}^{(T)})+\lambda R(\bm{W}^{(T)})\\
	\leq& \frac{1-\lambda}{n_z}\sum_{S_z}r(h_{\bm{W}^{(T)},i},y_i)+\frac{\lambda}{n_{g}}\sum_{S_{g}}r(h_{\bm{W}^{(T)},i},y_i)+O\left( \Phi+\sqrt{\log(1/\delta)}\right) \left((1-\lambda)\sqrt{\frac{1}{n_z}}+\lambda\sqrt{\frac{1}{n_g}} \right)
	\end{split}
	\end{equation}
	
	For the empirical risk, we have
	\begin{equation}\label{eqn:generalizationproof1}
	\begin{split}
	&\frac{1-\lambda}{n_z}\sum_{S_z}r(h_{\bm{W}^{(T)},i},y_i)+\frac{\lambda}{n_{g}}\sum_{S_{g}}r(h_{\bm{W}^{(T)},i},y_i)\\
	\leq & \frac{1-\lambda}{n_z}\sum_{S_z}\left(r(y_{\mathrm{eff},k(x_i)},y_i)+\left\| h_{\bm{W}^{(T)},i}-y_{\mathrm{eff},k(x_i)}\right\| \right)+\frac{\lambda}{n_{g}}\sum_{S_{g}}\left[r(y_{\mathrm{eff},k(x_i)},y_i)+\left\| h_{\bm{W}^{(T)},i}-y_{\mathrm{eff},k(x_i)}\right\| \right] \\
	\leq &\sqrt{\epsilon}+\frac{1-\lambda}{n_z}\sum_{S_z}r(y_{\mathrm{eff},k(x_i)},y_i)+\frac{\lambda}{n_{g}}\sum_{S_{g}}r(y_{\mathrm{eff},k(x_i)},y_i)
	\end{split}
	\end{equation}
	where the first inequality holds because of the 1-Lipschitz of risk functions such that  $r(h_{\bm{W}^{(t)},i},y_i)-r(y_{\mathrm{eff},k(x_i)},y_i)\leq \left\| h_{\bm{W}^{(t)},i}-y_{\mathrm{eff},k(x_i)}\right\| $, and the second inequality follows from the convergence of network output in Theorem \ref{the:convergence}.
	
	Since with $\beta_{\lambda}=\frac{\lambda n'_{g}}{(1-\lambda) n_g+\lambda n'_{g}}$, the training objective in Eqn.\eqref{eqn:informlossnn} can also be written as
$
	   \hat{R}_{\mathrm{I}}\left(\bm{W}\right)
	   =\left(1-\lambda+\frac{\lambda n_g'}{n_g}\right)\left( \frac{1-\beta_{\lambda}}{n_z}\sum_{S_z}   r\left(h_{\bm{W},i},\!z_i \right)+\frac{\beta_{\lambda}}{n_g'}\sum_{S_g'}r_{\mathrm{K}}\left( h_{\bm{W},i},g_i\right)\right)+\frac{\lambda n_g''}{n_g}\sum_{S_g''}r_{\mathrm{K}}\left( h_{\bm{W},i},g_i\right)
$, for any smooth set $k\in \mathcal{U}_{\phi}(S_z)$ (containing at least one labeled sample), $y_{\mathrm{eff},k}$ equivalently minimizes $\sum_{i\in I_{\phi,k}} \frac{1-\beta_{\lambda}}{n_z}\mathds{1}(x_i\in S_z)r\left(h,\!z_i \right)+\frac{\beta_{\lambda}}{n_g'}\mathds{1}(x_i\in S_g')r_{\mathrm{K}}\left( h,g_i\right)$ by Definition \ref{def:efflabel}. Thus, the bounds of the differences between optimal hypothesis and effective labels in Lemma \ref{lma:boundefflabel} hold for $\beta=\beta_{\lambda}$ and $h_{\mathrm{R},\beta_{\lambda}}^*$.
	Next, we can bound the total effective risk in terms of label and knowledge imperfectness, with probability at least $1-O(\phi)$,
	\begin{equation}\label{eqn:efflabelerror}
	\begin{split}
	&\frac{1-\lambda}{n_z}\sum_{S_z}r(y_{\mathrm{eff},k(x_i)},y_i)+\frac{\lambda}{n_{g}}\sum_{S_{g}}r(y_{\mathrm{eff},k(x_i)},y_i)\\
	=&\frac{1-\lambda}{n_z}\sum_{S_z}r(y_{\mathrm{eff},k(x_i)},y_i)+\frac{\lambda }{n_g}\sum_{S_{g}'}r(y_{\mathrm{eff},k(x_i)},y_i)+\frac{\lambda }{n_g}\sum_{S_{g}''}r(y_{\mathrm{eff},k(x_i)},y_i)\\
	\leq &\frac{1-\lambda}{n_z}\sum_{S_z}r(h^*_{\mathrm{R},\beta_{\lambda},i},y_i)+\frac{\lambda }{n_g}\sum_{S_{g}'}r(h^*_{\mathrm{R},\beta_{\lambda},i},y_i)+\frac{\lambda }{n_g}\sum_{S_{g}''}r(h_{\mathrm{K},i},y_i)\\
	&+\frac{1-\lambda}{n_z}\sum_{S_z}\left\|h^*_{\mathrm{R},\beta_{\lambda},i}-y_{\mathrm{eff},k(x_i)}\right\| +\frac{\lambda}{n_{g}}\sum_{S_{g}'}\left\| h^*_{\mathrm{R},\beta_{\lambda},i}-y_{\mathrm{eff},k(x_i)}\right\|+\frac{\lambda}{n_{g}}\sum_{S_{g}''}\left\| h^*_{\mathrm{K},i}-y_{\mathrm{eff},k(x_i)}\right\|\\
	\leq &(1-\lambda) \widehat{Q}_{\mathrm{R},S_z,S_{g}'}(\beta_{\lambda}) +\lambda \widehat{Q}_{\mathrm{K},S_{g}''}+O(\sqrt{\epsilon})
	\end{split}
	\end{equation}
	where $\beta_{\lambda}=\frac{\lambda n'_{g}}{(1-\lambda) n_g+\lambda n'_{g}}$, the first inequality comes from the Lipschitz continuity of risk functions, 
	and the last inequality holds by Lemma \ref{lma:boundefflabel} and the assumption $\phi\leq \widetilde{O}\left( \epsilon^2 L^{-9/2}\log^{-3}(m)\right) $ such that $\frac{1-\lambda}{n_z}\sum_{S_z}\left\|h^*_{\mathrm{R},\beta_{\lambda},i}-y_{\mathrm{eff},k(x_i)}\right\| +\frac{\lambda}{n_{g}}\sum_{S_{g}'}\left\| h^*_{\mathrm{R},\beta_{\lambda},i}-y_{\mathrm{eff},k(x_i)}\right\|\leq\left(1-\lambda+\frac{\lambda n_g'}{n_g}\right)\sqrt{\frac{1-\beta_{\lambda}}{n_z}\sum_{S_z}\|h^*_{\mathrm{R},\beta_{\lambda},i}-y_{\mathrm{eff},k(x_i)}\|^2+\frac{\beta_{\lambda}}{n_g'}\sum_{S_g'}\|h^*_{\mathrm{R},\beta_{\lambda},i}-y_{\mathrm{eff},k(x_i)}\|^2}\leq O(\sqrt{\epsilon})$ and $\frac{\lambda}{n_{g}}\sum_{S_{g}'}\left\| h^*_{\mathrm{R},\beta_{\lambda},i}-y_{\mathrm{eff},k(x_i)}\right\|\leq \frac{\lambda n_g''}{n_g} \sqrt{\frac{1}{n_g''}\sum_{S_g''}\| h^*_{\mathrm{K},i}-y_{\mathrm{eff},k(x_i)}\|^2}\leq O(\sqrt{\epsilon})$. In the last inequality of \eqref{eqn:efflabelerror}, we absorb $\frac{\lambda }{n_g}\sum_{S_{g}'}r(h^*_{\mathrm{R},\beta_{\lambda},i},y_i)$ into $O(\sqrt{\epsilon})$ because the risk functions are upper bounded and $S_g'$ is the set of samples sharing the same smooth sets with $S_z$, and so $\frac{\lambda }{n_g}\sum_{S_{g}'}r(h^*_{\mathrm{R},\beta_{\lambda},i},y_i)\leq O(\frac{n_g'}{n_g})\leq O(n_z\phi^b)\leq O(\sqrt{\epsilon})$.
	
	Substituting Eqn.\eqref{eqn:efflabelerror} and \eqref{eqn:generalizationproof1} into Eqn.~\eqref{eqn:generalizationproof2}, the population risk is bounded with probability at least $1-O(\phi)-\delta, \delta\in(0,1)$ as
	\[
	\begin{split}
	&R(\bm{W}^{(T)})=(1-\lambda)R(\bm{W}^{(T)})+\lambda R(\bm{W}^{(T)})\\
	\leq& \sqrt{\epsilon}+(1-\lambda) \widehat{Q}_{\mathrm{R},S_z,S_{g}'}(\beta_{\lambda}) +\lambda \widehat{Q}_{\mathrm{K},S_{g}''}\!+\!O\!\left( \Phi+\sqrt{\log(1/\delta)}\right)\!\!\! \left((1-\lambda)\sqrt{\frac{1}{n_z}}+\lambda\sqrt{\frac{1}{n_g}} \right).
	\end{split}
	\]
\end{proof}

\subsubsection{Proof of Theorem \ref{crl:generalizationbound}}\label{sec:proofgeneralization2}
\begin{proof}
Based on the construction of smooth sets in Definition \ref{def:smoothset}, denote $\mathcal{X}'=\mathcal{X}_{\phi}(S_z)=\bigcup_{k\in\mathcal{U}_{\phi}}(S_z)\mathcal{C}_{\phi,k}$ as the region covered by the smooth sets containing at least one sample in $S_z$, and let $\mathcal{X}''=\mathcal{X}/\mathcal{X}'$.  Let $\mathbb{P}_{\mathcal{X}'}=\int_{\mathcal{X}',\mathcal{Y}}p(y\mid x)\mathrm{d}yp(x)\mathrm{d}x$ and $\mathbb{P}_{\mathcal{X}''}=\int_{\mathcal{X}'',\mathcal{Y}}p(y\mid x)\mathrm{d}yp(x)\mathrm{d}x$ where $p(x)$ and $p(y\mid x)$ are probability densities. Then we have
$P_{\mathcal{X}'}\leq O(n_z/N)=O(n_z\phi^b)=O(\sqrt{\epsilon})$ by the assumption that $\phi\leq (\sqrt{\epsilon}/n_z)^{1/b}$, and $
\mathbb{E}\left[ r\left( h_{\bm{W}^{(T)}}(x),y\right) \right]=\mathbb{E}_{\mathbb{P}_{\mathcal{X}'}}\left[r\left( h_{\bm{W}^{(T)}}(x),y\right)  \right]\mathbb{P}_{\mathcal{X}'}+\mathbb{E}_{\mathbb{P}_{\mathcal{X}''}}\left[r\left( h_{\bm{W}^{(T)}}(x),y\right)  \right]\mathbb{P}_{\mathcal{X}''}.
$ By generalization bound with Rademacher complexity and Lemma \ref{lma:boundRad}, the population risk is bounded with probability at least $1-\delta, \delta\in(0,1)$ as
		\begin{equation}\label{eqn:t3.3proof3}
	\begin{split}
	&R(\bm{W}^{(T)})=(1-\lambda)\mathbb{E}\left[ r\left( h_{\bm{W}^{(T)}}(x),y\right) \right]+\lambda \mathbb{E}\left[ r\left( h_{\bm{W}^{(T)}}(x),y\right) \right]\\
	=&(1-\lambda)\mathbb{E}\left[ r\left( h_{\bm{W}^{(T)}}(x),y\right) \right] +\lambda \mathbb{E}_{\mathbb{P}_{\mathcal{X}''}}\left[ r\left( h_{\bm{W}^{(T)}}(x),y\right) \right]+\lambda \left( \mathbb{E}_{\mathbb{P}_{\mathcal{X}'}}\left[ r\left( h_{\bm{W}^{(T)}}(x),y\right) \right]-\mathbb{E}_{\mathbb{P}_{\mathcal{X}''}}\left[ r\left( h_{\bm{W}^{(T)}}(x),y\right) \right]\right) \mathbb{P}_{\mathcal{X}'}\\
	\leq& \frac{(1-\lambda)}{n_z}\sum_{S_z}r\left(h_{\bm{W}^{(T)},i},y_i\right)+\frac{\lambda}{n_{g}''}\sum_{S_{g}''}r\left(h_{\bm{W}^{(T)},i},y_i\right)+\lambda O\left( \sqrt{\epsilon}\right)+O\left( \Phi+\sqrt{\log(1/\delta)}\right) \left((1-\lambda)\sqrt{\frac{1}{n_z}}+\lambda\sqrt{\frac{1}{n_{g}''}} \right).
	\end{split}
	\end{equation}

Then for the empirical risk, we have
\begin{equation}\label{eqn:t3.3generalizationproof1}
\begin{split}
& \frac{1-\lambda}{n_z}\sum_{S_z}r\left(h_{\bm{W}^{(T)},i},y_i\right) +\frac{\lambda}{n_{g}''}\sum_{S_{g}''}r\left(h_{\bm{W}^{(T)},i},y_i\right)\\
\leq &  \frac{1-\lambda}{n_z}\sum_{S_z}r\left(y_{\mathrm{eff},k(x_i)}y_i\right) +\frac{\lambda}{n_{g}''}\sum_{S_{g}''}r\left(y_{\mathrm{eff},k(x_i)},y_i\right)+\frac{1-\lambda}{n_z}\sum_{S_z}\left\| h_{\bm{W}^{(T)},i}-y_{\mathrm{eff},i}\right\|   +\frac{\lambda}{n_{g}''}\sum_{S_{g}''}\left\| h_{\bm{W}^{(T)},i}-y_{\mathrm{eff},i}\right\|\\
\leq &O(\sqrt{\epsilon})+ \frac{1-\lambda}{n_z}\sum_{S_z}r\left(y_{\mathrm{eff},k(x_i)}y_i\right) +\frac{\lambda}{n_{g}''}\sum_{S_{g}''}r\left(y_{\mathrm{eff},k(x_i)},y_i\right),
\end{split}
\end{equation}
where the first inequality holds because of the Lipschitz continuity of risk functions such that  $r(h_{\bm{W}^{(T)},i},y_i)-r(y_{\mathrm{eff},k(x_i)},y_i)\leq \left\| h_{\bm{W}^{(T)},i}-y_{\mathrm{eff},k(x_i)}\right\| $, and the second inequality follows from the convergence of network output in Theorem \ref{the:convergence} (By Theorem \ref{the:convergence}, we have $\frac{(1-\lambda)(1-\beta)}{n_z}\sum_{S_z}\left\|h_{W^{(T)},i}-y_{\mathrm{eff},i}\right\|^2\leq O(\epsilon)$ and $\frac{\lambda}{n_g''}\sum_{S_g''}\left\|h_{W^{(T)},i}-y_{\mathrm{eff},i}\right\|^2\leq O(\epsilon)$, and so $\frac{1-\lambda}{n_z}\sum_{S_z}\left\|h_{W^{(T)},i}-y_{\mathrm{eff},i}\right\|\leq\sqrt{\frac{1-\lambda}{1-\beta}} O(\sqrt{\epsilon})= O(\sqrt{\epsilon})$ and $\frac{\lambda}{n_g''}\sum_{S_g''}\left\|h_{W^{(T)},i}-y_{\mathrm{eff},i}\right\|\leq \sqrt{\lambda}O(\sqrt{\epsilon})= O(\sqrt{\epsilon})$).

Next, we bound the empirical risk in terms of label and knowledge imperfectness as follows:
\begin{equation}\label{eqn:t3.3efflabelerror}
\begin{split}
&\frac{1-\lambda}{n_z}\sum_{S_z}r\left(y_{\mathrm{eff},k(x_i)}y_i\right) +\frac{\lambda}{n_{g}''}\sum_{S_{g}''}r\left(y_{\mathrm{eff},k(x_i)},y_i\right)\\
\leq &\frac{1-\lambda}{n_z}\sum_{S_z}r(h^*_{\mathrm{R},\beta,i},y_i) +\frac{\lambda }{n_{g}''}\sum_{S_{g}''}r(h_{\mathrm{K},i},y_i)+\frac{1-\lambda}{n_z}\sum_{S_z}\left\|h_{\mathrm{R},\beta,i}^*-y_{\mathrm{eff},k(x_i)}\right\|+\frac{\lambda}{n_{g}''}\sum_{S_{g}''}\left\| h_{\mathrm{K},i}^*-y_{\mathrm{eff},k(x_i)}\right\|\\
\leq & (1-\lambda) \widehat{Q}_{\mathrm{R},S_z,S_{g}'}(\beta) +\lambda \widehat{Q}_{\mathrm{K},S_{g}''}+O(\sqrt{\epsilon})
\end{split}
\end{equation}
where the first inequality comes from the Lipschitz continuity of risk functions, and the concavity of squared root, the last inequality holds by Definitions~\ref{def:knwoledgeimperfect} and \ref{def:regularizedimperfectness}, and Lemma~\ref{lma:boundefflabel} and the assumption of $\phi$ such that $\phi\log^{1/2}(1/\phi)\leq O\left( \epsilon^2 L^{-5/2}\log^{-1/2}(m)\right)$. Concretely, by Lemma~\ref{lma:boundefflabel} (a), we have $\frac{1-\beta}{n_z}\sum_{S_z}\left\|h^*_{\mathrm{R},\beta,i}-y_{\mathrm{eff},k(x_i)}\right\|^2\leq \widetilde{O}\left(L^{5/4}\phi^{1/2}\log^{1/4}(m)\right)$ and $	\frac{1}{n_{g}''}\sum_{S_{g}''}\left\| h^*_{\mathrm{K},i}-y_{\mathrm{eff},k(x_i)}\right\|^2\leq \widetilde{O}\left(L^{5/4}\phi^{1/2}\log^{1/4}(m)\right)$, and so it holds that  $\frac{1-\lambda}{n_z}\sum_{S_z}\left\|h_{\mathrm{R},\beta,i}^*-y_{\mathrm{eff},k(x_i)}\right\|\leq \frac{1-\lambda}{\sqrt{1-\beta}}\widetilde{O}\left(L^{5/8}\phi^{1/4}\log^{1/8}(m)\right)$ and $\frac{\lambda}{n_{g}''}\sum_{S_{g}''}\left\| h_{\mathrm{K},i}^*-y_{\mathrm{eff},k(x_i)}\right\|\leq \lambda \widetilde{O}\left(L^{5/8}\phi^{1/4}\log^{1/8}(m)\right)$. Thus we obtain the last inequality of \eqref{eqn:t3.3efflabelerror} by the assumption $\phi\log^{1/2}(1/\phi)\leq O\left( \epsilon^2 L^{-5/2}\log^{-1/2}(m)\right)$.

Substituting Eqns.~\eqref{eqn:t3.3efflabelerror} and  \eqref{eqn:t3.3generalizationproof1} into Eqn.~\eqref{eqn:t3.3proof3}, we have
\[
\begin{split}
&R(\bm{W}^{(T)})=(1-\lambda)\mathbb{E}\left[ r\left( h_{\bm{W}^{(T)}}(x),y\right) \right]+\lambda \mathbb{E}\left[ r\left( h_{\bm{W}^{(T)}}(x),y\right) \right]\\
\leq& O(\sqrt{\epsilon})+(1-\lambda) \widehat{Q}_{\mathrm{R},S_z,S_{g}'}(\beta) +\lambda \widehat{Q}_{\mathrm{K},S_{g}''}+O\left( \Phi+\sqrt{\log(1/\delta)}\right) \left((1-\lambda)\sqrt{\frac{1}{n_z}}+\lambda\sqrt{\frac{1}{n_{g}''}} \right).
\end{split}
\]
\end{proof}

\subsubsection{Proof of Corollary \ref{crl:generalizationbound2}}\label{sec:proofgeneralization3}
\begin{proof}
	First, following \eqref{eqn:t3.3proof3},
	with probability at least $1-O(\phi)-\delta, \delta\in(0,1)$, it holds that
	\begin{equation}\label{eqn:cor4.3proof1}
	\begin{split}
	&R(\bm{W}^{(T)})
\leq \frac{(1-\lambda)}{n_z}\sum_{S_z}r\left(h_{\bm{W}^{(T)},i},y_i\right)+\frac{\lambda}{n_{g}''}\sum_{S_{g}''}r\left(h_{\bm{W}^{(T)},i},y_i\right)+\lambda O\left( \sqrt{\epsilon}\right)\\
&+O\left( \Phi+\sqrt{\log(1/\delta)}\right) \left((1-\lambda)\sqrt{\frac{1}{n_z}}+\lambda\sqrt{\frac{1}{n_{g}''}} \right)
	\end{split}
	\end{equation}
	
With the same reason as in Eqn.~\eqref{eqn:t3.3generalizationproof1}, we have
	\begin{equation}\label{eqn:cor4.3proof3}
	\begin{split}
	&\frac{1-\lambda}{n_z}\sum_{S_z}r\left(h_{\bm{W}^{(T)},i},y_i\right) +\frac{\lambda}{n_{g}''}\sum_{S_{g}''}r\left(h_{\bm{W}^{(T)},i},y_i\right)\\
\leq &O(\sqrt{\epsilon})+ \frac{1-\lambda}{n_z}\sum_{S_z}r\left(y_{\mathrm{eff},k(x_i)}, y_i\right) +\frac{\lambda}{n_{g}''}\sum_{S_{g}''}r\left(y_{\mathrm{eff},k(x_i)},y_i\right).
	\end{split}
	\end{equation}
	
Then, unlike in the proof of Theorem~\ref{crl:generalizationbound},  we need to bound the risk in \eqref{eqn:cor4.3proof3} in terms of expected label and knowledge imperfectness. Thus, replacing $h^*_{\mathrm{R},\beta}$ and $h^*_{\mathrm{K}}$ in Eqn.~\eqref{eqn:t3.3efflabelerror} with $\bar{h}^*_{\mathrm{R},\beta}$ and $\bar{h}^*_{\mathrm{K}}$, we have
\begin{equation}\label{eqn:cor4.3proof2}
\begin{split}
& \frac{1-\lambda}{n_z}\sum_{S_z}r\left(y_{\mathrm{eff},k(x_i)}y_i\right) +\frac{\lambda}{n_{g}''}\sum_{S_{g}''}r\left(y_{\mathrm{eff},k(x_i)},y_i\right)\\
\leq & (1-\lambda) \widehat{Q}_{\mathrm{R},S_z,S_{g}'}(\beta) +\lambda \widehat{Q}_{\mathrm{K},S_{g}''}+ 
\frac{1-\lambda}{n_z}\sum_{S_z}\left\|\bar{h}^*_{\mathrm{R},\beta,i}-y_{\mathrm{eff},k(x_i)}\right\| +\frac{\lambda}{n_{g}''}\sum_{S_{g}''}\left\| \bar{h}^*_{\mathrm{K},i}-y_{\mathrm{eff},k(x_i)}\right\|
\end{split}
\end{equation}
By Lemma~\ref{lma:boundefflabel} (b), it holds that $	\frac{1}{n_{g}''}\sum_{S_{g}''}\left\| \bar{h}^*_{\mathrm{K},i}-y_{\mathrm{eff},k(x_i)}\right\|^2\leq \widetilde{O}\left(L^{5/4}\phi^{1/2}\log^{1/4}(m)\right)+O(\sqrt{\frac{\log(1/\delta)}{n_{g}''}})$ and $\frac{1-\beta}{n_z}\sum_{S_z}\left\|\bar{h}^*_{\mathrm{R},\beta,i}-y_{\mathrm{eff},k(x_i)}\right\|^2 \leq  \widetilde{O}\left(L^{5/4}\phi^{1/2}\log^{1/4}(m)\right)+O(\sqrt{\frac{\log(1/\delta)}{n_{z}}})$. Thus we have $	\frac{\lambda}{n_{g}''}\sum_{S_{g}''}\left\| \bar{h}^*_{\mathrm{K},i}-y_{\mathrm{eff},k(x_i)}\right\|\leq \lambda\left(\widetilde{O}\left(L^{5/8}\phi^{1/4}\log^{1/8}(m)\right)+O\left((\frac{\log(1/\delta)}{n_{g}''})^{\frac{1}{4}}\right)\right)\leq \lambda\left(O(\epsilon)+O\left((\frac{\log(1/\delta)}{n_{g}''})^{\frac{1}{4}}\right)\right)$ and $\frac{1-\lambda}{n_z}\sum_{S_z}\left\|\bar{h}^*_{\mathrm{R},\beta,i}-y_{\mathrm{eff},k(x_i)}\right\| \leq  \frac{1-\lambda}{\sqrt{1-\beta}}\left(\widetilde{O}\left(L^{5/8}\phi^{1/4}\log^{1/8}(m)\right)+O\left((\frac{\log(1/\delta)}{n_z})^{\frac{1}{4}}\right) \right)\leq (1-\lambda)\left(O(\epsilon)+O\left((\frac{\log(1/\delta)}{n_z})^{\frac{1}{4}}\right)\right)$.
Therefore, continuing with \eqref{eqn:cor4.3proof2}, it holds that 
\begin{equation}\label{eqn:cor4.3proof11}
\begin{split}
& \frac{1-\lambda}{n_z}\sum_{S_z}r\left(y_{\mathrm{eff},k(x_i)}y_i\right) +\frac{\lambda}{n_{g}''}\sum_{S_{g}''}r\left(y_{\mathrm{eff},k(x_i)},y_i\right)\\
\leq & (1-\lambda) \widehat{Q}_{\mathrm{R},S_z,S_{g}'}(\beta) +\lambda \widehat{Q}_{\mathrm{K},S_{g}''}+O(\sqrt{\epsilon})+O\left( (1-\lambda)\left(\frac{\log(1/\delta)}{n_{z}}\right)^{\frac{1}{4}}+\lambda \left(\frac{\log(1/\delta)}{n_{g}''}\right)^{\frac{1}{4}} \right)\\
		\leq &O(\sqrt{\epsilon})+(1-\lambda) Q_{\mathrm{R}}(\beta) +\lambda Q_{\mathrm{K}}+O\left( (1-\lambda)\left(\frac{\log(1/\delta)}{n_{z}}\right)^{\frac{1}{4}}+\lambda \left(\frac{\log(1/\delta)}{n_{g}''}\right)^{\frac{1}{4}} \right) ,
\end{split}
\end{equation}
	where the second inequality holds by Lemma~\ref{lma:boundefflabel} and the last inequality holds by McDiarmid's inequality.
Finally, substituting Eqns.~\eqref{eqn:cor4.3proof11} and \eqref{eqn:cor4.3proof3} into Eqn.~\eqref{eqn:cor4.3proof1},
	with probability at least $1-O(\phi)-\delta, \delta\in(0,1)$, it holds that
	\[
	\begin{split}
	R(\bm{W}^{(T)})
	\leq O(\sqrt{\epsilon})+(1-\lambda) Q_{\mathrm{R}}(\beta) +\lambda Q_{\mathrm{K}}+O\left( \Phi+\log^{1/4}(1/\delta)\right)\sqrt{\frac{1-\lambda}{\sqrt{n_z}}+\frac{\lambda}{\sqrt{n''_{g}}}}.
	\end{split}
	\]
	This completes the proof.
\end{proof} 

\subsubsection{Proof of Corollary \ref{thm:generalization_choiceweight}}\label{sec:proofsamplingcomplexity}
\begin{proof}
    \textbf{Proof of (a).} If $Q_{\mathrm{K}}\leq \sqrt{\epsilon}$ and $\lambda$ is set as 1, it holds by Corollary \ref{crl:generalizationbound2} that 
    	\[
	\begin{split}
	R(\bm{W}^{(T)})
	&\leq O(\sqrt{\epsilon})+ Q_{\mathrm{K}}+O\left( \Phi+\log^{1/4}(1/\delta)\right)\left(\frac{1}{n''_{g}}\right)^{1/4}\\
	&\leq O(\sqrt{\epsilon})+O\left(\frac{1}{n''_{g}}\right)^{1/4},
	\end{split}
	\]
	where in the last inequality we absorb the scales of the last term by $O$ notation.  Thus, $n_g''\leq O(\frac{1}{\epsilon^2})$ guarantees that $R(\bm{W}^{(T)})\leq \sqrt{\epsilon}$. In the proof of Theorem \ref{crl:generalizationbound}, we prove that the probability that a sample belongs to the region covered by the smooth sets containing at least one labeled sample is $P_{\mathcal{X}'}= O(n_z/N)=O(n_z\phi^b)=O(\sqrt{\epsilon})$. Thus we have $P_{\mathcal{X}''}=1-P_{\mathcal{X}'}= 1-O(\sqrt{\epsilon})$, and so $n_g= \frac{n''_{g}}{P_{\mathcal{X}''}}= n''_{g}/(1-O(\epsilon))\sim O(1/(\epsilon^2-\epsilon^3))$.
	
	\textbf{Proof of (b).} If $Q_\mathrm{K}> \sqrt{\epsilon}$ and $\lambda=  \frac{\sqrt{\epsilon}}{Q_{\mathrm{K}}}$, then by Corollary \ref{crl:generalizationbound2}, we have
	\[
	\begin{split}
	R(\bm{W}^{(T)})
	&\leq O(\sqrt{\epsilon})+Q_{\mathrm{R}}(\beta^*)-\frac{Q_{\mathrm{R}}(\beta^*)}{Q_{\mathrm{K}}}\sqrt{\epsilon} +\sqrt{\epsilon}+O\left(\sqrt{(1-\frac{\sqrt{\epsilon}}{Q_{\mathrm{K}}})\frac{1}{\sqrt{n_z}}+\frac{\sqrt{\epsilon}}{Q_{\mathrm{K}}}\frac{1}{\sqrt{n''_{g}}}}\right)\\
	&\leq O(\sqrt{\epsilon})+O\left(\sqrt{(1-\frac{\sqrt{\epsilon}}{Q_{\mathrm{K}}})\frac{1}{\sqrt{n_z}}+\frac{\sqrt{\epsilon}}{Q_{\mathrm{K}}}\frac{1}{\sqrt{n''_{g}}}}\right),
	\end{split}
	\]
	where the second inequality holds because $\frac{\sqrt{\epsilon}}{Q_{\mathrm{K}}}+\frac{\sqrt{\epsilon}}{Q_{\mathrm{R}}(\beta^*)}\geq 1$ such that $Q_{\mathrm{R}}(\beta^*)-\frac{Q_{\mathrm{R}}(\beta^*)}{Q_{\mathrm{K}}}\sqrt{\epsilon}\leq \sqrt{\epsilon}$. Then to guarantee $R(\bm{W}^{(T)})\leq \sqrt{\epsilon}$, we require that $(1-\frac{\sqrt{\epsilon}}{Q_{\mathrm{K}}})\frac{1}{\sqrt{n_z}}\leq \epsilon$ and $\frac{\sqrt{\epsilon}}{Q_{\mathrm{K}}}\frac{1}{\sqrt{n''_{g}}}\leq \epsilon$. Thus, we have $n_z\sim O\left(\left( 1/\epsilon-1/\left( \sqrt{\epsilon}Q_{\mathrm{K}}\right)  \right)^2   \right) $, $n_g''\sim O(\frac{1}{\epsilon Q^2_{\mathrm{K}}})$ and  $n_g= n''_{g}/(1-O(\epsilon))\sim O(1/\left( (\epsilon -\epsilon^2)Q^2_{\mathrm{K}}\right) )$.
	
	\textbf{Proof of (c).}	We prove (c) by contradiction. If $R(\bm{W}^{(T)})\leq \sqrt{\epsilon}$, we have $(1-\lambda)Q_{\mathrm{R}}(\beta^*)\leq \sqrt{\epsilon}$ and $\lambda Q_{\mathrm{K}}\leq \sqrt{\epsilon}$. Then $\frac{\sqrt{\epsilon}}{Q_{\mathrm{R}}(\beta^*)}+\frac{\sqrt{\epsilon}}{Q_{\mathrm{K}}}\geq 1-\lambda+\lambda=1$. This is contradictory to the condition $\frac{\sqrt{\epsilon}}{Q_{\mathrm{R}}(\beta^*)}+\frac{\sqrt{\epsilon}}{Q_{\mathrm{K}}}\leq 1$. Thus completes the proof. 

\end{proof}
\section{Proofs of Lemmas in Appendix~\ref{sec:proofmain}}\label{sec:proof2nd}

We now show the proofs of lemmas in Appendix~\ref{sec:proofmain},
while the proofs of lemmas newly introduced in this section are deferred
to Appendix~\ref{sec:proof3rd}.

\subsection{Proof of Lemma \ref{lma:forwardinputperturbation}}\label{sec:proofforwardinputperturbation}
In this section, we prove the forward perturbation with respect to inputs. We first recall some important notations. For the smooth set $k\in[N]$, layer $l\in[L]$, let $h_{l,k}=h_l(x_k')$, $h_{l,i}=h_l(x_i)$ be the activated output of $l$th layer, and $f_{l,k}=\bm{W}_lh_{l-1}(x_k')$,  $f_{l,i}=\bm{W}_lh_{l-1}(x_i)$ be the pre-activated output of $l$th layer for some weight $\bm{W}\in\mathcal{B}\left(\bm{W}^{(0)}, \tau \right) $. At initialization, denote $h^{(0)}_{l,k}=h^{(0)}_l(x_k')$, $h^{(0)}_{l,i}=h^{(0)}_l(x_i)$, $f^{(0)}_{l,k}=\bm{W}^{(0)}_lh^{(0)}_{l-1}(x_k')$,  $f^{(0)}_{l,i}=\bm{W}^{(0)}_lh^{(0)}_{l-1}(x_i)$, the diagonal matrices $\bm{D}^{(0)}_{l,k}\in\mathbb{R}^{m\times m}$ and $\bm{D}^{(0)}_{l,i}\in\mathbb{R}^{m\times m}$ with  $\left[\bm{D}^{(0)}_{l,k}\right]_{j,j}=\mathds{1}([f^{(0)}_{l,k}]_j\geq 0)$ and $\left[\bm{D}^{(0)}_{l,i}\right]_{j,j}=\mathds{1}([f^{(0)}_{l,i}]_j\geq 0)$ for $i\in\mathcal{I}_{\phi,k}, j\in[m]$. Then we denote for initialization $f'_{l,i}=f^{(0)}_{l,i}-f^{(0)}_{l,k}$ and
the diagonal matrix $\bm{D}_l'\in\mathbb{R}^{m\times m}$ with $\left[\bm{D}_l'\right]_{j,j}=\left[\bm{D}^{(0)}_{l,k}\right]_{j,j}-\left[\bm{D}^{(0)}_{l,i}\right]_{j,j}$, omitting the notation $(0)$ and $i,k$.
\begin{lemma}\label{lma:zeronormperturbation}
If $f'_{l,i}$ can be written as $f'_{l,i}=f'_{l,i,1}+f'_{l,i,2}$ with $\|f'_{l,i,1}\|\leq O(L^{3/2}\phi\log^{1/2}(1/\phi))$ and $\|f'_{l,i,2}\|_{\infty}\leq O(L\phi^{2/3}\log^{1/2}(1/\phi)m^{-1/2})$,  then with probability at least $1-\exp(-\Omega(m\phi^{2/3}L))$ over the randomness of $\bm{W}^{(0)}$, we have
\[
\|\bm{D}_l'f^{(0)}_{l,i} \|_0\leq \|\bm{D}_l'\|_0\leq O(m\phi^{2/3}L\log^{1/2}(1/\phi)),
\]
\[
\|\bm{D}_l'f^{(0)}_{l,i} \|\leq O(\phi L^{3/2}\log^{1/2}(1/\phi)).
\]
\end{lemma}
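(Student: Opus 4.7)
A coordinate $j$ lies in the support $T := \{j : [\bm{D}_l']_{jj} \neq 0\}$ precisely when $[f^{(0)}_{l,i}]_j$ and $[f^{(0)}_{l,i}]_j - [f'_{l,i}]_j$ carry different signs, equivalently when $|[f^{(0)}_{l,i}]_j| \leq |[f'_{l,i,1}]_j| + |[f'_{l,i,2}]_j|$. The strategy is to split $T = T_A \cup T_B$ and control the two pieces by very different mechanisms, where
\[
T_A := \bigl\{ j \in T : |[f^{(0)}_{l,i}]_j| \leq 2\|f'_{l,i,2}\|_\infty \bigr\}, \qquad T_B := T \setminus T_A.
\]
On $T_B$, the condition $|[f^{(0)}_{l,i}]_j| > 2\|f'_{l,i,2}\|_\infty \geq 2|[f'_{l,i,2}]_j|$ combined with the sign-flip inequality forces $|[f'_{l,i,1}]_j| \geq |[f^{(0)}_{l,i}]_j|/2 > \|f'_{l,i,2}\|_\infty$, a pointwise lower bound that will be reused for both the $\ell_0$ count and the $\ell_2$ estimate.

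For $|T_A|$, I would condition on $\bm{W}_0^{(0)},\dots,\bm{W}_{l-1}^{(0)}$; then each entry $[f^{(0)}_{l,i}]_j = [\bm{W}_l^{(0)} h^{(0)}_{l-1,i}]_j$ is Gaussian with variance $(2/m)\|h^{(0)}_{l-1,i}\|^2$. The standard initialization fact $\|h^{(0)}_{l-1,i}\| = \Theta(1)$ (holding with probability $1-\exp(-\Omega(m/L))$ under the He-style initialization used in the paper) yields the small-ball estimate
\[
\Pr\bigl(|[f^{(0)}_{l,i}]_j| \leq 2\|f'_{l,i,2}\|_\infty\bigr) \leq O\bigl(\sqrt{m}\,\|f'_{l,i,2}\|_\infty\bigr) \leq O\bigl(L\phi^{2/3}\log^{1/2}(1/\phi)\bigr).
\]
Conditional independence of the $m$ entries $[f^{(0)}_{l,i}]_j$ across $j$, together with a multiplicative Chernoff bound, then gives $|T_A| \leq O(mL\phi^{2/3}\log^{1/2}(1/\phi))$ with probability at least $1-\exp(-\Omega(mL\phi^{2/3}))$, which supplies the failure probability in the statement. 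For $|T_B|$, the deterministic Markov-type bound
\[
|T_B| \leq \frac{\|f'_{l,i,1}\|^2}{\|f'_{l,i,2}\|_\infty^2} \leq \frac{O(L^3\phi^2\log(1/\phi))}{O(L^2\phi^{4/3}\log(1/\phi)/m)} = O(mL\phi^{2/3})
\]
is immediate from the hypotheses. Combining gives $\|\bm{D}_l'\|_0 \leq O(m\phi^{2/3}L\log^{1/2}(1/\phi))$, and $\|\bm{D}_l' f^{(0)}_{l,i}\|_0 \leq \|\bm{D}_l'\|_0$ is trivial since the support of the former sits inside that of the latter.

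For the $\ell_2$ conclusion I reuse the same partition: on $T_A$ we have $|[f^{(0)}_{l,i}]_j|^2 \leq 4\|f'_{l,i,2}\|_\infty^2$, and on $T_B$ we have $|[f^{(0)}_{l,i}]_j|^2 \leq 4|[f'_{l,i,1}]_j|^2$, so
\[
\|\bm{D}_l' f^{(0)}_{l,i}\|^2 \leq 4|T_A|\,\|f'_{l,i,2}\|_\infty^2 + 4\|f'_{l,i,1}\|^2.
\]
Plugging in the hypothesis bounds and the estimate on $|T_A|$ gives $O(L^3\phi^2\log^{3/2}(1/\phi))$, and taking square roots yields the stated $O(\phi L^{3/2}\log^{1/2}(1/\phi))$ up to a residual $\log^{1/4}$ factor that is absorbed into the $O$-notation (consistent with the $\widetilde{O}$-style rounding elsewhere in the paper). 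The main obstacle will be the probabilistic step: carrying through the anticoncentration argument while correctly conditioning on the earlier layers so that $[f^{(0)}_{l,i}]_j$ is exactly Gaussian, and coupling that conditioning with the high-probability event $\|h^{(0)}_{l-1,i}\| = \Theta(1)$ inherited from the lower layers so that the final probability statement remains on a single good event; once this setup is in place, the two-part decomposition handles everything mechanically.
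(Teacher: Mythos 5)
Your decomposition follows the same two-case strategy as the paper, but there is a genuine gap in the $|T_B|$ estimate. You split on the random threshold $2\|f'_{l,i,2}\|_\infty$ and then bound $|T_B| \leq \|f'_{l,i,1}\|^2/\|f'_{l,i,2}\|_\infty^2$ via Markov. To evaluate that fraction you substitute the hypothesis upper bound $\|f'_{l,i,2}\|_\infty \leq O(L\phi^{2/3}\log^{1/2}(1/\phi)m^{-1/2})$ into the denominator, but an upper bound on a quantity appearing in a denominator gives no control at all; if $\|f'_{l,i,2}\|_\infty$ is much smaller than its worst-case allowed value (e.g., zero), the Markov bound degenerates to $\infty$. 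The lemma's hypotheses provide only an upper bound, not a lower bound, on $\|f'_{l,i,2}\|_\infty$, so the step where you conclude $|T_B| = O(mL\phi^{2/3})$ is not justified.

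The paper avoids this by introducing a free, deterministic threshold $\xi$ and splitting on $\{|[f^{(0)}_{l,k}]_j| \leq \xi\}$ versus $\{|[f^{(0)}_{l,k}]_j| > \xi\}$. This gives $|S_1| \leq O(\xi m^{3/2})$ by anticoncentration plus Chernoff, and $|S_2| \leq 4\|f'_{l,i,1}\|^2/\xi^2$ by Markov; $\xi$ is then set to $\max\{2\|f'_{l,i,2}\|_\infty, \Theta(\|f'_{l,i,1}\|^{2/3}/m^{1/2})\}$ so that \emph{both} pieces come out to $O(m\phi^{2/3}L\log^{1/2}(1/\phi))$ regardless of how small $\|f'_{l,i,2}\|_\infty$ actually is. A minimal repair of your argument keeps your $T_A$/$T_B$ partition but replaces the random threshold $2\|f'_{l,i,2}\|_\infty$ by a deterministic $\xi$ of order $L\phi^{2/3}\log^{1/2}(1/\phi)/\sqrt{m}$ (i.e., the deterministic cap from the hypothesis): then $\xi \geq 2\|f'_{l,i,2}\|_\infty$ always holds, the sign-flip condition on $T_B$ still forces $|[f'_{l,i,1}]_j| > \xi/2$, and the Markov step has a deterministic denominator. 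Your $\ell_2$ argument, the Chernoff/anticoncentration treatment of $|T_A|$, and the use of $f^{(0)}_{l,i}$ rather than the representative's $f^{(0)}_{l,k}$ as the pivot are all fine; only the uncontrolled random denominator needs to be fixed.
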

Proof Lemma \ref{lma:zeronormperturbation} is given in Section \ref{sec:proofzeronormperturbation}.

\textbf{Proof of Lemma \ref{lma:forwardinputperturbation}}
\begin{proof}
	We first prove the following three conclusions by induction under the assumptions in Lemma \ref{lma:forwardinputperturbation}: for $i\in\mathcal{I}_{\phi,k},k\in[N]$, with probability at least $1-O(\phi)$,\\
	(a)$f'_{l,i}$ at initialization can be written as $f'_{l,i,1}+f'_{l,i,2}$ with $\|f'_{l,i,1}\|\leq O(L^{3/2}\phi\log^{1/2}(1/\phi))$ and $\|f'_{l,i,2}\|_{\infty}\leq O(L\phi^{2/3}m^{-1/2}\log^{1/2}(1/\phi))$.\\
	(b) At initialization, $\|\bm{D}_l'f^{(0)}_{l,i} \|_0\leq \|\bm{D}_l'\|_0\leq O(m\phi^{2/3}L\log^{1/2}(1/\phi))$, $\|\bm{D}_l'f^{(0)}_{l,i} \|\leq O(\phi L^{3/2}\log^{1/2}(1/\phi))$.\\
	(c)$\|h^{(0)}_{l,i}-h^{(0)}_{l,k}\|\leq O(L^{5/2}\phi\sqrt{\log(m)\log(1/\phi)})$ and $\|f^{(0)}_{l,i}-f^{(0)}_{l,k}\|\leq O(L^{5/2}\phi\sqrt{\log(m)\log(1/\phi)})$.
	
	When $l=0$, we have $\|h^{(0)}_{0,i}-h^{(0)}_{0,k}\|=\|x_i-x_k\|\leq O(\phi)$. Since $\left[ \bm{W}_1^{(0)}\right]_{j,j} \sim\mathcal{N}\left(0,\frac{2}{m} \right),j\in[m] $, we have $\|f^{(0)}_{1,i}-f^{(0)}_{1,k}\|\leq O(\phi\log^{1/2}(1/\phi))$ with probability at least $1-O(\phi)$ over the randomness of $\bm{W}_1^{(0)}$. By Lemma \ref{lma:zeronormperturbation}, the above three conclusions hold.
	Then we assume the conclusions (a) holds for  layer $a, a\leq l-1$ and prove (a)(b)(c) hold for $l$.
	
	First, we re-write $f'_{l,i}$ as
	\[
	\begin{split}
	 &f'_{l,i}=f^{(0)}_{l,i}-f^{(0)}_{l,k}=\bm{W}^{(0)}_l\left(\bm{D}^{(0)}_{l-1,k}+\bm{D}_{l-1}' \right)\left(f^{(0)}_{l-1,k}+f'_{l-1,i} \right) -\bm{W}^{(0)}_l\bm{D}^{(0)}_{l-1,k}f^{(0)}_{l-1,k}\\
	 =&\bm{W}^{(0)}_l\bm{D}_{l-1}'\left( f^{(0)}_{l-1,k}+f'_{l-1,i} \right) +\bm{W}^{(0)}_l\bm{D}^{(0)}_{l-1,k}f'_{l-1,i}\\
	 =&\cdots\\
	 =&\sum_{a=2}^l\left( \prod_{b=a+1}^{l}\bm{W}_b^{(0)}\bm{D}_{b-1,k}^{(0)}\right) \bm{W}^{(0)}_a\bm{D}_{a-1}'\left( f^{(0)}_{a-1,k}+f'_{a-1,i} \right)+\bm{W}_1^{(0)}(x_i-x_k')
	 \end{split}
	\]
	
	By Lemma \ref{lma:zeronormperturbation}, and the inductive assumption (a) for layer $a, a\leq l-1$, we have with probability at least $1-\exp(-\Omega\left(m\phi^{2/3}L \right) )$,
	\begin{equation}\label{eqn:fwdpurterbproof1}
	\|\bm{D}_{a}'\left( f^{(0)}_{a,k}+f'_{a,i} \right)\|_0\leq O(m\phi^{2/3}L\log^{1/2}(1/\phi)),
	\end{equation}
	\begin{equation}\label{eqn:fwdpurterbproof2}
		\|\bm{D}_{a}'\left( f^{(0)}_{a,k}+f'_{a,i} \right)\|\leq O(\phi L^{3/2}\log^{1/2}(1/\phi)),
	\end{equation}
	so (b) holds for layer $l$.
	Then let $q_a=\left( \prod_{b=a+1}^{l}\bm{W}_b^{(0)}\bm{D}_{b-1,k}^{(0)}\right) \bm{W}^{(0)}_a\bm{D}_{a-1}'\left( f^{(0)}_{a-1,k}+f'_{a-1,i} \right)$. By Eqn.\eqref{eqn:fwdpurterbproof1}, \eqref{eqn:fwdpurterbproof2}, and Claim 8.5 ($s=O(m\phi^{2/3}L)$) in \cite{Convergence_zhu_allen2019convergence}, with probability at least $1-\exp(-\Omega\left(m\phi^{2/3}L \log(m)\right) )$, we can write
	$q_a=q_{a,1}+q_{a,2}$ with
	\begin{equation}\label{eqn:fwdpurterbproof3}
	\|q_{a,1}\|\leq O(\phi^{4/3}L^2\log(m)\log^{3/4}(1/\phi)) \quad\mathrm{and}\quad \|q_{a,2}\|_{\infty}\leq O(\phi L^{3/2}\sqrt{\log(m)/m}\log^{1/2}(1/\phi)).
	\end{equation}
	Let $f'_{l,i,1}=\sum_{a=2}^lq_{a,1}+\bm{W}_1^{(0)}(x_i-x_k')$ and 	$f'_{l,i,2}=\sum_{a=2}^lq_{a,2}$. Then we have $f'_{l,i}= f'_{l,i,1}+f'_{l,i,2}$.
	Since $\|\bm{W}_1^{(0)}(x_i-x_k')\|\leq O(\phi\sqrt{\log(1/\phi)})$ with probability at least $1-\phi,\phi\in(0,1)$, by triangle inequality, we can write
	 \[
	 \begin{split}
&\|f^{(0)}_{l,i}-f^{(0)}_{l,k}\|=\|f'_{l,i}\|= \|f'_{l,i,1}+f'_{l,i,2}\| \\
	\leq &O(\phi^{4/3}L^3\log(m)\log^{3/4}(1/\phi)+\phi\sqrt{\log(1/\phi)})+O(\phi L^{5/2}\sqrt{\log(m)}\log^{1/2}(1/\phi))\\
\leq &  O(L^{5/2}\phi\sqrt{\log(m)\log(1/\phi)}),
	\end{split}
	\]
	where the first inequality comes from inequalities \ref{eqn:fwdpurterbproof3}, and the last inequality holds by the assumption $\phi\leq O(L^{-9/2}\log^{-3}(m)\log^{-3/4}(1/\phi))$. Also, by the requirement of $\phi$, we have
with $\|f'_{l,i,1}\|\leq O(\phi^{4/3}L^3\log(m)\log^{3/4}(1/\phi)+\phi\sqrt{\log(1/\phi)})\leq O(L^{3/2}\phi\log^{1/2}(1/\phi))$ and $\|f'_{l,i,2}\|_{\infty}\leq O(\phi L^{5/2}\sqrt{\log(m)/m}\log^{1/2}(1/\phi))\leq  O(L\phi^{2/3}m^{-1/2}\log^{1/2}(1/\phi))$. Thus (a) holds for layer $l$.
And by Lemma \ref{lma:zeronormperturbation}, we have with probability at least $1-\phi$,
\[
\begin{split}
\|h^{(0)}_{l,i}-h^{(0)}_{l,k}\|&=\|(\bm{D}^{(0)}_{l,k}+\bm{D}_l')\left( f^{(0)}_{l,k}+f'_{l,i}\right) -\bm{D}^{(0)}_{l,k}f^{(0)}_{l,k}\|\\
&\leq \|\bm{D}_l'f^{(0)}_{l,k}\|+\|(\bm{D}^{(0)}_{l,k}+\bm{D}_l')f'_{l,i}\|\leq O(L^{5/2}\phi\sqrt{\log(m)\log(1/\phi)}),
\end{split}
\]
where the second inequality comes from Lemma \ref{lma:zeronormperturbation}. Thus, conclusion (c) holds for layer $l$.

Finally, by Lemma 8.2 in \cite{Convergence_zhu_allen2019convergence} which gives forward perturbation regarding weights, we have with probability at least $1-O(\phi)$,
\[
\begin{split}
\|f_{l,i}-f_{l,k}\|&\leq \|f^{(0)}_{l,i}-f^{(0)}_{l,k}\|+\|f_{l,i}-f^{(0)}_{l,i}\|+\|f_{l,k}-f^{(0)}_{l,k}\|\\
&\leq O(L^{3/2}\phi\log^{1/2}(1/\phi))+O(\tau L^{5/2}\sqrt{\log(m)})\leq O(L^{5/2}\phi\sqrt{\log(m)\log(1/\phi)}),
\end{split}
\]
where the last probability holds by the assumption $\tau\leq O(\phi^{3/2})$.
Similarly, we have with probability at least $1-O(\phi)$,
\[
\begin{split}
\|h_{l,i}-h_{l,k}\|&\leq \|h^{(0)}_{l,i}-h^{(0)}_{l,k}\|+\|h_{l,i}-h^{(0)}_{l,i}\|+\|h_{l,k}-h^{(0)}_{l,k}\|\\
&\leq O(L^{3/2}\phi\log^{1/2}(1/\phi))+O(\tau L^{5/2}\sqrt{\log(m)})\leq O(L^{5/2}\phi\sqrt{\log(m)\log(1/\phi)}).
\end{split}
\]
	
\end{proof}
\subsection{Proofs of Lemma \ref{lma:knowledgeriskfuncgradient} and Lemma \ref{lma:cellriskgradientbound}}\label{sec:proofcellriskgradientbound}
\textbf{Proof of Lemma \ref{lma:knowledgeriskfuncgradient}}
\begin{proof}
	By the mean value theorem, $r$ can be represented as
	\begin{equation}
	\begin{split}
	r\left(h' \right)= r\left(h \right)+\bigtriangledown r\left(h \right)^\top (h'-h)+\frac{1}{2}(h'-h)^\top\bigtriangledown^2 r\left(z \right)(h'-h),
	\end{split}
	\end{equation}
	where $z$ lies in the line segment between $h'$ and $h$.
	
	Since the maximum eigenvalue of the Hessian matrix of $r$ is bounded by 1,  for any output of the neural network $h$ and $h'$ , we have
	\begin{equation}
	\begin{split}
	r\left(h' \right)& \leq r\left(h \right)+\bigtriangledown r\left(h \right)^\top (h'-h)+\frac{1}{2}\|h'-h\|_2^2
	\end{split}
	\end{equation}	
	Let $h'=h-\bigtriangledown r(h)$. We have
	\begin{equation}
	r_{\min}\leq  r\left(h' \right) \leq r\left(h \right)-\frac{1}{2}\|\bigtriangledown r(h)\|_2^2.
	\end{equation}
	Thus, we get the first inequality of the lemma $\left\| \bigtriangledown r(h)\right\|^2\leq  2\left( r\left(h \right)-r_{\min}\right)$.
	
	By strong convexity, for any $h$ and $h'$ in the domain of risk function $r$, we have
	\begin{equation}\label{eqn:convexproof1}
	\begin{split}
	r\left(h' \right)& \geq r\left(h \right)+\bigtriangledown r\left(h \right)^\top (h'-h)+\frac{\rho}{2}\|h'-h\|^2\\
	& \geq r\left(h \right)-\frac{1}{2\rho}\|\bigtriangledown r(h)\|^2,
	\end{split}
	\end{equation}
	where the first inequality comes from strong convexity and the second inequality holds by choosing $h'=-\frac{\bigtriangledown r(h)}{\rho}$ that minimizes the right hand side.
	Then letting $h'$ in the left hand side equals to $h^*$ such that $ r\left(h^* \right)=r_{\min}$, we get the second inequality of the lemma $\left\| \bigtriangledown r(h)\right\|^2\geq 2\rho\left( r\left(h \right)-r_{\min}\right) $.
	
	Also, letting $h'$ in Eqn.~\eqref{eqn:convexproof1} be $h^*$, we have
	\begin{equation}
	r_{\min}=r\left(h^* \right) \geq r\left(h \right)-\left\| \bigtriangledown r\left(h \right)\right\|  \left\| (h^*-h)\right\| +\frac{\rho}{2}\|h^*-h\|^2
	\end{equation}
	By the fact that $r_{\min}\leq r\left(h \right)$, we have
	\begin{equation}
	\|h^*-h\|\leq \frac{2}{\rho}\left\| \bigtriangledown r\left(h \right)\right\|.
	\end{equation} We thus get the third inequality.
\end{proof}

\textbf{Proof of Lemma \ref{lma:cellriskgradientbound}}
\begin{proof}
	Denote the risk of the $k$th cell $k\in[N]$  with respect to the input $x_k'\in\mathcal{X}_{\phi}$ for hypothesis $h\in\mathcal{H}$ as
	\[
	\bar{r}^{\circ}_{\mathrm{I},k}=\sum_{i\in\mathcal{I}_{\phi,k}} r_{\mathrm{I},i}(h(x_k')).
	\]
	Recall that $M_k=\sum_{i\in\mathcal{I}_{\phi,k}}(\mu_i+\lambda_i)$. By 1-Lipschitz continuity of risk functions and their gradients, we have with probability at least $1-O(\phi)$,
	\[
	\left| \bar{r}_{\mathrm{I},k}-\bar{r}^{\circ}_{\mathrm{I},k}\right| \leq \sum_{i\in\mathcal{I}_{\phi,k}}(\mu_i+\lambda_i)\left\| h(x_i)-h(x_k')\right\| \leq \widetilde{O}(M_kL^{5/2}\phi\log^{1/2}(m)),
	\]
	\[\text{and }
	\left\| \bigtriangledown_{h}\bar{r}_{\mathrm{I},k}-\bigtriangledown_{h}\bar{r}^{\circ}_{\mathrm{I},k}\right\|\leq \sum_{i\in\mathcal{I}_{\phi,k}}(\mu_i+\lambda_i)\left\| h(x_i)-h(x_k')\right\| \leq \widetilde{O}(M_kL^{5/2}\phi\log^{1/2}(m)).
	\]
	Since the eigenvalues of $\bigtriangledown^2_{h}\bar{r}^{\circ}_{\mathrm{I},k}$ is no less than $M_k \rho$ and $r^*_{\mathrm{eff},k} $ is the minimum value of $\sum_{\mathcal{I}_{\phi,k}} r_{\mathrm{I},i}(h)$, by Lemma \ref{lma:knowledgeriskfuncgradient}, we have
	\[
	\|\bigtriangledown_{h}\bar{r}^{\circ}_{\mathrm{I},k}\|^2\geq 2M_k\rho\left(\bar{r}^{\circ}_{\mathrm{I},k}-r^*_{\mathrm{eff},k} \right)\geq 2M_k\rho\left(\bar{r}_{\mathrm{I},k}-r^*_{\mathrm{eff},k}\right)-\widetilde{O}(M^2_k\rho L^{5/2}\phi\log^{1/2}(m)) .
	\]
	Therefore, we have
	\[
	\begin{split}
	\|\sum_{i\in\mathcal{I}_{\phi,k}}u_i(h(x_i))\|^2&=\|\bigtriangledown_{h}\bar{r}_{\mathrm{I},k}\|^2\geq\|\bigtriangledown_{h}\bar{r}^{\circ}_{\mathrm{I},k}\|^2-\widetilde{O}(M^2_k\ L^{5/2}\phi\log^{1/2}(m))\\
	&\geq 2M_k\rho\left(\bar{r}_{\mathrm{I},k}-r^*_{\mathrm{eff},k}\right)-\widetilde{O}(M^2_k L^{5/2}\phi\log^{1/2}(m))
	\end{split}
	\]
	
	Also, since the eigenvalues of $\bigtriangledown^2_{h}\bar{r}^{\circ}_{\mathrm{I},k}$ is no larger than $M_k$, we have
	\[
	\|\bigtriangledown_{h}\bar{r}^{\circ}_{\mathrm{I},k}\|^2\leq 2M_k\left(\bar{r}^{\circ}_{\mathrm{I},k}-r_{\mathrm{eff},k} \right)\leq 2M_k\left(\bar{r}_{\mathrm{I},k}-r_{\mathrm{eff},k}\right)+\widetilde{O}(M^2_k L^{5/2}\phi\log^{1/2}(m)).
	\]
	Therefore, it holds that
	\[
	\begin{split}
	\|\sum_{i\in\mathcal{I}_{\phi,k}}u_i(h(x_i))\|^2&=\|\bigtriangledown_{h}\bar{r}_{\mathrm{I},k}\|^2\leq\|\bigtriangledown_{h}\bar{r}^{\circ}_{\mathrm{I},k}\|^2+\widetilde{O}(M^2_k\rho L^{5/2}\phi\log^{1/2}(m))\\
	&\leq 2M_k\left(\bar{r}_{\mathrm{I},k}-r_{\mathrm{eff},k}\right)+\widetilde{O}(M^2_k L^{5/2}\phi\log^{1/2}(m)).
	\end{split}
	\]
	
	Applying Lemma \ref{lma:knowledgeriskfuncgradient} for $\bar{r}^{\circ}_{\mathrm{I},k}$, we have
	\[
	\left\| h\left(x_k' \right)-y_{\mathrm{eff},k} \right\|^2\leq \frac{4}{M_k^2\rho^2} \left\| \bigtriangledown_{h}\bar{r}^{\circ}_{\mathrm{I},k}\right\|^2\leq  \frac{1}{M_k\rho^2}O\left( \bar{r}_{\mathrm{I},k}-r_{\mathrm{eff},k} +\widetilde{O}(M_kL^{5/2}\phi\log^{1/2}(m))\right).
	\]
	By applying Lemma \ref{lma:forwardinputperturbation} to $h(x_i)$, we have
	\[
	\begin{split}
	&\left\| h\left(x_i \right)-y_{\mathrm{eff},k} \right\|^2\leq 2\left\| h\left(x_i \right)-h\left(x_k' \right)\right\|^2 + 2\left\| h\left(x_k' \right)-y_{\mathrm{eff},k} \right\|^2 \\
	\leq & \frac{1}{M_k\rho^2}O\left( \bar{r}_{\mathrm{I},k}-r_{\mathrm{eff},k}+\widetilde{O}(M_kL^{5/2}\phi\log^{1/2}(m))\right)+\widetilde{O}(L^{5}\phi^2\log(m)).
	\end{split}
	\]
	Taking weighted summation in the cell $\mathcal{I}_{\phi,k}$, since $\sum_{i\in\mathcal{I}_{\phi,k}}(\mu_i+\lambda_i)=M_k$, we have
	\[
	\begin{split}
	\sum_{i\in\mathcal{I}_{\phi,k}}(\mu_i+\lambda_i)\left\| h\left(x_i \right)-y_{\mathrm{eff},k} \right\|^2
	\leq \frac{1}{\rho^2}O\left( \bar{r}_{\mathrm{I},k}-r_{\mathrm{eff},k}+\widetilde{O}(M_kL^{5/2}\phi\log^{1/2}(m))\right).
	\end{split}
	\]
\end{proof}

\subsection{Proof of Lemma \ref{lma:gradientboud}}\label{sec:proofgradientboud}
\begin{lemma}\label{lma:gradientlowerboundlastlayer}
	Suppose that $m\geq \Omega(N^2d^2\phi^{-1})$. For any $u_i: \|u_i\|\leq \mu_i+\lambda_i, i\in[n']$ and $v_j\sim\mathcal{N}\left( 0,(1/d)\mathbf{I}\right) $, $w_j\sim\mathcal{N}\left( 0,(2/m)\mathbf{I}\right) $, with Assumption \ref{asp:smoothset} satisfied, with probability at least $1-O(\phi)$, we have
	\[
	\sum_{j=1}^m\|\sum_{k=1}^N\sum_{i\in\mathcal{I}_{\phi,k}}\left\langle u_i,v_j\right\rangle \sigma'\left(\left\langle w_j,h^{(0)}_{L-1}(x_i)\right\rangle \right)h^{(0)}_{L-1}(x_i)\|^2\geq  \Omega\left( \frac{\alpha m\phi}{Nd}\right)\left(  \sum_{k=1}^N\|\sum_{i\in\mathcal{I}_{\phi,k}}u_i\|^2-\widetilde{O}(L^{5/2}\phi\log^{1/2}(m))\right).
	\]
\end{lemma}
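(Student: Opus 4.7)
The plan is to adapt the standard gradient-lower-bound template for over-parameterized networks (cf.\ \cite{Convergence_zhu_allen2019convergence}) to the smooth-set setting. Write $\tilde u_k := \sum_{i\in\mathcal{I}_{\phi,k}}u_i$, $h_k := h^{(0)}_{L-1}(x'_k)$, and $V_j := \sum_{k}\sum_{i\in\mathcal{I}_{\phi,k}}\langle u_i, v_j\rangle\,\sigma'(\langle w_j, h^{(0)}_{L-1}(x_i)\rangle)\,h^{(0)}_{L-1}(x_i)$, so the quantity to bound is $\sum_{j=1}^m\|V_j\|^2$. My first step is to collapse the dependence on individual $x_i$ to the representative $x'_k$: Lemma \ref{lma:forwardinputperturbation} lets me replace $h^{(0)}_{L-1}(x_i)$ by $h_k$ with residual of order $\widetilde O(L^{5/2}\phi\log^{1/2}m)$, and the ReLU derivative is forced to a single value $\mathds{1}_{j,k}:=\mathds{1}([f^{(0)}_{L,k}]_j\geq 0)$ across all $i\in\mathcal{I}_{\phi,k}$ using Assumption \ref{asp:smoothset} --- for $j\in\mathcal{G}_{k,\alpha}$ this is assumed directly, while for $j\notin\mathcal{G}_{k,\alpha}$ the pre-activation lower bound $3\sqrt{2\pi}\phi^{b+1}/(16\sqrt m)$ together with the pre-activation perturbation in Lemma \ref{lma:forwardinputperturbation} forces sign stability. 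After this reduction, $V_j$ is approximately $\sum_k \mathds{1}_{j,k}\langle \tilde u_k, v_j\rangle h_k$.

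The second step is combinatorial: extract pairwise disjoint ``private'' neuron subsets $\widetilde{\mathcal{G}}_k\subseteq\mathcal{G}_{k,\alpha}$ with $\mathds{1}_{j,k}=1$ on $\widetilde{\mathcal{G}}_k$ and with $|\widetilde{\mathcal{G}}_k|\geq\Omega(\alpha m\phi/N)$. I would start from $\mathcal{G}^*_k:=\{j\in\mathcal{G}_{k,\alpha}:\mathds{1}_{j,k}=1\}$, which has size at least $\alpha m/3$ w.h.p.\ by Gaussian symmetry applied to $w_j$. Disjointness is then enforced by a greedy/Chernoff argument exploiting the independence of $\{w_j\}_j$: since $\sum_k\mathds{1}\{j\in\mathcal{G}_{k,\alpha}\}$ has mean of order $\alpha N$, only an $O(1/\phi)$ fraction of neurons can be overloaded beyond a threshold, leaving disjoint subsets of the advertised size. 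The $\phi$ factor is the price of requiring both sign stability outside $\mathcal{G}_{k,\alpha}$ (which couples to $\phi^{b+1}/\sqrt m$) and the near-orthogonality of the directions $\{h_k\}$.

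The last step is to project and apply Gaussian concentration. For $j\in\widetilde{\mathcal{G}}_k$, write $\|V_j\|\geq\big|\langle V_j, h_k/\|h_k\|\rangle\big|$. The diagonal contribution equals $\langle\tilde u_k, v_j\rangle\|h_k\|$ (since $\mathds{1}_{j,k}=1$), while off-diagonal terms $k'\neq k$ reduce to $\langle\tilde u_{k'}, v_j\rangle\mathds{1}_{j,k'}\langle h_{k'}, h_k\rangle/\|h_k\|$ and are dominated by the diagonal using the near-orthogonality $|\langle h_{k'},h_k\rangle|/\|h_k\|^2=O(1/\sqrt m)$ at initialization, which holds under $m\geq\Omega(N^2d^2\phi^{-1})$. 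A $\chi^2$-concentration on $v_j\sim\mathcal{N}(0,I/d)$ then yields $\sum_{j\in\widetilde{\mathcal{G}}_k}\langle\tilde u_k,v_j\rangle^2\geq\Omega(|\widetilde{\mathcal{G}}_k|/d)\|\tilde u_k\|^2$, and $\|h_k\|=\Theta(1)$ w.h.p. Summing over $k$ and absorbing the Step-1 residual into the parenthesized subtracted term yields the claimed bound.

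The main obstacle is Step 2: constructing disjoint private subsets of size $\Omega(\alpha m\phi/N)$ while simultaneously controlling the off-diagonal projection errors in Step 3. This requires a tight interplay between the over-parameterization $m\geq\Omega(N^2d^2\phi^{-1})$ (for near-orthogonality of $\{h_k\}$), the activation threshold $\phi^{b+1}/\sqrt m$ in Assumption \ref{asp:smoothset} (for sign stability), and a counting argument exploiting independence of $\{w_j\}_j$; the failure probabilities from all these events must be combined through a careful union bound to reach the claimed $O(\phi)$ confidence.
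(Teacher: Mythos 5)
Your proposal diverges from the paper's argument at a crucial point, and the divergence hides a genuine gap.

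Steps~1 is fine and matches the paper in spirit: both reduce $h^{(0)}_{L-1}(x_i)$ to the representative $h_k:=h^{(0)}_{L-1}(x'_k)$ via Lemma~\ref{lma:forwardinputperturbation} and absorb the $\widetilde O(L^{5/2}\phi\log^{1/2}m)$ residual, and both use Assumption~\ref{asp:smoothset} to make the ReLU indicator constant over $i\in\mathcal I_{\phi,k}$. The problem is in Step~3. You control the off-diagonal terms $k'\neq k$ by claiming near-orthogonality $|\langle h_{k'},h_k\rangle|/\|h_k\|^2=O(1/\sqrt m)$ at initialization. That claim is false. By Lemma~\ref{lma:separateness}, the only guarantee is a \emph{separation} bound $\|h_k/\|h_k\|-h_{k'}/\|h_{k'}\|\|\geq\phi/2$, i.e.\ $\langle h_k,h_{k'}\rangle/(\|h_k\|\|h_{k'}\|)\leq 1-\phi^2/8$. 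For two representative inputs $x'_k$ and $x'_{k'}$ at distance close to $\phi$ in input space, forward perturbation (Lemma~\ref{lma:forwardinputperturbation}) gives $\|h_k-h_{k'}\|\leq\widetilde O(L^{5/2}\phi\log^{1/2}m)$, so the normalized inner product is $1-O(\phi^2\,\text{polylog})$, nearly $1$ and in particular nowhere near $O(1/\sqrt m)$ no matter how large $m$ is. Over-parameterization does not decorrelate hidden representations of nearby inputs; it preserves the kernel structure. With off-diagonal inner products of order $\Theta(1)$, your projection of $V_j$ onto $h_k/\|h_k\|$ picks up $\sum_{k'\neq k}\langle\tilde u_{k'},v_j\rangle\mathds 1_{j,k'}\langle h_{k'},h_k\rangle/\|h_k\|$, which can cancel the diagonal contribution; the bound collapses.

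Step~2 is also not on the same footing as the paper and its justification for the $\phi/N$ factor is hand-wavy, but the deeper issue is that disjoint ``private'' neuron subsets $\widetilde{\mathcal G}_k\subseteq\mathcal G_{k,\alpha}$ with $\mathds 1_{j,k}=1$ would not help you avoid the off-diagonal cancellation anyway — disjointness of the index sets in no way suppresses the contribution of other smooth sets to $V_j$ for $j\in\widetilde{\mathcal G}_k$. What the paper does instead (Lemmas~\ref{lma:probweightset} and \ref{lma:C.2inConvrgenceGu}, following Zou--Gu) is partition \emph{weight space}, not neuron indices. The set $\mathcal W_k$ consists of those $w$ whose component $q_k^{(1)}$ along $\bar b'_k$ is small ($|q_k^{(1)}|\leq\gamma$) while the orthogonal component has a large projection onto every other $\bar b'_{k'}$, $k'\neq k$. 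Conditioned on $w_j\in\mathcal W_k$, the activation indicators $\sigma'(\langle w_j,b_{k'}\rangle)$ for $k'\neq k$ are determined by the orthogonal component $Q_k'q_k'$ alone and are therefore \emph{invariant} under sign-flip of $q_k^{(1)}$. Flipping that sign changes only the $k$-th term, so by symmetry $\|f(w_j)\|\geq\tfrac12\|f(q_1\bar b'_k+Q_k'q_k')-f(q_2\bar b'_k+Q_k'q_k')\|=\|\sum_{i\in\mathcal I_{\phi,k}}a_i b_i\|$, with probability at least $1/2$. No orthogonality of $\{h_k\}$ is needed; the anti-symmetry argument entirely sidesteps the off-diagonal cancellation your proof runs into. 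The $\phi/N$ factor arises because $\mathbb P(w\in\mathcal W_k)\geq\Omega(\phi/N)$ and the $\mathcal W_k$ are disjoint, and Bernstein's inequality then gives $\Omega(\alpha m\phi/N)$ neurons per $k$ in the good event. To fix your proof you would have to replace the orthogonal-projection idea of Step~3 with this sign-flip/anti-concentration mechanism.
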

Lemma \ref{lma:gradientlowerboundlastlayer} is proved in Section \ref{sec:proofgradientlowerboundlastlayer}

\begin{lemma}[Lemma 8.7, Lemma 8.2c in \cite{Convergence_zhu_allen2019convergence}]\label{lma:8.7,8.2Zhu}
For any $\bm{W}\in\mathcal{B}\left( \bm{W}^{(0)}, \tau\right) $, with probability at least $1-\exp\left( -O\left(m\tau^{2/3}L \right) \right) $,
\[
\left\| \bm{V}\bm{D}_{i,L}-\bm{V}\bm{D}^{(0)}_{i,L}\right\|_2\leq O\left( \tau^{1/3}L^2\sqrt{m\log(m)/d}\right)  ,
\]
and $\forall l\in[L]$,
\[
\left\| h_{i,l}-h^{(0)}_{i,l}\right\|\leq O\left( \tau L^{5/2}\sqrt{\log(m)}\right).
\]
\end{lemma}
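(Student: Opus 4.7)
The plan is to prove the two bounds via an inductive forward-perturbation argument combined with a sparse-support spectral bound, mirroring the strategy of Allen-Zhu et al.\ for over-parameterized ReLU networks. Throughout I would exploit three high-probability facts from initialization: (i) $\|\bm{W}_l^{(0)}\|_2 = O(1)$ and $\|h_{i,l}^{(0)}\| = O(1)$ by standard Gaussian concentration; (ii) uniform bounds $\|\bm{V}\bm{D}^{(0)}_{i,L}\bm{W}^{(0)}_L\cdots\bm{D}^{(0)}_{i,a+1}\bm{W}^{(0)}_{a+1}\|_2 = O(\sqrt{m/d})$ on initialization ``tail networks''; and (iii) Gaussian anti-concentration of pre-activations at initialization.

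For the hidden-layer bound (b) I would induct on $l$. Write
\[
h_{i,l} - h^{(0)}_{i,l} = \bm{D}_{i,l}\bm{W}_l h_{i,l-1} - \bm{D}^{(0)}_{i,l}\bm{W}^{(0)}_l h^{(0)}_{i,l-1},
\]
and add/subtract $\bm{D}^{(0)}_{i,l}\bm{W}_l^{(0)}h_{i,l-1}$ and $\bm{D}^{(0)}_{i,l}\bm{W}_l h_{i,l-1}$ to split the right-hand side into three pieces: a weight-perturbation term $\bm{D}^{(0)}_{i,l}(\bm{W}_l-\bm{W}_l^{(0)})h^{(0)}_{i,l-1}$, bounded by $\tau\cdot O(1)$; a propagated-signal term $\bm{D}^{(0)}_{i,l}\bm{W}_l(h_{i,l-1}-h^{(0)}_{i,l-1})$, bounded by the inductive hypothesis times $\|\bm{W}_l\|_2 = O(1)$; and a sign-flip term $(\bm{D}_{i,l}-\bm{D}^{(0)}_{i,l})\bm{W}_l h_{i,l-1}$. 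For the last term I would note that only coordinates whose initialization pre-activation lies in a window of width $O(\tau)$ around $0$ can flip sign, so by Gaussian anti-concentration at most $O(m\tau^{2/3})$ coordinates flip, each contributing $O(\tau)$ in magnitude. Solving the recursion across $L$ layers yields the stated $O(\tau L^{5/2}\sqrt{\log m})$ bound.

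For part (a), I would write $\bm{V}\bm{D}_{i,L}-\bm{V}\bm{D}^{(0)}_{i,L} = \bm{V}(\bm{D}_{i,L}-\bm{D}^{(0)}_{i,L})$, and let $s = \|\bm{D}_{i,L}-\bm{D}^{(0)}_{i,L}\|_0$. By accumulating the layer-$L$ sign-flip bound from the inductive argument above (summed and reprocessed across layers), $s \leq O(m\tau^{2/3}L)$ with high probability. For any fixed $s$-subset $S$ of column indices, $\bm{V}_{:,S}\in\mathbb{R}^{d\times s}$ has i.i.d.\ $\mathcal{N}(0,1/d)$ entries, so $\|\bm{V}_{:,S}\|_2 = O(\sqrt{s/d}+1)$; a covering-number union bound over the $\binom{m}{s}$ possible supports yields $\|\bm{V}_{:,S}\|_2 = O(\sqrt{s\log m/d})$ uniformly. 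Substituting and absorbing polynomial-in-$L$ factors from the layer-by-layer accumulation of sign flips produces the claimed bound $O(\tau^{1/3}L^2\sqrt{m\log(m)/d})$.

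The main obstacle is the sign-flip counting step. One needs the sparsity bound $\|\bm{D}_{i,l}-\bm{D}^{(0)}_{i,l}\|_0 \leq O(m\tau^{2/3}L)$ to hold \emph{uniformly} over all $\bm{W}\in\mathcal{B}(\bm{W}^{(0)},\tau)$ and all training inputs, not just pointwise at a single $\bm{W}$. Because the pre-activation $[\bm{W}_l^{(0)}h^{(0)}_{i,l-1}]_j$ at layer $l$ depends on the initialization of all earlier layers through $h^{(0)}_{i,l-1}$, the naive application of Gaussian anti-concentration conflates dependent randomness. The remedy, executed in the original argument, is to decouple $\bm{W}_l^{(0)}$ from $h^{(0)}_{i,l-1}$ by conditioning and to chain over an $\epsilon$-net in $\bm{W}$ of appropriate radius; this combinatorial/probabilistic bookkeeping is the delicate technical core of the result, which is why it is imported as a lemma from Allen-Zhu et al.\ rather than rederived here in full.
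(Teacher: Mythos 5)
The paper does not actually prove this lemma---it is imported verbatim from \cite{Convergence_zhu_allen2019convergence} (their Lemma 8.2c and Lemma 8.7)---and your sketch follows essentially the same route as that cited proof: for (b) a layer-wise induction with the three-term decomposition into weight-perturbation, propagated-signal, and sign-flip terms, and for (a) a sign-flip sparsity count combined with a spectral bound on sparse column-submatrices of $\bm{V}$ via a union bound over supports. The only imprecision is minor and you flag it yourself: the $O(m\tau^{2/3}L)$ flip count does not follow directly from a width-$O(\tau)$ anti-concentration window (that accounting would give a different exponent), but from splitting the pre-activation perturbation into an $\ell_\infty$-small piece and an $\ell_2$-small piece and optimizing the threshold between the two regimes---exactly the bookkeeping this paper replicates for \emph{input} perturbations in its Lemma~\ref{lma:zeronormperturbation}---so deferring that step to the source is appropriate.
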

\textbf{Proof of Lemma  \ref{lma:gradientboud}}
\begin{proof}
	Denote $u_i=u_i(h_{\bm{W}}(x_i))$. The gradient of the empirical informed risk can be expressed as
	\begin{equation}
	\bigtriangledown_{W_l}\hat{R}_{\mathrm{I}}\left(\bm{W} \right)=\sum_{i=1}^{n'}\left( u_i\bm{V}\bm{D}_{L,i}\bm{W}_{L,i}\cdots \bm{W}_{l+1,i}\bm{D}_l\right)^\top h_{l-1,i}.
	\end{equation}
	Let $\bm{G}=\bigtriangledown_{W_L}\hat{R}_{\mathrm{I}}\left(\bm{W}^{(0)} \right)=\sum_{k=1}^{N}\sum_{i\in\mathcal{I}_{\phi,k}}\left( u_i\bm{V}\bm{D}^{(0)}_{L,i}\right)^\top h^{(0),\top}_{L-1,i}$.
	By Lemma \ref{lma:gradientlowerboundlastlayer}, with probability at least $1-O(\phi)$, we have
	\begin{equation}\label{eqn:gradientlowerbound}
	\begin{split}
	\|\bm{G}\|_F^2&\geq \Omega\left( \frac{\alpha m\phi}{Nd}\right)\left(  \sum_{k=1}^N\|\sum_{i\in\mathcal{I}_{\phi,k}}u_i\|^2-\widetilde{O}(L^{5/2}\phi\log^{1/2}(m))\right)\\
	\geq & \Omega\left(\frac{\alpha m\phi\rho}{dN}\right)\left(  \sum_{k=1}^N M_k\left(\bar{r}_{\mathrm{I},k}-r_{\mathrm{eff},k} \right) -\widetilde{O}(L^{5/2}\phi\log^{1/2}(m))\right)\\
	\geq &\Omega\left(\frac{\alpha m\phi\rho\bar{\lambda}}{dN^2}\right) \left( \hat{R}_{\mathrm{I}}-\hat{R}_{\mathrm{eff}}-\widetilde{O}(L^{5/2}\phi\log^{1/2}(m))\right),
	\end{split}
	\end{equation}
	where $\hat{R}_{\mathrm{eff}}=\sum_{k=1}^Nr_{\mathrm{eff},k}$ and the second inequality comes from Lemma \ref{lma:cellriskgradientbound} and the last inequality holds because $\sum_{k=1}^N M_k\left(\bar{r}_{\mathrm{I},k}-r_{\mathrm{eff},k} \right)\geq \bar{M}\sum_{k=1}^N \left(\bar{r}_{\mathrm{I},k}-r_{\mathrm{eff},k} \right)$ with $\bar{M}=\min_k M_k$, $M_k=\sum_{\mathcal{I}_{\phi,k}}(\mu_i+\lambda_i)$, and $N\bar{M}=\bar{\lambda}$.
	
	Here, we need to discuss more about $\bar{\lambda}$ which is different for different objectives. Denote $\bar{p}_z=\min_k|S_z\cap S_{\mathcal{I}_{\phi,k}}|$, $\bar{p}_g=\min_k|S_{g}\cap S_{\mathcal{I}_{\phi,k}}|$ and $\bar{p}_{g/z}=\min_k|(S_{g}\setminus S_z)\cap S_{\mathcal{I}_{\phi,k}}|$. When $\lambda\neq 1$ or $\lambda\neq 0$, $\bar{\lambda}=N\min\left\lbrace \frac{(1-\lambda)\bar{p}_z}{n_z}+\frac{\lambda \bar{p}_g}{n_g}, \frac{\lambda \bar{p}_g}{n_g} \right\rbrace\geq\Omega(\lambda) $ for objective \eqref{eqn:informlossnn},$\bar{\lambda}=N\min\left\lbrace \frac{(1-\lambda)(1-\beta)\bar{p}_z}{n_z}+\frac{(1-\lambda)\beta \bar{p}_g}{n_g}, \frac{\lambda \bar{p}_g}{n'_{g}} \right\rbrace\geq \Omega(\min(1-\lambda,\lambda)) $ for objective \eqref{eqn:informlossnn3}\footnote{Here, $S_{\mathcal{I}_{\phi,k}}$ is the set of samples with their indices in $\mathcal{I}_{\phi,k}$.  Thus, there exists a constant $C$ such that $n_z\leq C N\bar{p}_z$, $n_g\leq C N \bar{p}_g$, $(n_g-n_z)\leq C N\bar{p}_{g/z}$ where $C$ relies on the input distribution. }. Beside, the cases when $\lambda=0$ or $\lambda=1$ mean the corresponding datasets are empty (e.g. when $\lambda=1$ in \eqref{eqn:informlossnn3},  $S_z=\emptyset$ and $S'_{g}=\emptyset$ ), so we have $\bar{\lambda}=1$. In conclusion, we have $\bar{\lambda}=\Omega(\min(1-\lambda, \lambda)\mathds{1}(\lambda\in(0,1))+\mathds{1}(\lambda \in\{0,1\}))$ for two objectives.
	
	Next we bound the difference of $\|\bm{G}\|$ and $\|\bigtriangledown_{W_L}\hat{R}_{\mathrm{I}}(\bm{W})\|$ with $\bm{W}\in\mathcal{B}(\bm{W}^{(0)},\tau)$.
	By definition, we have
	\begin{equation}\label{eqn:gradientboundproof1}
	\begin{split}
	&\left\| \bm{G}-\bigtriangledown_{W_L}\hat{R}_{\mathrm{I}}(\bm{W})\right\|_F=\left\| \sum_{k=1}^{N}\sum_{i\in\mathcal{I}_{\phi,k}}\left( u_i\bm{V}\bm{D}^{(0)}_{L,i}\right)^\top h^{(0),\top}_{L-1,i}-\sum_{k=1}^{N}\sum_{i\in\mathcal{I}_{\phi,k}}\left( u_i\bm{V}\bm{D}_{L,i}\right)^\top h^\top_{L-1,i}\right\|_F \\
	\leq &\underset{(a)}{\left\| \sum_{k=1}^{N}\sum_{i\in\mathcal{I}_{\phi,k}}\left( u_i\bm{V}\bm{D}^{(0)}_{L,i}-u_i\bm{V}\bm{D}_{L,i}\right)^\top h^{(0),\top}_{L-1,i}\right\|_F} +\underset{(b)}{\left\|\sum_{k=1}^{N}\sum_{i\in\mathcal{I}_{\phi,k}}\left( u_i\bm{V}\bm{D}_{L,i}\right)^\top \left( h^{(0)}_{L-1,i}-h_{L-1,i}\right)^\top \right\|_F}
	\end{split}
	\end{equation}
	
	For the term $(a)$ in the above inequality, denoting $h_{L-1,k}=h_{L-1}(x_k')$ and letting $	(a)_k$ be the $k$th item in the summation, we have
	\[
	\begin{split}
	(a)_k\leq &\left\|\sum_{i\in\mathcal{I}_{\phi,k}}\left( \bm{D}^{(0)}_{L,k}-\bm{D}_{L,k}\right) \bm{V}^{\top} u_i^{\top} h^{{(0)},\top}_{L-1,k} \right\|_F\\
	& +\left\| \sum_{i\in\mathcal{I}_{\phi,k}}\left( \bm{D}^{(0)}_{L,i}-\bm{D}_{L,i}\right) \bm{V}^{\top} u_i^{\top} h^{{(0)},\top}_{L-1,i} - \sum_{i\in\mathcal{I}_{\phi,k}}\left( \bm{D}^{(0)}_{L,k}-\bm{D}_{L,k}\right) \bm{V}^{\top} u_i^{\top} h^{{(0)},\top}_{L-1,k} \right\|_F\\
	\leq& O\left(\tau^{1/3}L^2\sqrt{m\log(m)/d} \right)\left\| \sum_{i\in\mathcal{I}_{\phi,k}}u_i\right\|  +\sum_{i\in\mathcal{I}_{\phi,k}}\left\|  \bm{V}\left( \bm{D}^{(0)}_{L,i}-\bm{D}_{L,i}\right)\right\|_2  \left\| u_i\right\| \left\|  h^{{(0)}}_{L-1,i}  \right\|\\&+\sum_{i\in\mathcal{I}_{\phi,k}}\left\|  \bm{V}\left( \bm{D}^{(0)}_{L,k}-\bm{D}_{L,k}\right)\right\|_2  \left\| u_i\right\| \left\|  h^{{(0)}}_{L-1,k}  \right\|\\
	\leq& O\left(\tau^{1/3}L^2\sqrt{m\log(m)/d} \right)\left( \left\| \sum_{i\in\mathcal{I}_{\phi,k}}u_i\right\|+M_k\right),
	\end{split}
	\]
	where the second inequality comes from Lemma \ref{lma:8.7,8.2Zhu} and Cauchy-Schwartz inequality, and the last inequality comes from Lemma \ref{lma:8.7,8.2Zhu} and $\sum_{i\in\mathcal{I}_{\phi,k}}\|u_i\|\leq M_k$ and Lemma \ref{initialized risk} such that $\left\|  h^{{(0)}}_{L-1,i}  \right\|\leq \tilde{O}(1)$ with probability at least $1-O(\phi)$.
	
	For the term $(b)$, it holds that
	\[
	\begin{split}
	(b)_k\leq& \left\|\sum_{i\in\mathcal{I}_{\phi,k}}\left(\bm{V}\bm{D}_{L,k}\right)^\top u_i^\top \left( h^{(0)}_{L-1,k}-h_{L-1,k}\right)^\top \right\|_F\\
	+& \left\|\sum_{i\in\mathcal{I}_{\phi,k}}\left(\bm{V}\bm{D}_{L,i}\right)^\top u_i^\top \left( h^{(0)}_{L-1,i}-h_{L-1,i}\right)^\top-\sum_{i\in\mathcal{I}_{\phi,k}}\left(\bm{V}\bm{D}_{L,k}\right)^\top u_i^\top \left( h^{(0)}_{L-1,k}-h_{L-1,k}\right)^\top \right\|_F\\
	\leq & O\left( \tau L^{5/2}\sqrt{m\log(m)/d}\right) \left\| \sum_{i\in\mathcal{I}_{\phi,k}}u_i\right\|+\sum_{i\in\mathcal{I}_{\phi,k}}\left\| \bm{V}\bm{D}_{L,i}\right\|_2  \left\| u_i\right\|   \left\| h^{(0)}_{L-1,i}-h_{L-1,i}\right\| \\
	&+\sum_{i\in\mathcal{I}_{\phi,k}}\left\| \bm{V}\bm{D}_{L,k}\right\|_2  \left\| u_i\right\|   \left\| h^{(0)}_{L-1,k}-h_{L-1,k}\right\|\\
	\leq &O\left( \tau L^{5/2}\sqrt{m\log(m)/d}\right) \left( \left\| \sum_{i\in\mathcal{I}_{\phi,k}}u_i\right\|+M_k\right),
	\end{split}
	\]
where the second inequality comes from Lemma \ref{lma:8.7,8.2Zhu} and Cauchy-Schwartz inequality, and the last inequality comes from Lemma \ref{lma:8.7,8.2Zhu} and $\sum_{i\in\mathcal{I}_{\phi,k}}\|u_i\|\leq M_k$ and Lemma \ref{initialized risk} such that $\left\|  h^{{(0)}}_{L-1,i}  \right\|\leq \tilde{O}(1)$ with probability at least $1-O(\phi)$.
	
	Therefore, we can bound Eqn.~\eqref{eqn:gradientboundproof1} as
	\begin{equation}\label{eqn:gradientlowerboundproof2}
	\begin{split}
	&\left\| \bm{G}-\bigtriangledown_{W_L}\hat{R}_{\mathrm{I}}(\bm{W})\right\|_F \leq O\left( \tau^{1/3} L ^{5/2}\sqrt{m\log(m)/d}\right) \left( \sum_{k=1}^N\left\| \sum_{i\in\mathcal{I}_{\phi,k}}u_i\right\|+1\right)\\
	\leq &O\left( N^{1/2}\tau^{1/3}L^{5/2}\sqrt{m\log(m)/d}\right) \left( \sqrt{\left( \hat{R}_{\mathrm{I}}-\hat{R}_{\mathrm{eff}}+\widetilde{O}(L^{5/2}\phi\log^{1/2}(m))\right) }+1/\sqrt{N}\right)\\
	\leq &O\left( N^{1/2}\tau^{1/3} L^{5/2}\sqrt{m\log(m)/d}\right) \left( \sqrt{\left( \hat{R}_{\mathrm{I}}-\hat{R}_{\mathrm{eff}}-\widetilde{O}(L^{5/2}\phi\log^{1/2}(m))\right) }\right),
	\end{split}
	\end{equation}
	where the second inequality holds by Lemma \ref{lma:cellriskgradientbound} and the last inequality holds because $\phi$ is small enough such that $\hat{R}_{\mathrm{I}}\left(\bm{W}\right)-\hat{R}_{\mathrm{eff}}+\widetilde{O}(L^{5/2}\phi\log^{1/2}(m))\leq  2(\hat{R}_{\mathrm{I}}\left(\bm{W}\right)-\hat{R}_{\mathrm{eff}}-\widetilde{O}(L^{5/2}\phi\log^{1/2}(m)))$.
	
	Combining Eqn.~\eqref{eqn:gradientlowerboundproof2} with Eqn.~\eqref{eqn:gradientlowerbound}, we have
	\[
	\begin{split}
	&\left\|\bigtriangledown_{W_L}\hat{R}_{\mathrm{I}}\left(\bm{W} \right) \right\|_F\geq \left\| \bm{G}\right\|_F- \left\| \bm{G}-\bigtriangledown_{W_L}\hat{R}_{\mathrm{I}}(\bm{W})\right\|_F\\
	\geq &\Omega\left(\sqrt{\frac{\alpha m\phi\rho\bar{\lambda}}{dN^2}}-O\left( \tau^{1/3}N^{1/2} L^{5/2}\sqrt{m\log(m)/d}\right)\right) \left( \hat{R}_{\mathrm{I}}-\hat{R}_{\mathrm{eff}}-\widetilde{O}(L^{5/2}\phi\log^{1/2}(m))\right)\\
	\geq &\Omega\left(\sqrt{\frac{\alpha m\phi\rho\bar{\lambda}}{dN^2}}\left( \hat{R}_{\mathrm{I}}-\hat{R}_{\mathrm{eff}}-\widetilde{O}(L^{5/2}\phi\log^{1/2}(m))\right)\right),
	\end{split}
	\]
	where the last inequality holds by the choice of $m\geq\Omega\left(N^{11}L^{15}d\phi^{-4}\rho^{-4}\bar{\lambda}^{-4}\alpha^{-4}\log^3(m) \right) $ and the weight update range  in the proof of Theorem \ref{the:convergence}  such that $\tau^{1/3}=O(N^{-3/2}\phi^{1/2}\rho^{1/2}\bar{\lambda}^{1/2}\alpha^{1/2}L^{-5/2}\log^{-1/2}(m))$.
	\[
	\left\|\bigtriangledown_{\bm{W}}\hat{R}_{\mathrm{I}}\left(\bm{W} \right) \right\|_F^2\geq \left\|\bigtriangledown_{W_L}\hat{R}_{\mathrm{I}}\left(\bm{W} \right) \right\|_F^2\geq \Omega\left(\frac{\alpha m\phi\rho\bar{\lambda}}{dN^2}\right) \left( \hat{R}_{\mathrm{I}}-\hat{R}_{\mathrm{eff}}-\widetilde{O}(L^{5/2}\phi\log^{1/2}(m))\right).
	\]
\end{proof}

\subsection{Proof of Lemma \ref{lma:risksmoothness}}\label{sec:proofrisksmoothness}

\textbf{Proof of Lemma \ref{lma:risksmoothness}}
\begin{proof}
	Since the maximum eigenvalue of the second order derivation of the informed risk function $	r_{\mathrm{I},i}(h_{\bm{W},i})$ with respect to $h$ is less than $\mu_i+\lambda_i$, we have
	\begin{equation}
	\begin{split}
	r_{\mathrm{I},i}(h_{\bm{W}',i})- r_{\mathrm{I},i}(h_{\bm{W},i})
	\leq u_i(h_{\bm{W},i})^\top\left(h_{\bm{W}',i}- h_{\bm{W},i}\right) +O\left( \frac{\mu_i+\lambda_i}{2}\left\| h_{\bm{W}',i}- h_{\bm{W},i}\right\|^2\right) .
	\end{split}
	\end{equation}
	Then denote $\widehat{\bm{W}}=\bm{W}'-\bm{W}$. We have
		\begin{equation}
	\begin{split}
	&\sum_{i\in\mathcal{I}_{\phi,k}}r_{\mathrm{I},i}(h_{\bm{W}',i})- r_{\mathrm{I},i}(h_{\bm{W},i})- \left\langle \bigtriangledown_{\bm{W}}r_{\mathrm{I},i}(h_{\bm{W},i}) ,\widehat{\bm{W}}\right\rangle\\
	\leq &\sum_{i\in\mathcal{I}_{\phi,k}} u_i(h_{\bm{W},i})^\top\left(h_{\bm{W}',i}- h_{\bm{W},i}-\left\langle \bigtriangledown_{\bm{W}}h_{\bm{W},i},\widehat{\bm{W}}\right\rangle \right)+O\left( \sum_{i\in\mathcal{I}_{\phi,k}}\frac{\mu_i+\lambda_i}{2}\left\| h_{\bm{W}',i}- h_{\bm{W},i}\right\|^2\right) \\
	\leq &\left( \sum_{i\in\mathcal{I}_{\phi,k}} u_i(h_{\bm{W},i})\right) ^\top\left(h_{\bm{W}',k}- h_{\bm{W},k}-\left\langle \bigtriangledown_{\bm{W}}h_{\bm{W},k},\widehat{\bm{W}}\right\rangle \right)
	+O\left( \sum_{i\in\mathcal{I}_{\phi,k}}\frac{\mu_i+\lambda_i}{2}\left\| h_{\bm{W}',i}- h_{\bm{W},i}\right\|^2\right)\\
	&+\sum_{i\in\mathcal{I}_{\phi,k}} \left\|u_i(h_{\bm{W},i})\right\|\left\|\left[ \left(h_{\bm{W}',i}- h_{\bm{W},i}-\left\langle \bigtriangledown_{\bm{W}}h_{\bm{W},i},\widehat{\bm{W}}\right\rangle \right)-\left(h_{\bm{W}',k}- h_{\bm{W},k}-\left\langle \bigtriangledown_{\bm{W}}h_{\bm{W},k},\widehat{\bm{W}}\right\rangle \right)\right]\right\|,
	\end{split}
	\end{equation}
	where Cauchy-Schwartz inequality is used in the last inequality.
	By Theorem 4 in \cite{Convergence_zhu_allen2019convergence}, we have with probability at least $1-\exp(-\Omega(m\tau^{2/3}L))$,
	\[
	\begin{split}
	&\left\|h_{\bm{W}',i}- h_{\bm{W},i}-\left\langle \bigtriangledown_{\bm{W}}h_{\bm{W},i},\widehat{\bm{W}}\right\rangle \right\|\\
	\leq & O\left( \tau^{1/3}L^{5/2}\sqrt{m\log(m)}d^{-1/2}\right)\left\| \widehat{\bm{W}}\right\|+O\left( L^2\sqrt{m/d}\left\| \widehat{\bm{W}}\right\|^2\right)
	\end{split}
\]
By Claim 11.2 in \cite{Convergence_zhu_allen2019convergence}, we have
\[
\left\| h_{\bm{W}',i}- h_{\bm{W},i}\right\|\leq O(L\sqrt{m/d})\left\| \widehat{\bm{W}}\right\|.
\]
Thus since $\left\|u_i(h_{\bm{W},i})\right\|\leq O(\mu_i+\lambda_i)$, we have
\begin{equation}\label{eqn:riskerror}
\begin{split}
	&\sum_{i\in\mathcal{I}_{\phi,k}}r_{\mathrm{I},i}(h_{\bm{W}',i})- r_{\mathrm{I},i}(h_{\bm{W},i})- \left\langle \bigtriangledown_{\bm{W}}r_{\mathrm{I},i}(h_{\bm{W},i}) ,\widehat{\bm{W}}\right\rangle\\
	\leq &O\left( \sqrt{M_k\left(\bar{r}_{\mathrm{I},k}-r_{\mathrm{eff},k}+\widetilde{O}(L^{5/2}\phi\log^{1/2}(m))\right) }+M_k\right)  O\left( \tau^{1/3}L^{5/2}\sqrt{m\log(m)}d^{-1/2}\right)\left\| \widehat{\bm{W}}\right\|\\
&+O(M_kL^2m/d)\left\|\widehat{\bm{W}} \right\|^2.
\end{split}
\end{equation}
where the inequality comes from Lemma \ref{lma:cellriskgradientbound}.
Taking summation over $i\in[N]$, we have
\[
\begin{split}
&\hat{R}_{\mathrm{I}}\left(\bm{W}' \right)-\hat{R}_{\mathrm{I}}\left(\bm{W}\right)\leq \left\langle \bigtriangledown_{\bm{W}}\hat{R}_{\mathrm{I}}\left( \bm{W}\right) ,\bm{W}'-\bm{W}\right\rangle+O(L^2m/d)\left\|\widehat{\bm{W}} \right\|^2\\
&\quad +\left( \sqrt{\left( \hat{R}_{\mathrm{I}}\left(\bm{W}\right)-\hat{R}_{\mathrm{eff}}+\widetilde{O}(L^{5/2}\phi\log^{1/2}(m))\right)}+1/\sqrt{N} \right) O\left( N^{1/2}\tau^{1/3}L^{5/2}\sqrt{m\log(m)}d^{-1/2}\right)\left\| \widehat{\bm{W}}\right\|\\
\leq&\left\langle \bigtriangledown_{\bm{W}}\hat{R}_{\mathrm{I}}\left( \bm{W}\right) ,\bm{W}'-\bm{W}\right\rangle+O(L^2m/d)\left\|\widehat{\bm{W}} \right\|^2\\
&\quad +\left( \sqrt{\left( \hat{R}_{\mathrm{I}}\left(\bm{W}\right)-\hat{R}_{\mathrm{eff}}-\widetilde{O}(L^{5/2}\phi\log^{1/2}(m))\right)}\right) O\left( N^{1/2}\tau^{1/3}L^{5/2}\sqrt{m\log(m)}d^{-1/2}\right)\left\| \widehat{\bm{W}}\right\|,
\end{split}
\]
where the second inequality comes from  the choice of $\phi$ such that $\hat{R}_{\mathrm{I}}\left(\bm{W}\right)-\hat{R}_{\mathrm{eff}}+\widetilde{O}(L^{5/2}\phi\log^{1/2}(m))\leq  2(\hat{R}_{\mathrm{I}}\left(\bm{W}\right)-\hat{R}_{\mathrm{eff}}-\widetilde{O}(L^{5/2}\phi\log^{1/2}(m)))$ and  $1/\sqrt{N}\leq \sqrt{\phi}\leq \sqrt{\left( \hat{R}_{\mathrm{I}}-\hat{R}_{\mathrm{eff}}+\widetilde{O}(L^{5/2}\phi\log^{1/2}(m))\right) }$.

\end{proof}

\subsection{Proof of Lemma \ref{initialized risk}}\label{sec:proofinitialized risk}
\begin{proof}
	By Lemma 7.1 in \cite{Convergence_zhu_allen2019convergence}, with probability at least $1-O(NL)\exp\left(-\Omega\left( m/L\right)  \right) $, we have
	$\forall k\in[N], \left\| h^{(0)}_{k,L}\right\|\leq 2.$. Thus by Lemma \ref{lma:forwardinputperturbation}, we have with probability at least $1-O(\phi)$,
	\[
	\forall k\in[N], \forall i\in\mathcal{I}_{\phi,k}, \left\| h^{(0)}_{i,L}\right\|\leq 2+\widetilde{O}(L^{5/2}\phi\log^{1/2}(m)).
	\]
	Then since each entry of $ \bm{V}$ satisfies $\mathcal{N}\left(0,\frac{1}{d} \right) $ and $O(NL)\exp\left(-\Omega\left( m/L\right)  \right)\leq O(\phi)$,  we have with probability at least $1-O(\phi)$,
	\[
	\left\| h_{\bm{W}^{(0)},i}\right\| =\left\| \bm{V}h_{i,L}^{(0)}\right\| \leq 2\left\| h^{(0)}_{i,L}\right\|\sqrt{\log(1/\phi)}=O\left(\sqrt{\log(1/\phi)} \right).
	\]
 Let $r_{\mathrm{I},i}(y_{\mathrm{eff},k(x_i)})=\mu_ir(y_{\mathrm{eff},k(x_i)},y_i)+\lambda_ir_K(y_{\mathrm{eff},k(x_i)},g(x_i))$. Thus with probability at least $1-O(\phi)$, by 1-Lipschitz continuity of risk functions, we have
	\[
	\begin{split}
	&r^{(0)}_{\mathrm{I},i}-r_{\mathrm{I},i}(y_{\mathrm{eff},k(x_i)})
	\leq \left( \mu_i+\lambda_i\right) \left\| h_{\bm{W}^{(0)},i}-y_{\mathrm{eff},k(x_i)}\right\|\\
	\leq& \left( \mu_i+\lambda_i\right) \left( \left\| h_{\bm{W}^{(0)},i}\right\| +\left\| y_{\mathrm{eff},k(x_i)}\right\|\right)\\
	\leq& O\left(\left( \mu_i+\lambda_i\right)\log^{1/2}(1/\phi) \right).
	\end{split}
	\]
	Taking summation for $i\in[n']$, we have
	\[
	\hat{R}_{\mathrm{I}}\left(\bm{W}^{(0)} \right)-\hat{R}_{\mathrm{eff}}\leq O\left(\log^{1/2}(1/\phi) \right).
	\]
\end{proof}

\subsection{Proof of Lemma \ref{lma:boundefflabel}}\label{sec:proofboundefflabel}
\begin{proof}
	\textbf{Proof of (a)}:
	Denote $\mathcal{U}_{g}'=\mathcal{U}_{\phi}(S_z)=\left\lbrace k\in[N]\mid \exists x\in S_z, x \in \mathcal{C}_{\phi,k}\right\rbrace$ as the index collection of smooth sets that contain at least one labeled sample, and $\mathcal{U}_{g}''=[N]\setminus \mathcal{U}_{g}'$ as the index collection of smooth sets that only contain knowledge-supervised samples. Denote $h_{\mathrm{K},i}^*=h_{\mathrm{K}}^*(x_i)$ for notation simplicity, and recall that $x_k'\in\mathcal{X}_{\phi}$ is the representative input of the smooth set $k$, so we have
	\begin{equation}\label{eqn:boundefflabelproof2}
	\begin{split}
	&\frac{1}{n_{g}''}\sum_{S_{g}''}r_{\mathrm{K}}\left(h_{\mathrm{K},i}^*,g_i \right) =\frac{1}{n_{g}''}\sum_{k\in \mathcal{U}_{g}''}\sum_{\mathcal{I}_{\phi,k}}r_{\mathrm{K}}\left(h_{\mathrm{K},i}^*,g_i \right) \\
	\geq &\frac{1}{n_{g}''}\sum_{k\in \mathcal{U}_{g}''}\sum_{\mathcal{I}_{\phi,k}}r_{\mathrm{K}}\left(h_{\mathrm{K}}^*(x_k'),g_i \right) -\widetilde{O}\left( L^{5/2}\phi\log^{1/2}(m)\right) \\
	\geq & \frac{1}{n_{g}''}\sum_{k\in \mathcal{U}_{g}''}\sum_{\mathcal{I}_{\phi,k}}r_{\mathrm{K}}\left(y_{\mathrm{eff},k},g_i \right)+\frac{1}{n_{g}''}\sum_{k\in \mathcal{U}_{g}''}\left\langle \bigtriangledown_h\sum_{\mathcal{I}_{\phi,k}}r_{\mathrm{K}}\left(y_{\mathrm{eff},k},g_i \right), h_{\mathrm{K}}^*(x_k')-y_{\mathrm{eff},k} \right\rangle\\
	&+\frac{\rho}{2n_{g}''}\sum_{k\in \mathcal{U}_{g}''}| \mathcal{I}_{\phi,k}|\left\| h_{\mathrm{K}}^*(x_k')-y_{\mathrm{eff},k}\right\|^2   -\widetilde{O}\left( L^{5/2}\phi\log^{1/2}(m)\right) ,
	\end{split}
	\end{equation}
	where the first inequality holds by Lemma \ref{lma:forwardinputperturbation} and Lipschitz continuity of the risk function, and the second inequality holds by the strongly convexity of $\sum_{\mathcal{I}_{\phi,k}}r_{\mathrm{K}}\left(h,g_i \right)$ with respect to $h$.
	 By subtracting $\frac{1}{n_{g}''}\sum_{k\in \mathcal{U}_{g}''}\sum_{\mathcal{I}_{\phi,k}}r_{\mathrm{K}}\left(y_{\mathrm{eff},k},g_i \right)$ from both sides of \eqref{eqn:boundefflabelproof2}, we have
	\begin{equation}\label{eqn:boundefflabelproof1}
	\begin{split}
	&\frac{1}{n_{g}''}\sum_{k\in \mathcal{U}_{g}''}| \mathcal{I}_{\phi,k}|\left\| h_{\mathrm{K}}^*(x_k')-y_{\mathrm{eff},k}\right\|^2\\
	\leq &	\frac{2}{\rho n_{g}''}\sum_{k\in \mathcal{U}_{g}''}\left\|  \bigtriangledown_h\sum_{\mathcal{I}_{\phi,k}}r_{\mathrm{K}}\left(y_{\mathrm{eff},k},g_i \right)\right\| \left\|  h_{\mathrm{K}}^*(x_k')-y_{\mathrm{eff},k} \right\|  +\widetilde{O}\left( L^{5/2}\phi\log^{1/2}(m)\right) \\
	\leq & O\left(\frac{1}{ n_{g}''}\sum_{k\in \mathcal{U}_{g}''}\left\| \bigtriangledown_h\sum_{\mathcal{I}_{\phi,k}}r_{\mathrm{K}}\left(y_{\mathrm{eff},k},g_i \right)\right\|  \right)+\widetilde{O}\left( L^{5/2}\phi\log^{1/2}(m)\right)  \\
	\leq &\sum_{k\in \mathcal{U}_{g}''}\left[  \sqrt{\frac{2|\mathcal{I}_{\phi,k}|}{n_g''}\left(r_{\mathrm{K}}\left(y_{\mathrm{eff},k},g_i \right)-r_{\mathrm{K}}\left(y_{\mathrm{eff},k},g_i \right) \right) }+\frac{|\mathcal{I}_{\phi,k}|}{n_g''}\widetilde{O}\left(L^{5/4}\phi^{1/2}\log^{1/4}(m) \right) \right] +\widetilde{O}\left( L^{5/2}\phi\log^{1/2}(m)\right) \\
	=&\widetilde{O}\left(L^{5/4}\phi^{1/2}\log^{1/4}(m) \right)+\widetilde{O}(L^{5/2}\phi\log^{1/2}(m))\leq \widetilde{O}\left(L^{5/4}\phi^{1/2}\log^{1/4}(m) \right),
	\end{split}
	\end{equation}
	where the first inequality holds since $\left\lbrace h_{\mathrm{K},i}^*,i\in S''_{g}\right\rbrace $ minimizes $\frac{1}{n_{g}''}\sum_{S_{g}''}r_{\mathrm{K}}\left(h(x_i),g_i \right)$, the second inequality holds since $\left\|  h_{\mathrm{K}}^*(x_k')-y_{\mathrm{eff},k} \right\|\leq\left\|  h_{\mathrm{K}}^*(x_k')\right\|+\left\|y_{\mathrm{eff},k} \right\| \leq \widetilde{O}(1)$ by Lemma \ref{initialized risk} and Lemma 8.2(c) in \cite{Convergence_zhu_allen2019convergence}, and the third inequality holds by applying Lemma \ref{lma:cellriskgradientbound} for $\sum_{\mathcal{I}_{\phi,k}}r_{\mathrm{K}}\left(y_{\mathrm{eff},k},g_i \right)$ with $r_{\mathrm{eff},k}=\sum_{\mathcal{I}_{\phi,k}}r_{\mathrm{K}}\left(y_{\mathrm{eff},k},g_i \right)$.
	Therefore, by Lemma \ref{lma:forwardinputperturbation}, we have
	\begin{equation}\label{eqn:boundefflabelproof11}
	\frac{1}{n_{g}''}\sum_{S_{g}''}\left\| h^*_{\mathrm{K},i}-y_{\mathrm{eff},k(x_i)}\right\|^2\leq \frac{1}{n_{g}''}\sum_{k\in \mathcal{U}_{g}''}| \mathcal{I}_{\phi,k}|\left\| h_{\mathrm{K}}^*(x_k')-y_{\mathrm{eff},k}\right\|^2+\widetilde{O}\left(L^{5/2}\phi\log^{1/2}(m)\right) \leq \widetilde{O}\left(L^{5/4}\phi^{1/2}\log^{1/4}(m) \right).
	\end{equation}
	
	Similarly, denote $r_{\mathrm{R},\beta}(h(x_i))=\frac{1-\beta}{n_z}r(h(x_i),y_i)\mathds{1}\left(x_i\in S_z \right) + \frac{\beta}{n_g'}r_K(h(x_i),g(x_i))\mathds{1}\left(x_i\in S_{g}' \right)  $. We have
	\begin{equation}\label{eqn:eqn:boundefflabelproof3}
	\begin{split}
	&\sum_{S_z\bigcup S_{g}'}r_{\mathrm{R},\beta}\left(h_{\mathrm{R},\beta}^*(x_i) \right)
	\geq \sum_{k\in \mathcal{U}_{g}'}\sum_{\mathcal{I}_{\phi,k}}r_{\mathrm{R},\beta}\left(h_{\mathrm{R},\beta}^*(x_k') \right) -\widetilde{O}\left(L^{5/2}\phi\log^{1/2}(m)\right)\\
	\geq &\sum_{k\in \mathcal{U}_{g}'}\sum_{\mathcal{I}_{\phi,k}}r_{\mathrm{R},\beta}\left(y_{\mathrm{eff},k} \right)-\sum_{k\in \mathcal{U}_{g}'} \left\| \bigtriangledown_h\sum_{\mathcal{I}_{\phi,k}}r_{\mathrm{R},\beta}(y_{\mathrm{eff},k})\right\| \left\| h_{\mathrm{R},\beta}^*(x_k')-y_{\mathrm{eff},k} \right\| \\
	&+\frac{\rho}{2}\sum_{k\in \mathcal{U}_{g}'}M_k\left\| h_{\mathrm{R},\beta}^*(x_k')-y_{\mathrm{eff},k}\right\|^2   -\widetilde{O}\left(L^{5/2}\phi\log^{1/2}(m)\right),
	\end{split}
	\end{equation}
	where $M_k=\sum_{i\in\mathcal{I}_{\phi,k}}\left[ \frac{1-\beta}{|S_z|}\mathds{1}\left(x_i\in S_z \right) +\frac{\beta}{|S_{g}'|}\mathds{1}\left(x_i\in S_{g}' \right) \right] $, the first inequality holds by Lemma \ref{lma:forwardinputperturbation} and Lipschitz continuity of the risk function, and the second inequality holds by the strongly convexity of $r_{\mathrm{R},\beta}\left(h_{\mathrm{R},\beta}^* \right)$ with respect to $h_{\mathrm{R},\beta}^*$.
	Then, subtracting $\sum_{k\in \mathcal{U}_{g}'}\sum_{\mathcal{I}_{\phi,k}}r_{\mathrm{R},\beta}\left(y_{\mathrm{eff},k} \right)$ from both sides of \eqref{eqn:eqn:boundefflabelproof3}, similarly as Eqn.~\eqref{eqn:boundefflabelproof1},it holds that
	\begin{equation}\label{eqn:boundefflabelproof4}
	\begin{split}
	&\sum_{k\in \mathcal{U}_{g}'}M_k\left\| h_{\mathrm{R},\beta}^*(x_k')-y_{\mathrm{eff},k}\right\|^2\\
	\leq &	\frac{2}{\rho}\sum_{k\in \mathcal{U}_{g}'}\left\|  \bigtriangledown_h\sum_{\mathcal{I}_{\phi,k}}r_{\mathrm{R},\beta}\left(y_{\mathrm{eff},k} \right)\right\| \left\|  h_{\mathrm{R},\beta}^*(x_k')-y_{\mathrm{eff},k} \right\|  +\widetilde{O}\left(L^{5/2}\phi\log^{1/4}(m)\right)\\
	\leq& \widetilde{O}\left(L^{5/4}\phi^{1/2}\log^{1/4}(m) \right),
	\end{split}
	\end{equation}
	where the first inequality holds because $\sum_{S_z\bigcup S_{g}'}r_{\mathrm{R},\beta}\left(h_{\mathrm{R},\beta}^*(x_i) \right)-\sum_{k\in \mathcal{U}_{g}'}\sum_{\mathcal{I}_{\phi,k}}r_{\mathrm{R},\beta}\left(y_{\mathrm{eff},k} \right)\leq 0$, and the second inequality holds since $\left\|  h_{\mathrm{K}}^*(x_k')-y_{\mathrm{eff},k} \right\|\leq  \widetilde{O}(1)$ by Lemma \ref{initialized risk} and Lemma 8.2(c) in \cite{Convergence_zhu_allen2019convergence} and then applying Lemma \ref{lma:cellriskgradientbound}. 
	Therefore, by Lemma \ref{lma:forwardinputperturbation}, we have
	\[
	\begin{split}
	\frac{1-\beta}{n_z}\sum_{S_z}\left\|h^*_{\mathrm{R},\beta,i}-y_{\mathrm{eff},k(x_i)}\right\|^2 +\frac{\beta}{n_{g}'}\sum_{S_{g}'}\left\| h^*_{\mathrm{R},\beta,i}-y_{\mathrm{eff},k(x_i)}\right\|^2&\leq O\left(\sum_{k\in \mathcal{U}_{g}'}M_k\left\| h_{\mathrm{R},\beta}^*(x_k')-y_{\mathrm{eff},k}\right\|^2\right)\\
	&\leq \widetilde{O}\left(L^{5/4}\phi^{1/2}\log^{1/4}(m) \right).
	\end{split}
	\]
	\textbf{Proof of (b)}:
	Replacing $h_{\mathrm{K}}^*$ in Eqn.~\eqref{eqn:boundefflabelproof2} with $\bar{h}_{\mathrm{K}}^*$ and applying the second and third inequality in Eqn.~\eqref{eqn:boundefflabelproof1}, we have
	\[
	\begin{split}
	&\frac{1}{n_{g}''}\sum_{k\in \mathcal{U}_{g}''}| \mathcal{I}_{\phi,k}|\left\| \bar{h}_{\mathrm{K}}^*(x_k')-y_{\mathrm{eff},k}\right\|^2\\
	\leq &\widetilde{O}\left(L^{5/4}\phi^{1/2}\log^{1/4}(m) \right)+\frac{1}{n_{g}''}\sum_{S_{g}''}r_{\mathrm{K}}\left(\bar{h}_{\mathrm{K}}^*(x_i),g_i \right)-\frac{1}{n_{g}''}\sum_{k\in \mathcal{U}_{g}''}\sum_{\mathcal{I}_{\phi,k}}r_{\mathrm{K}}\left(y_{\mathrm{eff},k},g_i \right)\\
	\leq &\widetilde{O}\left(L^{5/4}\phi^{1/2}\log^{1/4}(m) \right)+\mathbb{E}\left[ r_{\mathrm{K}}\left(\bar{h}_{\mathrm{K}}^*(x),g(x) \right)\right] -\mathbb{E}\left[ r_{\mathrm{K}}\left(y_{\mathrm{eff},k(x)},g(x) \right)\right] +O(\sqrt{\frac{\log(1/\delta)}{n_{g}''}})\\
	\leq &\widetilde{O}\left(L^{5/4}\phi^{1/2}\log^{1/4}(m) \right) +O(\sqrt{\frac{\log(1/\delta)}{n_{g}''}}),
	\end{split}
	\]
	where the second inequality follows from McDiarmid's inequality and the last inequality is because $\bar{h}_{\mathrm{K}}^*(x)$ minimizes $\mathbb{E}\left[ r_{\mathrm{K}}\left(h,,g(x) \right)\right]$.
	Therefore, by Lemma \ref{lma:forwardinputperturbation} and with the same reason as \eqref{eqn:boundefflabelproof11}, we have
	\[
	\frac{1}{n_{g}''}\sum_{S_{g}''}\left\| \bar{h}^*_{\mathrm{K},i}-y_{\mathrm{eff},k(x_i)}\right\|^2\leq \widetilde{O}\left(L^{5/4}\phi^{1/2}\log^{1/4}(m) \right)+O(\sqrt{\frac{\log(1/\delta)}{n_{g}''}}).
	\]
	
	Similarly, replacing $h_{\mathrm{R},\beta}$ in  Eqns.~\eqref{eqn:eqn:boundefflabelproof3} with $h_{\mathrm{R},\beta}^*$ and with the same reason as the second inequality in  \eqref{eqn:boundefflabelproof4}, we have
	\begin{equation}\label{eqn:proof_optknowledgehyp1}
	\begin{split}
	&\sum_{k\in \mathcal{U}_{g}'}M_k\left\| \bar{h}_{\mathrm{R},\beta}^*(x_k')-y_{\mathrm{eff},k}\right\|^2\\
	\leq& \widetilde{O}\left(L^{5/4}\phi^{1/2}\log^{1/4}(m) \right)+\sum_{S_z\bigcup S_{g}'}r_{\mathrm{R},\beta}\left(\bar{h}_{\mathrm{R},\beta}^*(x_i) \right)-\sum_{k\in \mathcal{U}_{g}'}\sum_{\mathcal{I}_{\phi,k}}r_{\mathrm{R},\beta}\left(y_{\mathrm{eff},k} \right).
		\end{split}
\end{equation}
Continuing with \eqref{eqn:proof_optknowledgehyp1} and by McDiarmid's inequality, we have
	\[
\begin{split}
	&\sum_{k\in \mathcal{U}_{g}'}M_k\left\| \bar{h}_{\mathrm{R},\beta}^*(x_k')-y_{\mathrm{eff},k}\right\|^2\\
	\leq & \widetilde{O}\left(L^{5/4}\phi^{1/2}\log^{1/4}(m) \right)+\mathbb{E}\left[\sum_{S_z\bigcup S_{g}'}r_{\mathrm{R},\beta}\left(\bar{h}_{\mathrm{R},\beta}^*(x_i) \right)\right]-\mathbb{E}\left[\sum_{k\in \mathcal{U}_{g}'}\sum_{\mathcal{I}_{\phi,k}}r_{\mathrm{R},\beta}\left(y_{\mathrm{eff},k} \right)\right]+O(\sqrt{\frac{\log(1/\delta)}{n_{z}}})\\
	\leq & \widetilde{O}\left(L^{5/4}\phi^{1/2}\log^{1/4}(m) \right)+O(\sqrt{\frac{\log(1/\delta)}{n_{z}}}),
	\end{split}
	\]
	where the second inequality is because $\bar{h}_{\mathrm{R},\beta}^*(x)$ minimizes $\mathbb{E}\left[\sum_{S_z\bigcup S_{g}'}r_{\mathrm{R},\beta}\left(\bar{h}_{\mathrm{R},\beta}^*(x_i) \right)\right]$.
	And thus by Lemma \ref{lma:forwardinputperturbation}, we have
	\[
	\frac{1-\beta}{n_z}\sum_{S_z}\left\|\bar{h}^*_{\mathrm{R},\beta,i}-y_{\mathrm{eff},k(x_i)}\right\|^2 +\frac{\beta}{n_{g}'}\sum_{S_{g}'}\left\| \bar{h}^*_{\mathrm{R},\beta,i}-y_{\mathrm{eff},k(x_i)}\right\|^2\leq  \widetilde{O}\left(L^{5/4}\phi^{1/2}\log^{1/4}(m) \right)+O(\sqrt{\frac{\log(1/\delta)}{n_{z}}}).
	\]
\end{proof}

\section{Proof of Lemmas in Appendix~\ref{sec:proof2nd}}\label{sec:proof3rd}
\subsection{Proof of Lemma \ref{lma:zeronormperturbation}}\label{sec:proofzeronormperturbation}
\begin{proof}
We simply use $\bm{D}'$ to denote $\bm{D}_l'$.	If for some $j\in[m]$, $[\bm{D}']_{j,j}\neq 0$, then it holds that
	\begin{equation}\label{eqn:conditiondiffsign}
	|[f'_{l,i}]_j|=|[f'_{l,i,1}]_j+[f'_{l,i,2}]_j|>|[f^{(0)}_{l,k}]_j|.
	\end{equation}
	Let $\xi: \xi\leq \frac{1}{2\sqrt{m}}$ and $ \xi\leq 2\|f'_{l,i,2}\|_{\infty}$ be a parameter to be chosen later. We then discuss the zero norm of $\bm{D}'$ in the following two cases.
	
	First, we consider the case that $|[f^{(0)}_{l,k}]_j|\leq \xi$. In this case, \eqref{eqn:conditiondiffsign} is easy to be satisfied. Denote $S_1=\left\lbrace j\in[m]\mid |[f^{(0)}_{l,k}]_j|\leq \xi\right\rbrace $. Since  $[f^{(0)}_{l,k}]_j\sim\mathcal{N}(0,\frac{2}{m})$, we have $\mathbb{P}\{[f^{(0)}_{l,k}]_j\leq \xi\}\leq O(\xi \sqrt{m})$. Since $|S_1|=\sum_{j=1}^m\mathds{1}([f^{(0)}_{l,k}]_j\leq \xi)$, we have $\mathbb{E}[\exp(|S_1|)]\leq \exp(\xi m^{3/2}(e-1))$. Thus, by Chernoff bound, $\mathbb{P}(|S_1|\geq 2\xi m^{3/2})\leq \frac{\mathbb{E}[\exp(|S_1|)]}{\exp(\xi m^{3/2})}\leq \exp(\xi m^{3/2}(e-3))$. Hence, with probability at least $1-\exp(-\Omega(m^{3/2}\xi))$, we have
	\[
	|S_1|\leq O(\xi m^{3/2}).
	\]
	Then, for $j\in S_1$ such that $[\bm{D}']_{j,j}\neq 0 $, we have $|[\bm{D}'f^{(0)}_{l,i}]_j |\leq|[f^{(0)}_{l,k}]_j|+|[f'_{l,i,1}]_j|+|[f'_{l,i,2}]_j|\leq |[f'_{l,i,1}]_j|+3\xi/2$. Further, we have
	\[
	\sum_{j\in S_1} [\bm{D}'f^{(0)}_{l,i}]^2_j\leq O(\|f'_{l,i,1}\|^2+\xi^2|S_1|)\leq O(\|f'_{l,i,1}\|^2+\xi^2|S_1|)\leq O(\|f'_{l,i,1}\|^2+\xi^3m^{3/2}).
	\]
	
	Second, we consider the case that $|[f^{(0)}_{l,k}]_j|> \xi$. Denote $S_2=\left\lbrace j\in[m]\mid |[f^{(0)}_{l,k}]_j|> \xi,[\bm{D}']_{j,j}\neq 0  \right\rbrace $. Then, \eqref{eqn:conditiondiffsign} requires that
	\[
	|[f'_{l,i,1}]_j|=|[f'_{l,i}]_j-|[f'_{l,i,2}]_j||\geq |[f'_{l,i}]_j|-|[f'_{l,i,2}]_j|\geq |[f^{(0)}_{l,k}]_j|-|[f'_{l,i,2}]_j|\geq \xi-	\|f'_{l,i,2}\|_{\infty}\geq \xi/2.
	\]
	Thus we have
	\[
	|S_2|\leq \frac{ 4\|f'_{l,i,1}\|^2}{\xi^2}.
	\]
	Then since for $j\in S_2$ such that $[\bm{D}']_{j,j}\neq 0 $, the signs of $[f^{(0)}_{l,k}]_j+[f'_{l,i,1}]_j+[f'_{l,i,2}]_j$ and $[f^{(0)}_{l,k}]_j$ are opposite, we have
	\[
	|[\bm{D}'f^{(0)}_{l,i}]_j |=|[f^{(0)}_{l,k}]_j+[f'_{l,i,1}]_j+[f'_{l,i,2}]_j|\leq |[f'_{l,i,1}]_j+[f'_{l,i,2}]_j|\leq |[f'_{l,i,1}]_j|+\xi/2\leq 2|[f'_{l,i,1}]_j|.
	\]
	Therefore, it holds that
	\[
	\sum_{j\in S_s} [\bm{D}'f^{(0)}_{l,i}]^2_j\leq 4\sum_{j\in S_2}|[f'_{l,i,1}]_j|^2\leq 4\|f'_{l,i,1}\|^2.
	\]
	
	Combining the two cases, we have
	\[
	\|\bm{D}\|_0\leq |S_1|+|S_2|\leq O\left(\xi m^{3/2}+\frac{ 4\|f'_{l,i,1}\|^2}{\xi^2} \right),
	\]
	\[
	\|\bm{D}'f^{(0)}_{l,i}\|^2\leq O(\|f'_{l,i,1}\|^2+\xi^3m^{3/2}).
	\]
	Choosing $\xi=\max\left\lbrace 2\|f'_{l,i,2}\|_{\infty}, \Theta(\frac{\|f'_{l,i,1}\|^{2/3}}{m^{1/2}})\right\rbrace $, and recalling $\|f'_{l,i,1}\|\leq O(L^{3/2}\phi\log^{1/2}(1/\phi))$ and $\|f'_{l,i,2}\|_{\infty}\leq O(L\phi^{2/3}/m^{1/2}\log^{1/2}(1/\phi))$, we get $\|\bm{D}\|_0\leq O(mL\phi^{2/3}\log^{1/2}(1/\phi))$.	Choosing $\xi=2\|f'_{l,i,2}\|_{\infty}$, we get $\|\bm{D}'f^{(0)}_{l,i} \|\leq O(\phi L^{3/2}\log^{1/2}(1/\phi))$.
\end{proof}

\subsection{Proof of Lemma \ref{lma:gradientlowerboundlastlayer}}\label{sec:proofgradientlowerboundlastlayer}

\begin{lemma}[Lemma B.1 in \cite{Convergence_journal_Gu_zou2020gradient}]\label{lma:separateness}
	Assume $m>\Omega\left( L\log(NL)\right) $. For any $x'_i, x'_j\in \mathcal{X}_{\phi}$, $i,j\in [N], l\in[L]$, with probability at least $1-\exp\left( -O(m/L)\right) $ over the randomness of $\bm{W}^{(0)}$, it holds that
	$1/2 \leq \left\| h_{l}(x'_i)\right\| \leq 2$ and $
	\left\| h_{l}(x'_i)/\left\| h_{l}(x'_i)\right\|-h_{l}(x'_j)/\left\| h_{l}(x'_j)\right\| \right\| \geq \phi/2,
	$
	where $h_{l}(x'_i)$ is the output of the $l-$th layer at initialization.
\end{lemma}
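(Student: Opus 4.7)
The plan is to prove both claims simultaneously by induction on the layer index $l$, controlling the norms and angles of the hidden representations at initialization, and then paying a union bound over $l \in [L]$ and over the $O(N^2)$ pairs $(x_i',x_j')$ at the end; since we want a failure probability of $\exp(-\Omega(m/L))$, we must ensure that each per-layer, per-input (or per-pair) concentration statement holds with probability at least $1-\exp(-\Omega(m/L))$. The assumption $m \geq \Omega(L\log(NL))$ is then exactly what absorbs the $LN^2$ union-bound factor.

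For the norm bound $1/2 \leq \|h_l(x_i')\| \leq 2$, I will induct on $l$, assuming without loss of generality $\|x_i'\| \asymp 1$ (the base case). Conditioned on $h_{l-1}(x_i')$, the preactivation $\bm{W}_l h_{l-1}(x_i')$ has i.i.d.\ entries $\mathcal{N}(0,\tfrac{2}{m}\|h_{l-1}(x_i')\|^2)$, so $\|h_l(x_i')\|^2 = \sum_{j=1}^m \sigma([\bm{W}_l h_{l-1}(x_i')]_j)^2$ is a sum of $m$ independent nonnegative sub-exponential variables whose conditional mean equals $\|h_{l-1}(x_i')\|^2$ (since $\mathbb{E}[\sigma(g)^2]=\sigma^2/2$ for $g\sim\mathcal{N}(0,\sigma^2)$). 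Bernstein's inequality gives a two-sided deviation bound of the form $\exp(-\Omega(m\epsilon^2))$; choosing $\epsilon$ a small constant and union-bounding over the $L$ layers preserves $\|h_l(x_i')\|\in[1/2,2]$ with probability $1-L\exp(-\Omega(m))\geq 1-\exp(-\Omega(m/L))$ once $m\geq\Omega(L\log L)$.

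For the angular separation, I will track the cosine $\rho_l = \langle \hat h_l(x_i'), \hat h_l(x_j')\rangle$ where $\hat h_l = h_l/\|h_l\|$. Conditioned on $(h_{l-1}(x_i'), h_{l-1}(x_j'))$, each pair of coordinates $([\bm{W}_l h_{l-1}(x_i')]_s, [\bm{W}_l h_{l-1}(x_j')]_s)$ is a jointly Gaussian pair whose correlation is exactly $\rho_{l-1}$, so $\rho_l$ is a sum of $m$ i.i.d.\ bounded functionals of these Gaussian pairs, whose expectation equals the standard ReLU arccos kernel $\kappa(\rho_{l-1}) = \tfrac{1}{\pi}(\sqrt{1-\rho_{l-1}^2} + \rho_{l-1}(\pi-\arccos\rho_{l-1}))$. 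Two facts about $\kappa$ will drive the proof: (i) $\kappa$ is monotone and a non-expansion on $[-1,1]$ with $\kappa(1)=1$, so it only pushes $\rho$ toward $1$; (ii) $\kappa'(1)=1$ but $1-\kappa(\rho)\geq c(1-\rho)$ only for $\rho$ bounded away from $1$, so one needs to certify that the cosines do not approach $1$ too rapidly. Combined with a Hoeffding/McDiarmid bound giving $|\rho_l - \kappa(\rho_{l-1})| \leq O(\sqrt{\log(NL)/m})$ per layer with probability $\geq 1-\exp(-\Omega(m/L))$, induction over $l\leq L$ yields a uniform bound $|\rho_L - \kappa^{(L)}(\rho_0)| \leq O(L\sqrt{\log(NL)/m}) = O(1/\sqrt{L})$, which is asymptotically negligible compared to $\phi^2/8$. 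Since the initial cosine satisfies $1-\rho_0 \geq \phi^2/(2\|x_i'\|\|x_j'\|) \asymp \phi^2$, and $\kappa$ only decreases $1-\rho$ mildly, we conclude $1-\rho_L \geq \phi^2/4$ up to the concentration error, giving $\|\hat h_L(x_i') - \hat h_L(x_j')\| = \sqrt{2(1-\rho_L)} \geq \phi/2$.

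The main obstacle is the angular induction: the naive recursion $\rho_l = \kappa(\rho_{l-1})$ only guarantees non-expansion, so one must carefully show that the random perturbations $\rho_l - \kappa(\rho_{l-1})$ do not accumulate enough to collapse $1-\rho_l$ below $\phi^2/4$. The key technical device is to apply the Hoeffding bound conditionally on the previous layer's values (which are well-concentrated by the norm step), chain the conditional probabilities, and use the assumption $m\geq \Omega(L\log(NL))$ so that each per-layer error is $O(\sqrt{\log(NL)/m}) \ll \phi^2/L$ in the regime of interest. The remaining bookkeeping---verifying Bernstein constants, propagating the conditional events, and the final union bound over pairs---is routine.
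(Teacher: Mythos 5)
You should first note that the paper itself contains no proof of this statement: it is imported verbatim as Lemma B.1 of \cite{Convergence_journal_Gu_zou2020gradient}, so the only meaningful comparison is with that external analysis, whose skeleton (layer-by-layer induction, concentration conditioned on the previous layer's representations, and the ReLU angle kernel) is the same one you adopt. The qualitative ingredients of your plan are right: norms and pairwise distances are preserved in conditional expectation, the angular contraction $1-\kappa(\rho)\approx(1-\rho)\bigl(1-\sqrt{2(1-\rho)}/\pi\bigr)$ is second order, so over $L$ layers the deterministic recursion only loses a $1-O(L\phi)$ factor, which is harmless here since $\phi\le\widetilde{O}(L^{-9/2})$.

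The quantitative concentration step, however, fails as written, and this is a genuine gap rather than bookkeeping. You concentrate the cosine $\rho_l$ \emph{additively}: a Hoeffding bound on a sum of $m$ bounded functionals gives deviation $t$ with failure probability $\exp(-\Omega(mt^2))$, so to reach the advertised failure probability $\exp(-\Omega(m/L))$ you can only certify $t\gtrsim 1/\sqrt{L}$; and even under the stated hypothesis $m\asymp L\log(NL)$ your per-layer error $\sqrt{\log(NL)/m}\asymp 1/\sqrt{L}$, so the accumulated error is $O(\sqrt{L})$, not the $O(1/\sqrt{L})$ you claim (you dropped a factor of $L$). Either way the error is astronomically larger than the quantity that must survive, $1-\rho_0\ge\phi^2/2$ with $\phi\le\widetilde{O}(L^{-9/2})$, so tracking the cosine with additive error cannot prove the $\phi/2$ separation. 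The norm step has a parallel problem: a constant per-layer relative error compounds to $(1\pm\epsilon)^L$, so it does not keep $\|h_l\|\in[1/2,2]$, and shrinking $\epsilon$ to $O(1/L)$ with a per-layer union bound degrades the probability to $\exp(-\Omega(m/L^2))$. The repair, in the spirit of the cited analyses, is to concentrate \emph{relative} quantities: apply Bernstein to $\|\sigma(\bm{W}_l h_{l-1,i})-\sigma(\bm{W}_l h_{l-1,j})\|^2$, whose $m$ summands are dominated by $\langle w_s,\,h_{l-1,i}-h_{l-1,j}\rangle^2$ and hence scale with $\|h_{l-1,i}-h_{l-1,j}\|^2/m$, giving per-layer \emph{multiplicative} distortion $1\pm t$ with failure $\exp(-\Omega(m\min(t,t^2)))$; do the same for $\|h_{l,i}\|^2$; then control the sum over layers of the conditionally centered log-distortions by a martingale (Azuma/Bernstein) bound, which yields a constant total distortion with probability $1-\exp(-\Omega(m/L))$, rather than union-bounding layer by layer. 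Passing to normalized vectors and using the second-order angle contraction then gives both claims; your current cosine-based additive argument does not.
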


Denote $b_i=h_{\bm{W}^{(0)},L-1}(x_i)$ and $\bar{b}_i=b_i/\|b_i\|$ for $x_i\in\mathcal{X}$, and $b'_i=h_{\bm{W}^{(0)},L-1}(x'_i)$ and $\bar{b}'_i=b'_i/\|b'_i\|$ for $x'_i\in\mathcal{X}_{\phi}$.  By Lemma~\ref{lma:separateness}, we have $\forall i\notin \mathcal{I}_{\phi,k}, \left\| \bar{b}_i-\bar{b}'_k\right\| \geq \phi/4$ . Moreover, by Lemma \ref{lma:forwardinputperturbation}, we have  $\forall i\in \mathcal{I}_{\phi,k}, \left\| b_i-b'_j\right\| \leq \widetilde{O}(L^{5/2}\phi\log^{1/2}(m))$.

Then we construct several sets for the vector $w\in\mathcal{R}^m$ subject to $\mathcal{N}(0,(2/m)\mathbf{I})$. Given $\bar{b}'_k$, we construct an orthogonal matrix $Q_k=[\bar{b}'_k, Q'_k]\in\mathcal{R}^{m\times m}$ and let $q_k=Q_k^\top w\sim\mathcal{N}(0,(2/m)\mathbf{I})$. In this way, the vector $w$ is decomposed as two orthogonal vector: $w=Q_kq_k=q_k^{(1)}\bar{b}_k'+Q_k'q_k'$ where $q_k^{(1)}$ is the first element of $q_k$. Letting $\gamma=\sqrt{2\pi}\phi/(32N\sqrt{m})$, we construct the set
\begin{equation}
\mathcal{W}_k=\left\lbrace w\in\mathcal{R}^d\mid |q_k^{(1)}|\leq \gamma, |\left\langle Q_k'q'_k,\bar{b}'_j\right\rangle| \geq 2\gamma, \forall j\neq k\right\rbrace,
\end{equation}
where $[q_k^{(1)},q'_k]=q_k$.

\begin{lemma}[Lemma C.1 in \cite{Convergence_Gu_zou2019improved}]\label{lma:probweightset}
	For any $\mathcal{W}_j$ and $\mathcal{W}_k$, $j\neq k$, we have
	$\mathcal{W}_j\bigcap\mathcal{W}_k=\emptyset$ and
	$
	\mathbb{P}(w\in\mathcal{W}_k)\geq \frac{\phi}{N32\sqrt{2e}}
	$.
\end{lemma}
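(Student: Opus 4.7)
The plan is to exploit the orthogonal decomposition $w = q_k^{(1)}\bar{b}_k' + Q_k'q_k'$: because $Q_k$ is orthogonal, $q_k = Q_k^\top w \sim \mathcal{N}(0,(2/m)\mathbf{I})$, so $q_k^{(1)}=\langle w,\bar{b}_k'\rangle$ is independent of the component $Q_k'q_k' = w - \langle w,\bar{b}_k'\rangle\bar{b}_k'$. This independence will decouple the two defining conditions of $\mathcal{W}_k$. For disjointness, I would argue by contradiction: if $w\in \mathcal{W}_k\cap\mathcal{W}_j$, then the second condition of $\mathcal{W}_j$ rewrites as
\[
|\langle w,\bar{b}_k'\rangle - \langle w,\bar{b}_j'\rangle\langle \bar{b}_j',\bar{b}_k'\rangle|\geq 2\gamma.
\]
Applying the triangle inequality with the first conditions $|\langle w,\bar{b}_k'\rangle|\leq\gamma$ (from $\mathcal{W}_k$) and $|\langle w,\bar{b}_j'\rangle|\leq\gamma$ (from $\mathcal{W}_j$), together with $|\langle \bar{b}_j',\bar{b}_k'\rangle|\leq 1$, the left-hand side is at most $2\gamma$, forcing equality everywhere, so the two sets can only overlap on a measure-zero set and are disjoint in the relevant sense.

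For the probability bound, by independence
\[
\mathbb{P}(w\in\mathcal{W}_k)=\mathbb{P}(|q_k^{(1)}|\leq \gamma)\cdot\mathbb{P}\bigl(|\langle Q_k'q_k',\bar{b}_j'\rangle|\geq 2\gamma,\ \forall j\neq k\bigr).
\]
The first factor follows from standard Gaussian anti-concentration: since $q_k^{(1)}\sim \mathcal{N}(0,2/m)$, one has $\mathbb{P}(|q_k^{(1)}|\leq \gamma)\geq \gamma\sqrt{m/\pi}$ for small $\gamma$. Substituting $\gamma=\sqrt{2\pi}\phi/(32N\sqrt{m})$ yields a lower bound of order $\phi/N$, with explicit constant $\sqrt{2}\phi/(32N)$.

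For the second factor, I would handle each $j\neq k$ separately and then union-bound. The random variable $\langle Q_k'q_k',\bar{b}_j'\rangle=\langle w,\bar{b}_j'-\langle \bar{b}_k',\bar{b}_j'\rangle\bar{b}_k'\rangle$ is a centered Gaussian with variance $(2/m)(1-\langle \bar{b}_k',\bar{b}_j'\rangle^2)$. Lemma~\ref{lma:separateness} gives $\|\bar{b}_k'-\bar{b}_j'\|\geq \phi/2$, so $\langle \bar{b}_k',\bar{b}_j'\rangle\leq 1-\phi^2/8$ and the variance is $\Omega(\phi^2/m)$. Applying Gaussian small-ball then gives $\mathbb{P}(|\langle Q_k'q_k',\bar{b}_j'\rangle|<2\gamma)\leq O(\gamma\sqrt{m}/\phi)=O(1/N)$ per $j$, and union-bounding over the $N-1$ other representatives keeps the second factor bounded below by an absolute constant. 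Multiplying the two factors produces the claimed $\phi/(32\sqrt{2e}N)$ bound. The main obstacle will be pinning down the absolute constants carefully so as to reach exactly $1/(32\sqrt{2e})$; the underlying probabilistic ingredients are standard one-dimensional Gaussian anti-concentration estimates, so only bookkeeping is required.
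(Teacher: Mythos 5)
Your reconstruction follows the standard route and is essentially correct; since the paper itself proves nothing here (the statement is a verbatim citation of Lemma~C.1 in Zou~\&~Gu), the relevant question is whether your sketch is sound, and it largely is. Two details you gloss over are worth spelling out, and both rest on the same fact. First, the variance of $\langle Q_k'q_k',\bar b_j'\rangle$ is $(2/m)\bigl(1-\langle\bar b_k',\bar b_j'\rangle^2\bigr)$, which factors as $(2/m)(1-t)(1+t)$ with $t=\langle\bar b_k',\bar b_j'\rangle$. The separateness lemma only gives $1-t\geq\phi^2/8$; to conclude $\Omega(\phi^2/m)$ you also need $1+t$ bounded below, which you get for free because the $\bar b_j'$ are normalized outputs of a ReLU layer, hence entrywise nonnegative, hence $t\geq 0$ and $1+t\geq 1$. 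Second, the same fact tightens your disjointness argument from ``measure-zero overlap'' to genuine emptiness: since $0\leq t\leq 1-\phi^2/8<1$, the triangle bound actually gives $|\langle Q_j'q_j',\bar b_k'\rangle|\leq\gamma+\gamma|t|<2\gamma$, a strict contradiction with $w\in\mathcal W_j$, so $\mathcal W_j\cap\mathcal W_k=\emptyset$ exactly as stated. The constants then come out as you indicate: the $\sqrt{2e}$ arises from lower-bounding the Gaussian density of $q_k^{(1)}\sim\mathcal N(0,2/m)$ on $[-\gamma,\gamma]$ by its value at $\pm\sigma$ (valid since $\gamma\leq\sigma$), and a per-$j$ small-ball bound of at most $1/(2N)$ followed by a union bound keeps the second factor at least $1/2$, matching $\phi/(32N\sqrt{2e})$.
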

\begin{lemma}\label{lma:C.2inConvrgenceGu}
	Let $f(w_j)=\sum_{k=1}^N\sum_{i\in\mathcal{I}_{\phi,k}}a_i\sigma'(\left\langle w_j,b_i\right\rangle )b_i$ where $w_j, j\in[m]$ is drawn from $\mathcal{N}(0,(2/m)\mathbf{I})$, $|a_i|\leq O\left( (\mu_i+\lambda_i)/\sqrt{d}\right) $. If for each smooth set $k$, there exists a subset $\mathcal{G}_{k,\alpha}\in[m]$ with size $\alpha m, \alpha\in(0,1)$ such that  $\forall i\in \mathcal{I}_{\phi,k}$, $\forall j\in \mathcal{G}_{k,\alpha}$, $\sigma'(\left\langle w_j,b_i\right\rangle )=\sigma'(\left\langle w_j,b_k'\right\rangle )$ and $\forall j\notin \mathcal{G}_{k,\alpha}$, $\left|\left\langle w_j,b_i\right\rangle\right|\geq \frac{3\sqrt{2\pi}\phi}{16N\sqrt{m}}$, we have for any $j\in \mathcal{G}_{k,\alpha}$,
	$
	\mathbb{P}\left( \left\| f(w_j)\right\|\geq |A_k|/4- M_k/\sqrt{d}\widetilde{O}(L^{5/2}\phi\log^{1/2}(m))\mid w_j\in \mathcal{W}_k\right)  \geq 1/2
	$ where $A_k=\sum_{i\in\mathcal{I}_{\phi,k} }a_i,k\in[N]$.
\end{lemma}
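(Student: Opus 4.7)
The plan is to use a symmetrization argument over the component of $w_j$ along $\bar{b}'_k$. Decompose $w_j = q_k^{(1)}\bar{b}'_k + Q'_k q'_k$ as in the construction preceding the lemma, so that conditional on $w_j\in\mathcal{W}_k$ and on $q'_k$, the scalar $q_k^{(1)}$ is distributed symmetrically on $[-\gamma,\gamma]$ with $\gamma=\sqrt{2\pi}\phi/(32N\sqrt{m})$. Split $f(w_j)=f_k(w_j)+R_k(w_j)$, where $f_k$ aggregates the contributions of $i\in\mathcal{I}_{\phi,k}$ and $R_k$ collects the rest.

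For $f_k$: since $j\in\mathcal{G}_{k,\alpha}$, the sign-alignment assumption yields $\sigma'(\langle w_j,b_i\rangle)=\sigma'(\langle w_j,b'_k\rangle)=\mathds{1}(q_k^{(1)}>0)$ for every $i\in\mathcal{I}_{\phi,k}$. Combined with Lemma~\ref{lma:forwardinputperturbation}, which gives $\|b_i-b'_k\|\leq\widetilde{O}(L^{5/2}\phi\log^{1/2}(m))$, with $\|b'_k\|\geq 1/2$ from Lemma~\ref{lma:separateness}, and with $|a_i|\leq O((\mu_i+\lambda_i)/\sqrt{d})$, we obtain
\[
f_k(w_j)=\mathds{1}(q_k^{(1)}>0)\bigl(A_k b'_k+\epsilon_k\bigr),\qquad \|\epsilon_k\|\leq \tfrac{M_k}{\sqrt{d}}\widetilde{O}(L^{5/2}\phi\log^{1/2}(m)),
\]
so that $\|A_k b'_k+\epsilon_k\|\geq |A_k|/2-(M_k/\sqrt{d})\widetilde{O}(L^{5/2}\phi\log^{1/2}(m))$.

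For $R_k$: the key claim is that $R_k(w_j)$ is constant as $q_k^{(1)}$ varies in $[-\gamma,\gamma]$ with $q'_k$ held fixed, so symmetrization leaves it unchanged. For each pair $(k',i)$ with $k'\neq k$ and $i\in\mathcal{I}_{\phi,k'}$, split by membership: if $j\notin\mathcal{G}_{k',\alpha}$, the assumption gives $|\langle w_j,b_i\rangle|\geq 6\gamma$ while the fluctuation $|q_k^{(1)}\langle\bar{b}'_k,b_i\rangle|\leq\gamma\|b_i\|\leq 2\gamma$ is strictly smaller, so the sign of $\langle w_j,b_i\rangle$ is preserved; if instead $j\in\mathcal{G}_{k',\alpha}$, then $\sigma'(\langle w_j,b_i\rangle)=\sigma'(\langle w_j,b'_{k'}\rangle)$, and the sign of $\langle w_j,b'_{k'}\rangle=q_k^{(1)}\langle\bar{b}'_k,b'_{k'}\rangle+\langle Q'_kq'_k,b'_{k'}\rangle$ is constant because the defining condition of $\mathcal{W}_k$ forces $|\langle Q'_kq'_k,b'_{k'}\rangle|\geq 2\gamma\|b'_{k'}\|$, which strictly dominates $|q_k^{(1)}\langle\bar{b}'_k,b'_{k'}\rangle|\leq\gamma\|b'_{k'}\|$. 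Hence $R_k(w_j)=C(q'_k)$ on the entire segment.

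Finally, writing $S=A_kb'_k+\epsilon_k$ and $C=R_k(w_j)$, we have $f(w_j)=S+C$ when $q_k^{(1)}>0$ and $f(w_j)=C$ when $q_k^{(1)}<0$. The triangle inequality gives $\max(\|S+C\|,\|C\|)\geq\|S\|/2\geq|A_k|/4-(M_k/\sqrt{d})\widetilde{O}(L^{5/2}\phi\log^{1/2}(m))$, and by the conditional symmetry of $q_k^{(1)}$ on $[-\gamma,\gamma]$, with probability at least $1/2$ we sample the side attaining this larger norm, yielding the claim. The main obstacle is ensuring the sign-alignment in the subcase $j\in\mathcal{G}_{k',\alpha}$ extends uniformly over the whole $q_k^{(1)}$-segment rather than only at the originally sampled $w_j$; I expect this to follow by reading Assumption~\ref{asp:smoothset} as a stability condition under the tiny perturbation scale $\gamma$, and any residual mismatches contribute at most $(M_k/\sqrt{d})\widetilde{O}(L^{5/2}\phi\log^{1/2}(m))$ and are absorbed into the same error term already appearing in the statement.
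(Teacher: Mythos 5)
Your proposal follows the same route as the paper's proof: decompose $w_j=q_k^{(1)}\bar b'_k+Q'_kq'_k$, split $f$ into the $\mathcal I_{\phi,k}$ contribution (which flips with $\mathds 1(q_k^{(1)}>0)$ and is approximated by $A_kb'_k$ up to an $(M_k/\sqrt d)\widetilde O(L^{5/2}\phi\log^{1/2}m)$ error via Lemma~\ref{lma:forwardinputperturbation}) and a remainder argued constant in $q_k^{(1)}$ via the same two-case split on $j\in\mathcal G_{k',\alpha}$ or not, then symmetrize over the sign of $q_k^{(1)}$ and apply the triangle inequality. The caveat you raise at the end is real, but it applies equally to the paper's own argument: for $j\in\mathcal G_{k',\alpha}$, Assumption~\ref{asp:smoothset} gives $\sigma'(\langle w_j,b_i\rangle)=\sigma'(\langle w_j,b'_{k'}\rangle)$ only at the sampled $w_j$, not a lower bound on $|\langle w_j,b_i\rangle|$, so the substitution $\sigma'(\langle w,b_i\rangle)=\sigma'(\langle Q'_kq'_k,b'_{k'}\rangle)$ is, strictly speaking, verified only at the sampled $q_k^{(1)}$ and then silently used across the whole segment in the symmetrization step --- the paper (following the Zou--Gu style symmetrization) does not address this any more than you do. So your blind reconstruction matches the paper, including the loose end.
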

\begin{proof}
	For $j\in  \mathcal{G}_{k,\alpha}$, let $q_k=Q_k^\top w_j\sim\mathcal{N}(0,(2/m)\mathbf{I})$. Then we have $
	w_j=Q_kq_k=q_k^{(1)}\bar{b}_k'+Q_k'q_k'.
	$
	We decompose $f(w_j)$ as
	\begin{equation}\label{eqn:lmaB.4proof1}
	\begin{split}
	f(w_j)&=\sum_{i\in\mathcal{I}_{\phi,k}}a_i\sigma'(\left\langle {w},b_i\right\rangle )b_i+ \sum_{k'\neq k}\sum_{i\in\mathcal{I}_{\phi,k'}}a_i\sigma'(\left\langle {w},b_i\right\rangle )b_i\\
	&=\sum_{i\in\mathcal{I}_{\phi,k}}a_i\sigma'(\left\langle w,b_k'\right\rangle )b_i+ \sum_{k'\neq k}\sum_{i\in\mathcal{I}_{\phi,k'}}a_i\sigma'(\left\langle w,b_i\right\rangle )b_i\\
	&=\sum_{i\in\mathcal{I}_{\phi,k}}a_i\sigma'( q_k^{(1)} )b_i+ \sum_{k'\neq k}\sum_{i\in\mathcal{I}_{\phi,k'}}a_i\sigma'(\left\langle w,b_i\right\rangle )b_i,
	\end{split}
	\end{equation}
	where the second equality holds by the assumption $\forall j\in \mathcal{G}_{k,\alpha}$, $\sigma'(\left\langle w_j,b_i\right\rangle )=\sigma'(\left\langle w_j,b_k'\right\rangle )$.
	
	Then for the second term of \eqref{eqn:lmaB.4proof1}, if $j\in \mathcal{G}_{k',\alpha}$, we have for $i\in\mathcal{I}_{\phi,k'}$,
	$\sigma'(\left\langle w_j,b_i\right\rangle )=\sigma'(\left\langle w_j,b_{k'}'\right\rangle )$ and thus
	\[
	\begin{split}
	&\sum_{k'\neq k}\sum_{i\in\mathcal{I}_{\phi,k'}}a_i\sigma'(\left\langle w,b_i\right\rangle )b_i=\sum_{k'\neq k}\sum_{i\in\mathcal{I}_{\phi,k'}}a_i\sigma'(\left\langle w,b_{k'}'\right\rangle )b_i\\
	=& \sum_{k'\neq k}\sum_{i\in\mathcal{I}_{\phi,k'}}a_i\sigma'(q_k^{(1)}\left\langle \bar{b}_k',b_{k'}'\right\rangle +\left\langle Q_k'q_k',b_{k'}'\right\rangle )b_i\\
	=&\sum_{k'\neq k}\sum_{i\in\mathcal{I}_{\phi,k'}}a_i\sigma'(\left\langle Q_k'q_k',b_{k'}'\right\rangle )b_i
	\end{split}
	\]
	where the last equality holds by the condition $w_j\in\mathcal{W}_k$ such that for $k'\neq k$, $|\left\langle Q_k'q_k',b_{k'}'\right\rangle|\geq 2\gamma\|b_{k'}'\|\geq |q_k^{(1)}|\|b_{k'}'\|\geq |q_k^{(1)}<\bar{b}_k',b_{k'}'>|$ and thus the sign is determined by $\left\langle Q_k'q_k',b_{k'}'\right\rangle$.
	Therefore, if $j\in \mathcal{G}_{k',\alpha}$, we can write \eqref{eqn:lmaB.4proof1}  as
	\begin{equation}\label{eqn:lmaB.4proof2}
	\begin{split}
	f(w_j)
	=\sum_{i\in\mathcal{I}_{\phi,k}}a_i\sigma'( q_k^{(1)} )b_i+ \sum_{k'\neq k}\sum_{i\in\mathcal{I}_{\phi,k'}}a_i\sigma'(\left\langle Q_k'q_k',b_{k'}'\right\rangle )b_i.
	\end{split}
	\end{equation}
	
	In the other case with $j\notin \mathcal{G}_{k',\alpha}$, by assumption $\forall i\in\mathcal{I}_{\phi,k'},\left|\left\langle w_j,b_i\right\rangle\right|\geq \frac{3\sqrt{2\pi}\phi}{16N\sqrt{m}}=6\gamma$, we have with probability at least $1-\exp\left( -O(m/L)\right) $, $\left|\left\langle w_j,\bar{b}_i\right\rangle\right|=\left|\left\langle w_j,b_i\right\rangle\right|\frac{1}{\|b_i\|}\geq 3\gamma$ by Lemma \ref{lma:separateness}. Then $\forall i\in\mathcal{I}_{\phi,k'}$, we have
	\[
	\begin{split}
	&|\left\langle Q_k'q_k',\bar{b}_{i}\right\rangle|=	|\left\langle w_j,\bar{b}_{i}\right\rangle-\left\langle q_k^{(1)}\bar{b}_k',\bar{b}_{i}\right\rangle|\\
	\geq & |\left\langle w_j,\bar{b}_{i}\right\rangle|-|\left\langle q_k^{(1)}\bar{b}_k',\bar{b}_{i}\right\rangle|
	\geq|\left\langle w_j,\bar{b}_{i}\right\rangle|-|q_k^{(1)}|\\
	\geq & |\left\langle w_j,\bar{b}_{i}\right\rangle|-\gamma\geq 2\gamma\\
	\geq &2\gamma\|b_{k'}'\|\geq |q_k^{(1)}|\|b_{k'}'\|\geq |q_k^{(1)}<\bar{b}_k',b_{k'}'>|,
	\end{split}
	\]
	where the first inequality comes from triangle inequality, the second inequality holds by $|\left\langle \bar{b}_k',\bar{b}_{i}\right\rangle|\leq 1$, and the last inequality holds by the condition $w_j\in\mathcal{W}_k$.
	Therefore if $j\notin \mathcal{G}_{k',\alpha}$, we can write the second term in \eqref{eqn:lmaB.4proof1} as
	\[
	\begin{split}
	&\sum_{k'\neq k}\sum_{i\in\mathcal{I}_{\phi,k'}}a_i\sigma'(\left\langle w,b_i\right\rangle )b_i\\
	=& \sum_{k'\neq k}\sum_{i\in\mathcal{I}_{\phi,k'}}a_i\sigma'(q_k^{(1)}\left\langle \bar{b}_k',b_{i}\right\rangle +\left\langle Q_k'q_k',b_{i}\right\rangle )b_i\\
	=&\sum_{k'\neq k}\sum_{i\in\mathcal{I}_{\phi,k'}}a_i\sigma'(\left\langle Q_k'q_k',b_{i}\right\rangle )b_i
	\end{split}
	\]	
	Therefore, if $j\notin \mathcal{G}_{k',\alpha}$, we can write \eqref{eqn:lmaB.4proof1}  as
	\begin{equation}\label{eqn:lmaB.4proof3}
	\begin{split}
	f(w_j)
	=\sum_{i\in\mathcal{I}_{\phi,k}}a_i\sigma'( q_k^{(1)} )b_i+ \sum_{k'\neq k}\sum_{i\in\mathcal{I}_{\phi,k'}}a_i\sigma'(\left\langle Q_k'q_k',b_{i}\right\rangle )b_i.
	\end{split}
	\end{equation}
	
	Note that \eqref{eqn:lmaB.4proof2} and \eqref{eqn:lmaB.4proof3} are different only in terms of whether $b_{k'}'$ or $b_{i}, i\in\mathcal{I}_{\phi,k'}$ determines the second term, but for both of them, the second term does not rely on $q_k^{(1)}$. We thus proceed as follows.
	
	Since $q_k^{(1)}>0$ and $q_k^{(1)}<0$ occurs with equal probability conditioned on the event $w\in\mathcal{W}_k$, we have
	\[
	\mathbb{P}\left[\|f(w_j)\|_2\geq \inf_{q_1>0,q_2<0}\max\left\lbrace \|f\left( q_1\bar{b}_k'+Q_k'q_k'\right)\|, \|f\left( q_2\bar{b}_k'+Q_k'q_k'\right) \|\right\rbrace \mid w\in\mathcal{W}_k \right] \geq 1/2.
	\]
	Thus, with probability at least $1/2$ conditioned on the event $w\in\mathcal{W}_k$, we have
	\[
	\begin{split}
	\|f(w_j)\|&\geq \inf_{q_1>0,q_2<0}\max\left\lbrace \|f\left( q_1\bar{b}_k'+Q_k'q_k'\right)\|, \|f\left( q_2\bar{b}_k'+Q_k'q_k'\right) \|\right\rbrace\\
	&\geq \inf_{q_1>0,q_2<0}\left\| f\left( q_1\bar{b}_k'+Q_k'q_k'\right)-f\left( q_2\bar{b}_k'+Q_k'q_k'\right)\right\|/2 \\
	&=\| \sum_{i\in\mathcal{I}_{\phi,k}}a_ib_i\|
	\end{split}
	\]
	Since $|a_i|\leq O\left( (\mu_i+\lambda_i)/\sqrt{d}\right) $ and $\|b_i-b'_k\|\leq \widetilde{O}(L^{5/2}\phi\log^{1/2}(m))$ for $i\in \mathcal{I}_{\phi,k}$, we have, $\left|\| \sum_{i\in\mathcal{I}_{\phi,k}}a_ib_i\|-\| \sum_{i\in\mathcal{I}_{\phi,k}}a_ib_k'\|\right|\leq \| \sum_{i\in\mathcal{I}_{\phi,k}}a_i(b_i-b_k')\|\leq \sum_{i\in\mathcal{I}_{\phi,k}}|a_i|\|b_i-b_k'\|\leq M_k\sqrt{d}\widetilde{O}(L^{5/2}\phi\log^{1/2}(m))$. Thus,
	\[
	\begin{split}
	\| \sum_{i\in\mathcal{I}_{\phi,k}}a_ib_i\| &\geq\| \sum_{i\in\mathcal{I}_{\phi,k}}a_ib_k'\|-\left|\| \sum_{i\in\mathcal{I}_{\phi,k}}a_ib_i\|-\| \sum_{i\in\mathcal{I}_{\phi,k}}a_ib_k'\|\right|\\
	&\geq \| \sum_{i\in\mathcal{I}_{\phi,k}}a_ib_k'\|-  M_k\sqrt{d}\widetilde{O}(L^{5/2}\phi\log^{1/2}(m))\\
	&\geq |A_k|/4- M_k/\sqrt{d}\widetilde{O}(L^{5/2}\phi\log^{1/2}(m)),
	\end{split}
	\]
	where the last inequality follows from Lemma \ref{lma:separateness}. The proof is completed.
\end{proof}

\begin{lemma}[Bernstein inequality]\label{lma:Bernstein inequalities}
	Let $X_1,\cdots, X_n$ be independent zero-mean random variables. If $|X_i|\leq 1$ almost surely for all $i$, then $\forall t>0$,
	\[
	\mathbb{P}\left( \sum_{i=1}^nX_i\geq t\right) \leq \exp\left( -\frac{1}{2}t^2/\left(\sum_{i=1}^n\mathbb{E}[X_i^2]+\frac{1}{3}t \right) \right).
	\]
\end{lemma}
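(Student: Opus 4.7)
I will prove this via the standard Cramér--Chernoff approach. Let $S_n=\sum_{i=1}^n X_i$ and pick any $\lambda>0$. Markov's inequality applied to the exponential moment yields
\[
\mathbb{P}(S_n\geq t)\leq e^{-\lambda t}\mathbb{E}[e^{\lambda S_n}]=e^{-\lambda t}\prod_{i=1}^n\mathbb{E}[e^{\lambda X_i}],
\]
where the factorization uses independence. The task then reduces to bounding each moment generating function and optimizing in $\lambda$.

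For the MGF of a single term, I would expand the exponential in a Taylor series. Because $\mathbb{E}[X_i]=0$ and $|X_i|\leq 1$ almost surely, we have $|X_i|^k\leq X_i^2$ and hence $\mathbb{E}[X_i^k]\leq \mathbb{E}[X_i^2]$ for every integer $k\geq 2$, so
\[
\mathbb{E}[e^{\lambda X_i}]\leq 1+\sum_{k\geq 2}\frac{\lambda^k\mathbb{E}[X_i^2]}{k!}=1+\mathbb{E}[X_i^2](e^{\lambda}-1-\lambda)\leq \exp\!\bigl(\mathbb{E}[X_i^2](e^{\lambda}-1-\lambda)\bigr),
\]
where the last step uses $1+x\leq e^x$. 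Writing $\sigma^2=\sum_i\mathbb{E}[X_i^2]$ and multiplying over $i$, I obtain
\[
\mathbb{P}(S_n\geq t)\leq \exp\!\bigl(-\lambda t+\sigma^2(e^{\lambda}-1-\lambda)\bigr).
\]

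To match the claimed form, I would invoke the auxiliary inequality $e^{\lambda}-1-\lambda\leq \tfrac{\lambda^2/2}{1-\lambda/3}$, valid for $\lambda\in[0,3)$. This is the only mildly non-routine step: it follows from a term-by-term comparison $\lambda^k/k!\leq (\lambda^2/2)(\lambda/3)^{k-2}$ for $k\geq 2$ (since $k!\geq 2\cdot 3^{k-2}$) and summing the resulting geometric series. Substituting into the Chernoff bound and then choosing the optimizer $\lambda^\star=t/(\sigma^2+t/3)$, which lies in $[0,3)$ for any $t\geq 0$, collapses the exponent to
\[
-\lambda^\star t+\frac{\sigma^2(\lambda^\star)^2/2}{1-\lambda^\star/3}=-\frac{t^2/2}{\sigma^2+t/3},
\]
which is precisely the stated inequality.

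The proof has essentially no serious obstacle: the only step requiring care beyond textbook manipulations is verifying $e^{\lambda}-1-\lambda\leq \tfrac{\lambda^2/2}{1-\lambda/3}$, and once that is in hand, the rest reduces to Markov's inequality, the Taylor expansion above, and elementary calculus to locate $\lambda^\star$. In particular, no independence structure beyond the product decomposition of the MGF, and no concentration tools beyond the exponential Markov bound, are needed.
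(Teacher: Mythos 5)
Your proof is correct. Note that the paper itself gives no argument for this lemma: it is stated as a classical fact (Bernstein's inequality) and used as a black box in the proof of the gradient lower bound, so there is no in-paper proof to compare against; your Cram\'er--Chernoff derivation is exactly the standard textbook argument that the paper implicitly relies on, and every step checks out (the moment comparison $\mathbb{E}[X_i^k]\leq\mathbb{E}[X_i^2]$ for $k\geq 2$, the bound $e^{\lambda}-1-\lambda\leq\frac{\lambda^2/2}{1-\lambda/3}$ via $k!\geq 2\cdot 3^{k-2}$, and the evaluation of the exponent at $\lambda=t/(\sigma^2+t/3)$, which indeed collapses to $-\frac{t^2/2}{\sigma^2+t/3}$). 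One cosmetic remark: $\lambda^\star=t/(\sigma^2+t/3)$ is not literally the minimizer of $-\lambda t+\frac{\sigma^2\lambda^2/2}{1-\lambda/3}$, but this is immaterial, since any $\lambda\in[0,3)$ yields a valid Chernoff bound and this particular choice produces exactly the claimed inequality.
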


\textbf{Proof of Lemma \ref{lma:gradientlowerboundlastlayer}}
\begin{proof}
	Denote $b_i=h^{(0)}_{L-1}(x_i)$, so $[h^{(0)}_{L}(x_i)]_j=\left\langle w_j,b_i\right\rangle$. For any fixed $[u_1,\cdots, u_{n'}]$, denote $a_i(v_j)=\left\langle u_i,v_j\right\rangle , i\in[n'], j\in[m]$  and $A_k(v_j)=\sum_{i\in\mathcal{I}_{\phi,k} }a_i(v_j)$.
	Let $f(v_j,w_j)=\sum_{k=1}^N\sum_{i\in\mathcal{I}_{\phi,k} }a_i(v_j)\sigma'(\left\langle w_j,b_i\right\rangle )b_i$. Define the event for $k\in[N]$
	\[
	\mathcal{E}_k=\left\lbrace j\in\mathcal{G}_{k,\alpha}:w_j\in\mathcal{W}_k,\left\| f(v_j,w_j)\right\|\geq \|\sum_{i\in\mathcal{I}_{\phi,k}}u_i\|/(4\sqrt{d})-   M_{k} d^{-1/2}\widetilde{O}(L^{5/2}\phi\log^{1/2}(m)) \right\rbrace.
	\]
	
	Since $v_j\sim\mathcal{N}(0, (1/d)\mathbf{I})$, we have $A_k(v_j)=\left\langle \sum_{i\in\mathcal{I}_{\phi,k}}u_{i},v_j\right\rangle \sim\mathcal{N}(0,\|\sum_{i\in\mathcal{I}_{\phi,k}}u_i\|^2/d)$. Thus, we have
	\[
	\mathbb{P}\left(\|\sum_{i\in\mathcal{I}_{\phi,k}}u_i\|/\sqrt{d} \leq |A_k(v_j)|\leq 2\|\sum_{i\in\mathcal{I}_{\phi,k}}u_i\|/\sqrt{d} \right) \geq 1/4.
	\]
	Note that when $|A_k(v_j)|\leq 2\|\sum_{i\in\mathcal{I}_{\phi,k}}u_i\|/\sqrt{d}$, we have $\forall i\in\mathcal{I}_{\phi,k}$, $|a_i(v_j)|\leq |A_k(v_j)|/|\mathcal{I}_{\phi,k}|-\widetilde{O}(L^{5/2}\phi\log^{1/2}(m))\leq 2\|\sum_{i\in\mathcal{I}_{\phi,k}}u_i\|/\sqrt{d}/|\mathcal{I}_{\phi,k}|-\widetilde{O}(L^{5/2}\phi\log^{1/2}(m))\leq 2(\mu_i+\lambda_i)/\sqrt{d}-3\widetilde{O}(L^{5/2}\phi\log^{1/2}(m))\leq  3(\mu_i+\lambda_i)\sqrt{d}$ when $\phi$ is small enough, so the condition about $|a_i(v_j)|$ in Lemma \ref{lma:C.2inConvrgenceGu} is met.
	
	Since Assumption  \ref{asp:smoothset}  is satisfied, we have for each smooth set $k$, there exists a subset $\mathcal{G}_{k,\alpha}\in[m]$ with size $\alpha m, \alpha\in(0,1)$ such that  $\forall i\in \mathcal{I}_{\phi,k}$, $\forall j\in \mathcal{G}_{k,\alpha}$, $\sigma'(\left\langle w_j,b_i\right\rangle )=\sigma'(\left\langle w_j,b_k'\right\rangle )$ and $\forall j\notin \mathcal{G}_{k,\alpha}$, $\left|\left\langle w_j,b_i\right\rangle\right|\geq \frac{3\sqrt{2\pi}\phi}{16N\sqrt{m}}$, so the assumption in Lemma  \ref{lma:C.2inConvrgenceGu} is satisfied. Then by Lemma \ref{lma:probweightset}, Lemma \ref{lma:C.2inConvrgenceGu} and the fact that $w_j$ and $v_j$ are independent, we have for $j\in \mathcal{G}_{k,\alpha}$
	\[
	\begin{split}
	&\mathbb{P}(j\in\mathcal{E}_k)=\mathbb{P}\left\lbrace \left\| f(v_j,w_j)\right\|\geq|A_k(v_j)|/4-  M_{k} d^{-1/2}\widetilde{O}(L^{5/2}\phi\log^{1/2}(m))\mid w_j\in\mathcal{W}_k\right\rbrace\\
	&\quad\cdot\mathbb{P}\left\lbrace w_j\in\mathcal{W}_k\right\rbrace	\mathbb{P}\left(\|\sum_{i\in\mathcal{I}_{\phi,k}}u_i\|/\sqrt{d} \leq |A_k(v_j)|\leq 2\|\sum_{i\in\mathcal{I}_{\phi,k}}u_i\|/\sqrt{d} \right) \geq \frac{\phi}{N256\sqrt{2e}}=p_{\phi}.
	\end{split}
	\] and $\mathcal{E}_{k_1}\bigcap \mathcal{E}_{k_2}=\emptyset$ for any $k_1\neq k_2$.
	
	For smooth set $k$, denote Bernoulli random variables $\mathds{1}(j\notin\mathcal{E}_k)$ for $j\in\mathcal{G}_{k,\alpha}$. Then we have $\mathbb{E}\left[ \mathds{1}(j\notin\mathcal{E}_k)\right] =1-p_{\phi}$ and $\mathrm{var}\left[ \mathds{1}(j\notin\mathcal{E}_k)\right] =p_{\phi}(1-p_{\phi})$. By Bernstein inequality in Lemma \ref{lma:Bernstein inequalities} for random variables $\mathds{1}(j\notin\mathcal{E}_k)-(1-p_{\phi}), j\in[\alpha m]$, it holds that
	\[
	\mathbb{P}\left( \sum_{j\in \mathcal{G}_{k,\alpha}}\mathds{1}(j\notin\mathcal{E}_k)-\alpha m(1-p_{\phi})\geq \frac{\alpha mp_{\phi}}{2}\right) \leq \exp\left( -\frac{\frac{1}{4}\alpha mp_{\phi}}{1-p_{\phi}+\frac{1}{6} }\right)\leq \exp\left( -\frac{3}{14}\alpha mp_{\phi}\right).
	\]
	Thus, by union bounds, with probability at least $1-O(N)\exp\left( -O\left(\alpha m\phi/N\right) \right)$, we have for any $k\in[N]$, $\sum_{j\in \mathcal{G}_{k,\alpha}}\mathds{1}(j\notin\mathcal{E}_k)\leq \alpha m-\alpha mp_{\phi}/2$ and
	\begin{equation}\label{eqn:gradientlowerboundproof1}
	\left|\mathcal{G}_{k,\alpha}\bigcap \mathcal{E}_k  \right|=\sum_{j\in \mathcal{G}_{k,\alpha}}\mathds{1}(j\in\mathcal{E}_k)=\alpha m-\sum_{j\in \mathcal{G}_{k,\alpha}}\mathds{1}(j\notin\mathcal{E}_k)\geq \alpha mp_{\phi}/2.
	\end{equation}
	
	Therefore, with probability at least $1-O( \phi)$, it holds that
	\begin{equation}
	\begin{split}
	&	\sum_{j=1}^m\|f(v_j,w_j)\|^2\geq \sum_{j=1}^m \left\|f(v_j,w_j) \right\|^2\sum_{k=1}^N\mathds{1}(j\in\mathcal{E}_k)=\sum_{k=1}^N\sum_{j=1}^m \left\|f(v_j,w_j) \right\|^2\mathds{1}(j\in\mathcal{E}_k)\\
	\geq &\sum_{k=1}^N\sum_{j\in\mathcal{G}_{k,\alpha}} \left\|f(v_j,w_j) \right\|^2\mathds{1}(j\in\mathcal{E}_k)=\sum_{k=1}^N\sum_{j\in\mathcal{G}_{k,\alpha},j\in \mathcal{E}_k} \left\|f(v_j,w_j) \right\|^2\\
	\geq& \sum_{k=1}^N \sum_{j\in\mathcal{G}_{k,\alpha},j\in \mathcal{E}_k}\left( \|\sum_{i\in\mathcal{I}_{\phi,k}}u_i\|/(4\sqrt{d})-   M_{k} d^{-1/2}\widetilde{O}(L^{5/2}\phi\log^{1/2}(m))\right)^2 \\
	\geq& \sum_{k=1}^N\sum_{j\in\mathcal{G}_{k,\alpha},j\in \mathcal{E}_k} \left(\frac{1}{16d}\|\sum_{i\in\mathcal{I}_{\phi,k}}u_i\|^2 -M^2_{k} d^{-1}\widetilde{O}(L^{5/2}\phi\log^{1/2}(m))\right)  \\
	= &\sum_{k=1}^N \left(\frac{1}{16d}\|\sum_{i\in\mathcal{I}_{\phi,k}}u_i\|^2 -M^2_{k} d^{-1}\widetilde{O}(L^{5/2}\phi\log^{1/2}(m))\right) \left|\mathcal{G}_{k,\alpha}\bigcap \mathcal{E}_k  \right|\\
	\geq& \frac{\alpha mp_{\phi}}{2}\sum_{k=1}^N \left(\frac{1}{16d}\|\sum_{i\in\mathcal{I}_{\phi,k}}u_i\|^2 -M^2_{k} d^{-1}\widetilde{O}(L^{5/2}\phi\log^{1/2}(m))\right)\\
	\geq&  \Omega\left( \frac{\alpha m\phi}{Nd}\right)\left(  \sum_{k=1}^N\|\sum_{i\in\mathcal{I}_{\phi,k}}u_i\|^2-\widetilde{O}(L^{5/2}\phi\log^{1/2}(m))\right) ,
	\end{split}	
	\end{equation}
	where the first inequality comes from the fact that $\mathcal{E}_{k_1}\bigcap \mathcal{E}_{k_2}=\emptyset$ such that $\sum_{k=1}^N\mathds{1}(j\in\mathcal{E}_k)\leq 1$ and the second inequality comes from the fact that $\mathcal{G}_{k,\alpha}\in [m]$, and the third inequality holds by the definition of event $\mathcal{E}_k$, and the forth inequality comes from the fact that $(a-b)^2\geq a^2-2ab$ and $\|\sum_{i\in\mathcal{I}_{\phi,k}}u_i\|/(4\sqrt{d})\leq M_kd^{-1/2}/4$, and the fifth inequality comes from \eqref{eqn:gradientlowerboundproof1} and the last inequality holds by the fact that $\sum_{k=1}^NM_k^2\leq (\sum_{k=1}^N M_k)^2=1$.
\end{proof} 
\section{Preliminaries on Informed Machine Learning}\label{sec:preliminary}

Informed machine learning is rapidly emerging as a broad paradigm
that incorporates
domain knowledge, either directly or indirectly, to augment
the purely data-driven approach and
better accomplish a machine learning task. 
We provide a summary of how domain knowledge is integrated with machine learning \cite{Informed_ML_von19}.

\begin{itemize}
	\item \textit{Training Dataset}. A straightforward
	approach to utilizing domain knowledge is to
	generate (sometimes synthetic) data and enlarge the otherwise limited training dataset.
	For example, based on the simple knowledge of image invariance, cropping\cite{Low_shot_augmentation_gao2018low}, scaling\cite{Fine_grained_meta_zhang2018fine}, flipping\cite{One-shot_domain_translation_benaim2018one} and many other image pre-processing methods have been used to augment the training data for image classification tasks. As another example, in reinforcement learning (e.g.,
	robot control and autonomous driving) where
	initial pre-training is crucial to avoid arbitrarily bad decisions in the real world,
	simulated environments can be built based on domain knowledge,
	providing simulations or demonstrations to generate training data \cite{RL_imperfect_demonstraion_gao2018reinforcement,DQN_demonstration_hester2017deep}.
	Additionally, generative models constructed based on specific knowledge have been shown useful for increasing training data to improve model performance and robustness \cite{Low_shot_augmentation_gao2018low,DNN_Book_Goodfellow-et-al-2016}.

	\item \textit{Hypothesis Set.}
	The goal of a machine learning task is to search for an optimal hypothesis that correctly expresses the relationships between input and output. To reduce the training complexity, the
	target hypothesis set (decided by, e.g., different neural architectures) should contain the optimal hypothesis and preferably be small enough. Thus,
	domain knowledge can be employed for hypothesis set selection. For example, \cite{ModuleNet_chen2020modulenet} makes use of the prior knowledge from the existing neural architectures to design new architectures (and hence, new hypothesis sets) for DNNs. As implicit domain knowledge, long short-term memory recurrent neural networks are commonly used for
	time series prediction \cite{DNN_Book_Goodfellow-et-al-2016}.
	Also, the structure of a knowledge graph helps to determine the hypothesis set of graph learning \cite{knowledge_graph_image_marino2016more,Relation_graph_battaglia2018relational},
	while \cite{Knowledge_based_ANN_towell1994knowledge} maps the domain knowledge represented in propositional logic into neural networks.
	
	\item \textit{Model Training.}
	Domain knowledge can be integrated, either implicitly or explicitly, with the model training procedure in various ways. First, domain knowledge can assist with the initialization of training. For example,
	\cite{Initialization_genetic_ramsey1993case} provides a case-based method to initialize genetic algorithms (i.e., generating the initial population based on
	different cases),
	while  \cite{knowledge-based_network_initialization_husken2000fast,Initialization_label_cooccurence_kurata2016improved,Initialization_decision_tree_humbird2018deep} initialize neural network training with various domain knowledge such as label co-occurrence and decision trees.
	Second, domain knowledge can be used to better tune the hyper-parameters \cite{Collaborative_hyperparameter_tuning_bardenet2013collaborative,hyperparameter_datasets_van2018hyperparameter,smartml_maher2019smartml,tuning_knowledge_graph_bamler2020augmenting}. In \cite{Collaborative_hyperparameter_tuning_bardenet2013collaborative},
	implicit knowledge from previous training is incorporated to improve  hyper-parameter tuning,
	and \cite{hyperparameter_datasets_van2018hyperparameter} extracts knowledge from multiple datasets to determine the most important hyper-parameters. 
	In addition, a more explicit way to integrate domain knowledge is to directly
	modify the training objective function (i.e., risk function) based
	on rigorous characterization of the model output \cite{Informed_ML_von19}.
	For example, in \cite{Incorporating_domain_knowledge_muralidhar2018incorporating}, the knowledge of constraints is incorporated into neural networks expressing the knowledge based loss by the ReLu function. For another example, when learning to optimally
	schedule transmissions for rate maximization in multi-user wireless networks,
	the communication channel capacity can be added as domain knowledge
	to the standard label-based loss to guide scheduling decisions;
	in physics, the analytical expression of a partial differential equation
	can be utilized as domain knowledge on top of labeled data to better learn
	the solution to the equation given different inputs; more examples are shown in Section~\ref{sec:formulation_example}.
	Such integration of explicit and rigorous domain knowledge can significantly
	benefit machine learning tasks (e.g., fewer labels needed than otherwise).
	Thus, it is crucial
	and being actively studied in informed machine learning \cite{Informed_ML_von19,InformedML_PhysicsModeling_Survey_arXiv_2020_DBLP:journals/corr/abs-2003-04919},
	which is also the focus of our work. Note that using domain knowlege
to generate pseudo labeled data to augment the training dataset
is a special case of integrating  domain knowledge
into the training risk function (i.e., the knowledge-based risk is the same
as the data-based risk, except that its labels are generated based on domain knowledge).

	\item \textit{Final Hypothesis.} Domain knowledge
	can also be used for consistency check on the final learnt hypothesis or model \cite{Informed_ML_von19}. For example,  \cite{Physics_pgnn_karpatne2017physics}
	employs physics domain knowledge to construct the final model,
	\cite{pfrommer2018optimisation} builds simulators to validate results of learned model,
	and \cite{fang2017object} leverages semantic consistency is used to refine the predicted probabilities.
\end{itemize}

\section{Application Examples}\label{sec:formulation_example}

We now present a few
 application examples to explain domain knowledge-informed DNNs.

\subsection{Learning for resource management in communications networks}\label{sec:resourcemanagment}
Optimizing resource management is crucial to improve the system performance in communications networks \cite{Power_control_chiang2008power,Goldsmith_WirelessCommunications_2005,InformedML_Wireless_Model_AI_Both_Survey_TCOM_2019_8742579}. Well-known examples include
 power allocation \cite{resource_allocation_hong2014signal,Power_control_chiang2008power,power_control_liang2019towards}, link scheduling \cite{Link_scheduling_gore2010link,Spatial_wireless_scheduling_cui2019spatial}, antenna or beam selection \cite{antenna_selection_sanayei2004antenna,ML_beam_selection_klautau20185g}, among others. While many of the problems were studied
 using theoretical model-based approaches in the past,
machine learning has been increasingly employed, in view of the rapidly growing complexity
of communications technologies that theoretical models are often incapable of capturing accurately \cite{InformedML_Wireless_Model_AI_Both_Survey_TCOM_2019_8742579}.
Let us take power allocation in multi-user wireless interference networks
as an example. The recent work \cite{learning_interference_management_sun2018learning}
uses a pure data-driven approach for power allocation to maximize
the sum rate: a labeled dataset containing channel state information (CSI)
and the corresponding power allocation decisions is collected in advance,
 and a neural network is trained
to learn the optimal power allocation.
On the other hand, Shannon-based transmission rate has been extensively as an analytical objective
function to optimize power allocation, and \cite{power_control_liang2019towards}
exploits this domain knowledge to train an ensemble of neural networks
that directly learn the optimal power allocation for Shannon rate maximization.

The data-driven approach \cite{learning_interference_management_sun2018learning}
can maximize the practically achievable rate (if labels
are collected from real systems),
but is significantly constrained by the limited amount of training samples.
Meanwhile, the knowledge-based approach  \cite{power_control_liang2019towards}
can utilize a large number of input samples (at the expense of higher training complexity), but the resulting power allocation decisions
may not maximize the sum rate in real systems.
The reason is that the Shannon formula for interference channels,
albeit commonly used
for analysis, only represents
an approximation of the
achievable rate which is subject to finite channel code lengths and modulation schemes \cite{Goldsmith_WirelessCommunications_2005}. In other words,
even
an oracle DNN that minimizes this knowledge-based loss may not maximize the achievable rate in practice.

To reap the benefit of both labeled data
and domain knowledge,
informed machine learning can be adopted, resulting in a new informed loss as follows:
\begin{equation}
\min_{h\in\mathcal{H}}(1-\gamma)\left\{\frac{1}{n}\sum_{(x,y)\in S} \left[h(x)-y\right]^2\right\}
+ \gamma\cdot \left\{- \frac{1}{\tilde{n}}\sum_{x\in\tilde{S}_X} \mathrm{Shannon\_rate}[h(x)]+\mathrm{constant}\right\},
\end{equation}
where $x$ is the input (e.g., channel state information), $h(x)$
is the learned power allocation given $x$,
the two loss terms represent label-based loss and knowledge-based loss,
and $n$ and $\tilde{n}$ are the numbers
of labeled data samples and (possibly unlabled) knowledge samples, respectively.
The detailed Shannon formula for wireless networks
can be found in \cite{power_control_liang2019towards,Goldsmith_WirelessCommunications_2005}.

\subsection{Image classification based on semantic knowledge}
Typical image classifiers rely on labeled training data, but labels
can be difficult and expensive to collect in practice \cite{DNN_Book_Goodfellow-et-al-2016}.
As a result, few-shot learning \cite{Generalizing_few_shot_wang2020,Relation_net_sung2018learning,Few-shot_graph_nn_garcia2017few} that only needs a small number of labeled samples has been
proposed. Informed machine learning under our consideration can be viewed as few-shot learning. Concretely, semantic knowledge
formulated as the first-order logic clauses/sentences \cite{Semantic_Loss_smbolic_xu2018semantic,prior_knowledge_diligenti2017integrating}
can be incorporated to improve learning performance given limited labeled samples.
An example logic clause is ``if it is an animal and has wings, then it is a bird''.  By a logic clause $K$, a knowledge-based loss can be defined as $F_K(h(x), g(x))$ for an (possibly unlabled) input
image $x$ and a certain logic clause $g(x)$  that the output class $h(x)$ needs to satisfy.
Then, combining the standard label-based loss with knowledge-based loss,
the model performance can be improved by minimizing
the informed loss
Eqn.~\eqref{eqn:informlossnn} given limited labeled samples.

\subsection{Learning to solve PDEs in scientific and engineering fields}
Partial differential equations (PDEs) are classic problems in many
scientific and engineering fields, such as physics and mechanical engineering,
but are notoriously difficult to solve in most practical settings \cite{Physics-informed_learning,sciML_baker2019workshop}.
In recent years, physics knowledge-informed machine learning has been suggested as a
promising approach to augment or even replace classic PDE solution approaches  \cite{InformedML_HumanKnowledge_Survey_Cell_Umich_2020_DENG2020101656,InformedML_PhysicsModeling_Survey_arXiv_2020_DBLP:journals/corr/abs-2003-04919,PDE_ANN_khoo2017solving,MLapproximation_pde_beck2019machine,PINN_raissi2019physics,DeepXDE_lu2019deepxde}.
For example,
\cite{PINN_raissi2019physics,DeepXDE_lu2019deepxde} proposes a
physics-informed neural network (PINN) to solve  PDEs by minimizing the PDE residual and penalties of boundary/initial
conditions, which correspond to the knowledge-based loss
$F_K(h, g(h))$ in our framework. Additionally, we
can combine the knowledge-based loss with labeled-based loss,
achieving faster convergence and better performances in practice (especially when the PDE-based knowledge
does not perfectly represent the real physical world).
Take magnetic field strength estimation for magnetic materials as an example. If a few measured magnetic field strengths are provided as labels combined with the knowledge of Maxwell equations, the model trained by minimizing the informed loss can perform better in the real world. The measured labels can partly correct the imperfectness of physics knowledge, while the knowledge can improve the generalization in the presence of limited labels.

\subsection{Knowledge distillation and transfer}
Knowledge distillation \cite{Knowledge_Distillation_hinton2015distilling,self_distillation_furlanello2018born,Towards_understanding_distillation_phuong2019towards,ensemble_distillation_allen2020towards} is an important technique to transfer prior knowledge from a pre-trained neural network
(a.k.a. teacher network) to another network (a.k.a. student network), with the same or different architectures. Typically, given an (possibly unlabled) input, knowledge distillation is performed by matching the output of
the student network with the output of the teacher network. In addition,
labeled samples can also be included to introduce a label-based loss.
Thus,
by formulating $g(X)$ as the output of the teacher network, knowledge distillation
can be viewed as a particular instance of informed machine learning, where the knowledge comes from a teacher network and is usually assumed to be perfect. 

\section{Numerical Results}\label{appendix:simulation}

We consider two specific applications ---
{learning a multi-dimensional Bohachevsky function and learning
to manage wireless spectrum}.

\subsection{Settings of Learning with Constraint Knowledge in Section \ref{sec:simulation}}\label{appendix:simulation_setting}

We consider an informed DNN with domain knowledge
in the form of constraints to learn a Bohachevsky function.
The learning task is to learn a relationship $y(x)$. The learner is provided with a dataset with labeled samples $S_z=\left\lbrace (x_i,z_i),i\in[n_z]\right\rbrace$, having possibly noisy labels
\[
z_i=y(x_i)+n_i, n_i\sim \mathcal{N}(0, \sigma_z^2),
\]
and an unlabeled dataset $S_g=\left\lbrace (x_i),i\in[n_g]\right\rbrace$. Additionally, the learner is informed with the constraint knowledge, which includes an upper bound $g_{\mathrm{ub}}(x)$ and an lower bound $g_{\mathrm{lb}}(x)$ on the true label corresponding to input $x$, i.e. $g_{\mathrm{lb}}(x) \leq y(x)\leq g_{\mathrm{ub}}(x)$. A neural network $h_{\bm{W}}(x)$ is used to learn the relationship $y(x)$,
and  the metric of interest is the mean square error (MSE) of the network output $h_{\bm{W}}(x)$ with respect to the true label $y(x)$ on a test dataset $S_t$, which is expressed as
\[
\hat{R}_{S_t}(h_{\bm{W}})=\frac{1}{2|S_t|}\sum_{(x_i,y_i)\in S_t}\mathrm{mse}\left(h_{\bm{W}}(x_i),y_i \right),
\]
where $\mathrm{mse}\left(h_{\bm{W}}(x_i),y_i \right)=\left(h_{\bm{W}}(x_i)-y_i \right)^2 $ with $y_i$ as the true test label with respect to $x_i$.
Assume that the relationship to be learned is governed by a multi-dimensional Bohachevsky function
\[
y(x)=x\bm{A}\bm{A}^\top x^\top-c\cos\left( a^\top x\right)+c,
\]
where $\bm{A}$ is a $b\times b$ matrix, $a$ is a $b$-dimensional vector and $c$ is a constant. The learner has no access to the values of these parameters or the exact form of the relationship, but is empowered with the constraint knowledge in the form of an upper
bound model
\[
g_{\mathrm{ub}}(x)=x\bm{A}\bm{A}^\top x^\top+ub
\] with $ub\geq 2c$, and an lower bound model
\[
g_{\mathrm{lb}}(x)=x\bm{A}\bm{A}^\top x^\top+lb.
\] with $lb\leq 0$.
While it is not strongly convex and hence deviates from the assumptions in our theoretical
analysis, we use ReLU  as
the knowledge-based risk function, i.e., the knowledge-based risk is written as
\[
r_{\mathrm{K}}(h_{\bm{W}}(x))=\mathrm{relu}\left(h_{\bm{W}}(x)-g_{\mathrm{ub}}(x) \right)+\mathrm{relu}\left(g_{\mathrm{lb}}(x)-h_{\bm{W}}(x) \right).
\]
And the label supervised risk given a sample pair $(x,z)$ is $r(h_{\bm{W}}(x))=\mathrm{mse}\left(h_{\bm{W}}(x),z\right)$.

 To show the performance under different levels of imperfectness, we consider labels with different noise variances and different knowledge-informed constraints. For training, the labeled dataset $S_z$ contains $n_z\in\{200,400\}$ labeled samples with label noise variance $\sigma_z^2\in \{0,0.1\}$, and the unlabeled dataset $S_g$ for the knowledge risk contains $n_g=1000$ input samples. The  parameters for knowledge-informed constraint models include $lb=0$ and $ub\in\{0.6,0.8\}$. Naturally, the higher variance $\sigma_z^2$,
  the worse label quality; and the greater $ub$, the worse knowledge quality. The test dataset $S_t$ contains 1000 samples with labels calculated as $y_i=y(x_i),x_i\in S_t$.

For training,
 we use a neural network with two hidden layers, each having 2048 neurons and ReLU activations. Note that for the large network width needed for analysis to gain insights
 is not necessary in practice.
 The network is initialized based on Algorithm~\ref{appendix:training_algorithm}. The training procedure is performed by Adam optimizer for 3000 steps with batch size 100.  The learning rate is set as $10^{-6}$ for the first 2000 steps, $5\times10^{-5}$ for the following 500 steps, and $10^{-5}$ for the remaining 500 steps. We run the network training with 10 random seeds.
 We run the simulations on a HPC cluster with GPUs of type P100.

\subsection{Learning for Resource Management in Wireless Networks}\label{appendix:wireless}

We apply an informed DNN to the
problem of learning for resource management in wireless networks
--- wireless link scheduling in interference channels.
 We first describe problem setup, then present our method by informed
 DNN, and finally show the experiment results.

\subsubsection{Problem Setup}

Link scheduling is a classic and important problem in wireless interference
channels, with the objective of maximizing the sum throughput of wireless links.
Consider a time-slotted wireless network consisting of a transmitter-receiver set $\mathcal{U}=\{1,2,\cdots,N\}$ with $N$ links (i.e., transmitter-receiver pairs) subject to cross-link interference. At the beginning of each time slot, the scheduler needs to decide a subset of links $\mathcal{U}_{\mathrm{S}}\subseteq \mathcal{U}$
 to transmit depending on the channel
state information (CSI).

We assume Rayleigh fading channels with interference across different links.
If a link $u\in\mathcal{U}$ is scheduled, the channel
gain is $g_{u,u}$ subject to Rayleigh fading. For notational convenience, we omit the time slot index.
 Multiple links can be scheduled at the same time slot, creating interference to each other.
 For example, if link $u$ and  link $v$ are scheduled simultaneously, the interference channel gain from the transmitter $u$ to receiver $v$ is $g_{u,v}$, and the interference channel gain from the transmitter $v$ to receiver $u$ is $g_{v,u}$.  Thus, the received signal
  at receiver $u$ can be expressed as $g_{u,u}s_u+\sum_{v\in\mathcal{U}_{\mathrm{S}}/u}g_{v,u}s_v+ \mathrm{noise}_u$, where
   $\mathrm{noise}_u\sim\mathcal{N}\left(0,\sigma^2_{\mathrm{n}} \right) $ is an additive white Gaussian noise and the transmit signals $s_{u}$ and $s_{v}$  are normalized with unit power.
  Considering a centralized setting as in \cite{power_control_liang2019towards},
  the scheduler has access to the direct transmit channel gains as well as interference channel gains
  at the beginning of each time slot, which are contained in a $N\times N$ dimensional CSI vector $x=[g_{1,1},\cdots,g_{1,N},g_{2,1}\cdots,g_{N-1,N},g_{N,1},\cdots,g_{N,N}]$.

The scheduling decision can be represented by a $N$ dimensional scheduling vector $y$. Specifically, if the link $u$ is scheduled, then the $u$-th entry of $y$ is one, and zero otherwise.  By the Shannon rate formula in the communications theory \cite{Goldsmith_WirelessCommunications_2005},
the  achievable rate for link $u$ can be expressed as
\begin{equation}\label{eqn:shannorate}
C^u_{\mathrm{Shannon}}(x,y,\mu)= \log\left(1+\frac{\mu y(u)\|g_{u,u}\|^2}{\sigma^2_{\mathrm{n}}+\sum_{v\in\mathcal{U}/{u}}y(v)\|g_{v,u}\|^2} \right),
\end{equation}
where $\mu(0,1]$ is a parameter subject to real communication systems,
with $C^u_{\mathrm{Shannon}}(x,y,1)$ representing the standard Shannon rate (i.e., when
$\mu=1$).
The sum rate is $C_{\mathrm{Shannon}}(x,y)=\sum_{u\in\mathcal{U}}C^u_{\mathrm{Shannon}}(x,y)$.

In practice, given the CSI vector $x$ and the corresponding decision vector $y$, the real sum rate is denoted as $C_{\mathrm{real}}(x,y)=\sum_{u\in\mathcal{U}}C^u_{\mathrm{real}}(x,y)$. The real rate is difficult to express analytically in view of the complex factors in real environments including various schemes of modulation, finite channel coding and quality of service (QoS) guarantee.
 In fact, except
for a few special cases,
 the \emph{exact} channel capacity for general interference channels (even for
  two links) is still an open problem.
Thus, while
 the Shannon rate is useful and has been utilized
 to design various systems,
 it
 only represents
an approximation of the practically achievable rate

Next, we formulate the link scheduling problem as
\begin{equation}
\max_y \sum_{u\in\mathcal{U}}C^u_{\mathrm{real}}(x,y), \;\;\;\;\;\mathrm{s.t.} \;\;\; y(u)\in\{0,1\},u\in\mathcal{U}.
\end{equation}
 The scheduling objective is the real sum rate in a practical environment. The challenge of this problem is that  the real rate in terms of the CSI $x$ and scheduling decision $y$ is too complex to express precisely, let alone
the longstanding challenges of deriving the exact interference channel capacity \cite{Goldsmith_WirelessCommunications_2005}.

\subsubsection{Informed DNN for Wireless Link Scheduling}

DNNs have strong representation power to learn the optimal scheduling decisions given CSI input \cite{learning_interference_management_sun2018learning},
but they typically require a
large number of labeled samples $(x,y)$ for training.
On the other hand, domain knowledge (i.e., Shannon rate formula ) is also useful,
but it may not capture the real achievable rate in practice
\cite{power_control_liang2019towards}.
Thus, informed DNN, which
exploits domain knowledge to complement labeled samples, has the potential
to reap the benefits of both approaches.

Concretely, we use a DNN to represent the relationship between the scheduling decision $y$ and CSI $x$. Given $N$ links, the input dimension is $N\times N$, which is the dimension of vectorized CSI $x$ and the scheduling decision $y$ is a $N-$dimensional binary vector.
The training is based on a labeled dataset $S_y=\left\lbrace (x_i, y_i), i=1,2,\cdots, n_y\right\rbrace $
collected from real systems or field studies,
where $y_i$ is the true label (i.e., optimal scheduling decision) given $x_i$,  along with
the domain knowledge of Shannon rate. Also, we use $Y_{\mathrm{comb}}\in\{0,1\}^{I_{\max}\times N}, I_{\max}=2^N-1$ to represent all the possible decision combinations. Denote $I(y)$ as the index of a scheduling decision $y$ in $Y_{\mathrm{comb}}$, i.e. $y=\left[Y_{\mathrm{comb}}\right]_{I(y)}$. The output dimension of the DNN is $I_{\max}=2^N-1$ with each entry representing an index for a scheduling decision.

The label-based risk is the cross-entropy loss between the output of the DNN and one-hot encoding labels, which is expressed as
\begin{equation} \hat{R}_{S_y}(\bm{W})=\frac{1}{n_y}\sum_{i=1}^{n_y}\mathrm{cross\_entropy}\left( \mathrm{softmax}(h_{\bm{W}}(x_i)),\mathrm{one\_hot}(I(y_i))\right) ,
\end{equation}
where $\mathrm{one\_hot}(I(y_i))$ is the one-hot encoding of the index of $y_i$.
Given an CSI input $x$ and setting $\mu=\mu_\mathrm{K}$ based on domain experience, we can compute the sum rate of all possible scheduling decisions by the Shannon equation in Eqn.~\eqref{eqn:shannorate} as $C_{\mathrm{Shannon}}(x,\left[ Y_{\mathrm{comb}}\right]_{j} ,\mu_{\mathrm{K}}), j\in [I_{\max}]$ and get the vector of sum rate as $\bm{c}(x)=\left[C_{\mathrm{Shannon}}\left( x,\left[ Y_{\mathrm{comb}}\right]_1,\mu_{\mathrm{K}}\right) ,\cdots, C_{\mathrm{Shannon}}\left( x,\left[ Y_{\mathrm{comb}}\right]_{I_{\max}} ,\mu_{\mathrm{K}}\right)  \right] $. Taking the softmax operation on $T\bm{c}(x)$ with $T$ as a scaling hyper-parameter, we get  $\mathrm{softmax}\left(T \bm{c}(x)\right)$, which is essentially soft encoding of scheduling decisions based on the Shannon rate knowledge.
Therefore, given an input dataset $S_g=\left\lbrace x_i,  i=1,2,\cdots, n_g\right\rbrace $, the knowledge-based risk is designed as
\begin{equation}\label{eqn:knowledgerisk}
\hat{R}_{\mathrm{K}}(\bm{W})=-\frac{1}{n_g}\sum_{i=1}^{n_g}\mathrm{cross\_entropy}\left( \mathrm{softmax}(h_{\bm{W}}(x_i)),\mathrm{softmax}\left(T \bm{c}(x)\right) \right).
\end{equation}
Thus, the DNN can be trained to minimize the informed risk combining both
label-based and knowledge-based risks: $\hat{R}_{\mathrm{I}}(\bm{W})=(1-\lambda)\hat{R}_{S_y}(\bm{W})+\lambda \hat{R}_{\mathrm{K}}(\bm{W})$. That is, the informed DNN uses hard labels for direct supervision,
while exploiting domain knowledge in the form of soft labels for indirect supervision on unlabeled inputs. After training the network, the scheduling decision for CSI $x$ is calculated as $y_{\bm{W}}(x)=\left[ Y_{\mathrm{comb}}\right]_{I_{\bm{W}}(x)}$ with $I_{\bm{W}}(x)=\arg\max_{j\in [I_{\max}]}\left[ h_{\bm{W}(x)}\right]_j $

\subsubsection{Results}

Now, we show the simulation results for the wireless link scheduling problem based on our informed DNN. We first give the simulation settings and then show the results of classification accuracy as well as the sum rate.

\textbf{Simulation Settings.}
For illustration, we consider a simulation scenario with $N=4$ wireless links for scheduling, which is a reasonable setting for many practical ad hoc networks \cite{Goldsmith_WirelessCommunications_2005}.
Given the CSI, the scheduler needs to choose one out of 15 scheduling combinations.
To evaluate the performance of our informed DNN when the domain knowledge
of Shannon rate is not perfect, we construct a synthetic  dataset as the ground truth.
The direct link channel gain of a wireless link $g_{u,u}$ is subject to Rayleigh distribution, with an expected power gain of $100$ dB. The cross-link interference channel gain is also subject to Rayleigh distribution with an expected power gain
  of $10$ dB. The labels in the labeled training dataset and test dataset are generated by a pseudo-real rate expression to reflect some practical constraints:
 \begin{equation}\label{eqn:pseudorealrate}
C_{\mathrm{pseudo}-\mathrm{real}}=C^u_{\mathrm{Shannon}}(x,y,\mu_{\mathrm{R}}),
\end{equation}
which differs from the standard Shannon formula by using a
factor $\mu_{\mathrm{R}}\in(0,1)$ to account for achievable rate degradation.
Note that the pseudo-real rate is only defined to generate synthetic real rate different from the standard Shannon rate for evaluation purposes. In practice, the achievable rate is even more
 complex. In the simulations, we set $\mu_{\mathrm{R}}=0.5$ to generate the training and testing labels as ground truth, while the value of $\mu_{\mathrm{R}}=0.5$ is not
 available to the learner.

 Based on the pseudo-real rate expression, we find the optimal labels (i.e.,
 optimal scheduling decision $y$) via exhaustive search, while
 labels are actually be collected by
field measurement in a practical environment.
We have $n_g=2000$ unlabeled CSI input samples in the training dataset $S_g$ for knowledge-based supervision, and $n_t=10000$ samples in the test dataset.
The test accuracy is defined as the percentage of DNN outputs that are identical
to the optimal scheduling decision label, i.e. for samples in the test dataset, $\mathrm{acc}=\sum_{i=1}^{n_t}{\mathds{1}\left(I_{\bm{W}}(x_i)=I(y_i) \right) }/n_t$.
We compare the results when the labeled training dataset has 100, 500 and 1000 samples, respectively.
Also, we compare the results obtained by setting different parameters $\mu_{\mathrm{K}}\in\{1.0,0.4,0.1\}$ in the knowledge-based Shannon rate in
Eqn.~\eqref{eqn:shannorate}.
The parameter $\mu_{\mathrm{K}}\in\{1.0,0.4,0.1\}$
results in a test accuracy of $\{71.4\%, 91.2\%,52.8\%\}$,
which is the maximum test accuracy obtained by directly solving the scheduling problem based on Eqn.~\eqref{eqn:shannorate} and can be used to informally indicate the knowledge quality. Thus, $\mu_{\mathrm{K}}=0.4$ represents
the best knowledge quality, whereas $\mu_{\mathrm{K}}=0.1$ is the worst.

Now we list the settings for training.
The neural network has three hidden layers with 512, 1024 and  512 neurons, respectively, followed by ReLu activations. The network is initialized based on Algorithm~\ref{alg:neuraltrain}. The training is performed by the Adam optimizer with learning rate $10^{-5}$ for $2000$ steps on a HPC cluster with GPU type P100. We
use 5 random seeds for each setting to evaluate the performance error.

 \begin{figure*}[!t]	
	\centering
	\subfigure[$\mu_{\mathrm{K}}=0.1$]{
		\label{fig:acc0d1}
		\includegraphics[width=0.32\textwidth]{./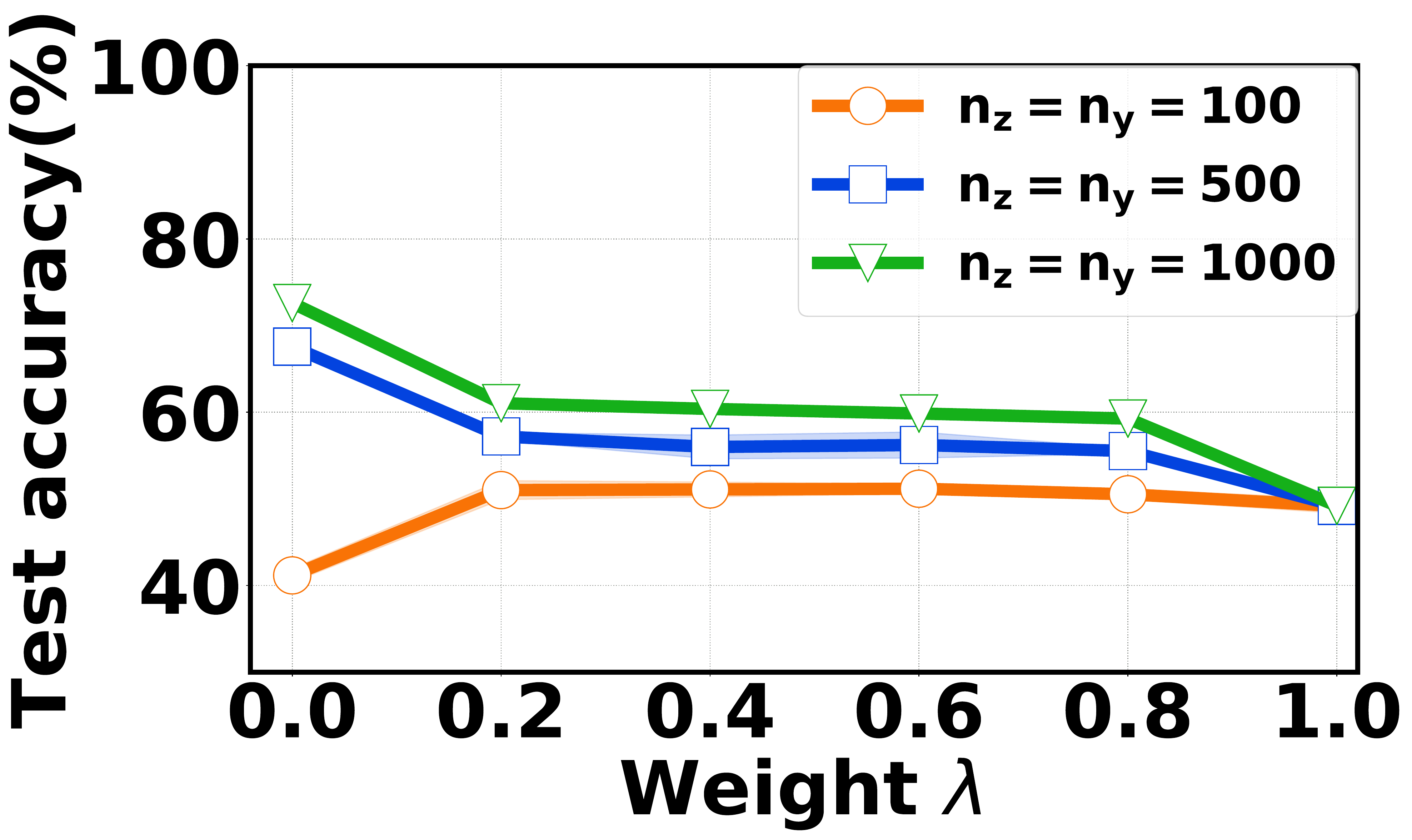}
	}%
	\subfigure[$\mu_{\mathrm{K}}=1.0$]{
		\label{fig:acc1d0} \includegraphics[width={0.32\textwidth}]{./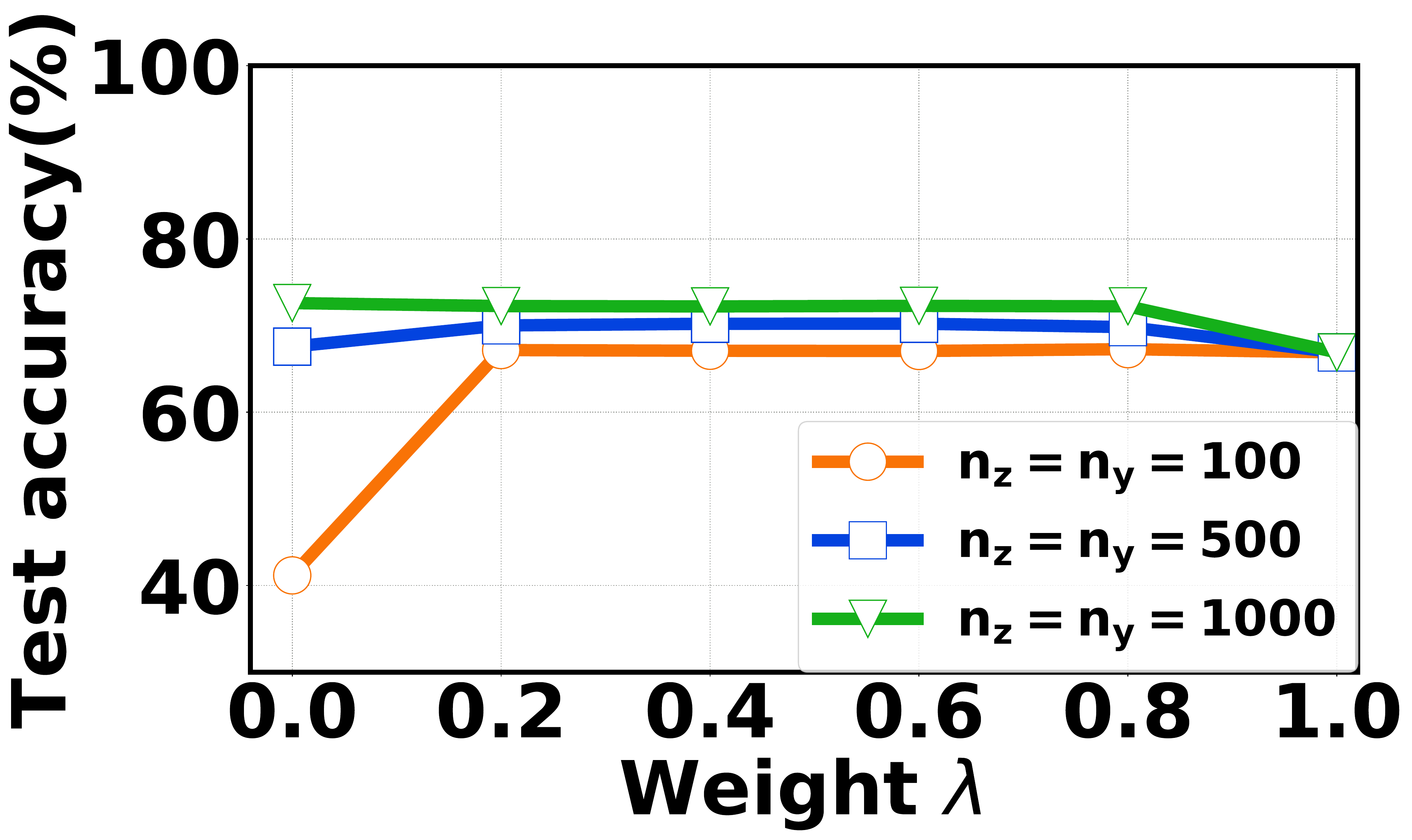}
	}
	\subfigure[$\mu_{\mathrm{K}}=0.4$]{
		\label{fig:acc0d4}
		\includegraphics[width=0.32\textwidth]{./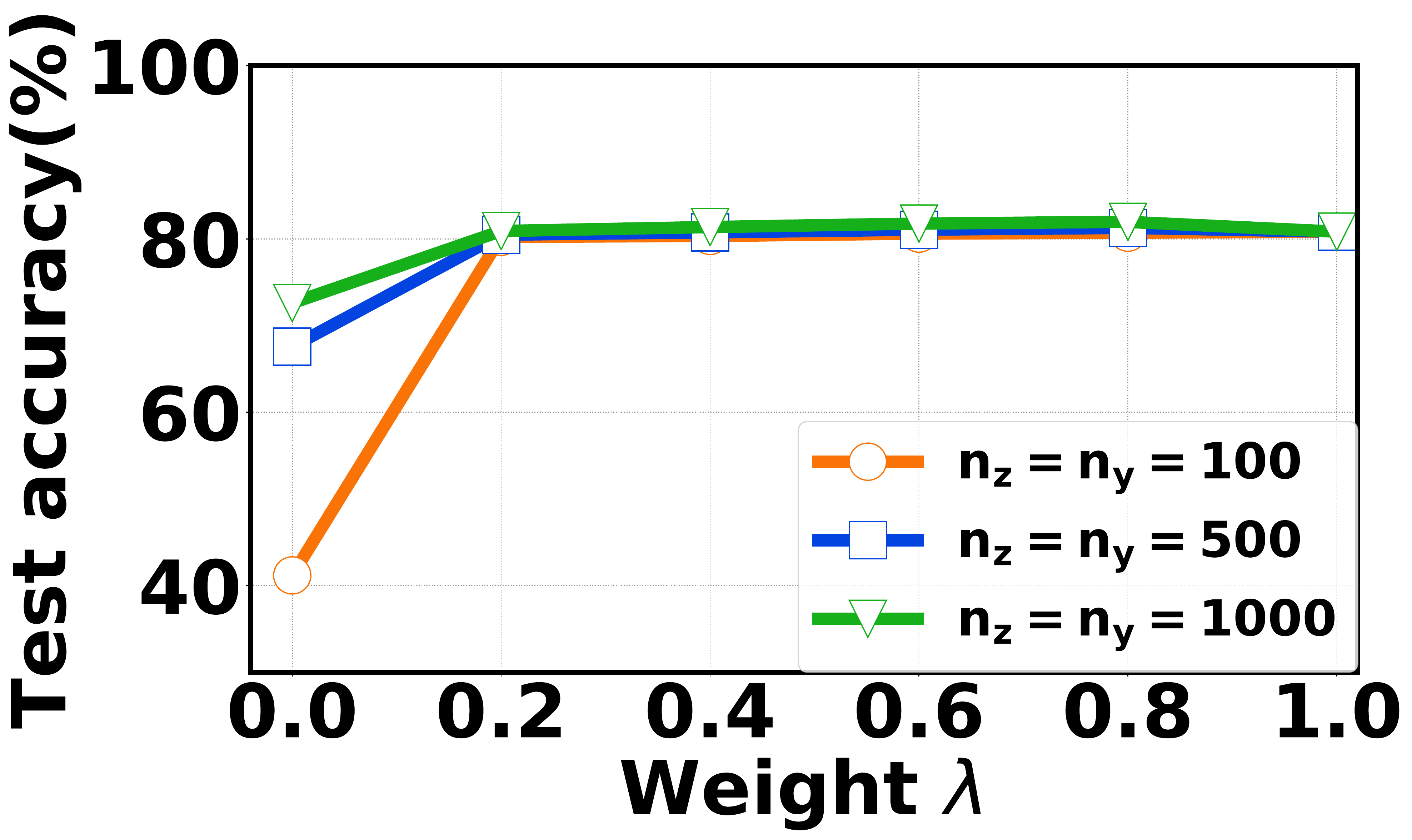}
	}%
	\centering
	\vspace{-0.4cm}	
	\caption{Test accuracy under different knowledge qualities and numbers of labels.}
	\label{fig:simulationacc}
\end{figure*}

\begin{figure*}[!t]	
	\centering
	\subfigure[$\mu_{\mathrm{K}}=0.1$]{
		\label{fig:rate0d1}
		\includegraphics[width=0.32\textwidth]{./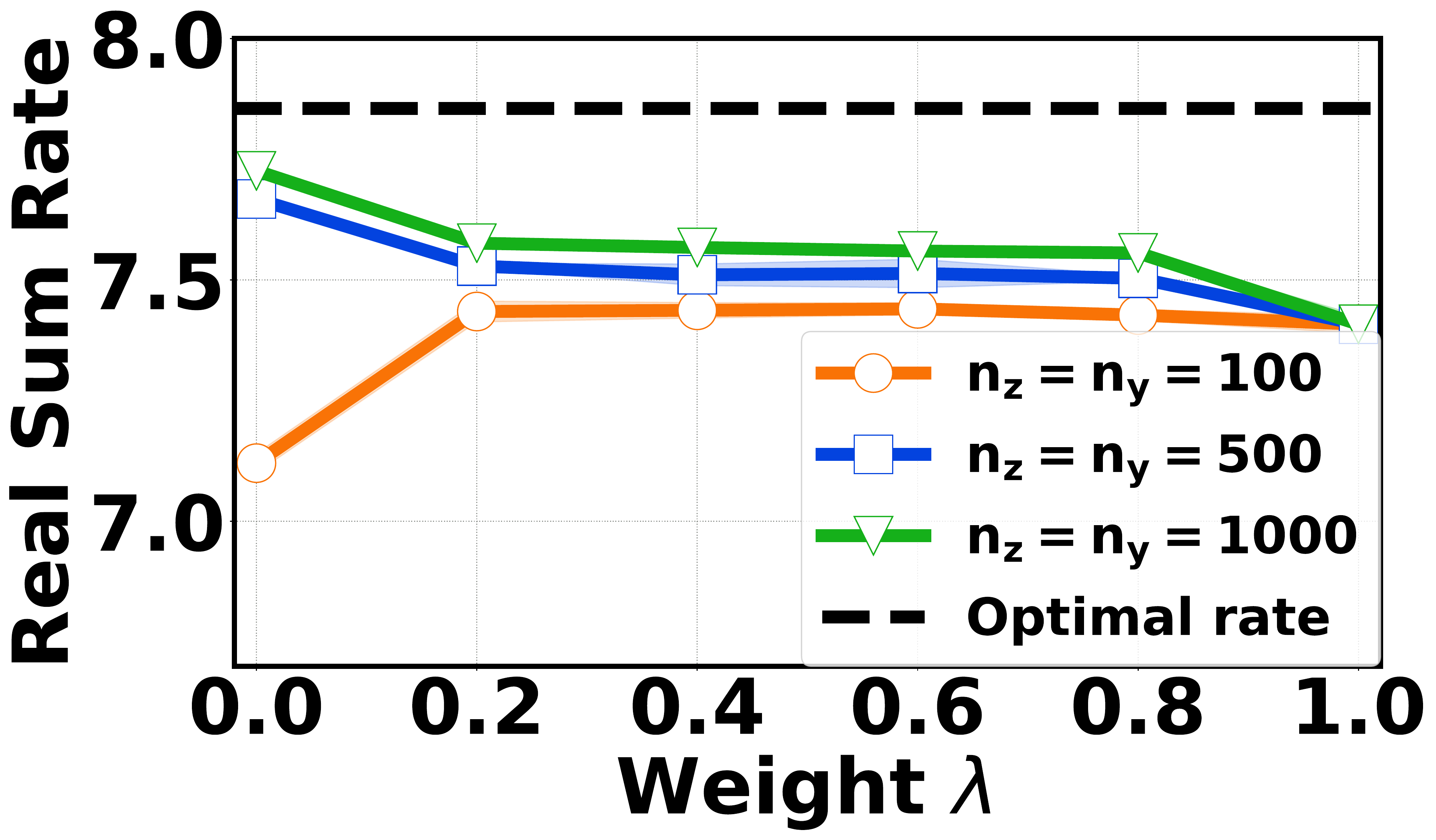}
	}%
	\subfigure[$\mu_{\mathrm{K}}=1.0$]{
		\label{fig:rate1d0} \includegraphics[width={0.32\textwidth}]{./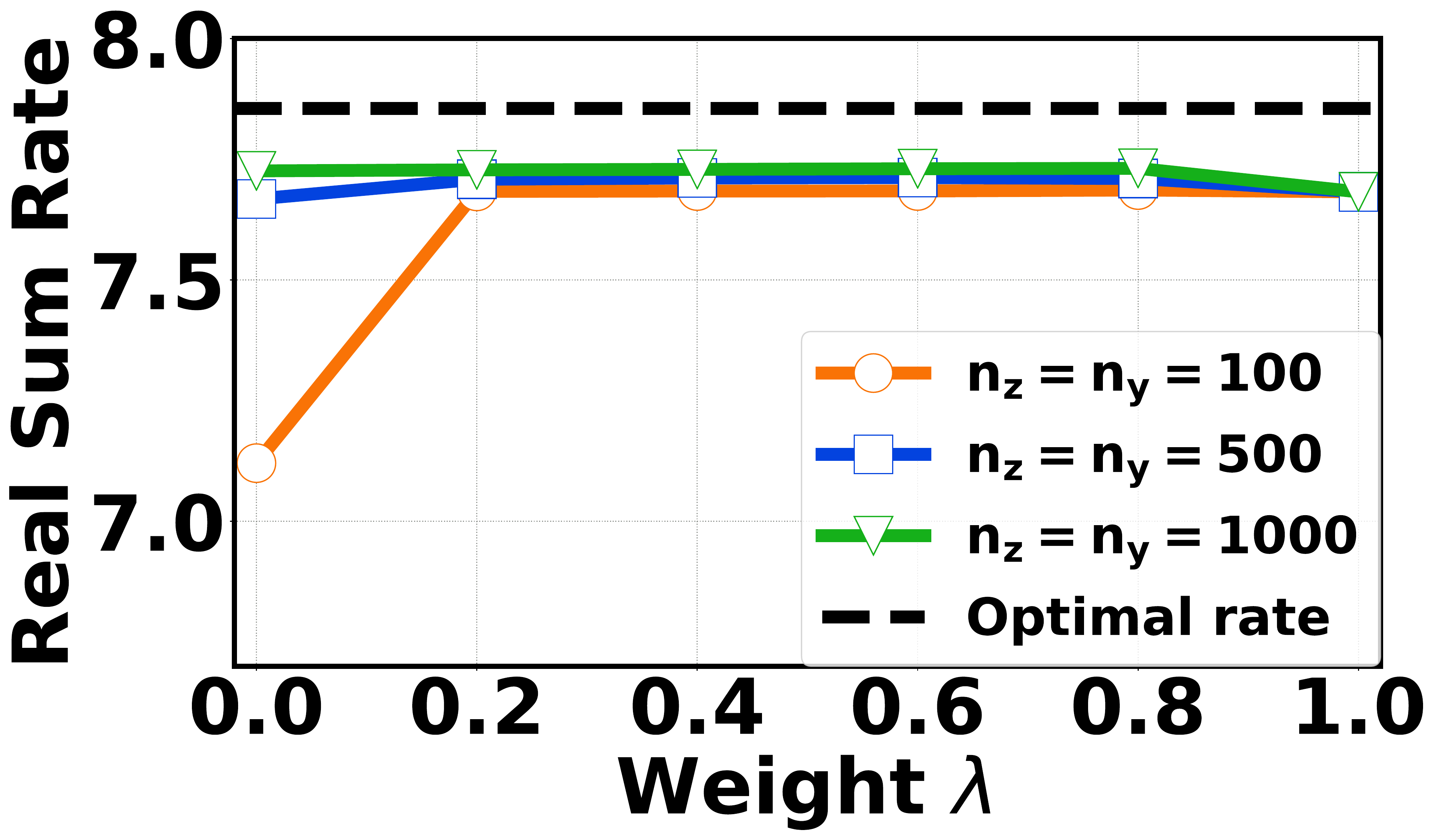}
	}
	\subfigure[$\mu_{\mathrm{K}}=0.4$]{
		\label{fig:rate0d4}
		\includegraphics[width=0.32\textwidth]{./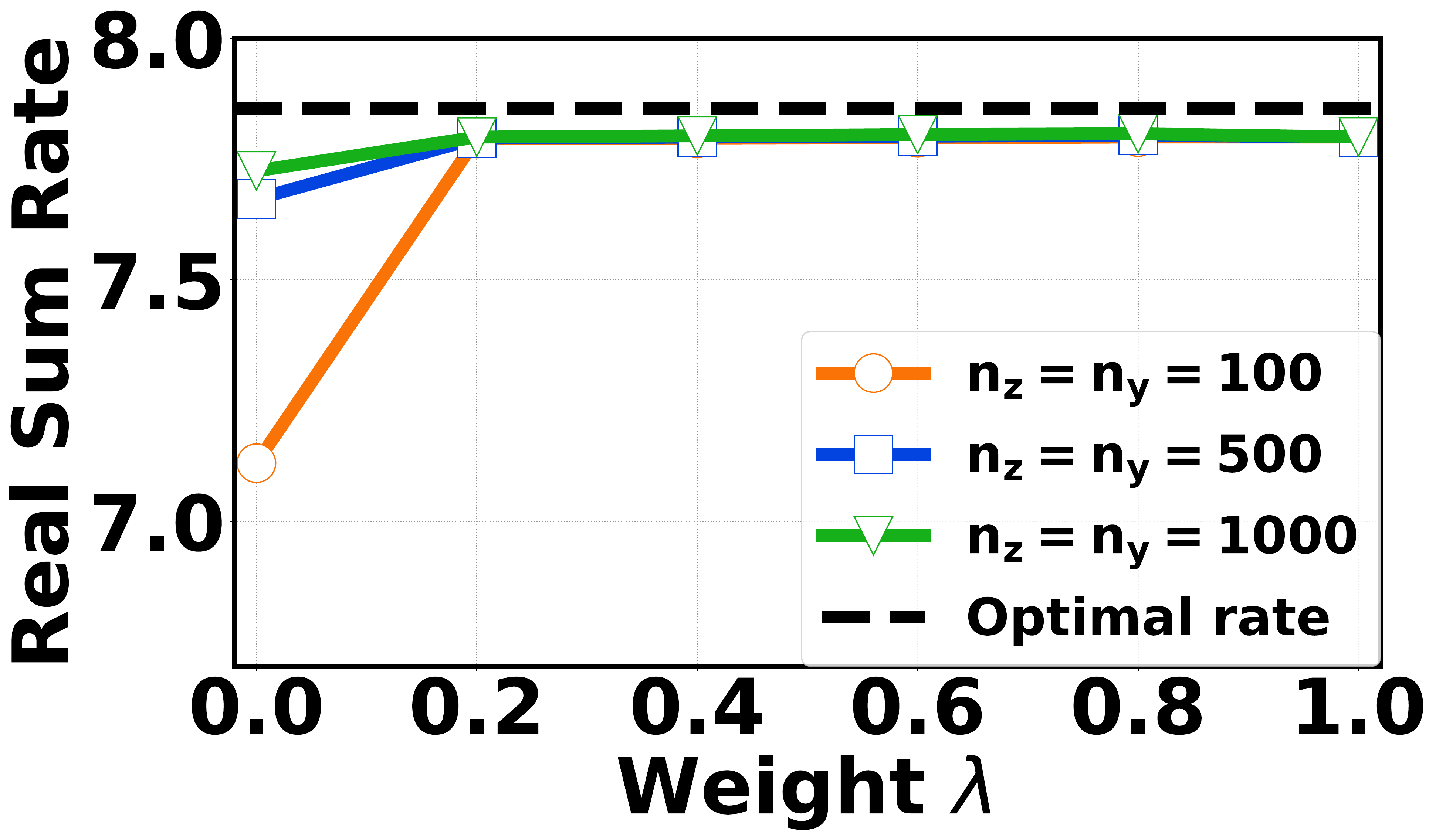}
	}%
	\centering
	\vspace{-0.4cm}	
	\caption{Sum rate under different knowledge qualities and numbers of labels.}
	\label{fig:simulationrate}
\end{figure*}
\textbf{Results.}
The results, including the test accuracy and the test sum rate under different knowledge quality, numbers of labels and weights $\lambda$, are shown in Fig.~\ref{fig:simulationacc} and Fig.~\ref{fig:simulationrate}.
The test sum rate is the (pseudo)  real sum rate defined in Eqn.~\eqref{eqn:pseudorealrate}
with $\mu_{\mathrm{R}}=0.5$. We can find that  the sum rate expectedly increases if the test accuracy increases. From Fig.~\ref{fig:acc0d1} and Fig.~\ref{fig:rate0d1}, we see that if the domain knowledge
quality is only 52.8\% (i.e., $\mu_{\mathrm{K}}=0.1$), it has bad effects on learning when labels are enough. Nevertheless, it still benefits the performance when there are only 100 labels and, if we place a less weight on the knowledge-based risk, the accuracy and sum rate is higher.

If the knowledge quality is 71.4\% (i.e., $\mu_{\mathrm{K}}=1.0$), as shown in Fig.~\ref{fig:acc1d0} and Fig.~\ref{fig:rate1d0}, the domain knowledge has significant benefits when there are only 100 labeled samples. When there are 500 labeled samples, the domain knowledge and labels complement each other and get a better performance than pure label-based and knowledge-based learning. When the number of labeled samples is even higher and reaches 1000,
the integration of domain knowledge cannot benefit the learning further.
In Fig.~\ref{fig:acc0d4} and Fig.~\ref{fig:rate0d4}, when the domain knowledge quality
further improves, we can see that the domain knowledge can still bring benefits even
in the presence of 1000 labeled samples.

From these results, we see that labels and domain knowledge can complement each other. The domain knowledge plays an important role when labels are relatively scarce, while
 labels, even only a few, help improve the learning performance when domain knowledge has a low quality. Additionally, it is important to achieve a balance between label-based supervision and knowledge-based supervision.  In general, we place more weight on the knowledge-based risk if knowledge quality is good enough and the number of labels is small, and vice versa.

\end{document}